\icmltitlerunning{Learning Utilities from Demonstrations in MDPs}
\begin{document}

\twocolumn[
\icmltitle{Learning Utilities from Demonstrations in Markov Decision Processes}

\icmlsetsymbol{equal}{*}

\begin{icmlauthorlist}
\icmlauthor{Filippo Lazzati}{yyy}
\icmlauthor{Alberto Maria Metelli}{yyy}
\end{icmlauthorlist}

\icmlaffiliation{yyy}{Politecnico di Milano, Milan, Italy}

\icmlcorrespondingauthor{Filippo Lazzati}{filippo.lazzati@polimi.it}

\icmlkeywords{Inverse Reinforcement Learning, Risk, Imitation Learning, Theory}

\vskip 0.3in
]

\printAffiliationsAndNotice{}  %

\begin{abstract}
Although it is well-known that humans commonly engage in \emph{risk-sensitive}
behaviors in the presence of stochasticity, most Inverse Reinforcement Learning
(IRL) models assume a \emph{risk-neutral} agent.
As such, beyond $(i)$ introducing model misspecification, $(ii)$ they do not
permit direct inference of the risk attitude of the observed agent, which can be
useful in many applications.
In this paper, we propose a novel model of behavior to cope with these issues.
By allowing for risk sensitivity, our model alleviates $(i)$, and by explicitly
representing risk attitudes through (learnable) \emph{utility} functions, it
solves $(ii)$.
Then, we characterize the partial identifiability of an agent's utility under
the new model and note that demonstrations from multiple environments mitigate
the problem.
We devise two provably-efficient algorithms for learning utilities in a
finite-data regime, and we conclude with some proof-of-concept experiments to
validate \emph{both} our model and our algorithms.
\end{abstract}

\section{Introduction}\label{sec: introduction}

The ultimate goal of Artificial Intelligence (AI) is to construct artificial
rational autonomous agents \citep{russel2010ai}. Such agents will interact with
each other and with human beings to achieve the tasks that \emph{we} assign to
them. In this vision, a crucial feature is being able to correctly model the
observed behavior of other agents. This allows a variety of applications:
\emph{descriptive}, to understand the intent of the observed agent
\citep{russell1998learning}, \emph{predictive}, to anticipate the behavior of
the observed agent (potentially in new scenarios) \citep{arora2020survey}, and
\emph{normative}, to imitate the observed agent because they are behaving in the
``right way'' \citep{osa2018IL}.

Nowadays, Inverse Reinforcement Learning (IRL) provides the most popular and
powerful models of the behavior of the observed agent, named ``expert''.
IRL models assume the existence of a reward function that \emph{rationalizes}
the expert's behavior and differ from each other based on the specific
assumption of how the expert behaves based on the given reward.
For instance, \citet{ng2000algorithms} considers the expert as playing an
optimal policy, \citet{poiani2024inversereinforcementlearningsuboptimal}
considers an $\epsilon$-optimal policy, \citet{malik2021icrl} considers an
optimal policy satisfying some constraints,
\citet{Ziebart2010ModelingPA,Fu2017LearningRR} assume that the expert plays
actions proportionally to their (soft) Q-functions, and
\citet{Ramachandran2007birl} assumes this probability to depend on the optimal
advantage function.

These IRL models represent the expert as a \textit{risk-neutral} agent, i.e., an
agent interested in maximizing the \textit{expected} return. However, there are
many scenarios in which rational agents and humans adopt risk-sensitive
strategies in the presence of stochasticity, like finance
\citep{foellmer2004stochastic}, revenue management \citep{barz2007revenue},
driving \citep{Bernhard2019AddressingIU}, and many other choice problems
\citep{kahneman1979prospect,Kreps1988NotesOT}.
In these settings, agents are not only interested in the \textit{expected}
return, but in its full \emph{distribution} \citep{bellemare2023distributional}.
Thus, IRL models incur in \emph{misspecification}, that can crucially affect the
descriptive, predictive, and normative power of the inferred reward function
\citep{skalse2024quantifyingsensitivityinversereinforcement}.

In this context, in addition to misspecification, another issue of IRL models is
that they do not \emph{explicitly} represent the risk attitude of the expert,
which is only indirectly captured by the reward function.
We desire two different representations for the \emph{task} of the expert
(through a reward) and for its \emph{risk attitude} (e.g., through a utility
function), analogously to what is done in Inverse Constrained Reinforcement
Learning (ICRL) \citep{malik2021icrl}. Here, the behavior of the expert is
described by two parameters, a reward for modelling the task, and a cost for
modelling the constraints.
In this way, the reward is more easily interpretable since it does not have to
capture both the task and the constraints, and, also, we can use the learned
cost for performing new tasks safely \citep{kim2023learningshared}.
For these reasons, if we were able to directly learn the risk attitude of the
expert \emph{separately} from its reward function, then we could more easily
understand its intent and anticipate its choices in new, unseen, scenarios
\citep{Kreps1988NotesOT}.

\textbf{Contributions.}~~In this paper, we introduce a new risk-sensitive model
of behavior that encodes the risk attitude of an agent with a utility function.
Thanks to this model, we will show that it is possible to overcome the
limitations mentioned above.
Our main contributions are:
\begin{itemize}[leftmargin=*, noitemsep, topsep=-2pt]
  \item We present a new simple yet powerful \emph{model} of behavior in Markov
  Decision Processes (MDPs) that rationalizes non-Markovian demonstrations
  (Section \ref{sec: motivation and problem setting}).
  \item We formulate \emph{Utility Learning} (UL) as the problem of inferring
  the risk attitude of an agent under the new model of behavior, we characterise
  the partial identifiability of its utility, and we show that demonstrations in
  multiple environments alleviate the issue (Section \ref{sec: utility
  learning}).
  \item We introduce \caty and \tractor, two novel algorithms for solving the UL
  problem with finite data in a provably-efficient manner (Section \ref{sec:
  online utility learning}).
  \item We conclude with proof-of-concept \emph{experiments} that serve as an
  empirical validation of both the proposed model and the presented algorithms.
  (Section \ref{sec: experiments}).
\end{itemize}
The proofs of all results are reported in Appendix \ref{apx: section
3}-\ref{apx: section 5}.

\section{Preliminaries}\label{sec: preliminaries}

The main paper's notation is below. Additional notation for the
supplemental is in Appendix \ref{apx: additional notation}.

\textbf{Notation.}~~For any $N \in \Nat$, we write
$\dsb{N}\coloneqq\{1,\dotsc,N\}$. Given set $\cX$, we denote by $\Delta^\cX$ the
probability simplex on $\cX$. Given compact $\cX\subseteq\RR^d,y\in\RR^d$, we
define $\Pi_{\cX}(y) \coloneqq \argmin_{x\in\cX}\|y-x\|_2$.
A real-valued function $f:\RR\to\RR$ is \emph{$L$-Lipschitz} if, for all
$x,y\in\RR$, we have $|f(x)-f(y)|\le L|x-y|$. $f$ is \emph{increasing} if, for
all $x<y\in\RR$, it holds $f(x)\le f(y)$, and it is \emph{strictly-increasing} if
$f(x)< f(y)$.
The probability distribution that puts all its mass on $z\in\RR$ is denoted by
$\delta_z$ and is called the \emph{Dirac delta}. We represent distributions on
finite support as mixtures of Dirac deltas.

\textbf{Markov Decision Processes (MDPs).}~~A tabular episodic Markov Decision
Process (MDP) \citep{puterman1994markov} is a tuple
$\cM=\tuple{\cS,\cA,H,s_0,p,r}$, where $\cS$ and $\cA$ are the finite state
($S\coloneqq|\cS|$) and action ($A\coloneqq|\cA|$) spaces, $H$ is the time
horizon, $s_0\in\cS$ is the initial state, $p:\SAH\to\Delta^\cS$ is the
transition model, and $r:\SAH\to [0,1]$ is the \emph{deterministic} reward
function.
The interaction of an agent with $\cM$ generates trajectories. Let
$\Omega_h\coloneqq (\cS\times\cA)^{h-1}\times\cS$ be the set of
state-action trajectories of length $h$ for all $h\in\dsb{H+1}$, and
$\Omega\coloneqq \Omega_{H+1}$.
A deterministic
\emph{non-Markovian} policy $\pi=\{\pi_h\}_{h\in\dsb{H}}$ is a sequence of
functions $\pi_h:\Omega_h\to\cA$ that, given the history up to stage $h$, i.e., 
$\omega=\tuple{s_1,a_1,\dotsc,s_{h-1},a_{h-1},s_h}\in\Omega_h$,
prescribes an action.
A \emph{Markovian} policy $\pi=\{\pi_h\}_{h\in\dsb{H}}$ is a sequence of
functions $\pi_h:\cS\to\cA$ that depend on the current state only.
We use $g:\bigcup_{h\in\{2,\dotsc,H+1\}}\Omega_h\to[0,H]$ to denote the return
of a (partial) trajectory $\omega\in\Omega_h$, i.e., $g(\omega)\coloneqq\sum_{h'\in\dsb{h-1}}
r_{h'}(s_{h'},a_{h'})$.
With abuse of notation, we denote by $\P_{p,r,\pi}$ the probability distribution
over trajectories of any length induced by $\pi$ in $\cM$ (we omit $s_0$ for
simplicity), and by $\E_{p,r,\pi}$ the expectation w.r.t.
$\P_{p,r,\pi}$.
We define the \emph{return distribution} $\eta^{p,r,\pi}\in\Delta^{[0,H]}$ of
policy $\pi$ as
$\eta^{p,r,\pi}(y)\coloneqq\sum_{\omega\in\Omega:\,g(\omega)=y}\P_{p,r,\pi}(\omega)$
for all $y\in[0,H]$. The set of possible returns at $h\in\dsb{H+1}$ is
$\cG^{p,r}_h \coloneqq\{y\in[0,h-1]\,|\,\exists \omega\in\Omega_h,\exists
\pi:\,g(\omega)=y\wedge \P_{p,r,\pi}(\omega)>0\}$, and
$\cG^{p,r}\coloneqq\cG^{p,r}_{H+1}$. We remark that $\cG^{p,r}_h$ has finite
cardinality for all $h$.
The performance of policy $\pi$ is given by $J^\pi(p,r)\coloneqq
\E_{p,r,\pi}[\sum_{h=1}^H r_{h}(s_h,a_h)]$, and note that
$J^\pi(p,r)=\E_{G\sim\eta^{p,r,\pi}}[G]$. We define the optimal
performance as $J^*(p,r)\coloneqq \max_\pi J^{\pi}(p,r)$, and the optimal policy
as $\pi^*\in\argmax_\pi J^{\pi}(p,r)$.

\textbf{Risk-Sensitive Markov Decision Processes (RS-MDPs).}~~A Risk-Sensitive
Markov Decision Process (RS-MDP) \citep{wu2023risksensitive} is a pair
$\cM_U\coloneqq\tuple{\cM,U}$, where
$\cM=\tuple{\cS,\cA,H,s_0,p,r}$ is an MDP, and $U\in\fU$ is a utility function
in set $\fU\coloneqq\{U':[0,H]\to[0,H]\,|\,U'(0)=0,U'(H)=H\wedge U'\text{ is
strictly-increasing and continuous}\}$. Differently from
\citet{wu2023risksensitive}, w.l.o.g., our utilities satisfy $U(H)=H$ to settle
the scale.
The interaction with $\cM_U$ is the same as with
$\cM$, and the notation described earlier still applies, except for the
performance of policies.
The performance of policy $\pi$ is $J^\pi(U;p,r)\coloneqq\E_{p,r,\pi}[ U(\sum_{h=1}^H
r_h(s_h,a_h))]$, and note that
$J^\pi(U;p,r)=\E_{G\sim\eta^{p,r,\pi}}[U(G)]$. We define the optimal
performance as $J^*(U;p,r)\coloneqq \max_\pi J^{\pi}(U;p,r)$, the optimal policy
as $\pi^*\in\argmax_\pi J^{\pi}(U;p,r)$, and the set of optimal policies for
$\cM_U$ as $\Pi^*_{p,r}(U)$.

\begin{figure*}[h!]
  \centering
  \begin{tikzpicture}[node distance=3.5cm]
      \node[state,initial] at (-0.5,0) (s0) {$s_0$};
      \node[state] at (2.5,0) (s1) {$s$};
      \node[state] at (7,1.2) (ar) {$s_1$};
      \node[state] at (7,0) (as) {$s_2$};
      \node[state] at (7,-1.2) (aq) {$s_3$};
      \node[draw=none,fill=black] at (5,0.6) (s2) {};
      \draw (s0) edge[->, solid, above,bend left=45] node{\footnotesize $a_1,r=0$\texteuro} (s1);
      \draw (s0) edge[->, solid, below,bend right=45] node{\footnotesize $a_2,r=1000$\texteuro} (s1);
      \draw (s1) edge[->, solid, above,sloped] node{\footnotesize $a_{\text{risky}},r=0$\texteuro} (s2);
      \draw (s1) edge[->, solid, below,sloped] node{\small $a_{\text{safe}},r=0$\texteuro} (aq);
      \draw (s2) edge[->, solid, above,sloped] node{\small $p=0.5$} (ar);
      \draw (s2) edge[->, solid, below,sloped] node{\small $p=0.5$} (as);
      \draw (ar) edge[->, solid, loop right] node{\small $r=150$\texteuro} (ar);
      \draw (as) edge[->, solid, loop right] node{\small $r=0$\texteuro} (as);
      \draw (aq) edge[->, solid, loop right] node{\small $r=50$\texteuro} (aq);
    \end{tikzpicture}
  \caption{The MDP considered in Example \ref{example: non markovian policy}.}
  \label{fig: MDP example non markovian policy}
\end{figure*}

\textbf{Enlarged state space approach.}~~ In MDPs, there always exists a
\emph{Markovian} optimal policy \citep{puterman1994markov}, but in RS-MDPs this
does not hold.
The \emph{enlarged state space approach} \citep{wu2023risksensitive} is a
method, proposed by \citet{bauerle2014more}, to compute an optimal policy in a
RS-MDP.
Given RS-MDP {\thickmuskip=1mu \medmuskip=1mu \thinmuskip=1mu
$\cM_U=\tuple{\cS,\cA,H,s_0,p,r,U}$}, we construct the \emph{enlarged} state
space MDP
$\fE[\cM_U]=\tuple{\cS',\cA,H,\tuple{s_0,0},\fp,\fr}$,
with a different state space $\cS'=\cS\times\cG^{p,r}_h$ at every
$h$.\footnote{Actually, \citet{bauerle2014more} use state space
$\cS\times\RR_{\ge 0}$, while \citet{wu2023risksensitive} use $\cS\times[h-1]$
for all $h\in\dsb{H}$. Instead, we consider sets $\cS\times\{\cG^{p,r}_h\}_h$ to
capture the minimal size required.}
For every $h\in\dsb{H}$ and {\thickmuskip=2mu
\medmuskip=1.5mu$(s,y,a)\in\cS\times\cG^{p,r}_h\times\cA$}, the
reward function $\fr$ is {\thickmuskip=2mu
\medmuskip=1.5mu$\fr_h(s,y,a)=U(y+r_h(s,a))\indic{h=H}$}, while the dynamics
$\fp$ assigns to the next state {\thickmuskip=2mu
\medmuskip=1.5mu$(s',y')\in\cS\times\cG^{p,r}_{h+1}$} the
probability: {\thickmuskip=2mu
\medmuskip=1.5mu$\fp_h(s',y'|s,y,a)\coloneqq p_h(s'|s,a)\indic{y'=y+r_h(s,a)}$}.
In words, the state space is enlarged with a component that keeps track of
the cumulative reward in the original RS-MDP, and the reward $\fr$, bounded in
$[0,H]$,
provides the utility of the accumulated reward at the end of the episode.
A Markovian policy $\psi=\{\psi_h\}_{h\in\dsb{H}}$ for $\fE[\cM_U]$ is a
sequence of mappings $\psi_h:\cS\times\cG^{p,r}_h\to\cA$. Being an MDP, we adopt
for $\fE[\cM_U]$ the same notation presented earlier for MDPs, by replacing
$p,r,\pi$ with $\fp,\fr,\psi$.
Let $\psi^*$ be the optimal \emph{Markovian} policy for $\fE[\cM_U]$.
Then, Theorem 3.1 of \citet{bauerle2014more} shows that the (non-Markovian)
policy $\pi^*$, defined for all $h\in\{2,\dotsc,H\}$ and {\thickmuskip=1mu
\medmuskip=1mu \thinmuskip=1mu$\omega\in\Omega_h$} as
$\pi^*_h(\omega)\coloneqq
\psi^*_h(s_h,\sum_{h'\in\dsb{h-1}}r_{h'}(s_{h'},a_{h'}))$, and
{\thickmuskip=1mu
\medmuskip=1mu \thinmuskip=1mu $\pi^*_1(s_0)=\psi^*_1(s_0,0)$}, is optimal for $\cM_U$.

\textbf{Inverse Reinforcement Learning (IRL).}~~In IRL we are given
demonstrations of behavior from the expert's policy $\pi^E$, and the goal is to
recover the reward of the expert $r^E$
\citep{russell1998learning}.
As explained in Section \ref{sec: introduction}, a \emph{model of behavior}
describes how the expert's policy $\pi^E$ is generated from $r^E$.
A model suffers from \emph{partial identifiability} if the knowledge of $\pi^E$
does not permit to recover $r^E$ (almost) uniquely
\citep{cao2021identifiability,metelli2021provably}.

\textbf{Miscellaneous.}~~For $L>0$, we write $\fU_L\coloneqq\{U\in\fU\,|\,U$ is
$L$-Lipschitz$\}$. For any finite set $\cX\subseteq[0,H]$ we define
{\thickmuskip=1mu \medmuskip=1mu
\thinmuskip=1mu$\overline{\fU}^{\cX}\coloneqq\{\overline{U}\in[0,H]^{|\cX|}\,|\,
\exists U\in\fU,\,\forall x\in\cX:\, \overline{U}(x)=U(x)\}$}, and
$\overline{\fU}_L^\cX\coloneqq\{\overline{U}\in\overline{\fU}^\cX\,|\,\exists
U\in\fU_L,\,\forall x\in\cX:\, \overline{U}(x)=U(x)\}$.
We will denote by $\cM_{\overline{U}}$ some RS-MDPs with
$\overline{U}\in\overline{\fU}^{\cX}$.

\section{A New Model of Behavior}
\label{sec: motivation and problem setting}

We aim to devise a realistic model of behavior for humans and rational agents in
MDPs that complies with their sensitivity to risk. In fact, due to the
stochasticity of the environment, they are likely to behave in a risk-sensitive
manner. Our insight is that risk-sensitivity in MDPs gives rise to
\emph{non-Markovian} policies for both rational agents (see
\citet{bellemare2023distributional}) and humans:
\begin{example}\label{example: non markovian policy}
In the MDP of Fig. \ref{fig: MDP example non markovian policy},
we expect most people to decide what action to play in state $s$
\emph{depending} on the amount of reward earned so far, since, intuitively, it
makes more sense to take the risky action $a_{\text{risky}}$, that sometimes
gives a large return ($150$\texteuro) but sometimes gives nothing
($0$\texteuro), when we are guaranteed to have at least $1000$\texteuro\ in our
wallet (i.e., we have reached $s$ from $a_2$), while it may be better to take
the safe action $a_{\text{safe}}$, that gives $50$\texteuro\ for sure, if we
reached $s$ with no reward (i.e., from $a_1$).
This kind of behavior is known as ``decreasing'' risk-aversion.
\citep{pratt1964riskaversion,Kreps1988NotesOT,Wakker2010prospect}.
\end{example}
In short, demonstrations of behavior from risk-sensitive agents in MDPs are
likely to be collected by \emph{non-Markovian} policies, whose dependency on the
past history is restricted to the \emph{amount of reward} collected so far.
However, none of the existing IRL models of behavior (see Sections \ref{sec:
introduction} and \ref{sec: related works}) contemplate non-Markovian policies,
and, thus, they result in misspecification.\footnote{ Re-modelling the MDP
including the sum of the past rewards into the state would make the demonstrated
policy Markovian, but, as explained in Appendix \ref{apx: drawbacks reward into
state}, it would create various issues like a state space with a size
exponential in the horizon. }

For these reasons, we introduce a new model of behavior that contemplates
non-Markovian policies. Given demonstrations from the expert's policy $\pi^E$ in
an environment $\tuple{\cS,\cA,H,s_0,p}$, we assume the existence of a reward
function $r^E$ and a utility function $U^E\in\fU$ such that:
{\thinmuskip=1mu \medmuskip=1mu \thickmuskip=2mu
\begin{equation}\label{eq: model expert rs mdp}
  \pi^E\in\argmax\limits_{\pi}
  \E_{p,r^E,\pi}
  \Big[
  U^E\Big(\sum\limits_{h=1}^H
  r_h^E(s_h,a_h)\Big)
  \Big],
\end{equation}
}\.
i.e., we model the expert as an \emph{optimal agent in a RS-MDP}.
The reward $r^E$ aims to capture the task of the expert, while the utility $U^E$
represents its risk attitude.
Intuitively, if $p$ is deterministic, then the trajectories with the largest
returns under $r^E$ are preferred. However, in presence of stochasticity, the
utility $U^E$ associates weights to the returns of the trajectories to represent
their true ``values'' for the expert.
If $U^E$ is linear, then $\argmax_\pi J^\pi(U^E;p,r^E) = \argmax_\pi
J^\pi(p,r^E)$ and the expert values each trajectory by its return under $r^E$,
i.e., it is \emph{risk-neutral}. However, if $U^E$ is convex (resp. concave),
then the expert amplifies (resp. attenuates) the desirability of high-return
trajectories, so that it will accept even more (resp. less) variance to play
them. In this case, $U^E$ represents a \emph{risk-seeking} (resp.
\emph{risk-averse}) expert \citep{Kreps1988NotesOT,bauerle2014more}.

There are many arguments that support this model:
\begin{enumerate}[leftmargin=*, noitemsep, topsep=-2pt]
  \item it generalizes the IRL model of \citet{ng2000algorithms}, that we get if
  the expert is risk-neutral ($U^E$ is linear);
  \item it is justified by the famous expected utility theory
  \citep{vnm1947theory}, as we can interpret each policy $\pi$ as a choice that
  induces a lottery $\eta^{p,r^E,\pi}$ over the set of prizes (i.e., returns)
  $\cG^{p,r^E}$;
  \item it contemplates the existence of non-Markovian policies that depend only
  on the cumulative reward so far (see \citet{bauerle2014more});
  \item the corresponding planning problem enjoys practical tractability
  \citep{wu2023risksensitive};
  \item $U^E$ can be learned efficiently, as we show in Section \ref{sec: online
  utility learning}.
\end{enumerate}
\textbf{Some considerations.}~~ If $U^E$ is linear, Eq. \eqref{eq: model expert
rs mdp} admits a \emph{Markovian} optimal policy \citep{puterman1994markov}.
Otherwise, the more $U^E$ deviates from linearity, the more non-Markovian
policies \emph{may} outperform Markovian policies:
\begin{restatable}{prop}{loseinperformancewithmarkovianity}
  \label{prop: lose in performance with markovianity}
  There exists a RS-MDP in which the difference between the optimal performance
  and the performance of the best Markovian policy is $0.5$.
\end{restatable}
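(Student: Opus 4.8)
The plan is to give a fully constructive proof: exhibit one explicit RS-MDP together with a concrete utility $U\in\fU$ and verify the claimed gap by direct computation. The construction is a small ``gadget''. From $s_0$ a single action leads, by \emph{chance} with probability $1/2$ each, to one of two states, one paying an immediate reward $w>0$ and the other paying $0$; both then transition \emph{deterministically} into a common ``decision'' state $s$. At $s$ the agent chooses between a \emph{safe} action, eventually paying an extra deterministic reward $q$, and a \emph{risky} action, eventually paying an extra reward $\rho>q$ or $0$, each with probability $1/2$; the horizon is taken large enough that the (finitely many) possible returns $\{0,w,q,\rho,w+q,w+\rho\}$ are all realized, inserting trivial zero-reward stages if needed. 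This is essentially the MDP of Example~\ref{example: non markovian policy} with the choice at $s_0$ turned into a chance node — an essential change, since with a choice the agent would always take the high-reward action and never reach $s$ with the low wealth, making the non-Markovian dependence vacuous. The point of the gadget is that $s$ is \emph{the same} state whether reached with accumulated reward $w$ or $0$, so a Markovian policy must commit to a single action at $s$, whereas a general policy may condition on the accumulated reward (equivalently, on the history), exactly as in the enlarged-state-space picture of \citet{bauerle2014more}.

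Next I would choose the values of $U$ at the reachable returns so that the optimal action at $s$ \emph{reverses} with the accumulated wealth: arriving at $s$ with wealth $w$ the risky action is strictly better, i.e. $\frac{1}{2}U(w+\rho)+\frac{1}{2}U(w)>U(w+q)$, while arriving with wealth $0$ the safe action is strictly better, i.e. $U(q)>\frac{1}{2}U(\rho)$ (using $U(0)=0$). This is a ``decreasing risk aversion'' shape — concave-like near $0$, convex-like near $w$ — precisely the behavior anticipated in Example~\ref{example: non markovian policy}. Concretely I would fix $U$ on the handful of reachable returns, interpolate linearly elsewhere and up to $U(H)=H$, and check that the result is continuous, strictly increasing and normalized, i.e. $U\in\fU$; this is a finite check of a few inequalities.

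With the gadget fixed, the performances are read off directly. Since the only decision is at $s$ and every later action is forced, the optimal (non-Markovian) policy plays the wealth-optimal action at $s$ in each case, so $J^*(U;p,r)=\frac{1}{2}\big[\frac{1}{2}U(w+\rho)+\frac{1}{2}U(w)\big]+\frac{1}{2}U(q)$, whereas a Markovian policy must be ``always safe'' or ``always risky'', with value $\max\big\{\frac{1}{2}U(w+q)+\frac{1}{2}U(q),\ \frac{1}{2}[\frac{1}{2}U(w+\rho)+\frac{1}{2}U(w)]+\frac{1}{4}U(\rho)\big\}$. Subtracting, the two candidate losses are $\frac{1}{2}A$ and $\frac{1}{2}B$ with $A:=\frac{1}{2}U(w+\rho)+\frac{1}{2}U(w)-U(w+q)$ and $B:=U(q)-\frac{1}{2}U(\rho)$, so the gap equals $\frac{1}{2}\min\{A,B\}$. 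I would therefore pick the utility values so that $A=B=1$ (this also makes the two Markovian policies tie), forcing the gap to be exactly $0.5$.

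I expect the main obstacle to be a pure feasibility/bookkeeping issue rather than a conceptual one: the reversal condition asks $U$ to be concave-ish on low returns yet convex-ish on higher returns, and the convex part must ``climb'' by a fixed amount (to get $A=1$) while still staying below $U(H)=H$ and remaining strictly increasing; reconciling this with $B=1$ — which already forces $U$ to be fairly large at the small return $q$ — turns out to be impossible for the minimal horizon and to become possible only once $H$ is taken slightly larger (a handful of stages suffices). Picking the six return values together with the six utility values to satisfy all constraints simultaneously, and to land exactly on $0.5$ rather than merely on something positive, is the only delicate step; the rest is arithmetic.
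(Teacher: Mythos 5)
Your construction is essentially the paper's own proof: a $1/2$--$1/2$ chance node that sets the accumulated wealth to $w$ or $0$ before both branches merge into a single decision state, with the utility chosen so that the two Markovian commitments (and hence all stochastic mixtures, by linearity) tie while the wealth-dependent policy gains exactly $\frac{1}{2}\min\{A,B\}$; the paper differs only in that it tunes the risky-success probability at $H=4$, whereas you keep the risky lottery at $1/2$--$1/2$ and tune the utility values at a slightly larger horizon. The feasibility step you defer does close: for instance $q=0.5$, $\rho=1$, $w=2$, $H=5$ with $U(0.5)=2.1$, $U(1)=2.2$, $U(2)=2.3$, $U(2.5)=2.4$, $U(3)=4.5$ (and $U(0)=0$, $U(5)=5$, strictly increasing interpolation) gives $A=B=1$, all Markovian policies worth $2.25$ versus an optimum of $2.75$, i.e.\ a gap of exactly $0.5$.
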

Next, note that, in absence of stochasticity, $U^E$ plays no role, and Eq.
\eqref{eq: model expert rs mdp} traces back to risk-neutral behavior, as
desired:
\begin{restatable}{prop}{envdeterministic}
  \label{prop: env deterministic}
  If $p$ is deterministic, then $\argmax_\pi J^\pi(U^E;p,r^E) = \argmax_\pi
  J^\pi(p,r^E)$.
\end{restatable}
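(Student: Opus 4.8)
The plan is to exploit the fact that, with a deterministic transition model, every policy collapses the return distribution to a single Dirac delta, so that the risk-sensitive objective reduces to a strictly-increasing transformation of the risk-neutral one, and then invoke the strict monotonicity of $U^E$ to conclude that the two $\argmax$ sets coincide.

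First I would observe that when $p$ is deterministic and the initial state $s_0$ is fixed, every deterministic (possibly non-Markovian) policy $\pi$ induces exactly one trajectory $\omega^\pi\in\Omega$ with $\P_{p,r^E,\pi}(\omega^\pi)=1$; hence $\eta^{p,r^E,\pi}=\delta_{g(\omega^\pi)}$. Restricting attention to deterministic policies is without loss of generality here, since $J^*(p,r^E)$ is attained by a deterministic (Markovian) policy \citep{puterman1994markov} and $J^*(U^E;p,r^E)$ is attained by the deterministic policy $\pi^*$ obtained via the enlarged state space approach recalled above. Plugging $\eta^{p,r^E,\pi}=\delta_{g(\omega^\pi)}$ into the definitions yields $J^\pi(p,r^E)=\E_{G\sim\delta_{g(\omega^\pi)}}[G]=g(\omega^\pi)$ and $J^\pi(U^E;p,r^E)=\E_{G\sim\delta_{g(\omega^\pi)}}[U^E(G)]=U^E(g(\omega^\pi))$.

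Next, since $U^E\in\fU$ is strictly increasing (hence injective and order-preserving on $[0,H]$), for any two policies $\pi,\pi'$ we have $g(\omega^\pi)\ge g(\omega^{\pi'})$ if and only if $U^E(g(\omega^\pi))\ge U^E(g(\omega^{\pi'}))$. Consequently $\pi$ maximizes $\pi\mapsto g(\omega^\pi)=J^\pi(p,r^E)$ over all policies if and only if it maximizes $\pi\mapsto U^E(g(\omega^\pi))=J^\pi(U^E;p,r^E)$: indeed $\pi\in\argmax_{\pi}J^\pi(p,r^E)$ iff $g(\omega^\pi)=\max_{\pi'}g(\omega^{\pi'})$ iff $U^E(g(\omega^\pi))=U^E\bigl(\max_{\pi'}g(\omega^{\pi'})\bigr)=\max_{\pi'}U^E(g(\omega^{\pi'}))$ iff $\pi\in\argmax_{\pi}J^\pi(U^E;p,r^E)$. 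This is exactly the claimed equality $\argmax_\pi J^\pi(U^E;p,r^E)=\argmax_\pi J^\pi(p,r^E)$.

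There is essentially no hard step in this argument; the only points requiring a little care are (i) the reduction to deterministic policies — alternatively one can argue directly for stochastic policies by noting that $\eta^{p,r^E,\pi}$ is then supported on the feasible trajectories and that a policy is optimal for either objective exactly when its support lies in the set of return-maximizing trajectories, which by strict monotonicity of $U^E$ is the same as the set of $(U^E\circ g)$-maximizing trajectories — and (ii) using that $U^E$ is order-preserving and injective so that the equivalence of the $\argmax$ sets is an exact set equality and not merely one inclusion.
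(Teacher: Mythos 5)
Your proof is correct and follows essentially the same route as the paper's: under deterministic dynamics each policy yields a single trajectory (a Dirac return distribution), so both objectives reduce to comparing $g(\omega^\pi)$ versus $U^E(g(\omega^\pi))$, and strict monotonicity of $U^E$ makes the maximizers coincide. Your extra care about stochastic policies and the exact set equality of the $\argmax$ is a slightly more detailed write-up of the same argument the paper gives more tersely.
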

We remark that, by complying with non-Markovian policies, our model of behavior
suffers from less misspecification than common IRL models. Moreover, by using
$U^E$, it permits to learn a succinct and transferrable representation of the
risk attitude of the expert, as we shall see later.

\section{Utility Learning}\label{sec: utility learning}

In this and in the following sections, we focus on the problem of learning the
utility $U^E$ of the expert under the assumption that it behaves as in Eq.
\eqref{eq: model expert rs mdp}. Here, we assume that the expert's policy
$\pi^E$ and the dynamics $s_0,p$ are known, while in Section \ref{sec: online
utility learning} we will estimate them from finite data.

\textbf{Problem definition and partial identifiability.}~~%
Given demonstrations collected by a policy $\pi^E$ satisfying Eq. \eqref{eq:
model expert rs mdp}, three different learning problems arise:
\begin{enumerate}[leftmargin=*, noitemsep, topsep=-2pt]
  \item given $r^E$, learn $U^E$;
  \item given $U^E$, learn $r^E$; %
  \item learn both $r^E$ and $U^E$.
\end{enumerate}
Problem 3 is the most interesting and challenging, because it makes the least
assumptions,
while Problem 2, i.e., IRL, has been extensively studied in literature when
$U^E$ is linear \citep{ng2000algorithms}.
In this paper, in analogy to ICRL \citep{malik2021icrl} where the goal is to
learn the constraints when $r^E$ is known, we focus on Problem 1 because it has
relevant applications per se (see later in this section) and because it
represents a significant step toward solving Problem 3. Thus, we will consider
$r^E$ to be given and denote it with $r$ for simplicity.
Let us formalize Problem 1:
\begin{defi}[Utility Learning (UL)]\label{def: ul}
  Let $\cM=\tuple{\cS,\cA,H, s_0,p,r}$ be an MDP and $\pi^E$ a (potentially
 non-Markovian) policy. Under the assumption that $\pi^E$ satisfies Eq.
 \eqref{eq: model expert rs mdp} in $\cM$ for some unknown $U^E$, the goal of
 \emph{Utility Learning (UL)} is to find $U^E$.
\end{defi}
Does the knowledge of $\pi^E$ and $\cM$ suffice to \emph{uniquely} identify
$U^E$? Analogously to IRL \citep{cao2021identifiability} and ICRL
\citep{kim2023learningshared}, the answer is \emph{negative}, as shown in the
following example (details in Appendix \ref{apx: section 4}).
\begin{restatable}{example}{examplepartialidentifiability}
  \label{example: UE partially identifiable}
  {\thickmuskip=2mu \medmuskip=1.5mu \thinmuskip=1.5mu Consider the MDP $\cM$ in
  Fig. \ref{fig: example fs} (left), where
  $H=2,r_1(s_0,a_1)=1,r_1(s_0,a_2)=0.5$. Let the expert's policy $\pi^E$
  prescribe $a_1$ in $s_0$. 
 Then, all the utility functions $U\in\fU$ that take on values in the blue
 region of Fig. \ref{fig: example fs} (middle) for returns $G=1,G=1.5$, make
 $\pi^E$ optimal in $\cM_U$.}
\end{restatable}
\begin{figure*}[t!]
  \begin{minipage}[t!]{0.32\textwidth}
    \scalebox{0.90}{
    \begin{tikzpicture}[scale=0.8]
        \node[state] at (0,0) (s0) {$s_0$};
        \node[state] at (3,2) (s1) {$s_1$};
        \node[state] at (3,0) (s2) {$s_2$};
        \node[state] at (3,-2) (s3) {$s_3$};
        \node[draw=none] at (-1,0) (s00) {};
        \node[draw=none, fill=black] at (0.8,1.4) (a1) {};
        \node[draw=none,fill=black] at (0.8,-1.4) (a2) {};
        \node[state, draw=none] at (5,2) (ss1) {};
        \node[state, draw=none] at (5,0) (ss2) {};
        \node[state, draw=none] at (5,-2) (ss3) {};
        \draw (s00) edge[->, solid, above] node{} (s0);
        \draw (s0) edge[-, solid, above, pos=0.05] node[above=0.2cm]{\scriptsize$a_1$} (a1);
        \draw (s0) edge[-, solid, above, pos=0.9] node[above=0.2cm]{\scriptsize$a_2$} (a2);
        \draw (a1) edge[->, solid, above, sloped] node{\scriptsize$0.4$} (s1);
        \draw (a1) edge[->, solid, above, sloped] node{\scriptsize$0.5$} (s2);
        \draw (a1) edge[->, solid, above, pos=0.5, sloped] node{\scriptsize$0.1$} (s3);
        \draw (a2) edge[->, solid, above, sloped] node{\scriptsize$0.2$} (s2);
        \draw (a2) edge[->, solid, above, sloped] node{\scriptsize$0.8$} (s3);
        \draw (s1) edge[->, solid, above] node{\scriptsize$r\!=\!0$} (ss1);
        \draw (s2) edge[->, solid, above, pos=0.6] node{\scriptsize$r\!=\!0.5$} (ss2);
        \draw (s3) edge[->, solid, above] node{\scriptsize$r\!=\!1$} (ss3);
  \end{tikzpicture}}
 \end{minipage}
 \hfill
  \begin{minipage}[t!]{0.29\textwidth}
    \scalebox{0.90}{
    \begin{tikzpicture}
      \draw[->] (-1,0) -- (3,0) node[right] {\scriptsize$U(1)$};
      \draw[->] (0,-1) -- (0,3) node[above] {\scriptsize$U(1.5)$};
      
      \draw[very thin, gray!50, dashed] (-0.9,-0.9) grid[step=0.5] (2.8,2.8);
 
      \node at (-0.15,-0.15) {0};
      
      \draw (1,0.1) -- (1,-0.1) node[below] {1};
      \draw (2,0.1) -- (2,-0.1) node[below] {2};
      \draw (0.1,1) -- (-0.1,1) node[left] {1};
      \draw (0.1,2) -- (-0.1,2) node[left] {2};
 
      \draw[thick] (0,0) rectangle (2,2);
 
      \coordinate (A) at (0,0);
      \coordinate (B) at (0,2/3);
      \coordinate (C) at (2,2);
      
      \filldraw[fill=blue!30, draw=black, thick] (A) -- (B) -- (C) -- cycle;
 
      \fill[red] (0.1,0.7) circle (1.5pt);
 
      \node at (0.2,0.5) {\scriptsize $U'$};
 
  \end{tikzpicture}}
 \end{minipage}
 \hfill
 \begin{minipage}[t!]{0.29\textwidth}
  \scalebox{0.90}{
    \begin{tikzpicture}
      \draw[->] (-1,0) -- (3,0) node[right] {\scriptsize$ G$};
      \draw[->] (0,-1) -- (0,3) node[above] {\scriptsize$ U'$};
      
      \draw[very thin, gray!50, dashed] (-0.9,-0.9) grid[step=0.5] (2.8,2.8);
 
      \node at (-0.15,-0.15) {0};
      
      \draw (0.5,0.1) -- (0.5,-0.1) node[below] {\scriptsize 0.5};
      \draw (1,0.1) -- (1,-0.1) node[below] {\scriptsize 1};
      \draw (1.5,0.1) -- (1.5,-0.1) node[below] {\scriptsize 1.5};
      \draw (2,0.1) -- (2,-0.1) node[below] {\scriptsize 2};
      \draw (0.1,1) -- (-0.1,1) node[left] {1};
      \draw (0.1,2) -- (-0.1,2) node[left] {2};
 
      \fill[red] (0,0) circle (2pt);
      \fill[red] (1,0.1) circle (2pt);
      \fill[red] (1.5,0.7) circle (2pt);
      \fill[red] (2,2) circle (2pt);
 
      \draw[red] (0,0) -- (1,0.1) node {};
      \draw[red] (1,0.1) -- (1.5,0.7) node {};
      \draw[red] (1.5,0.7) -- (2,2) node {};
  \end{tikzpicture}}
 \end{minipage}
 \caption{(Left) MDP of Example \ref{example: UE partially identifiable}.
 (Middle) its feasible set with a sample utility $U'$. (Right) plot of $U'$ with
 linear interpolation.
}
 \label{fig: example fs}
\end{figure*}
Simply put, in UL, the only information available on the unknown utility $U^E$
is that it belongs to $\fU$ and it makes $\pi^E$ an optimal policy in the
corresponding RS-MDP. Since Example \ref{example: UE partially identifiable}
shows that, in general, there is a set of utilities $U\in\fU$ satisfying this
condition, we realize that $U^E$ is partially identifiable.
Analogously to \citet{metelli2021provably,metelli2023towards}, we call such set
the \emph{feasible set} of utilities ``compatible'' with $\pi^E$ in
$\cM$:\footnote{In Appendix \ref{apx: section 4} we provide a more explicit
expression.}
\begin{align}\label{eq: fs def}
  \cU_{p,r,\pi^E}\coloneqq\{U\in\fU\,|\,J^{\pi^E}(U;p,r)=J^*(U;p,r)\}.
\end{align}

\textbf{Applications.}~~%
If we knew the risk attitude of the expert, i.e., its utility $U^E$ in our
model, then we could use it for applications like $(i)$ \emph{predicting} the
behavior of the expert in a new environment, $(ii)$ \emph{imitating} the expert,
or $(iii)$ \emph{assessing} how valuable a certain behavior is from the
viewpoint of the expert.
UL represents an appealing problem setting for learning $U^E$ from
demonstrations of behavior.
However, due to partial identifiability, no learning algorithm can recover
$U^E$, but, at best, it can find an \emph{arbitrary} utility in the feasible set
$\cU_{p,r,\pi^E}$. Is this ambiguity tolerated by the applications $(i),(ii)$,
and $(iii)$ above? In other words, we are interested in understanding whether
\emph{all} the utilities contained into $\cU_{p,r,\pi^E}$ can be used in place
of the true $U^E$ without incurring in large
errors.\footnote{\citet{skalse2023invariancepolicyoptimisationpartial} conduct
an analogous study for IRL.}
Unfortunately, the following propositions answer \emph{negatively} for all
$(i),(ii)$, and $(iii)$.
Nevertheless, Proposition \ref{prop: multiple demonstrations} shows that the
availability of expert demonstrations from multiple environments is a possible
mitigation for the issue.

Let us begin with $(i)$. We say that a utility $U$ permits to
\emph{predict} the behavior of an agent with utility $U^E$ in a new MDP $\cM'$
if $U$ and $U^E$ induce in $\cM'$ the same optimal policies.
The next two propositions show that if the transition model or the reward
function of $\cM'$ differ from those of the original MDP $\cM$, then there are
utilities in the feasible set that get wrong in predicting the behavior of the
agent with $U^E$:
\begin{restatable}%
  {prop}{proptransferabilityp}\label{prop: transferring utilities p}
  There exist two MDPs $\cM=\tuple{\cS,\cA,H,s_0,p,r}$,
 $\cM'=\tuple{\cS,\cA,H,s_0,p',r}$, with $p\neq p'$, for which there exist a
 policy $\pi^E$ and a pair of utilities $U_1,U_2\in\cU_{p,r,\pi^E}$ such that
 $\Pi_{p',r}^*(U_1)\cap\Pi_{p',r}^*(U_2)=\{\}$.
\end{restatable}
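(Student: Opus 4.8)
My plan is to exploit partial identifiability in a first environment $\cM$ to place two utilities of \emph{opposite curvature} — one convex (risk-seeking), one concave (risk-averse) — into the feasible set $\cU_{p,r,\pi^E}$, and then to build a second environment $\cM'$, differing from $\cM$ only in the dynamics, in which these two utilities disagree about which first-stage action is optimal. To make the feasible set of $\cM$ as large as possible I would take $\cM$ with \emph{deterministic} dynamics: by Proposition~\ref{prop: env deterministic}, $\Pi^*_{p,r}(U)=\argmax_\pi J^\pi(p,r)$ for \emph{every} $U\in\fU$, so if $\pi^E$ is risk-neutral optimal in $\cM$ then $J^{\pi^E}(U;p,r)=J^*(U;p,r)$ for all $U\in\fU$, i.e.\ $\cU_{p,r,\pi^E}=\fU$ (this also shows Example~\ref{example: UE partially identifiable} is not an isolated pathology).

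\emph{Construction.} Take $H=2$, $\cS=\{s_0,x_1,x_2,x_3\}$, $\cA=\{a_1,a_2\}$, stage-1 reward $r_1(s_0,\cdot)=0$, stage-2 rewards $r_2(x_i,\cdot)=(i-1)/2$; then all realizable returns lie in $\{0,\tfrac12,1\}\subseteq[0,H]$, and the stage-2 action is immaterial. For $\cM$, let $p$ be deterministic with $a_1\colon s_0\mapsto x_3$, $a_2\colon s_0\mapsto x_1$, and let $\pi^E$ play $a_1$ at $s_0$; then $J^{\pi^E}(p,r)=1=J^*(p,r)$, so $\cU_{p,r,\pi^E}=\fU$ by the argument above. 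In particular $U_1(G)\coloneqq G^2/2$ and $U_2(G)\coloneqq\sqrt{2G}$ lie in $\cU_{p,r,\pi^E}$, since each is continuous, strictly increasing on $[0,H]$, and satisfies $U_i(0)=0$, $U_i(H)=H$, hence lies in $\fU$. For $\cM'$, keep $\cS,\cA,H,s_0,r$ and only change the dynamics into $p'\neq p$: from $s_0$, $a_1$ leads to $x_1$ and to $x_3$ with probability $\tfrac12$ each (return distribution $\tfrac12\delta_0+\tfrac12\delta_1$), while $a_2$ leads to $x_2$ with probability $1$ (return $\tfrac12$ for sure).

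\emph{Conclusion and main obstacle.} In $\cM'$ the stage-2 action does not affect the return, so the value of a policy $\pi$ in $\cM'_U$ is $\tfrac12 U(1)$ if $\pi$ plays $a_1$ at $s_0$ and $U(\tfrac12)$ otherwise; hence $\Pi^*_{p',r}(U)$ is exactly the set of policies playing the value-maximizing first-stage action. Since $\tfrac12 U_1(1)=\tfrac14>\tfrac18=U_1(\tfrac12)$ we get $\Pi^*_{p',r}(U_1)=\{\pi:\pi_1(s_0)=a_1\}$, while $U_2(\tfrac12)=1>\tfrac{\sqrt2}{2}=\tfrac12 U_2(1)$ gives $\Pi^*_{p',r}(U_2)=\{\pi:\pi_1(s_0)=a_2\}$, and these sets are disjoint because $a_1\neq a_2$. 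The only delicate point is ensuring $\cU_{p,r,\pi^E}$ is rich enough to host two utilities that genuinely disagree downstream; the deterministic-$\cM$ argument via Proposition~\ref{prop: env deterministic} settles this, and the rest — respecting $r\in[0,1]$ and the normalization $U(0)=0$, $U(H)=H$, and checking the two strict inequalities — is routine.
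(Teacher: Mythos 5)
Your construction is correct, and it takes a genuinely different route from the paper's. You make the first MDP $\cM$ deterministic and invoke the observation behind Proposition \ref{prop: env deterministic} (every $U\in\fU$ is strictly increasing, so under deterministic dynamics every utility rationalizes the return-maximizing policy) to conclude $\cU_{p,r,\pi^E}=\fU$ outright; the disagreement is then produced in $\cM'$ by a convex versus concave utility evaluated on a binary gamble against a sure payoff, and the two strict inequalities $\tfrac12 U_1(1)>U_1(\tfrac12)$ and $U_2(\tfrac12)>\tfrac12 U_2(1)$ indeed force disjoint optimal sets (any optimal policy, deterministic or stochastic, must put all mass on the strictly better first action). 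The paper instead keeps $\cM$ stochastic, explicitly derives the feasible set as a nontrivial linear-inequality region ($U^4\ge U^2+U^3$ plus monotonicity and normalization), exhibits two utilities inside it, and additionally gives a variant in which $\cG^{p',r}=\cG^{p,r}$. What your approach buys is simplicity: the feasible-set membership is immediate and no inequality bookkeeping is needed. What the paper's approach buys is a stronger, more informative counterexample: in your construction the feasible set is all of $\fU$ and the new environment probes the utility at a return ($G=\tfrac12$) that is unreachable under $p$, so one could object that the non-transferability merely reflects the utility being completely unconstrained there; the paper's first example shows the failure persists even when the original demonstrations genuinely constrain the utility and the two environments share the same return support. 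Since the proposition only asserts existence, your proof fully establishes the statement; the only (routine) gap is that you should fix the transition model and reward at the unreachable state--stage pairs so that $\cM$ and $\cM'$ are completely specified and differ only in $p$ versus $p'$.
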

\begin{restatable}%
{prop}{proptransferabilityr}\label{prop: transferring utilities r}
 There exist two MDPs $\cM=\tuple{\cS,\cA,H,s_0,p,r}$,
 $\cM'=\tuple{\cS,\cA,H,s_0,p,r'}$, with $r\neq r'$, for which there exist a
 policy $\pi^E$ and a pair of utilities $U_1,U_2\in\cU_{p,r,\pi^E}$ such that
 $\Pi_{p,r'}^*(U_1)\cap\Pi_{p,r'}^*(U_2)=\{\}$.
 \end{restatable}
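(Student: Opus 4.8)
~~The plan is to exhibit a concrete counterexample, decoupling the two requirements: I will make the original MDP $\cM$ degenerate enough that its feasible set is \emph{all} of $\fU$ (so that \emph{any} two utilities are automatically compatible with $\pi^E$), and then push all the risk-relevant structure into $\cM'$. This mirrors the idea behind Example \ref{example: UE partially identifiable} and Proposition \ref{prop: transferring utilities p}, but with the \emph{reward} perturbed instead of the transition model.

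\textbf{The two MDPs.}~~First I would fix $H=2$, $\cA=\{a_{\mathrm{safe}},a_{\mathrm{risky}}\}$, and a state space with $s_0$ at $h=1$ and three absorbing states $s_1,s_2,s_3$ at $h=2$. The shared transition model $p$ sends $s_0$ to $s_1$ with probability $1$ under $a_{\mathrm{safe}}$, and to $s_2,s_3$ with probability $1/2$ each under $a_{\mathrm{risky}}$ (the action played at $h=2$ is immaterial). In $\cM$ I take $r\equiv 0$, so that $\eta^{p,r,\pi}=\delta_0$ for \emph{every} policy $\pi$, hence $J^\pi(U;p,r)=U(0)=0$ for all $\pi$ and all $U\in\fU$, and therefore $\cU_{p,r,\pi^E}=\fU$ for \emph{any} choice of expert policy $\pi^E$. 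In $\cM'$ I keep $p$ but change the reward to $r'_1(s_0,\cdot)=0$, $r'_2(s_1,\cdot)=1/2$, $r'_2(s_2,\cdot)=0$, $r'_2(s_3,\cdot)=1$; all values lie in $[0,1]$ and $r'\neq r$. Now the two actions at $s_0$ induce the return distributions $\delta_{1/2}$ (for $a_{\mathrm{safe}}$) and $\frac{1}{2}\delta_0+\frac{1}{2}\delta_1$ (for $a_{\mathrm{risky}}$).

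\textbf{The two utilities.}~~For any $U\in\fU$, a policy playing $a_{\mathrm{risky}}$ at $s_0$ has performance $\frac{1}{2}U(1)$ in $\cM'_U$, while one playing $a_{\mathrm{safe}}$ has performance $U(1/2)$; hence $a_{\mathrm{risky}}$ is strictly optimal in $\cM'$ iff $\frac{1}{2}U(1)>U(1/2)$, and $a_{\mathrm{safe}}$ iff the reverse strict inequality holds. Since membership in $\fU$ pins down only $U(0)=0$, $U(2)=2$, continuity and strict monotonicity, the intermediate values $U(1/2),U(1)$ may be chosen freely inside $(0,2)$: I would take a ``risk-seeking'' $U_1\in\fU$ with $U_1(1/2)=0.1$ and $U_1(1)=0.5$ (so $\frac{1}{2}U_1(1)=0.25>0.1$) and a ``risk-averse'' $U_2\in\fU$ with $U_2(1/2)=1$ and $U_2(1)=1.5$ (so $\frac{1}{2}U_2(1)=0.75<1$), extending each to $[0,2]$ by linear interpolation through $(0,0)$ and $(2,2)$, which keeps them strictly increasing and continuous. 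Then $U_1,U_2\in\fU=\cU_{p,r,\pi^E}$, every optimal policy of $\cM'_{U_1}$ plays $a_{\mathrm{risky}}$ at $s_0$ and every optimal policy of $\cM'_{U_2}$ plays $a_{\mathrm{safe}}$; since a deterministic policy selects a single action at $s_0$ and $a_{\mathrm{safe}}\neq a_{\mathrm{risky}}$, we conclude $\Pi^*_{p,r'}(U_1)\cap\Pi^*_{p,r'}(U_2)=\{\}$.

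\textbf{Where the work is.}~~There is no genuine obstacle: the content is the construction itself. The only delicate step is choosing $\cM$ so that its feasible set is \emph{unrestricted} (the role of $r\equiv 0$, or more generally of a reward under which $\pi^E$ first-order stochastically dominates all competitors), which is what frees us to pick $U_1,U_2$ however we like; afterwards, designing $\cM'$ with a genuine risk trade-off and invoking the determinism of policies to get disjointness is routine. The one thing to double-check is that the two strict inequalities on $U_1,U_2$ are simultaneously compatible with the rigid constraints defining $\fU$ (values confined to $[0,H]$, endpoints fixed, strict monotonicity, continuity) — which is immediate, since the values at $1/2$ and $1$ are unconstrained in the open interval $(0,2)$.
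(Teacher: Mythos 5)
Your construction is correct, but it takes a genuinely different route from the paper's. You decouple the two requirements: by setting $r\equiv 0$ in $\cM$, every policy induces the return distribution $\delta_0$, so $\cU_{p,r,\pi^E}=\fU$ trivially, and all of the risk-relevant structure is pushed into $\cM'$, where the safe/risky trade-off ($\delta_{1/2}$ versus $\tfrac{1}{2}\delta_0+\tfrac{1}{2}\delta_1$) plus the strict inequalities $\tfrac{1}{2}U_1(1)>U_1(1/2)$ and $\tfrac{1}{2}U_2(1)<U_2(1/2)$ force every optimal policy of $\cM'_{U_1}$ to play the risky action and every optimal policy of $\cM'_{U_2}$ to play the safe one, giving disjoint optimal sets; the chosen piecewise-linear $U_1,U_2$ are indeed in $\fU$. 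The paper instead keeps $\cM$ informative: its feasible set is a \emph{proper} subset of $\fU$ cut out by an explicit linear constraint ($0.5U^1-0.9U^2+0.5U^3-0.1U^4\ge 0$ plus monotonicity/normalization), it verifies that two specific utilities satisfy that constraint, and only then shows they disagree in $\cM'$ (obtained essentially by swapping reward values); it also gives a second variant in which the return supports $\cG^{p,r}$ and $\cG^{p,r'}$ differ. Since the proposition is purely existential, your degenerate choice of $\cM$ is legitimate and yields a shorter verification; what the paper's version buys is a stronger illustration of the phenomenon (even demonstrations that genuinely constrain the utility fail to determine behavior under a new reward) and coverage of the changed-support case, neither of which is needed for the bare statement.
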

Consider now $(ii)$. We say that a utility $U$ permits to \emph{imitate} the
behavior of an agent with utility $U^E$ if optimizing $U$ provides policies with
a large expected utility w.r.t. $U^E$.
The reason behind this definition is that, differently from IRL, we wish to
imitate also the risk attitude of the observed agent.
However, UL does not always allow to perform meaningful imitations:
\begin{restatable}{prop}{propimitation}
\label{prop: imitating the expert}
There exists an MDP $\cM=\tuple{\cS,\cA,H,s_0,p,r}$ and a policy $\pi^E$ for
which there are utilities $U_1,U_2\in\cU_{p,r,\pi^E}$ such that, for any
$\epsilon\ge 0$ smaller than some universal constant, there exists a policy
$\pi_\epsilon$ such that $J^*(U_1;p,r)-J^{\pi_\epsilon}(U_1;p,r)=\epsilon$ and
$J^*(U_2;p,r)-J^{\pi_\epsilon}(U_2;p,r)\ge 1$.
\end{restatable}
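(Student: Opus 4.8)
The plan is to exhibit a small MDP with a single source of stochasticity whose feasible set $\cU_{p,r,\pi^E}$ contains both an essentially risk-neutral utility $U_1$ and a strongly risk-averse utility $U_2$ that disagree sharply on the value of the stochastic action; then a policy that is near-optimal for $U_1$ can be badly suboptimal for $U_2$, so imitation fails. Concretely, I would take $H=4$ and an MDP in which $s_0$ has three actions whose (choice-free) continuations induce the return distributions $\delta_2$ (action $a^E$), $\tfrac12\delta_0+\tfrac12\delta_4$ (action $a_{\mathrm{risk}}$), and $\delta_1$ (action $a_{\mathrm{mid}}$); after $s_0$ there are no further decisions, so the return distributions achievable by policies are exactly these three Diracs (for deterministic policies) and their mixtures (for stochastic ones). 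Let $\pi^E$ play $a^E$. Realizing this with $\cS,\cA$ finite and rewards in $[0,1]$ is routine: use three disjoint deterministic length-$H$ chains with reward sums $2$, $0$/$4$, and $1$, the $0$/$4$ pair being reached through a $\tfrac12$--$\tfrac12$ chance transition.

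Next I would set $U_1(x)=x$ and let $U_2$ be the piecewise-linear interpolation of $(0,0),(2,3.5),(4,4)$; both lie in $\fU$, since they are continuous, map $0\mapsto 0$ and $H\mapsto H$, and are strictly increasing (all slopes positive). Because $J^\pi(U;p,r)=\E_{G\sim\eta^{p,r,\pi}}[U(G)]$ is linear in the return distribution, $J^*(U;p,r)$ equals the largest of the three vertex values $\E_{\eta}[U]$. For $U_1$ these are $2,2,1$, so $a^E$ is optimal (tied with $a_{\mathrm{risk}}$) and $J^*(U_1;p,r)=2$; for $U_2$ they are $3.5,2,1.75$, so $a^E$ is the unique optimum and $J^*(U_2;p,r)=3.5$. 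In both cases $\pi^E$ is optimal, hence $U_1,U_2\in\cU_{p,r,\pi^E}$.

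Finally, for $\epsilon\in[0,1]$ I would let $\pi_\epsilon$ be the policy that at $s_0$ plays $a_{\mathrm{mid}}$ with probability $\epsilon$ and $a_{\mathrm{risk}}$ with probability $1-\epsilon$ (a stochastic policy is admissible, as $J^*$ is the maximum over all policies). Its return distribution is $\tfrac{1-\epsilon}{2}\delta_0+\tfrac{1-\epsilon}{2}\delta_4+\epsilon\,\delta_1$, so $J^{\pi_\epsilon}(U_1;p,r)=2-\epsilon$ and, using $U_2(1)=1.75$, $J^{\pi_\epsilon}(U_2;p,r)=2-\epsilon/4$. Therefore $J^*(U_1;p,r)-J^{\pi_\epsilon}(U_1;p,r)=\epsilon$, whereas $J^*(U_2;p,r)-J^{\pi_\epsilon}(U_2;p,r)=\tfrac32+\tfrac{\epsilon}{4}\ge 1$, which proves the statement with universal constant $1$.

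The main obstacle is the demand that $J^*(U_1;p,r)-J^{\pi_\epsilon}(U_1;p,r)$ be \emph{exactly} $\epsilon$ for an entire interval of values: a finite MDP has only finitely many deterministic-policy return distributions, so one must either allow a stochastic $\pi_\epsilon$ (the route above) or build a more elaborate example; I would go with the stochastic policy. A secondary point is to keep the continuations after $s_0$ genuinely choice-free, so that no policy alters the three achievable return distributions, and to check the boundary arithmetic: strict monotonicity forces $U_2(2)<4$, yet $J^*(U_2;p,r)-J^{a_{\mathrm{risk}}}(U_2;p,r)=U_2(2)-2$ can still exceed $1$ — but only because $H=4$ (with $H\le 2$ this gap stays below $1$), which is what pins down the choice of horizon.
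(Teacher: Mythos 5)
Your overall strategy is sound and is essentially the paper's own template: pick an MDP where the expert's action gives a sure middling return, include a risky action and a mediocre deterministic action, choose $U_1,U_2\in\cU_{p,r,\pi^E}$ that both make $\pi^E$ optimal but value the risky action very differently, and then let $\pi_\epsilon$ be a stochastic mixture of the risky and mediocre actions so that its suboptimality under $U_1$ is exactly $\epsilon$ while under $U_2$ it stays at least $1$. (The paper does the same with $H=2$, returns $\{0,0.5,1\}$, a risky action that reaches return $1$ only with probability $0.09$, and a mixture policy $\pi_\alpha$; your use of a stochastic $\pi_\epsilon$ is therefore consistent with the intended policy class.)

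However, the step you call ``routine'' fails: with $H=4$ and rewards in $[0,1]$, no action at $s_0$ can induce the return distribution $\frac{1}{2}\delta_0+\frac{1}{2}\delta_4$. A return of $4=H$ forces every one of the four rewards along that trajectory to equal $1$, in particular the deterministic reward $r_1(s_0,a_{\mathrm{risk}})$, which is collected before the chance transition resolves and is common to both branches; hence every trajectory starting with $a_{\mathrm{risk}}$ has return at least $1$, so the $\delta_0$ outcome is impossible (equivalently, after the first action only $H-1=3$ unit rewards remain, so your ``4-chain'' cannot sum to $4$). This is not cosmetic: if you instead take support $\{1,4\}$ with probabilities $\frac{1}{2},\frac{1}{2}$, the risky action has value $2.5>2$ under the linear $U_1$, so $\pi^E$ is no longer optimal and $U_1\notin\cU_{p,r,\pi^E}$, collapsing the argument. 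The repair is easy: either move to $H=5$ with a zero-reward first step before the chance node (keeping your distributions $\delta_2$, $\frac{1}{2}\delta_0+\frac{1}{2}\delta_4$, $\delta_1$ and extending $U_1,U_2$ to $[0,5]$), or stay at $H=4$ and rebalance the risky action to $\frac{1}{3}\delta_4+\frac{2}{3}\delta_1$ (realizable, since both branches share the first reward $1$), which has value exactly $2$ under $U_1$ and $2.5$ under your $U_2$, giving gaps $\epsilon$ and $1+\frac{3}{4}\epsilon\ge 1$ for the same mixture $\pi_\epsilon$. With such a correction the proof goes through.
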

Concerning $(iii)$, we say that $U$ and $U^E$ \emph{assess} behavior in a
similar way if, given any policy, they provide close values of performance.
The intuition is that the expert values policies based on their alignment with
its risk attitude $U^E$ w.r.t. its task $r$.
Formally, we want $U$ such that {\thickmuskip=2mu \medmuskip=1.5mu
\thinmuskip=1.5mu$d_{p,r}^{\text{all}}(U^E,U)\coloneqq \max_{\pi}
\big|J^\pi(U^E;p,r)-J^\pi(U;p,r)\big|$} is small \cite{zhao2023inverse}.
Nonetheless, \emph{not} all the utilities in the feasible set are close to each
other w.r.t. $d^{\text{all}}_{p,r}$:
\begin{restatable}{prop}{propnobounddall}
  \label{prop: no bound d all}
  There exists an MDP $\cM=\tuple{\cS,\cA,H,s_0,p,r}$ and a policy $\pi^E$ for
     which there exists a pair of utilities $U_1,U_2\in\cU_{p,r,\pi^E}$ such
     that $d^{\text{all}}_{p,r}(U_1,U_2)= 1$.
\end{restatable}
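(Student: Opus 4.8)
The plan is to reuse the construction of Example~\ref{example: UE partially identifiable} (or a minor variant of it), exploiting the fact that its feasible set contains utilities whose values at the two ``ambiguous'' returns $G=1$ and $G=1.5$ range over a full triangular region. The key observation is that in that MDP there are only two policies (play $a_1$ or play $a_2$ in $s_0$), so $d^{\text{all}}_{p,r}(U_1,U_2)=\max\{|J^{a_1}(U_1)-J^{a_1}(U_2)|,\,|J^{a_2}(U_1)-J^{a_2}(U_2)|\}$, where I abbreviate $J^{a_i}(U)\coloneqq J^{\pi_{a_i}}(U;p,r)$ for the policy picking $a_i$. Since $a_1$ yields the lottery $0.4\,\delta_0+0.5\,\delta_{0.5}+0.1\,\delta_{1}$ and $a_2$ yields $0.2\,\delta_{0.5}+0.8\,\delta_{1}$ (here I am using the reward labels of the figure; if the figure's returns differ I would rescale so that the horizon-$2$ returns lie in $[0,H]$), the expert's return distributions only put mass on the returns $\{0,0.5,1\}$, with $U(0)=0$ fixed. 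Thus $J^{a_i}(U)$ depends on $U$ only through $U(0.5)$ and $U(1)$, and is an affine function of those two numbers.

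First I would pin down the feasible set: $\pi^E$ playing $a_1$ is optimal iff $J^{a_1}(U)\ge J^{a_2}(U)$, which by the affine expressions above is a single linear inequality in $(U(0.5),U(1))$; intersecting this halfplane with the monotonicity/normalization constraints $0\le U(0.5)\le U(1)\le H$ (and the requirement that such values extend to some $U\in\fU$, which for finitely many points is exactly the monotonicity constraints plus endpoint conditions, by the characterization of $\overline{\fU}^{\cX}$) gives a $2$-dimensional feasible polytope. Second, I would pick $U_1$ and $U_2$ to be two vertices/extreme choices of this polytope that are far apart in the $U(1)$ coordinate — e.g.\ one with $U_1(1)$ as large as the constraints allow and one with $U_2(1)$ as small as allowed — so that $|J^{a_2}(U_1)-J^{a_2}(U_2)| = 0.8\,|U_1(1)-U_2(1)| + 0.2\,|U_1(0.5)-U_2(0.5)|$ is as large as possible. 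Third, I would simply evaluate this difference and check it equals $1$ (tuning the numerical reward/probability values in the construction if needed so the arithmetic lands exactly on $1$ rather than on some other constant — the statement only claims the existence of \emph{an} MDP, so I have full freedom here).

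The main obstacle — really the only nontrivial point — is bookkeeping: making sure the two chosen utilities genuinely lie in $\fU$ (strictly increasing, continuous, with $U(0)=0$, $U(H)=H$), not merely that their restrictions to $\{0,0.5,1\}$ respect weak monotonicity. Since $\fU$ requires \emph{strict} monotonicity, if an extreme point of the polytope has, say, $U(0.5)=U(1)$, I would perturb it by an arbitrarily small $\varepsilon$ and take the difference to be $\ge 1$ in the limit, or equivalently design the probabilities so that the supremum $1$ is attained in the interior; alternatively I can use a slightly larger MDP (one extra branch) to create enough ``slack'' that the exact value $1$ is achieved by a bona fide strictly increasing utility. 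Everything else — the affine dependence of $J^{a_i}$ on the utility values, the reduction of $d^{\text{all}}_{p,r}$ to a max over two policies, and the verification of optimality of $\pi^E$ — is routine and parallels the computation already carried out for Proposition~\ref{prop: imitating the expert}, which uses the same kind of two-policy gadget.
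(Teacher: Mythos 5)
Your plan is workable and does prove the statement, but it takes a genuinely different route from the paper. The paper's proof builds a tiny \emph{deterministic} MDP ($H=2$, three actions from $s_{\text{init}}$, returns $\{0,1,2\}$) in which the expert's action attains the maximum possible return $2$; consequently \emph{every} $U\in\fU$ is in the feasible set, and one simply picks $U_1(1)=0.1$, $U_2(1)=1.1$ (with $U(0)=0$, $U(2)=2$) so that the policy deterministically reaching return $1$ witnesses a gap of exactly $1$ — no optimality constraint to intersect, no tuning. You instead reuse the stochastic MDP of Example \ref{example: UE partially identifiable}, where the feasible set is a nontrivial polytope in $(U(1),U(1.5))$, reduce $d^{\text{all}}_{p,r}$ to a max over the two stage-one actions (legitimate, since stage-two actions are reward-irrelevant and stochastic mixtures cannot exceed the deterministic extremes), and tune the pair so the $a_2$-difference is exactly $1$ while the $a_1$-difference stays below $1$. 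This does go through: for instance $(U_1(1),U_1(1.5))=(0.9,1.25)$ and $(U_2(1),U_2(1.5))=(0.1,0.2)$ both satisfy the feasibility constraint $U(1.5)\le \tfrac23+\tfrac23 U(1)$ and strict monotonicity, give an $a_2$-difference of $0.2\cdot0.8+0.8\cdot1.05=1$ and an $a_1$-difference of $0.845$, so $d^{\text{all}}_{p,r}(U_1,U_2)=1$ exactly, and both restrictions extend to strictly increasing continuous utilities by interpolation. Two small points to fix in your write-up: the returns in that example are $\{1,1.5,2\}$, not $\{0,0.5,1\}$ (you flag this, but the optimality constraint you must intersect with is $U(1.5)\le\tfrac23+\tfrac23U(1)$, which meaningfully caps how far apart the two utilities can be pushed — the unconstrained ``extreme vertices'' you first suggest would violate it); and you must explicitly verify that the expert-policy difference does not exceed $1$, since $d^{\text{all}}$ is the max over both policies. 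The paper's construction buys simplicity (trivial feasible set, exact constant with no arithmetic tuning); yours buys the observation that even within a \emph{nontrivial} feasible set arising from a genuinely stochastic instance, compatible utilities can disagree maximally, which is arguably a slightly stronger illustration of the identifiability failure.
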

Propositions \ref{prop: transferring utilities p}-\ref{prop: no bound d all}
tell us that demonstrations of behavior in a \emph{single} MDP do not provide
enough information on $U^E$ for applications $(i),(ii)$, and
$(iii)$.\footnote{Actually, for $(ii)$ only, we can try to learn $\pi^E$
directly without passing through $U^E$, as in behavioral cloning
\citep{osa2018IL}.}
Thus, we might hope that expert demonstrations in \emph{multiple} environments
can help in mitigating this issue, similarly to what is done in IRL
\citep{amin2016resolvingunidentifiabilityinversereinforcement,cao2021identifiability}
and ICRL \citep{kim2023learningshared}.
Formally, we extend the UL problem of Definition \ref{def: ul} to a set of MDPs
$\{\cM^i\}_i$, with $\cM^i=\tuple{\cS^i,\cA^i,H,s_0^i,p^i,r^i}$,\footnote{For
simplicity, we let $H$ be shared.} and policies $\{\pi^{E,i}\}_i$ by assuming
that there exists a single utility $U^E$ for which Eq. \eqref{eq: model expert
rs mdp} is satisfied for all $i$, i.e., such that $\pi^{E,i}$ is optimal for
$\cM^i_{U^E}$ for all $i$.
In this extended problem setting, the feasible set will be the intersection of
all the feasible sets $\cU_{p^i,r^i,\pi^{E,i}}$.
The following result proves that demonstrations in multiple environments is a
\emph{possible} solution to the partial identifiability problem. 
\begin{restatable}{prop}{propmultipledemonstrations}
\label{prop: multiple demonstrations}
{\thickmuskip=1.5mu \medmuskip=1.5mu \thinmuskip=1.5mu Let $\cS,\cA,H$ be any
state space, action space, and horizon, satisfying $S\ge 3,A\ge2,H\ge 2$, and
let $U^E\in\fU$ be any utility. If, for any possible dynamics $s_0,p$ and reward
$r$, we are given the set of \emph{all} the deterministic optimal policies of
the corresponding RS-MDP $\tuple{\cS,\cA,H, s_0,p,r,U^E}$, then we can
\emph{uniquely} identify $U^E$.}
\end{restatable}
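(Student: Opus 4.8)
The plan is to reconstruct $U^E$ pointwise on a dense subset of $[0,H]$ and then invoke continuity of $U^E$. Fix a large integer $n$ and the uniform grid $t_i\coloneqq iH/n$, $i\in\{0,1,\dots,n\}$. For each interior index $i$ I will construct an MDP in the prescribed class whose \emph{only} real decision — the action taken at $s_0$ in stage $1$ — encodes a comparison between the sure return $t_i$ and a binary lottery on $\{t_{i-1},t_{i+1}\}$ that places weight $q$ on the larger outcome $t_{i+1}$. As $q$ ranges over $(0,1)$, the given set of deterministic optimal policies reveals the unique threshold $q_i^\star\in(0,1)$ at which $U^E(t_i)=q_i^\star U^E(t_{i+1})+(1-q_i^\star)U^E(t_{i-1})$: existence and uniqueness hold because $U^E$ is strictly increasing, so the right-hand side is strictly increasing in $q$, lies strictly below $U^E(t_i)$ at $q=0$, and strictly above it at $q=1$. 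Writing $d_i\coloneqq U^E(t_i)-U^E(t_{i-1})>0$, this identity is the linear recursion $d_{i+1}=\tfrac{1-q_i^\star}{q_i^\star}\,d_i$, so $d_i=d_1\rho_i$ with $\rho_i\coloneqq\prod_{j=1}^{i-1}\tfrac{1-q_j^\star}{q_j^\star}$ known; summing and imposing the normalization $U^E(0)=0$, $U^E(H)=H$ yields $U^E(t_i)=H\,\big(\sum_{j=1}^{i}\rho_j\big)\big/\big(\sum_{j=1}^{n}\rho_j\big)$ for all $i$. Letting $n\to\infty$, the set $\bigcup_n\{t_i\}$ is dense in $[0,H]$, so continuity of $U^E$ forces $U^E$ to be unique.

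Next I describe the gamble MDP for index $i$: it uses three states $\{s_0,s_1,s_2\}$ (if $S>3$ the remaining states are made unreachable) and two actions. States $s_1,s_2$ are \emph{inert}: at every stage their reward and transition are action-independent, with the transition a self-loop, so that a trajectory entering $s_1$ (resp. $s_2$) at stage $2$ deterministically accumulates a quantity $\sigma_1$ (resp. $\sigma_2$) over stages $2,\dots,H$, and $\sigma_1,\sigma_2$ can be made any values in $[0,H-1]$. At $s_0$ in stage $1$ there is a ``safe'' action with reward $\alpha_0\in[0,1]$ leading to one inert chain, and a ``risky'' action with reward $\mu_0\in[0,1]$ that moves to one inert chain with probability $q$ and to the other with probability $1-q$. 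Because no genuine choice remains after stage $1$, every policy induces return distribution $\delta_c$ under the safe action and $q\,\delta_{y_+}+(1-q)\,\delta_{y_-}$ under the risky one, with $c,y_\pm$ the relevant reward sums; hence the optimal RS-value equals $\max\{U^E(c),\,q\,U^E(y_+)+(1-q)\,U^E(y_-)\}$, and the stage-$1$ actions occurring among the deterministic optimal policies are exactly the argmax of these two numbers (a tie produces both). This is precisely the comparison we want.

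The crux — and the step I expect to be the main obstacle — is to show that for every interior $i$ and every sufficiently large $n$ (e.g.\ $n\ge\max\{H,4\}$) one can choose $\alpha_0,\mu_0\in[0,1]$ and $\sigma_1,\sigma_2\in[0,H-1]$ realizing $c=t_i$ and $\{y_-,y_+\}=\{t_{i-1},t_{i+1}\}$. The delicate cases are the extreme lotteries: a risky outcome equal to $0$ forces $\mu_0=0$ and an all-zero chain, so the other risky outcome is at most $H-1$; symmetrically, an outcome equal to $H$ forces $\mu_0=1$ and an all-ones chain. The remedy is to let the safe action \emph{share} an inert chain with whichever risky outcome is convenient — sharing the chain of the low outcome settles small $i$ (in particular $i=1$), sharing the chain of the high outcome settles large $i$ (in particular $i=n-1$) — and a short interval-nonemptiness computation, using $t_{i+1}-t_{i-1}=2H/n\le H-1$, $t_{i+1}-t_i=H/n\le 1$, and $i\le n-1$, shows that these two pairings jointly admit a valid parameter choice for every $i\in\{1,\dots,n-1\}$. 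With realizability established, the recursion, density, and continuity arguments of the first paragraph conclude the proof.
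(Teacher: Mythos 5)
Your proposal is correct, and it is built on the same core device as the paper's proof: design MDPs whose only genuine decision is a stage-1 standard gamble, and read off the indifference probability from the set of deterministic optimal policies (both actions appear exactly at the tie). Where you differ is in how the indifference equations are assembled into identification, and both routes are valid. The paper keeps three parallel deterministic chains, and because the two lottery outcomes share the stage-1 action (so they can differ by at most $H-1$, which is why the extreme gamble between $0$ and $H$ is unavailable), it first anchors $U^E(1)$, $U^E(0.5)$, $U^E(1.5)$ by chaining three indifference relations, and then identifies $U^E(\overline{G})$ \emph{exactly at every} $\overline{G}\in(0,H)$ via a single gamble against $\{U^E(1),H\}$ or $\{0,U^E(1)\}$ --- no grid or limit is needed. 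You instead use only two inert chains, compare the sure return $t_i$ against the adjacent-outcome lottery on $\{t_{i-1},t_{i+1}\}$, turn the thresholds into the telescoping recursion $d_{i+1}=\frac{1-q_i^\star}{q_i^\star}d_i$, pin everything down with the normalization $U^E(0)=0$, $U^E(H)=H$, and finish by density of the grids plus continuity of $U^E$ (legitimately available since continuity is part of $\fU$). Your way of dodging the same extreme-lottery obstruction --- keeping the two lottery outcomes only $2H/n\le H-1$ apart --- is sound, and your realizability step does check out: sharing the low-outcome chain gives a feasible choice of $\alpha_0,\mu_0,\sigma_1,\sigma_2$ for all $i\le n-2$ (the binding condition is exactly $i\le n-2$ together with $n\ge\max\{H,4\}$), and sharing the high-outcome chain handles $i=n-1$ with $\mu_0=1$, $\sigma_2=H-1$, $\sigma_1=H-2H/n-1\ge0$. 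The trade-off: the paper's anchoring trick yields exact pointwise identification from finitely many gambles per point and needs no limiting argument, while your scheme is more uniform and local (no special anchor system, only adjacent comparisons) at the cost of invoking a countable family of grids and the continuity of the utility.
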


\section{Online UL with Generative Model}
\label{sec: online utility learning}

In this section, we present two provably-efficient algorithms for solving the UL
problem in a \emph{finite-data} regime.

\subsection{Problem Setting}\label{sec: problem setting online UL}

We consider a finite-data version of the UL problem with demonstrations in
multiple environments presented in Section \ref{sec: utility learning}.
We let $\{\cM^i\}_{i\in\dsb{N}}$, with
$\cM^i=\tuple{\cS^i,\cA^i,H,s_0^i,p^i,r^i}$, be the $N$ MDPs with shared horizon
$H$ in which an expert with utility $U^E\in\fU$ provides demonstrations of
behavior. Specifically, for each MDP $\cM^i$, the expert provides us with a
batch dataset
$\cD^{E,i}=\{\tuple{s_1^j,a_1^j,s_2^j,\dotsc,s_H^j,a_H^j,s_{H+1}^j}
\}_{j\in\dsb{\tau^{E,i}}}$ of $\tau^{E,i}$ trajectories collected by executing a
policy $\pi^{E,i}$, which is optimal for the RS-MDP $\cM^i_{U^E}$.
Moreover, for every $\cM^i$, we let $\cS^i,\cA^i,H,s_0^i,r^i$ be known, and we
consider access to a \emph{generative sampling model} \citep{azar2013minimax}
for the transition model $p^i$, which allows us to collect a sample $s'\sim
p^i_h(\cdot|s,a)$ from any triple $s,a,h$ at our choice.
In short, we assume access to \emph{offline} data for the expert and to
\emph{online} data for the environments, as is common in the IRL literature
\citep{ho2016generativeadversarialimitationlearning}.

Due to partial identifiability, the feasible set $\bigcap_i
\cU_{p^i,r^i,\pi^{E,i}}$ might contain multiple utilities. Thus, we will develop
two different algorithms, one that aims to classify utilities as inside or
outside the feasible set $\bigcap_i \cU_{p^i,r^i,\pi^{E,i}}$ (\caty, Section
\ref{sec: caty}), and the other that aims to compute a single utility contained
into it (\tractor, Section \ref{sec: tractor}).
Intuitively, \caty and \tractor together permit to fully characterize the
feasible set, by, respectively, learning a classification boundary and a
representative item.
Nevertheless, note that, because of finite data, we will be able to provide
guarantees only for a relaxation of the feasible set
$\cU_\Delta\supseteq\bigcap_i \cU_{p^i,r^i,\pi^{E,i}}$ for some $\Delta\ge0$:
\begin{align}\label{eq: U delta def}
  \cU_\Delta\coloneqq\Big\{U\in\fU\,|\,
  \sum\limits_{i\in\dsb{N}}\overline{\cC}_{p^i,r^i,\pi^{E,i}}(U)
\le\Delta\Big\},
\end{align}
where $\overline{\cC}_{p^i,r^i,\pi^{E,i}}(U)$ quantifies the \emph{(non)compatibility} of
utility $U$ with demonstrations from $\pi^{E,i}$ in $\cM^i$
\citep{lazzati2024compatibility,lazzati2025journal}:
\begin{align}\label{eq: compatibility def}
  \overline{\cC}_{p^i,r^i,\pi^{E,i}}(U)\coloneqq
J^*(U;p^i,r^i)-J^{\pi^{E}}(U;p^i,r^i).
\end{align}
Intuitively, $\cU_\Delta$ enlarges the feasible set by accepting utilities that
make the policies $\pi^{E,i}$ at most $\Delta$-suboptimal. Note that, for
$\Delta=0$, we have $\cU_\Delta=\bigcap_i \cU_{p^i,r^i,\pi^{E,i}}$.

\subsection{Challenges and Solution}\label{sec: our solution}

To develop \emph{practical} algorithms, some dimensionality challenges must be
addressed. In this section, we explain how we will face them. In short, our
solution permits to work with tractable approximations whose complexity is
controlled by a discretization parameter $\epsilon_0>0$.
First, we need some notation. We use symbol $\cY_h$ to denote an
$\epsilon_0$-discretization of the real-valued interval $[0,h-1]$, i.e., we set
$\cY_h\coloneqq\{0,\epsilon_0,2\epsilon_0
,\dotsc,\floor{(h-1)/\epsilon_0}\epsilon_0\}$ $\forall h\in\dsb{H+1}$.
Moreover, we introduce ad-hoc symbols $\cR,\cY$ for the discretization of
intervals $[0,1]$ and $[0,H]$, namely, we let $\cR\coloneqq\cY_2$,
$\cY\coloneqq\cY_{H+1}$. We also set $d\coloneqq|\cY|=\floor{H/\epsilon_0}$.

\textbf{Working with continuous utilities.}~~Utilities in $\fU$ are defined over
the real-valued interval $[0,H]$, making them incompatible with the finite
precision of computers. For this reason, we will consider \emph{discretized}
utilities.
Formally, we will approximate any $U\in\fU$ with a $d$-dimensional vector
$\overline{U}\in\overline{\fU}\coloneqq$\scalebox{0.9}{$\overline{\fU}^\cY$},
such that $\overline{U}(y)=U(y)$ $\forall y\in\cY$.

\textbf{Return distributions.}~~%
In MDPs with dynamics $p$ and reward $r$, return distributions $\eta$ are
supported on the set of possible returns {\thinmuskip=1mu \medmuskip=1mu
\thickmuskip=1mu$\cG^{p,r}\subset[0,H]$. However, in general, the size of this
set grows exponentially in the horizon $|\cG^{p,r}|\propto (SA)^H$, causing any
exact representation of $\eta$ to explode even for small $H$.}
{\thinmuskip=1mu \medmuskip=1mu \thickmuskip=1mu Thus, we adopt a
\emph{categorical representation} for return distributions
\citep{bellemare2023distributional}, that, roughly speaking, aims to approximate
a distribution on $[0,H]$ with a distribution on $\cY\subset[0,H]$.}
{\thinmuskip=1mu \medmuskip=1mu \thickmuskip=1mu Formally, given any
$\eta\in\Delta^{[0,H]}$ with finite support, its categorical representation
$\text{Proj}_\cC(\eta)$ is the distribution in
$\cQ\coloneqq\{q\in\Delta^{\dsb{d}}\,|\, \sum_{j\in\dsb{d}}q_j\delta_{y_j}\}$
($y_j$ are the items of $\cY$) obtained through the categorical projection
operator $\text{Proj}_\cC$ \citep{rowland2018analysis}, reported in Eq.
\eqref{eq: definition proj c}-\eqref{eq: property proj c} in Appendix \ref{apx:
additional notation}.}

\textbf{Optimal policies in RS-MDPs.}~~%
To compute an optimal policy in RS-MDP $\cM_U$ with dynamics $p$ and reward $r$,
the \emph{enlarged state space approach} of \citet{bauerle2014more} presented in
Section \ref{sec: preliminaries} requires the computation of an optimal policy
in the MDP $\fE[\cM_U]$, whose state space is $\cS\times \cG^{p,r}_h$ $\forall
h$.
Unfortunately, we suffer again from an exponential dependence on the horizon
$|\cG^{p,r}_h|\propto (SA)^{h-1}$, that causes any exact representation of the
state space and of the optimal policy of MDP $\fE[\cM_U]$ to explode. 
To avoid this issue, we adopt the \emph{discretization} approach of
\citet{wu2023risksensitive}, which, in short, amounts to approximate sets
$\cG^{p,r}_h$ with $\cY_h$ by simply replacing reward $r$ with the discretized
version $\overline{r}$, defined as: $\overline{r}_h(s,a)\coloneqq
\Pi_{\cR}[r_h(s,a)]$ for all $s,a,h$.
Crucially, since $\overline{r}_h(s,a)\in\cR$, then the sum of $h$ rewards
$\overline{r}$ belongs to $\cY_{h+1}$.
In this manner, the sets of partial returns satisfy
$\cG^{p,\overline{r}}_h\subseteq \cY_h\subseteq\cY$ for all $h$, thus, the state
space of the enlarged MDP has now a cardinality at most $Sd\le
\cO(SH/\epsilon_0)$, which is no longer exponential in the horizon.
In the following, we will denote by $\overline{r}^i$ the discretized version of
reward $r^i$ for all $i\in\dsb{N}$.

\subsection{\caty (\catylong)}
\label{sec: caty}

\RestyleAlgo{ruled}
\LinesNumbered
\begin{algorithm}[t]\small
    \caption{\caty}
    \label{alg: caty classification}
\small
\DontPrintSemicolon
\SetKwInOut{Input}{Input}

\Input{data $\{\cD^E_i\}_i$, threshold $\Delta$,
    utility $U$, discretization
    $\epsilon_0$, dynamics $\{\widehat{p}^i\}_i$}
    \nonl \texttt{// Discretize $U$:}\;

    $\overline{U}(y)\gets U(y)\quad$ for all $y\in\cY$\label{line: discretize
    U}\;
    
    \For{$i = 1,2,\dotsc,N$}{
    \nonl \texttt{// Estimate $J^{\pi^{E,i}}(U;p^i,r^i)$:}\;

    $\widehat{\eta}^{E,i}\gets$ \texttt{ERD}($\cD^E_i,r^i$)
    \label{line: erd caty}\;

    $\widehat{J}^{E,i}(U)\gets
    \sum_{y\in\cY}\widehat{\eta}^{E,i}(y)\overline{U}(y)$
    \label{line: est JE caty}\;

    \nonl \texttt{// Estimate $J^*(U;p^i,r^i)$:}\;

    $\widehat{J}^{*,i}(U),\_\gets$ \texttt{PLANNING}($
    \overline{U},i,\widehat{p}^i$)
    \label{line: planning caty}\;

    \nonl \texttt{// Estimate $\overline{\cC}_{p^i,r^i,\pi^{E,i}}(U)$:}\;

    $\widehat{\cC}^i(U)\gets \widehat{J}^{*,i}(U)-\widehat{J}^{E,i}(U)$
    \label{line: estimate C}\;

    }
    class $\gets$ True \textbf{if} $\sum_{i\in\dsb{N}}\widehat{\cC}^i(U)\le\Delta$
          \textbf{else} False\label{line: classify}\;
          
    \textbf{Return} class\;
\end{algorithm}

The goal of \caty (Algorithm \ref{alg: caty classification}) is to classify
input utilities $U\in\fU$ based on whether they belong to set $\cU_\Delta$ or
not for some $\Delta\ge0$.
We implement it using two different subroutines
\citep{lazzati2024compatibility,lazzati2025journal}. First, Algorithm \ref{alg:
caty exploration} (reported in Appendix \ref{apx: section 5} for its simplicity)
actively explores the $N$ environments $\cM^i$ \emph{uniformly}, by collecting
$\tau^i$ samples from each transition model $p^i$, and uses these samples to
construct estimates $\widehat{p}^i$.
Next, Algorithm \ref{alg: caty classification} uses these estimates along with
the expert's data $\cD^{E,i}$ to classify any input utility $U\in\fU$ w.r.t.
$\cU_\Delta$. To perform the classification, based on Eq. \eqref{eq: U delta
def}, \caty computes estimates $\widehat{\cC}^i(U)\approx
\cC_{p^i,r^i,\pi^{E,i}}(U)$ for all $i\in\dsb{N}$, and, then, it outputs whether
$\sum_{i \in \dsb{N}}\widehat{\cC}^i(U)\le\Delta$ (see Line \ref{line:
classify}).
To compute $\widehat{\cC}^i(U)$, driven by Eq. \eqref{eq: compatibility def},
\caty computes two separate estimates $\widehat{J}^{E,i}(U)\approx
J^{\pi^{E,i}}(U;p^i,r^i)$ (Lines \ref{line: erd caty}-\ref{line: est JE caty})
and $\widehat{J}^{*,i}(U)\approx J^*(U;p^i,r^i)$ (Line \ref{line: planning
caty}), and then combines them (Line \ref{line: estimate C}).
Specifically, the \texttt{ERD} (Estimate Return Distribution) subroutine
(Algorithm \ref{alg: erd}) permits to construct an estimate
$\widehat{\eta}^{E,i}$ of the categorical projection
$\text{Proj}_\cC(\eta^{p^i,r^i,\pi^{E,i}})$ of the expert's return distribution
$\eta^{p^i,r^i,\pi^{E,i}}$ from dataset $\cD^{E,i}$, which is used at Line
\ref{line: est JE caty} to compute $\widehat{J}^{E,i}(U)$.
Instead, quantity $\widehat{J}^{*,i}(U)$ is calculated at Line \ref{line:
planning caty} as the optimal performance in the RS-MDP $
\tuple{\cS^i,\cA^i,H,s_0^i,\widehat{p}^i,\overline{r}^i,\overline{U}}$, which
is computed through value iteration \citep{puterman1994markov} in the
corresponding \emph{enlarged state space MDP} using the \texttt{PLANNING}
subroutine (Algorithm \ref{alg: planning}).
\caty enjoys the following guarantee for $L$-Lipschitz input utilities:
\begin{restatable}{thr}{thrupperboundcatyoneu}
  \label{thr: caty upper bound 1 u}
  Let $L>0$, $\epsilon,\delta\in(0,1)$, and let $\cU\subseteq\fU_L$ be the set
  of utilities to classify.
  For all $i\in\dsb{N}$, in case $|\cU|=1$, let the number of samples satisfy:
\begin{align*}
  \scalebox{0.88}{$  \displaystyle \tau^{E,i}\ge
  \widetilde{\cO}\Big(\frac{N^2H^2}{\epsilon^2}\log\frac{N}{\delta}\Big),
  \;\tau^i\ge\widetilde{\cO}\Big(\frac{N^2SAH^4}{\epsilon^2}
  \log\frac{SAHNL}{\delta\epsilon}\Big). $}
\end{align*}
Otherwise, if $|\cU|>1$, let the number of samples satisfy:
\begin{align}\label{eq: bound all U thr caty}
  \begin{split}
    &\scalebox{0.88}{
  $  \displaystyle\tau^{E,i}\ge
    \widetilde{\cO}\Big(\frac{N^4H^4 L^2}{\epsilon^4}\log\frac{HNL}{\delta\epsilon}\Big),$}\\
    &\scalebox{0.88}{$  \displaystyle
    \tau^i\ge\widetilde{\cO}\Big(\frac{N^2SAH^5}{\epsilon^2}
    \Big(S+\log\frac{SAHN}{\delta}\Big)\Big). $}
  \end{split}
\end{align}
  Then, setting $\epsilon_0=\epsilon^2/(72HL^2N^2)$, w.p. at least $1-\delta$,
  for any $\Delta\ge0$, \caty \emph{correctly classifies} all the $U\in\cU$
  lying inside $\cU_{\Delta-\epsilon}$ or outside $\cU_{\Delta+\epsilon}$.
\end{restatable}
Roughly speaking, this theorem says that, for a number of samples independent of
$\Delta$, \caty correctly recognizes all the utilities in
$\cU_{\Delta-\epsilon}\subseteq\cU_\Delta$ and outside
$\cU_{\Delta+\epsilon}\supseteq\cU_\Delta$ as, respectively, inside and outside
set $\cU_\Delta$.
Intuitively, the Lipschitzianity assumption is necessary for approximating
functions $U\in\cU$ with vectors in $\overline{\fU}$.
We remark that, if $|\cU|=1$, then $\propto S$ queries to the generative model
suffice. Otherwise, we require $\propto S^2$ samples.

\subsection{\tractor (\tractorlong)}
\label{sec: tractor}

\RestyleAlgo{ruled}
\LinesNumbered
\begin{algorithm}[t]\small
    \caption{\tractor}
    \label{alg: tractor}
\small
\DontPrintSemicolon
\SetKwInOut{Input}{Input}

\Input{
    data $\{\cD^{E}_i\}_i$,
parameters $T,K,\alpha,\overline{U}_0$, discretization $\epsilon_0$, dynamics
$\{\widehat{p}^i\}_i$}
$\widehat{\eta}^{E,i}\gets$ \texttt{ERD}($\cD^E_i,r^i$)
$\quad$ for $i\in\dsb{N}$\label{line: erd tractor}\;

\For{$t=0,1,\dotsc,T-1$}{
    \nonl \texttt{// Compute distributions $\{\widehat{\eta}_t^{i}\}_i$:}\;

    \For{$i$ = $1,2,\dotsc,N$\label{line: for2 tractor}}{
$\_,\widehat{\psi}^{*,i}_t\gets$
\texttt{PLANNING}($\overline{U}_t,i,\widehat{p}^i$)
        \label{line: planning tractor}\;

    $\cD\gets$
    \texttt{ROLLOUT}($\widehat{\psi}^{*,i}_t,\widehat{p}^i,\overline{r}^i,i,K$)\label{line:
    rollout tractor}\;

        $\widehat{\eta}^i_t(y)\gets\frac{1}{K}\sum_{G\in\cD}\indic{G=y},\forall
        y\in\cY$
        \label{line: line compute etati tractor}\;

        }
    \label{line: end for2 tractor}
    \nonl \texttt{// Update $\overline{U}_{t+1}$:}\;

    $g_t\gets
    \sum_{i\in\dsb{N}}\big(\widehat{\eta}_t^i-\widehat{\eta}^{E,i}\big)$
    \label{line: compute gt tractor}\;

    $\overline{U}_{t+1}\gets
    \Pi_{\overline{\underline{\fU}}_L}(\overline{U}_t-\alpha g_t)$
    \label{line: gd update tractor}\;
    
}
$\widehat{U}\gets\frac{1}{T}\sum_{t=0}^{T-1}\overline{U}_t$\;

\textbf{Return} $\widehat{U}$\;
\end{algorithm}

For simplicity of presentation, we introduce some notation. For any $L>0$, let
$\overline{\fU}_L\coloneqq$\scalebox{0.9}{$\overline{\fU}_L^\cY$}, and let
$\underline{\fU},\underline{\fU}_L,
\overline{\underline{\fU}},\overline{\underline{\fU}}_L,\underline{\cU}_\Delta$
be the analogous of, respectively, $\fU,\fU_L,
\overline{\fU},\overline{\fU}_L,\cU_\Delta$, but containing \emph{increasing}
functions instead of \emph{strictly-increasing} functions.
\tractor (Algorithm \ref{alg: tractor}) is a more ``practical'' UL algorithm, in
that it aims to compute a utility function contained into the feasible set
$\bigcap_i \cU_{p^i,r^i,\pi^{E,i}}$.
As \caty, it comprises an initial exploration phase (Algorithm \ref{alg: caty
exploration}), that collects $\tau^i$ samples to compute estimates
$\widehat{p}^i$ of the transition models $p^i$, and an extraction phase
(Algorithm \ref{alg: tractor}), where these estimates and the expert's data
$\cD^{E,i}$ are used to compute a utility (almost) in the feasible set.
Specifically, since the utilities $U$ in the feasible set satisfy $\sum_i
\overline{\cC}_{p^i,r^i,\pi^{E,i}}(U)=0$, \tractor aims to find a minimum of
function $\sum_i \overline{\cC}_{p^i,r^i,\pi^{E,i}}(\cdot)$ over the set
$\underline{\fU}_L$. So, starting from an initial $d$-dimensional utility
$\overline{U}_0\in\overline{\underline{\fU}}_L$, \tractor computes a sequence
$\overline{U}_1,\dotsc, \overline{U}_T$ by performing \emph{online projected
gradient descent}\footnote{This approach is based on
\citep{syed2007game,schlaginhaufen2024transferability}.}
\citep{orabona2023modernintroductiononlinelearning} in the space of discretized
$L$-Lipschitz utilities $\overline{\underline{\fU}}_L$, where the gradient $g_t$
is computed at Line \ref{line: compute gt tractor}, and the update is carried
out at Line \ref{line: gd update tractor}.
With infinite data, the gradient $g_t$ at iteration $t$ would be
{\thinmuskip=1mu \medmuskip=1mu \thickmuskip=1mu $\sum_i
(\eta^{p^i,r^i,\pi^{*,i}_t}-\eta^{p^i,r^i,\pi^{E,i}})$}, where $\pi^{*,i}_t$ is
any optimal policy in RS-MDP $\cM^i_{U_{t}}$, in which $U_t\in\underline{\fU}_L$
is any utility satisfying $U_t(y)=\overline{U}_t(y)$ for all $y\in\cY$.
In our case, \tractor uses $\widehat{\eta}^{E,i}$, computed at Line \ref{line:
erd tractor} in the same way as in \caty, to approximate
$\eta^{p^i,r^i,\pi^{E,i}}$, and it uses $\widehat{\eta}_t^i$, computed at Lines
\ref{line: planning tractor}-\ref{line: line compute etati tractor}, to
approximate $\eta^{p^i,r^i,\pi^{*,i}_t}$ by estimating
$\eta^{\widehat{p}^i,\overline{r}^i,\widehat{\pi}^{*,i}_t}$, where
$\widehat{\pi}^{*,i}_t$ is the optimal policy for the RS-MDP {\thinmuskip=1mu
\medmuskip=1mu \thickmuskip=1mu $\tuple{\cS^i,\cA^i,H,s_0^i, \widehat{p}^i,
\overline{r}^i,\overline{U}_t}$}. In short, Line \ref{line: planning tractor}
computes policy $\widehat{\pi}^{*,i}_t$ through the enlarged state space
approach, which is subsequently played in MDP {\thinmuskip=1mu \medmuskip=1mu
\thickmuskip=1mu $\tuple{\cS^i,\cA^i,H,s_0^i, \widehat{p}^i, \overline{r}^i}$}
(\texttt{ROLLOUT} subroutine, Algorithm \ref{alg: rollout}, Line \ref{line:
rollout tractor}) to construct a dataset $\cD$ of $K$ trajectories that is used
at Line \ref{line: line compute etati tractor} to compute $\widehat{\eta}^i_t$.
\tractor enjoys the following guarantee:
\begin{restatable}{thr}{thrupperboundtractor}\label{thr: tractor upper bound}
   Let $L>0$, $\epsilon,\delta\in(0,1)$, and $U^E\in\underline{\fU}_L$. Assume
   that the projection operator $\Pi_{\overline{\underline{\fU}}_L}$ is
   implemented exactly. Let the number of samples satisfy Eq. \eqref{eq: bound
   all U thr caty}. There exist values of $\epsilon_0,K,\alpha,\overline{U}_0$
   (see Appendix \ref{apx: analysis tractor}) such that, if we run \tractor for
   a number of gradient iterations:
   \begin{align*}
    T\ge \cO\big(N^4H^4L^2/\epsilon^4\big),
   \end{align*}
   then, w.p. at least $1-\delta$, any utility $U\in\underline{\fU}_L$ such that
   $U(y)=\widehat{U}(y)$ $\forall y\in\cY$ belongs to
   $\underline{\cU}_\epsilon$.
\end{restatable}
In other words, with high probability, \tractor is guaranteed to find a utility
$U$ with small $\sum_i \overline{\cC}_{p^i,r^i,\pi^{E,i}}(U)\le\epsilon$, i.e.,
$U$ is close to the feasible set.
Note that we consider \emph{increasing} utilities $\underline{\fU}_L$ instead of
\emph{strictly-increasing} $\fU_L$ to guarantee the closedness of the set onto
which we project.
Observe also that assuming that
$\Pi$\scalebox{0.8}{$_{\overline{\underline{\fU}}_L}$} can be implemented
exactly simplifies the theoretical analysis, but, in practice, we are satisfied
with approximations that can be computed efficiently since set
\scalebox{0.9}{$\overline{\underline{\fU}}_L$} is made of
$\cO(H^2/\epsilon_0^2)$ linear constraints (Appendix \ref{apx: computational
complexity projection}). 

\section{Numerical Simulations}\label{sec: experiments}

In this section, we present \emph{proof-of-concept} experiments using data
collected from lab members to provide empirical evidence to support both our
model and algorithms.

\textbf{The data.}~~We asked to 15 participants to describe the actions they
would play in an MDP with horizon $H=5$ (see Appendix \ref{apx: experimental
details}), at varying of the state, the stage, and the \emph{cumulative reward}
collected. The reward has a monetary interpretation. To answer the questions,
the participants have been provided with complete information about the
MDP.\footnote{The data collected is not personal.}

\textbf{Experiment 1 - Validation of the model.}~~%
Our model of behavior, presented in Eq. \eqref{eq: model expert rs mdp}, is the
first IRL model that contemplates non-Markovian policies. To understand if this
new model is more suitable than existing IRL models to describe human behavior
in MDPs, we count how many participants to the study exhibited non-Markovian
behavior. Intuitively, the more non-Markovianity, the better our model.
What we found is that \emph{10 participants out of 15} demonstrated a
non-Markovian policy even in this very small environment, providing consistent
evidence on the importance of our new model.
See Appendix \ref{apx: additional experiment} for additional analysis of our
model on this data.

\textbf{Experiment 2 - Validation of \tractor.}~~%
To understand how \tractor performs in practice, we have run it on both the
real-world data described earlier and on simulated data.
Crucially, the executions on the participants' data reveal that, irrespective of
the initial utility $\overline{U}_0$ adopted, the algorithm converges much
faster using large values of step size $\alpha$. For instance, as shown in
Fig. \ref{fig: tractor main}, the best step size for using \tractor to compute
a utility representative of the behavior of participant 10 is $\alpha=100$.
Intuitively, this is explained by the presence of a large number of utilities in
the feasible set (since we are considering demonstrations in a single
environment $N=1$), and by the projection step onto
$\overline{\underline{\fU}}_L$, that results in small changes of utility even
with large steps (see Appendix \ref{apx: explanation large learning rate}).
Next, we have run \tractor on simulated data to analyze its performance on
larger MDPs (increment of $S,A$) and with multiple environments (increment of
$N$). We found that the number of gradient iterations necessary to achieve a
certain level of (non)compatibility is affected by the increment of $N$, but not
of $S,A$, as predicted by Theorem \ref{thr: tractor upper bound}. However, note
that larger $S,A$ require more execution time, because of the value iteration
subroutine.
Moreover, we observed that, when $N>1$, the best step size $\alpha$ can be much
smaller than $\alpha=100$. Intuitively, the feasible set contains less utilities
now, thus, we need more accurate (smaller) gradient steps to find them.
More details on this experiment in Appendix \ref{apx: analysis on simulated
data}.

  \begin{figure}[t]
    \includegraphics[width=0.48\textwidth]{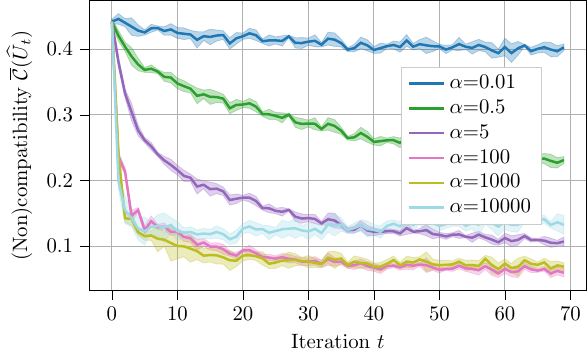}
    \caption{Simulations of \tractor with various step sizes $\alpha$. The
    shaded regions are the standard deviation over 5 seeds.}
    \label{fig: tractor main}
  \end{figure}                

\section{Related Work}\label{sec: related works}

In \emph{risk-sensitive IRL} \citep{majumdar2017risk}, the learner is either
provided with the reward of the expert and it must infer some parameters
representing its risk attitude, or the learner must infer both the reward and
the risk attitude from demonstrations of behavior
\citep{singh2018risksensitive,chen2019active,ratliff2017inverse,cheng2023eliciting,cao2024inferenceutilitiestimepreference}.
However, these works consider problem settings and models of behavior fairly
different from ours.
Specifically, \citet{majumdar2017risk,singh2018risksensitive,chen2019active}
focus on the so-called ``prepare-react model'', which is a model of environment
less expressive than an MDP. Instead,
\citet{ratliff2017inverse,cheng2023eliciting,cao2024inferenceutilitiestimepreference}
consider models of behavior in which the expert's policy is \emph{Markovian}.
More on the related works in Appendix \ref{apx: additional related works}.
  
\section{Conclusion}\label{sec: conclusion}

In this paper, we proposed a novel risk-aware model of behavior that
rationalizes non-Markovian policies in MDPs, and we presented two
provably-efficient algorithms for learning the risk attitude of an agent from
demonstrations.
Interesting directions for future works include extending our algorithms to
high-dimensional settings, studying the problem of learning both $r$ and $U$
from demonstrations, and exploring new methods to alleviate the partial
identifiability.
  
\section*{Acknowledgements}
    AI4REALNET has received funding from European Union's Horizon Europe
    Research and Innovation programme under the Grant Agreement No 101119527.
    Views and opinions expressed are however those of the author(s) only and do
    not necessarily reflect those of the European Union. Neither the European
    Union nor the granting authority can be held responsible for them.

    Funded by the European Union - Next Generation EU within the project NRPP
    M4C2, Investment 1.,3 DD. 341 - 15 march 2022 - FAIR - Future Artificial
    Intelligence Research - Spoke 4 - PE00000013 - D53C22002380006.

\section*{Impact Statement}

This paper presents work whose goal is to advance the field of Machine Learning.
There are many potential societal consequences of our work, none of which we
feel must be specifically highlighted here.

\bibliography{refs}
\bibliographystyle{icml2025}

\newpage
\appendix
\onecolumn

 \section{More on Related Work}
 \label{apx: additional related works}
 
 We describe here more in detail the most relevant related works. First, we
 describe IRL papers with risk, i.e., those works that consider MDPs, and try to
 learn either the reward function or the utility or both. Next, we analyze the
 works that aim to learn the risk attitude (i.e., a utility function) from
 demonstrations of behavior (potentially in problems other than MDPs). Finally,
 we present other connected works.
 
 \paragraph{Inverse Reinforcement Learning with risk.}
 
 \citet{majumdar2017risk} introduces the risk-sensitive IRL problem in decision
 problems different from MDPs. The authors analyze two settings, one in which
 the expert takes a single decision, and one in which there are multiple
 decisions in sequence. They model the expert as a risk-aware decision-making
 agent acting according to a \emph{coherent risk metric}
 \citep{artzner1999coherent}, and they consider both the case in which the
 reward function is known, and they try to learn the risk attitude (coherent
 risk metric) of the expert, and the case in which the reward is unknown, and
 they aim to estimate both the risk attitude and the reward function.
 Nevertheless, the authors analyze a model of environment, called
 \emph{prepare-react model}, rather different from an MDP, since, simply put, it
 can be seen as an MDP in which the stochasticity is shared by all the
 state-action pairs at each stage $h\in\dsb{H}$. Moreover, they consider
 Markovian policies.
 \citet{singh2018risksensitive} generalizes the work of
 \citet{majumdar2017risk}. Specifically, the biggest improvement is to consider
 nested optimization stages. However, the model of the environment is still
 rather different from an MDP.
 We mention also the work of \citet{chen2019active} who extend
 \citet{majumdar2017risk} by devising an active learning framework to improve the
 efficiency of their learning algorithms.
 
 Another important work is that of \citet{ratliff2017inverse}, who study the
 risk-sensitive IRL problem in MDPs, by proposing a parametric model of behavior
 for the expert based on prospect theory \cite{kahneman1979prospect}, and they
 devise a gradient-based IRL algorithm that minimizes a loss function defined on
 the observed behavior. This work differs from ours in that it assumes that the
 expert plays actions based on a softmax distribution, i.e., using a Markovian
 policy.

 \citet{cheng2023eliciting} proposes a model of behavior in MDPs using the
 conditional value-at-risk \citep{rockafellar2000cvar} instead of a utility
 function \citep{vnm1947theory}. Moreover, differently from ours, their model
 does not contemplate non-Markovian policies. Similarly to us, they analyze the
 partial identifiability of the parameters representing the risk attitude from
 demonstrations in a single environment, and propose a strategy for designing
 the environments in which collecting additional demonstrations in order to
 reduce the partial identifiability.
 
 We shall mention also the recent pre-print of
 \citet{cao2024inferenceutilitiestimepreference} that proposes a novel
 stochastic control framework in continuous time that includes two utility
 functions and a generic discounting scheme under a time-varying rate. Assuming
 to know both the utilities and the discounting scheme, the authors show that,
 through state augmentation, the control problem is well-posed. In addition, the
 authors provide sufficient conditions for the identification of both the
 utilities and the discounting scheme given demonstrations of behavior.
 We note that there are crucial differences between this work and ours. First,
 the author model the expert as solving an optimization problem in which the
 utility function is applied to the per \emph{stage} reward, instead we apply
 the utility to the entire return (see Eq. \eqref{eq: model expert rs mdp}).
 Next, they model the expert using Markovian policies.
 
 \paragraph{Learning utilities from demonstrations.}
 
 \citet{Chajewska2001learning} considers an approach similar to IRL
 \citet{ng2000algorithms}. Their goal is not to perform \emph{active} preference
 elicitation, but, similarly to us, to use demonstrations to infer preferences.
 Specifically, they aim to learn utilities in sequential decision-making problems
 from demonstrations. However, they model the problems through decision trees,
 which are different from MDPs, and this represents the main difference between
 their work and ours. Indeed, decision trees are simpler since there is no notion
 of reward function at intermediate states. In this manner, they are able to
 devise (backward induction) algorithms to learn utilities in decision trees
 through linear constraints similar to those devised by \citet{ng2000algorithms}
 in IRL. It is interesting to notice that they adopt a Bayesian approach to
 extract a single utility from the feasible set constructed, and not an heuristic
 like that of \citet{ng2000algorithms}. They assume a prior $p(u)$ over the true
 utility function $u$, and approximate the posterior w.r.t. the feasible set of
 utilities $\cU$ using Markov Chain Monte Carlo (MCMC).

 \citet{shukla2017learningvideo} faces the problem of learning human utilities
 from (video) demonstrations, with the aim of generating meaningful tasks based
 on the learned utilities. However, differently from us, they consider the
 stochastic context-free And-Or graph (STC-AOG) framework \citep{xiong2016robot},
 instead of MDPs.
 
 \citet{bai2020influencediagram} considers the problem of learning utilities from
 demonstrations similarly to \citet{Chajewska2001learning}, but with the
 difference of considering \emph{influence diagrams} instead of decision trees.
 Since any influence diagram can be expanded into a decision tree, authors adopt
 a strategy similar to \citet{Chajewska2001learning}.

 \paragraph{Others.}
 
 \citet{shah2019feasibility} aims to learn the behavioral model of the expert
 from demonstrations. However, they do not consider a specific model like us
 (i.e., Eq. \eqref{eq: model expert rs mdp}), but use a differentiable planner
 (neural network) to learn the planner. In principle, they can fit any
 behavioral model (also risk-sentitive models) given the huge expressive power
 of neural networks. However, their approach requires a lot of demonstrations,
 even across multiple MDPs. Moreover, this approach does not permit to learn a
 utility function as a simple, interpretable, and transferrable representation
 of the risk attitude of the expert.
 
 \section{Additional Notation}
 \label{apx: additional notation}
 
 In this appendix, we introduce additional notation that will be used in other
 appendices.
 
 \paragraph{Miscellaneous.}
 For any probability distribution $\nu\in\Delta^\RR$, we denote its cumulative
 density function by $F_{\nu}$. Let $\nu\in\Delta^\RR$ be a probability
 distribution on $\RR$; then, for any $y\in[0,1]$, we define the
 \emph{generalized inverse} $F^{-1}_\nu(y)$ as:
 \begin{align*}
   F^{-1}_\nu(y)\coloneqq \inf\limits_{x\in\RR}\{F_\nu(x)\ge y\}.
 \end{align*}
 We define the \emph{1-Wasserstein distance}
 $w_1:\Delta^\RR\times\Delta^\RR\to [0,\infty]$ between two probability
 distributions $\nu,\mu$ as:
 \begin{align}\label{eq: definition wasserstein distance w1}
   w_1(\nu,\mu)\coloneqq \int_0^1 \big|
   F_\nu^{-1}(y)-F^{-1}_\mu(y)\big|dy.
 \end{align}
 In addition, we define the \emph{Cramér distance} $\ell_2:\Delta^\RR\times\Delta^\RR\to[0,\infty]$
 between two probability distributions $\nu,\mu$ as:
 \begin{align}\label{eq: definition cramer distance l2}
   \ell_2(\nu,\mu)\coloneqq \Big(
   \int_\RR (F_\nu(y)-F_{\mu}(y))^2dy
   \Big)^{1/2}.
 \end{align}
 We will use notation:
 \begin{align*}
   \V_{X\sim Q}[X]&\coloneqq \E_{X\sim Q}[(X-\E_{X\sim Q}[X])^2],
 \end{align*}
 to denote the variance of a random variable $X\sim Q$ distributed as $Q$.
 Given two random variables $X\sim Q_1,Y\sim Q_2$, we denote their
 covariance as:
 \begin{align*}
   \text{Cov}_{X\sim Q_1,Y\sim Q_2}[X,Y]\coloneqq
   \E_{X\sim Q_1,Y\sim Q_2}[(X-\E_{X\sim Q_1}[X])(Y-\E_{Y\sim Q_2}[Y])].
 \end{align*}
 We define the \emph{categorical projection operator} $\text{Proj}_\cC$
 (mentioned in Section \ref{sec: online utility learning}), that projects onto
 set $\cY=\{y_1,y_2,\dotsc,y_d\}$ (the items of $\cY$ are ordered: $y_1\le
 y_2\le\dotsc\le y_d$, with
 $y_1=0,y_2=\epsilon_0,y_3=2\epsilon_0,\dotsc,y_d=\floor{H/\epsilon_0}\epsilon_0$),
 based on \citet{rowland2018analysis}. For single Dirac measures on an arbitrary
 $y\in\RR$, we write:
 \begin{align}\label{eq: definition proj c}
   \text{Proj}_\cC(\delta_y)\coloneqq\begin{cases}
     \delta_{y_1}&\text{if }y\le y_1\\
     \frac{y_{i+1}-y}{y_{i+1}-y_i}\delta_{y_i}+
     \frac{y-y_i}{y_{i+1}-y_i}\delta_{y_{i+1}}& \text{if }y_i<y\le y_{i+1}\\
     \delta_{y_{d}}&\text{if }y> y_{d}\\
   \end{cases},
 \end{align}
 and we extend it affinely to finite mixtures of $M$ Dirac distributions, so
 that:
 \begin{align}\label{eq: property proj c}
   \text{Proj}_\cC\Big(\sum\limits_{j\in\dsb{M}}q_j\delta_{z_j}\Big)=\sum\limits_{j\in\dsb{M}}q_j
   \text{Proj}_\cC(\delta_{z_j}),
 \end{align}
 for some set of real values $\{z_j\}_{j\in\dsb{M}}$ and weights $\{q_j\}_{j\in\dsb{M}}$.
 
 \paragraph{Value functions.}
 Given an MDP $\cM=\tuple{\cS,\cA,H,s_0,p,r}$ and a policy $\pi$, we define the
 $V$- and $Q$-functions of policy $\pi$ in MDP $\cM$ at every $(s,a,h)\in\SAH$
 respectively as $V^\pi_h(s;p,r)\coloneqq \E_{p,r,\pi}[\sum_{t=h}^H
 r_{t}(s_t,a_t)|s_h=s]$ and $Q^{\pi}_h(s,a;p,r)\coloneqq \E_{p,r,\pi}[\sum_{t=h}^H
 r_{t}(s_t,a_t)|s_h=s,a_h=a]$. We define the optimal $V$- and $Q$-functions as
 $V^*_h(s;p,r)\coloneqq\sup_{\pi} V^{\pi}_h(s;p,r)$ and
 $Q^*_h(s,a;p,r)\coloneqq\sup_{\pi} Q^{\pi}_h(s,a;p,r)$.
 
 For MDPs with an enlarged state space, e.g.,
 $\tuple{\{\cS\times\cY_h\}_h,\cA,H,(s_0,0),\fp,\fr}$, and a policy
 $\psi=\{\psi_h\}_h$, for all $h\in\dsb{H}$ and $(s,y,a)\in\SYAh$  we denote the
 $V$- and $Q$-functions respectively as $V^\psi_h(s,y;\fp,\fr)\coloneqq
 \E_{\fp,\fr,\psi}[\sum_{t=h}^H \fr_{t}(s_t,y_t,a_t)|s_h=s,y_h=y]$ and
 $Q^{\psi}_h(s,y,a;\fp,\fr)\coloneqq \E_{\fp,\fr,\psi}[\sum_{t=h}^H
 \fr_{t}(s_t,y_t,a_t)|s_h=s,y_h=y,a_h=a]$. We denote the optimal $V$- and
 $Q$-functions as $V^*_h(s,y;\fp,\fr)\coloneqq\sup_{\psi} V^{\psi}_h(s,y;\fp,\fr)$
 and $Q^*_h(s,y,a;\fp,\fr)\coloneqq\sup_{\psi} Q^{\psi}_h(s,y,a;\fp,\fr)$.

 Observe that the notation just introduced will be extended in a straightforward
 manner to MDPs (MDPs with enlarged state space) that have an estimated
 transition model $\widehat{p}$ ($\widehat{\fp}$), and/or a discretized reward
 function $\overline{r}$ ($\overline{\fr}$).
 
\section{Additional Results and Proofs for Section \ref{sec: motivation and problem setting}}
\label{apx: section 3}

In Appendix \ref{apx: drawbacks reward into state}, we explain why including the
past rewards into the state is not satisfactory, in Appendix \ref{apx:
observation model} we provide an observation on Eq. \eqref{eq: model expert rs
mdp}, while in Appendix \ref{apx: missing proofs sec 3} we provide the missing
proofs for Section \ref{sec: motivation and problem setting}.

\subsection{Drawbacks of Re-modelling the MDP}\label{apx: drawbacks reward into
state}

Re-modelling the MDP including the sum of the past rewards into the state would
make the demonstrated policy Markovian, and so, in principle, it would allow to
apply the existing IRL models meaningfully.
However, since in IRL the reward function is unknown, to adopt this trick one
should include into the state representation the entire sequence of past
state-action pairs, causing the size of the new state space to explode, and also
causing the reward function to become non-Markovian w.r.t. the original state
space.
If instead the reward function was known, and one just wanted to apply one of
the risk-sensitive IRL models of behavior presented in Section \ref{sec: related
works} to learn the (parameters of the) risk attitude, then re-modelling the MDP
would still cause the size of the new state space to explode in tabular MDPs,
since, in general, there is a number of cumulative reward values that is
exponential in the horizon. Moreover, it is not clear why the considered model
of behavior, that was designed for the original state space (not including the
past rewards), should be realistic in the new state space.

\subsection{An Observation on the Model}\label{apx: observation model}

If we restrict the optimization problem in Eq. \eqref{eq: model expert rs mdp}
to Markovian policies, we note that non-stationarity (i.e., the dependence of
the policy on the stage $h$) and stochasticity (i.e., if the policy prescribes a
lottery over actions instead of a single action) can improve the performance
w.r.t. Markovian stationary deterministic policies even in stationary
environments. Intuitively, the reason is that they permit to consider larger
ranges of return distributions w.r.t. Markovian stationary deterministic
policies.
\begin{restatable}{prop}{policynonstationary}
  \label{prop: policy non stationary}
  There exists a RS-MDP with stationary transition model and reward in which
  the best \emph{Markovian} policy is non-stationary, and the best
  \emph{stationary Markovian} policy is stochastic.
\end{restatable}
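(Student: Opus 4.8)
The plan is to write down one explicit small RS-MDP and verify both assertions by a finite policy enumeration. I would take $\cS=\{s,h_+,h_-\}$, $\cA=\{a,b\}$, horizon $H=3$, initial state $s$, a \emph{stationary} reward $r(s,a)=\beta$, $r(s,b)=0$, $r(h_+,\cdot)=1$, $r(h_-,\cdot)=0$, and a \emph{stationary} transition model in which playing $a$ at $s$ loops back to $s$, playing $b$ at $s$ moves to $h_+$ or $h_-$ with probability $\tfrac12$ each, and from $h_+$ or $h_-$ (either action) we deterministically return to $s$. For the utility I would pick the strictly concave $U(x)=\sqrt{3x}$, which lies in $\fU$ since $U(0)=0$, $U(3)=3$ and $U$ is strictly increasing and continuous on $[0,3]$; and I would set $\beta=\tfrac1{10}$. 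The self-loop at $s$ is the essential feature: $s$ is the only state where the action matters and it is visited at stages $1,2,3$, so a stationary policy is genuinely constrained to act the same way at $s$ at every stage, whereas in an MDP whose reachable-state sets are disjoint across stages every non-stationary Markovian policy is mimicked by a stationary one — this is why $H\ge 3$ with recurrence is needed.

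First I would enumerate the deterministic Markovian policies. Since the action at $h_\pm$ is irrelevant, each such policy is a triple $(\pi_1(s),\pi_2(s),\pi_3(s))\in\{a,b\}^3$, and a direct check shows that only four return distributions arise: $\delta_{3\beta}$ (for $(a,a,a)$); $\delta_{2\beta}$ (for $(a,a,b)$); $\tfrac12\delta_{\beta}+\tfrac12\delta_{\beta+1}$ (for every policy that plays $b$ exactly once and plays $a$ at least once, e.g.\ $(a,b,a)$ or $(b,b,a)$); and $\tfrac12\delta_0+\tfrac12\delta_1$ (for $(b,a,b)$ and $(b,b,b)$). Evaluating $J^\pi(U;p,r)=\E_{G\sim\eta^{p,r,\pi}}[U(G)]$ at $\beta=\tfrac1{10}$ gives, respectively, $\sqrt{0.9}\approx 0.95$, $\sqrt{0.6}\approx 0.77$, $\tfrac12(\sqrt{0.3}+\sqrt{3.3})\approx 1.18$, and $\tfrac12\sqrt{3}\approx 0.87$. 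Hence the optimum over deterministic Markovian policies is $\tfrac12(\sqrt{0.3}+\sqrt{3.3})$, and it is attained \emph{only} by non-stationary policies, since every policy realizing $\tfrac12\delta_{\beta}+\tfrac12\delta_{\beta+1}$ has $\pi_1(s)\neq\pi_2(s)$ or $\pi_2(s)\neq\pi_3(s)$.

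Next I would handle the stationary Markovian policies, which are parametrized by a single number $q\in[0,1]$, the probability of playing $b$ at $s$. Unrolling three stages, the return distribution places mass $\tfrac{q^2}{2}$ at $0$ and at $1$, mass $q(1-q)$ at $\beta$ and at $\beta+1$, mass $(1-q)^2q$ at $2\beta$, and mass $(1-q)^3$ at $3\beta$; therefore $J(q)\coloneqq\E_q[U]$ is an explicit cubic polynomial in $q$. I would then note $J(0)=\sqrt{0.9}\approx 0.95$, $J(1)=\tfrac12\sqrt3\approx 0.87$, and $J(\tfrac12)\approx 1.02$, so by continuity the maximum of $J$ over $[0,1]$ is attained at an interior point $q^*\in(0,1)$, i.e.\ the best stationary Markovian policy is stochastic. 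A short estimate of the cubic (for instance splitting $[0,1]$ into $[0,\tfrac7{10}]$ and $[\tfrac7{10},1]$ and using that the two negative-coefficient monomials make $J$ decreasing near $q=1$) shows $\max_{q\in[0,1]}J(q)<\tfrac12(\sqrt{0.3}+\sqrt{3.3})$. Since the right-hand side is achieved by a non-stationary Markovian policy, no stationary Markovian policy is optimal among all Markovian policies, which is the first assertion.

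The main obstacle is calibrating $\beta$ and the curvature of $U$ so that \emph{both} assertions hold simultaneously. Making $\tfrac12U(\beta)+\tfrac12U(\beta+1)$ beat the deterministic value $U(3\beta)$ forces $\beta$ to be small and $U$ not too concave (for modestly larger $\beta$ the all-$a$ policy already wins with $U=\sqrt{3x}$), whereas making some stochastic stationary policy strictly beat \emph{both} deterministic stationary policies forces $\beta$ to be bounded away from $0$ (as $\beta\to 0$ one checks $J(q)\to\sqrt3\,q(1-\tfrac q2)$, whose maximum on $[0,1]$ is at $q=1$, the deterministic all-$b$ policy). The value $\beta=\tfrac1{10}$ with $U=\sqrt{3x}$ sits comfortably inside this window, and the only nontrivial arithmetic is the cubic bound above, which is routine.
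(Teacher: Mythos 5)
Your construction is correct, and it establishes both claims by the same overall strategy as the paper — an explicit small stationary RS-MDP plus a direct comparison of policy values — but with a genuinely different example and a different way of handling the stationary case. I verified your enumeration: the eight deterministic Markovian policies indeed collapse to the four return distributions you list, the distribution $\tfrac12\delta_{\beta}+\tfrac12\delta_{\beta+1}$ is realized only by non-stationary policies, and your stationary-policy law (masses $\tfrac{q^2}{2}$ at $0$ and $1$, $q(1-q)$ at $\beta$ and $\beta+1$, $(1-q)^2q$ at $2\beta$, $(1-q)^3$ at $3\beta$) is right, giving $J(\tfrac12)\approx 1.02 > \max\{J(0),J(1)\}$, hence an interior (stochastic) stationary optimum, and $\max_q J(q)\approx 1.02 < \tfrac12(\sqrt{0.3}+\sqrt{3.3})\approx 1.18$, so no stationary Markovian policy (deterministic or stochastic) attains the Markovian optimum. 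The paper instead uses a four-state, $H=4$ MDP with two decision epochs and a hand-tabulated utility, so that the value of a Markovian policy is exactly bilinear in the two decision probabilities $(\alpha,\beta)$ and the stationary restriction is an exactly solvable concave quadratic ($\alpha^*=9/80$); your example buys a more natural utility $U(x)=\sqrt{3x}$ and a genuinely recurrent decision state, at the price of a cubic $J(q)$ whose maximum you only bound by a sketched estimate. That step is the one piece you leave to arithmetic, and it does go through: your split at $q=7/10$ works (on $[0,7/10]$ drop the negative monomials of $J(q)=0.949+0.293q-0.201q^2-0.174q^3$; on $[7/10,1]$ bound the linear term at $q=1$ and the negative terms at $q=7/10$), or more cleanly use $A(1-q)^3+Bq(1-q)^2\le A(1-q)^2$ (since $B<A$) to majorize $J$ by the concave quadratic $A(1-q)^2+Cq(1-q)+\tfrac{D}{2}q^2$ whose maximum is $A+\tfrac{(C-2A)^2}{4(C-A-D/2)}\approx 1.05<1.18$. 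With that bound written out, the proof is complete.
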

 
 \subsection{Proofs for Section \ref{sec: motivation and problem setting}}
 \label{apx: missing proofs sec 3}
 
 \loseinperformancewithmarkovianity*
 \begin{proof}
   \begin{figure}[t]
     \centering
     \begin{tikzpicture}[node distance=3.5cm]
     \node[state,initial] at (0,0) (s0) {$s_{\text{init}}$};
     \node[state] at (3,1) (s1) {$s_1$};
     \node[state] at (3,-1) (s2) {$s_2$};
     \node[state] at (6,0) (s3) {$s_3$};
     \node[state] at (9,2) (s4) {$s_4$};
     \node[state] at (9,0) (s5) {$s_5$};
     \node[state] at (9,-2) (s6) {$s_6$};
     \node[state, draw=none] at (11,2) (ss4) {};
     \node[state, draw=none] at (11,0) (ss5) {};
     \node[state, draw=none] at (11,-2) (ss6) {};
     \node[draw=none, fill=black] at (1.5,0) (ss0) {};
     \node[draw=none,fill=black] at (7.5,1) (ss3) {};
     \draw (s0) edge[-, solid, above] node{\scriptsize$a_1,a_2$} (ss0);
     \draw (ss0) edge[-, solid, above] node{\scriptsize$1/2$} (s1);
     \draw (ss0) edge[-, solid, above] node{\scriptsize$1/2$} (s2);
     \draw (s1) edge[->, solid, above] node{\scriptsize$r=1$} (s3);
     \draw (s2) edge[->, solid, above] node{\scriptsize$r=0$} (s3);
     \draw (s3) edge[->, solid, above] node{\scriptsize$a_1$} (ss3);
     \draw (s3) edge[->, solid, above] node{\scriptsize$a_2$} (s6);
     \draw (ss3) edge[->, solid, above,sloped] node{\scriptsize$x/3.99$} (s4);
     \draw (ss3) edge[->, solid, above,sloped] node{\scriptsize$1-x/3.99$} (s5);
     \draw (s4) edge[->, solid, above] node{\scriptsize$r=1$} (ss4);
     \draw (s5) edge[->, solid, above] node{\scriptsize$r=0$} (ss5);
     \draw (s6) edge[->, solid, above] node{\scriptsize$r=0.5$} (ss6);
   \end{tikzpicture}
   \caption{MDP for the proof of Proposition \ref{prop: lose in performance with markovianity}.}
   \label{fig: mdp for proof lose in performance}
   \end{figure}
   For reasons that will be clear later, let us define symbol $x\approx 2.6$
   as the solution of $x-\frac{x^2}{3.99}-0.1=1$.
   
   Consider the RS-MDP $\cM_U=\tuple{\cS,\cA,H,s_0,p,r,U}$ in Fig. \ref{fig: mdp
   for proof lose in performance}, where
   $\cS=\{s_{\text{init}},s_1,s_2,s_3,s_4,s_5,s_6\}$, $\cA=\{a_1,a_2\}$, $H=4$,
   $s_0=s_{\text{init}}$, transition model $p$ such that:
   \begin{align*}
     &p_1(s_1|s_{\text{init}},a)=p_1(s_2|s_{\text{init}},a)=1/2 \quad\forall a\in\cA,\\
     &p_2(s_3|s_1,a)=p_2(s_3|s_2,a)=1 \quad\forall a\in\cA,\\
     &p_3(s_4|s_3,a_1)=x/3.99,p_3(s_5|s_3,a_1)=1-x/3.99,p_3(s_6|s_3,a_2)=1,
   \end{align*}
   reward function $r$ defined as:
   \begin{align*}
     &r_1(s_{\text{init}},a)=0\quad\forall a\in\cA,\\
     &r_2(s_1,a)=1\quad\forall a\in\cA,\\
     &r_2(s_2,a)=0\quad\forall a\in\cA,\\
     &r_3(s_3,a)=0\quad\forall a\in\cA,\\
     &r_4(s_4,a)=1\quad\forall a\in\cA,\\
     &r_4(s_5,a)=0\quad\forall a\in\cA,\\
     &r_4(s_6,a)=0.5\quad\forall a\in\cA,
   \end{align*}
   and utility function $U\in\fU$ that satisfies:
   \begin{align*}
     U(y)=\begin{cases}
       x-0.1 & \text{if }y=0.5\\
       x & \text{if }y=1\\
       x+0.1 & \text{if }y=1.5\\
       3.99 & \text{if }y=2
     \end{cases}.
   \end{align*}
   Note that this entails that:
   \begin{align}\label{eq: for example prop lost performance markovianity}
     \frac{x}{3.99}U(2)+U(1)=U(0.5)+U(1.5).
   \end{align}
   Note also that the support of the return function of this (RS-)MDP is
   $\cG^{p,r}=\{0,0.5,1,1.5,2\}$.
   
   For $\alpha\in[0,1]$, let $\pi^\alpha$ be the generic Markovian policy that plays
   action $a_1$ in $s_3$ w.p. $\alpha$ (the actions played in other states are
   not relevant). Then, its expected utility is:
   \begin{align*}
     J^{\pi^\alpha}(U;p,r)&=\frac{1}{2}\Big[
     \alpha\Big(\frac{x}{3.99}U(2)+(1-\frac{x}{3.99})U(1)\Big)
     +(1-\alpha)U(1.5)
     \Big]\\
     &\qquad
     +\frac{1}{2}\Big[
     \alpha\Big(\frac{x}{3.99}U(1)+(1-\frac{x}{3.99})U(0)\Big)
     +(1-\alpha)U(0.5)
     \Big]\\
     &\markref{(1)}{=}
     \frac{1}{2}\Big[
     \alpha\Big(\frac{x}{3.99}U(2)+U(1)\Big)
     +(1-\alpha)(U(1.5)+U(0.5))
     \Big]\\
     &\markref{(2)}{=}
     \frac{U(1.5)+U(0.5)}{2},
   \end{align*}
   where at (1) we have used that $U(0)=0$, and at (2) we have used Eq. \eqref{eq:
   for example prop lost performance markovianity}.
 
   Thus, all Markovian policies $\pi^\alpha$ have the same performance. Let us
   consider the non-Markovian policy $\overline{\pi}$ that, in state $s_3$, plays
   action $a_1$ w.p. $1$ if $s_3$ is reached with cumulative reward $1$, and it
   plays action $a_2$ w.p. $1$ if $s_3$ is reached with cumulative reward $0$.
   Then, its performance is:
   \begin{align*}
     J^{\overline{\pi}}(U;p,r)&=\frac{1}{2}
       \Big(\frac{x}{3.99}U(2)+(1-\frac{x}{3.99})U(1)\Big)
       +\frac{1}{2}U(0.5).
   \end{align*}
 
   The difference in performance between the optimal performance and that of
   $\pi^\alpha$ is:
   \begin{align*}
     J^*(U;p,r)-J^{\pi^\alpha}(U;p,r)&\ge J^{\overline{\pi}}(U;p,r)
     - J^{\pi^\alpha}(U;p,r)\\
     &=\frac{1}{2}
     \Big(\frac{x}{3.99}U(2)+(1-\frac{x}{3.99})U(1)\Big)
     +\frac{1}{2}U(0.5)-\frac{U(1.5)+U(0.5)}{2}\\
     &=\frac{1}{2}
     \Big(\frac{x}{3.99}U(2)+(1-\frac{x}{3.99})U(1)-U(1.5)\Big)\\
     &\markref{(3)}{=}
     \frac{1}{2}
     \Big(x+x-\frac{x^2}{3.99}-x-0.1\Big)\\
     &=
     \frac{1}{2}
     \Big(x-\frac{x^2}{3.99}-0.1\Big)\\
     &\markref{(4)}{=}
     0.5,
   \end{align*}
   where at (3) we have replaced the values of utility, and at (4) we have used
   the definition of $x$.
 
 \end{proof}

 \envdeterministic*
 \begin{proof}
  If $p$ is deterministic, then the optimal policy in Eq. \eqref{eq: model
  expert rs mdp} is the policy that deterministically plays the trajectory
  $\omega$ with largest value of $U^E(g(\omega))$ ($g(\omega)$ denotes the
  return of $\omega$ under reward $r^E$). Since, by hypothesis, $U^E\in\fU$,
  then it is strictly increasing, thus such trajectory coincides with the
  trajectory with largest return.
 \end{proof}
 
 \policynonstationary*
 \begin{proof}
   \begin{figure}[t]
     \centering
     \begin{tikzpicture}[node distance=3.5cm]
     \node[state,initial] at (0,0) (s0) {$s_{\text{init}}$};
     \node[state] at (3,2) (s1) {$s_1$};
     \node[state] at (3,0) (s2) {$s_2$};
     \node[state] at (3,-2) (s7) {$s_3$};
     \node[state] at (6,0) (s3) {$s_{\text{init}}$};
     \node[state] at (9,2) (s4) {$s_1$};
     \node[state] at (9,0) (s5) {$s_2$};
     \node[state] at (9,-2) (s6) {$s_3$};
     \node[state, draw=none] at (11,2) (ss4) {};
     \node[state, draw=none] at (11,0) (ss5) {};
     \node[state, draw=none] at (11,-2) (ss6) {};
     \node[draw=none, fill=black] at (1.5,-1) (ss0) {};
     \node[draw=none,fill=black] at (7.5,-1) (ss3) {};
     \draw (s0) edge[-, solid, above] node{\scriptsize$a_2$} (s1);
     \draw (s0) edge[-, solid, above] node{\scriptsize$a_1$} (ss0);
     \draw (ss0) edge[->, solid, above] node{\scriptsize$1/3$} (s2);
     \draw (ss0) edge[->, solid, above] node{\scriptsize$2/3$} (s7);
     \draw (s1) edge[->, solid, above, sloped] node{\scriptsize$r=0.5$} (s3);
     \draw (s2) edge[->, solid, above] node{\scriptsize$r=1$} (s3);
     \draw (s7) edge[->, solid, above,sloped] node{\scriptsize$r=0$} (s3);
     \draw (s3) edge[-, solid, above] node{\scriptsize$a_2$} (s4);
     \draw (s3) edge[->, solid, above] node{\scriptsize$a_1$} (ss3);
     \draw (ss3) edge[->, solid, above] node{\scriptsize$1/3$} (s5);
     \draw (ss3) edge[->, solid, above] node{\scriptsize$2/3$} (s6);
     \draw (s4) edge[->, solid, above] node{\scriptsize$r=0.5$} (ss4);
     \draw (s5) edge[->, solid, above] node{\scriptsize$r=1$} (ss5);
     \draw (s6) edge[->, solid, above] node{\scriptsize$r=0$} (ss6);
   \end{tikzpicture}
   \caption{MDP for the proof of Proposition \ref{prop: policy non stationary}.}
   \label{fig: mdp for proof policy non stationary}
   \end{figure}
 
   Consider the stationary RS-MDP $\cM_U=\tuple{\cS,\cA,H,s_0,p,r,U}$ depicted in
  Fig. \ref{fig: mdp for proof policy non stationary}, where
  $\cS=\{s_{\text{init}},s_1,s_2,s_3\}$, $\cA=\{a_1,a_2\}$, $H=4$,
  $s_0=s_{\text{init}}$, stationary transition model $p$ (we omit subscript
  because of stationarity) such that:
  \begin{align*}
    &p(s_2|s_{\text{init}},a_1)=1-p(s_3|s_{\text{init}},a_1)=1/3,\\
    &p(s_1|s_{\text{init}},a_2)=1,\\
    &p(s_{\text{init}}|s,a)=1\quad\forall s\in\{s_1,s_2,s_3\},\forall a\in\cA,
  \end{align*}
  reward function $r$ defined as:
  \begin{align*}
    &r(s_{\text{init}},a)=0\quad\forall a\in\cA,\\
    &r(s_1,a)=0.5\quad\forall a\in\cA,\\
    &r(s_2,a)=1\quad\forall a\in\cA,\\
    &r(s_3,a)=0\quad\forall a\in\cA,
  \end{align*}
  and utility function $U\in\fU$ that satisfies:
  \begin{align*}
    U(y)=\begin{cases}
      0.15 & \text{if }y=0.5\\
      0.2 & \text{if }y=1\\
      1.8 & \text{if }y=1.5\\
      2 & \text{if }y=2
    \end{cases}.
  \end{align*}
 
  Let $\pi^{\alpha,\beta}$ denote the general non-stationary policy that plays
   action $a_1$ at stage 1 w.p. $\alpha\in[0,1]$, and plays action $a_1$ at stage
   2 w.p. $\beta\in[0,1]$. The performance of
   policy $\pi^{\alpha,\beta}$ can be written as:
   \begin{align*}
     J^{\pi^{\alpha,\beta}}(U;p,r)
     &=\alpha\Big\{
     \frac{1}{3}\Big[
     \beta\Big(
     \frac{1}{3}U(2)+\frac{2}{3}U(1)  
     \Big)  
     +(1-\beta)U(1.5)
     \Big]  
     +\frac{2}{3}\Big[
     \beta\frac{1}{3}U(1)+(1-\beta)U(0.5)  
     \Big]
     \Big\}\\
   &\qquad
     + (1-\alpha)\Big[
     \beta\Big(
       \frac{1}{3}U(1.5) +\frac{2}{3}U(0.5)
     \Big)  
     +(1-\beta)U(1)
     \Big]\\
     &=
     \alpha\beta\Big[
     \frac{1}{9}U(2)+\frac{13}{9}U(1)-\frac{2}{3}U(1.5)-\frac{4}{3}U(0.5)
     \Big]\\
     &\qquad
     +(\alpha+\beta)\Big[
       \frac{1}{3}U(1.5)+\frac{2}{3}U(0.5)-U(1)  
     \Big]+U(1)\\
     &= \alpha\beta\Big[
       \frac{2}{9}+\frac{13}{45}-\frac{18}{15}-\frac{1}{5}
       \Big]
       +
       (\alpha+\beta)\Big[
       \frac{1}{5}+\frac{1}{10}-\frac{1}{5}  
     \Big]+\frac{1}{5}\\
     &=-\frac{8}{9}\alpha\beta
       +\frac{1}{10}
       (\alpha+\beta)+\frac{1}{5}.
   \end{align*}
   To show that the best Markovian policy is non-stationary in this example, we
   show that the performance of non-stationary policy $\pi^{0,1}$ is better than
   the performance of all possible Markovian policies. The performance of
   $\pi^{0,1}$ is:
   \begin{align*}
     J^{\pi^{0,1}}(U;p,r)=\frac{1}{10}+\frac{1}{5}=0.3.
   \end{align*}
   Instead, the generic stationary policy is $\pi^{\alpha,\alpha}$, and has
   performance:
   \begin{align*}
     J^{\pi^{\alpha,\alpha}}(U;p,r)=-\frac{8}{9}\alpha^2
     +\frac{1}{5}\alpha+\frac{1}{5}.
   \end{align*}
   The value of $\alpha\in[0,1]$ that maximizes this objective is:
   \begin{align*}
     \frac{d}{d\alpha} J^{\pi^{\alpha,\alpha}}(U;p,r)=-\frac{16}{9}\alpha+\frac{1}{5}=0
     \iff \alpha = \frac{9}{80},
   \end{align*}
   from which we get:
   \begin{align*}
     J^{\pi^{9/80,9/80}}(U;p,r)=\frac{169}{800}\le 0.22,
   \end{align*}
   which is smaller than $0.3=J^{\pi^{0,1}}(U;p,r)$. This concludes the proof of
   the first part of the proposition.
 
   For the second part, simply observe that, in the problem instance considered,
   we just obtained that the best Markovian stationary policy plays action $a_1$
   w.p. $9/80$, i.e., it is stochastic.
 \end{proof}
 
 \section{Additional Results and Proofs for Section \ref{sec: utility learning}}
 \label{apx: section 4}
 
 In this appendix, we provide a more explicit formulation for the feasible
 utility set (Appendix \ref{apx: explicit fs}), we present a property of the
 distance $d^{\text{all}}$ (Appendix \ref{apx: property dall}), and then we
 provide the proofs of all the results presented in Section \ref{sec: utility
 learning} (Appendix \ref{apx: proofs section ul}).
 
 \subsection{A more Explicit Formulation for the Feasible Set}
 \label{apx: explicit fs}

 For any policy $\pi$, we denote by $\cS^{p,r,\pi}$ the set of all
 $(s,h,y)$ state-stage-cumulative reward triples which are covered with non-zero
 probability by policy $\pi$ in the considered (RS-)MDP.
 
 Thanks to this definition, we can rewrite the feasible set as follows:  
 \begin{prop}\label{prop: fs explicit} Let $\cM=\tuple{\cS,\cA,H, s_0,p,r}$ be an
   MDP, and let $\pi^E$ be the expert policy. Then, the feasible utility set
   $\cU_{p,r,\pi^E}$ contains all and only the utility functions that make the
   actions played by the expert policy optimal at all the
   $(s,h,y)\in\cS^{p,r,\pie}$. Formally:
   \begin{align*}
     \cU_{p,r,\pi^E}=\Big\{&U\in\fU\,\Big|\,
     \forall (s,h,y)\in\cS^{p,r,\pie},\forall a\in\cA:
     \\
     &
     Q^*_h(s,y,\pi^E_h(s,y);p,r) \ge Q^*(s,y,a;p,r),
   \end{align*}
   where we used the notation introduced in Appendix \ref{apx: additional
   notation}.
 \end{prop}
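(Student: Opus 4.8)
The plan is to reduce the statement to a standard optimality characterization in the \emph{enlarged state space} MDP $\fE[\cM_U]$ of Section~\ref{sec: preliminaries}. Since $\pi^E$ depends on the past history only through the cumulative reward $y$, it induces a Markovian policy $\psi^E$ on $\fE[\cM_U]$ via $\psi^E_h(s,y)\coloneqq\pi^E_h(s,y)$. By construction of the dynamics $\fp$ and the reward $\fr$, executing $\psi^E$ in $\fE[\cM_U]$ produces the same distribution over $(s,y)$-trajectories as executing $\pi^E$ in $\cM_U$, and since $\fr$ collects exactly $U$ applied to the terminal cumulative reward, this gives $J^{\pi^E}(U;p,r)=V_1^{\psi^E}(s_0,0;\fp,\fr)$. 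Likewise, the enlarged state space approach (Theorem~3.1 of \citet{bauerle2014more}) gives $J^*(U;p,r)=V_1^*(s_0,0;\fp,\fr)$, by applying the same performance-matching identity to the optimal Markovian policy of $\fE[\cM_U]$. Hence $U\in\cU_{p,r,\pi^E}$ if and only if $\psi^E$ is an optimal (Markovian) policy for the finite-horizon MDP $\fE[\cM_U]$. Note also that, by definition, the set of $(s,h,y)$ triples reached with positive probability by $\psi^E$ in $\fE[\cM_U]$ is exactly $\cS^{p,r,\pi^E}$.

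It then remains to prove the classical fact that, in a finite-horizon MDP, a Markovian policy is optimal if and only if at every state it reaches with positive probability it selects an action maximizing the optimal $Q$-function; applied to $\fE[\cM_U]$ and $\psi^E$, and using $V_h^*(s,y)=\max_a Q_h^*(s,y,a)$ from the Bellman optimality equations, this is precisely the claimed condition $Q_h^*(s,y,\pi^E_h(s,y);p,r)\ge Q_h^*(s,y,a;p,r)$ for all $(s,h,y)\in\cS^{p,r,\pi^E}$ and $a\in\cA$. For the ``$\Leftarrow$'' direction I would argue by backward induction on $h$, showing $V_h^{\psi^E}(s,y;\fp,\fr)=V_h^*(s,y;\fp,\fr)$ for every $(s,h,y)\in\cS^{p,r,\pi^E}$: the base case at $h=H+1$ is immediate, and in the inductive step every successor $(s',h+1,y')$ of a reachable $(s,h,y)$ under the action $\psi^E_h(s,y)$ is again reachable under $\psi^E$, so the induction hypothesis together with the greedy assumption gives $V_h^{\psi^E}(s,y)=Q_h^*(s,y,\psi^E_h(s,y))=V_h^*(s,y)$; evaluating at $(s_0,1,0)$ yields $J^{\psi^E}=J^*$. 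For ``$\Rightarrow$'', I would invoke the (extended) performance-difference identity $J^*-J^{\psi^E}=\sum_{h=1}^H \E_{(s,y)\sim d_h^{\psi^E}}\big[V_h^*(s,y)-Q_h^*(s,y,\psi^E_h(s,y))\big]$, where $d_h^{\psi^E}$ is the state-occupancy of $\psi^E$ at stage $h$: each summand is nonnegative and the left-hand side is zero, so $V_h^*(s,y)=Q_h^*(s,y,\psi^E_h(s,y))$ wherever $d_h^{\psi^E}(s,y)>0$, i.e.\ on all of $\cS^{p,r,\pi^E}$, which together with $V_h^*=\max_a Q_h^*$ is exactly the stated inequality.

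The main obstacle I anticipate is the bookkeeping around reachability: the set $\cS^{p,r,\pi^E}$ over which the optimality condition must hold itself depends on $\pi^E$, so one must be careful to invoke exactly the right ``local optimality on the support'' characterization (in particular, that successors of reachable states under the prescribed action are reachable, which is what makes both the backward induction and the occupancy-measure argument go through). A related caveat, if $\pi^E$ is allowed to be history-dependent beyond the cumulative reward, is that $\pi^E_h(s,y)$ must be interpreted along reachable histories or restricted to policies that factor through $(s,h,y)$, which is precisely the regime handled by the enlarged-state construction. The remaining point --- verifying the two performance identities $J^{\pi^E}(U;p,r)=V_1^{\psi^E}(s_0,0;\fp,\fr)$ and $J^*(U;p,r)=V_1^*(s_0,0;\fp,\fr)$ directly from the definitions of $\fp$ and $\fr$ --- is routine but should be written out explicitly.
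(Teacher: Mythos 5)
Your proposal follows essentially the same route as the paper's proof sketch: reduce to the enlarged state space MDP via Theorem 3.1 of \citet{bauerle2014more} (using that $\pi^E$'s non-Markovianity only passes through the cumulative reward), and then apply the standard characterization that a Markovian policy is optimal iff it is greedy w.r.t.\ $Q^*$ at every state reached with positive probability. The only difference is that the paper cites an existing lemma (Lemma E.1 of \citealp{lazzati2024offline}) for this last step, whereas you prove it directly by backward induction and the performance-difference identity, which is correct.
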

 \begin{proofsketch}
   Based on Theorem 3.1 of \citet{bauerle2014more} (or Theorem 1 of
   \citet{wu2023risksensitive}), we have that a utility $U\in\fU$ belongs to the
   feasible set if it makes the expert policy optimal even in the enlarged state
   space MDP (note that it is possible to define a policy $\psi$ for the enlarged
   MDP because we are considering policies $\pi$ whose non-Markovianity lies only
   in the cumulative reward up to now). Therefore, the result follows thanks to a
   proof analogous to that of Lemma E.1 in \citet{lazzati2024offline}, since we
   are simply considering a common MDP with two variables per state.
 \end{proofsketch}

 \subsection{A Property of $d^{\text{all}}$}\label{apx: property dall}
 
We note that closeness under the max norm (restricted to a certain domain)
implies closeness under $d^{\text{all}}$:
\begin{restatable}{prop}{propbounddall}
 Consider an arbitrary MDP with transition model $p$ and reward function
 $r$. Then, for any pair of utilities $U_1,U_2\in\fU$, it holds that
 $d^{\text{all}}_{p,r}(U_1,U_2)\le
 \max_{G\in\cG^{p,r}}|U_1(G)-U_2(G)|$.
\end{restatable}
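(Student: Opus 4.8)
The plan is to unwind the definition $d^{\text{all}}_{p,r}(U_1,U_2)=\max_\pi|J^\pi(U_1;p,r)-J^\pi(U_2;p,r)|$ and to exploit the identity, recalled in Section~\ref{sec: preliminaries}, that $J^\pi(U;p,r)=\E_{G\sim\eta^{p,r,\pi}}[U(G)]$ for every $U\in\fU$ and every policy $\pi$. First I would fix an arbitrary (possibly non-Markovian) policy $\pi$ and rewrite its contribution as a single expectation, $|J^\pi(U_1;p,r)-J^\pi(U_2;p,r)|=|\E_{G\sim\eta^{p,r,\pi}}[U_1(G)-U_2(G)]|$. Since $\eta^{p,r,\pi}$ has finite support, contained in $\cG^{p,r}$, which is itself finite as remarked in Section~\ref{sec: preliminaries}, the expectation is a finite convex combination, so the triangle inequality gives $|\E_{G\sim\eta^{p,r,\pi}}[U_1(G)-U_2(G)]|\le\E_{G\sim\eta^{p,r,\pi}}[|U_1(G)-U_2(G)|]$.

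The second step bounds this expectation by the largest value the integrand takes on the support of $\eta^{p,r,\pi}$: $\E_{G\sim\eta^{p,r,\pi}}[|U_1(G)-U_2(G)|]\le\max_{G\in\mathrm{supp}(\eta^{p,r,\pi})}|U_1(G)-U_2(G)|$. Then, since $\mathrm{supp}(\eta^{p,r,\pi})\subseteq\cG^{p,r}$ by the definitions of $\eta^{p,r,\pi}$ and $\cG^{p,r}$ (any return with positive probability under $\pi$ is realized by some trajectory that $\pi$ visits with positive probability, hence lies in $\cG^{p,r}$), we may enlarge the maximization set to $\cG^{p,r}$, obtaining a $\pi$-independent bound $\max_{G\in\cG^{p,r}}|U_1(G)-U_2(G)|$. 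Taking the maximum over $\pi$ on the left-hand side then yields $d^{\text{all}}_{p,r}(U_1,U_2)\le\max_{G\in\cG^{p,r}}|U_1(G)-U_2(G)|$, as claimed.

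I do not anticipate any genuine obstacle: every inequality above is elementary, and the finiteness of $\cG^{p,r}$ (established in the preliminaries) ensures all maxima are attained and all expectations are finite sums, so no integrability or measurability issues arise. The only point deserving a line of care is the containment $\mathrm{supp}(\eta^{p,r,\pi})\subseteq\cG^{p,r}$, which however follows immediately from the definition of $\cG^{p,r}$ as the set of returns reachable by \emph{some} policy.
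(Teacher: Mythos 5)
Your proposal is correct and follows essentially the same route as the paper's proof: rewrite each $J^\pi$ difference as $\E_{G\sim\eta^{p,r,\pi}}[U_1(G)-U_2(G)]$, apply the triangle inequality, and bound the expectation by the maximum of $|U_1-U_2|$ over $\cG^{p,r}$ (the paper phrases this last step via relaxing to all distributions on $\cG^{p,r}$ and the duality of the $1$- and $\infty$-norms, which is just a different wording of your ``expectation is at most the max over the support'' step). No gap; the containment $\mathrm{supp}(\eta^{p,r,\pi})\subseteq\cG^{p,r}$ you flag is indeed immediate from the definitions.
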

\begin{proof}
  For the sake of simplicity, we denote the infinity norm and the 1-norm
  w.r.t. set $\cG^{p,r}$ as: $\|f\|_\infty\coloneqq
  \max_{G\in\cG^{p,r}}|f(G)|$ and $\|f\|_1\coloneqq
  \sum_{G\in\cG^{p,r}}|f(G)|$. In addition, we overload notation and use
  symbols $U_1,U_2$ to denote the vectors in $[0,H]^{|\cG^{p,r}|}$ containing,
  respectively, the values assigned by utility functions $U_1,U_2$ to points
  in set $\cG^{p,r}$. Then, we can write:
  
  \begin{align*}
    d^{\text{all}}_{p,r}(U_1,U_2)&\coloneqq \sup\limits_{\pi\in\Pi}
    |J^\pi(U_1;p,r)-J^\pi(U_2;p,r)|\\
    &=\sup\limits_{\pi\in\Pi}
    |\popblue{\E_{G\sim\eta^{p,r,\pi}}[U_1(G)]}-
    \popblue{\E_{G\sim\eta^{p,r,\pi}}[U_2(G)]}|\\
    &=\sup\limits_{\pi\in\Pi}
    |\E_{G\sim\eta^{p,r,\pi}}[\popblue{U_1(G)-U_2(G)}]|\\
    &\markref{(1)}{\le}\sup\limits_{\popblue{\eta\in\Delta^{\cG^{p,r}}}}
    |\E_{G\sim\popblue{\eta}}[U_1(G)-U_2(G)]|\\
    &\markref{(2)}{\le}\sup\limits_{\eta\in\Delta^{\cG^{p,r}}}
    \E_{G\sim\eta}\popblue{|}U_1(G)-U_2(G)\popblue{|}\\
    &\markref{(3)}{=}\popblue{\|}U_1-U_2\popblue{\|_\infty},
\end{align*}
where at (1) we upper bound by considering the set of all possible
distributions over set $\cG^{p,r}$ instead of just those induced by some
policies in the considered MDP, at (2) we apply triangle inequality, and at
(3) we have used the fact that $\|\cdot\|_1$ and $\|\cdot\|_\infty$ are dual
norms.
\end{proof}
 
 \subsection{Proofs for Section \ref{sec: utility learning}}
 \label{apx: proofs section ul}

 \examplepartialidentifiability*
 \begin{proof}
  A utility $U\in\fU$ makes $\pi^E$ optimal for $\cM_U$ if playing $a_1$ is
  better than playing $a_2$: $J^{\pi^E}(U;p,r)=0.1U(2)+0.5U(1.5)+0.4U(1)\ge
  0.8U(1.5)+0.2U(1)$. Thus, all the utilities $U\in\fU$, that assign to
  $G=1,G=1.5$ any of the values coloured in blue in Fig. \ref{fig: example fs}
  (middle), satisfy this condition.
 \end{proof}
 
 \proptransferabilityp*
 \begin{proof}
 
   We will prove the guarantee stated in the proposition using two different
   pairs of MDPs: One that that satisfies $\cG^{p',r}=\cG^{p,r}$, i.e., for which
   the support of the return function coincides, and the other that does not. Let
   us begin with the former.
 
   Consider a simple MDP
   $\cM=\tuple{\cS,\cA,H,s_{\text{init}},p,r}$ with five states
   $\cS=\{s_{\text{init}},s_0,s_{0.25},s_{0.75},s_1\}$, two actions
   $\cA=\{a_1,a_2\}$, horizon $H=2$, initial state $s_{\text{init}}$,
   transition model $p$ such that:
   \begin{align*}
     p_1(s'|s_{\text{init}},a_1)=\begin{cases}
       1/4 & \text{if }s'=s_0\\
       1/4 & \text{if }s'=s_{0.25}\\
       1/4 & \text{if }s'=s_{0.75}\\
       1/4 & \text{if }s'=s_1
     \end{cases},\\
     p_1(s'|s_{\text{init}},a_2)=\begin{cases}
       1/2 & \text{if }s'=s_{0.25}\\
       1/2 & \text{if }s'=s_{0.75}
     \end{cases},
   \end{align*}
   and reward function $r$ that assigns
   $r_1(s_{\text{init}},a_1)=r_1(s_{\text{init}},a_2)=0$, and:
   \begin{align*}
     r_2(s,a)=\begin{cases}
       0 & \text{if }s=s_0\wedge (a=a_1\lor a=a_2)\\
       0.25 & \text{if }s=s_{0.25}\wedge (a=a_1\lor a=a_2)\\
       0.75 & \text{if }s=s_{0.75}\wedge (a=a_1\lor a=a_2)\\
       1 & \text{if }s=s_1\wedge (a=a_1\lor a=a_2)
     \end{cases}.
   \end{align*}
   Note that the support of the return function is $\cG^{p,r}=\{0,0.25,0.75,1\}$.
   We are given an expert's policy $\pi^E$ that prescribes action $a_1$ at stage
   1 in state $s_{\text{init}}$, and arbitrary actions in other states (the
   specific action is not relevant). The MDP
   $\cM$ is represented in Figure \ref{fig: mdp for proof
   transferability p}.
 
   \begin{figure}[t]
     \centering
     \begin{tikzpicture}[node distance=3.5cm]
     \node[state,initial] at (-1,0) (s0) {$s_{\text{init}}$};
     \node[state] at (2.5,2.25) (s1) {$s_0$};
     \node[state] at (2.5,0.75) (s2) {$s_{0.25}$};
     \node[state] at (2.5,-0.75) (s3) {$s_{0.75}$};
     \node[state] at (2.5,-2.25) (s4) {$s_{1}$};
     \node[state, draw=none] at (4.5,2.25) (a1) {};
     \node[state, draw=none] at (4.5,0.75) (a2) {};
     \node[state, draw=none] at (4.5,-0.75) (a3) {};
     \node[state, draw=none] at (4.5,-2.25) (a4) {};
     \node[draw=none, fill=black] at (0.8,1.4) (a01) {};
     \node[draw=none,fill=black] at (0.3,-1.5) (a02) {};
     \draw (s0) edge[-, solid, above] node{$a_1$} (a01);
     \draw (s0) edge[-, solid, above] node{$a_2$} (a02);
     \draw (s1) edge[->, solid, above] node{$a_1,a_2$} (a1);
     \draw (s2) edge[->, solid, above] node{$a_1,a_2$} (a2);
     \draw (s3) edge[->, solid, above] node{$a_1,a_2$} (a3);
     \draw (s4) edge[->, solid, above] node{$a_1,a_2$} (a4);
     \draw (a01) edge[->, solid, above] node{\scriptsize$1/4$} (s1);
     \draw (a01) edge[->, solid, above] node{\scriptsize$1/4$} (s2);
     \draw (a01) edge[->, solid, above] node{\scriptsize$1/4$} (s3);
     \draw (a01) edge[->, solid, above] node{\scriptsize$1/4$} (s4);
     \draw (a02) edge[->, solid, below] node{\scriptsize$1/2$} (s2);
     \draw (a02) edge[->, solid, below] node{\scriptsize$1/2$} (s3);
   \end{tikzpicture}
   \caption{MDP for the proof of Proposition \ref{prop: transferring utilities p}.}
   \label{fig: mdp for proof transferability p}
   \end{figure}
 
   Now, we show that utilities
   $U_1,U_2\in\fU$, defined in points of the support $\cG^{p,r}$ as (and connected in
   arbitrary continuous strictly-increasing manner between these points):
   \begin{align*}
     U_1(G)=\begin{cases}
       0 & \text{if }G=0\\
       0.01 & \text{if }G=0.25\\
       0.02 & \text{if }G=0.75\\
       1.99 & \text{if }G=1
     \end{cases},\qquad
     U_2(G)=\begin{cases}
       0 & \text{if }G=0\\
       0.01 & \text{if }G=0.25\\
       0.99 & \text{if }G=0.75\\
       1.99 & \text{if }G=1
     \end{cases},
   \end{align*}
   belong to the feasible set $\cU_{p,r,\pi^E}$, and, when transferred to the new
   MDP $\cM'=\tuple{\cS,\cA,H, s_{\text{init}},p',r}$, with
   transition model $p'\neq p$ defined as:
   \begin{align*}
     p_1'(\cdot|s_{\text{init}},a_1)=p_1(\cdot|s_{\text{init}},a_1),\\
     p_1'(s'|s_{\text{init}},a_2)=\begin{cases}
       0.7 & \text{if }s'=s_{0}\\
       0.3 & \text{if }s'=s_{1}
     \end{cases},
   \end{align*}
   impose different optimal policies, i.e., utility $U_2$ keeps making action
   $a_1$ optimal from state $s_{\text{init}}$ even in $\cM'$, while $U_1$ makes
   action $a_2$ optimal. This proves the thesis of the proposition.
 
   Let us begin by showing that $U_1,U_2\in\cU_{p,r,\pi^E}$ belong to the feasible
   set of $\cM$ with policy $\pi^E$. Let $\overline{\pi}$ be the policy that
   plays action $a_2$ in state $s_{\text{init}}$. Then, the distribution of
   returns induced by policies $\pi^E$ and $\overline{\pi}$ are (we represent
   values only at points in $\cG^{p,r}=\{0,0.25,0.75,1\}$):
   \begin{align*}
     \eta^{p,r,\pi^E}&=[1/4, 1/4, 1/4, 1/4]^\intercal\\
     \eta^{p,r,\overline{\pi}}&=[0, 1/2, 1/2, 0]^\intercal.
   \end{align*}
   Thus, policy $\pi^E$ is optimal under some utility $U$ if and only if the
   values assigned by $U$ to points in $\cG^{p,r}=\{0,0.25,0.75,1\}$ (denoted,
   respectively, by $U^1,U^2,U^3,U^4$) satisfy:
   \begin{align*}
     U^\intercal (\eta^{p,r,\pi^E}-\eta^{p,r,\overline{\pi}})=
     [1/4, -1/4, -1/4, 1/4] U =U^1-U^2-U^3+U^4\ge 0,
   \end{align*}
   where we have overloaded the notation and denoted with
   $U\coloneqq[U^1,U^2,U^3,U^4]^\intercal$ both the utility and the vector of
   values assigned to points in $\cG^{p,r}$.
   By imposing normalization constraints ($U(0)=0,U(2)=2$), we get $U^1=0$, and
   by imposing also the monotonicity constraints, we get that utility $U$ is in
   the feasible set $\cU_{p,r,\pi^E}$ if and only if:
   \begin{align*}
     \begin{cases}
       U^4\ge U^2+U^3\\
       0<U^2<U^3<U^4<2
     \end{cases}.
   \end{align*}
   Clearly, both utilities $U_1,U_2$ satisfy these constraints, thus they
   belong to the feasible set $\cU_{p,r,\pi^E}$.
   Now, concerning problem $\cM'$, the performances of $\pi^E,\overline{\pi}$
   w.r.t. utilities $U_1,U_2$ are:
   \begin{align*}
     J^{\pi^E}(U_1;p',r)&= \frac{1}{4}U_1(0)+\frac{1}{4}U_1(0.25)+\frac{1}{4}U_1(0.75)+\frac{1}{4}U_1(1)=
     2.02/4=0.505,\\
     J^{\overline{\pi}}(U_1;p',r)&= 0.7 U_1(0)+0.3 U_1(1)=
     0.3\times 1.99=0.597,\\
     J^{\pi^E}(U_2;p',r)&= \frac{1}{4}U_1(0)+\frac{1}{4}U_1(0.25)+\frac{1}{4}U_1(0.75)+\frac{1}{4}U_1(1)=
     2.99/4=0.7475,\\
     J^{\overline{\pi}}(U_2;p',r)&= 0.7 U_1(0)+0.3 U_1(1)=
     0.3\times 1.99=0.597.
   \end{align*}
   Clearly, $J^{\pi^E}(U_1;p',r)<J^{\overline{\pi}}(U_1;p',r)$, but
   $J^{\pi^E}(U_2;p',r)>J^{\overline{\pi}}(U_2;p',r)$, thus we conclude that the
   set of policies induced by utilities $U_1,U_2$ in $\cM'$ do not intersect,
   since they start from $s_{\text{init}}$ with different actions
   $\Pi^*_{p',r}(U_1)\cap\Pi^*_{p',r}(U_2)=\{\}$. This concludes the proof
   with an example that satisfies $\cG^{p',r}=\cG^{p,r}$.
 
   If we want an example that does \emph{not} satisfy $\cG^{p',r}=\cG^{p,r}$,
   then we can consider exactly the same example with $\cM$ and $\cM'$, but
   using $r_1(s_{\text{init}},a_2)=0.001$. In this manner, we see that
   $\cG^{p,r}=\{0,0.25,0.251,0.75,0.751,1\}$, and
   $\cG^{p',r}=\{0,0.001,0.25,0.75,1,1.001\}$, which are different. By choosing
   $U_1',U_2'$ as:
   \begin{align*}
     U_1'(G)=\begin{cases}
       0 & \text{if }G=0\\
       0.001 & \text{if }G=0.001\\
       0.01 & \text{if }G=0.25\\
       0.011 & \text{if }G=0.251\\
       0.02 & \text{if }G=0.75\\
       0.021 & \text{if }G=0.751\\
       1.99 & \text{if }G=1\\
       1.991 & \text{if }G=1.001
     \end{cases},\qquad
     U_2'(G)=\begin{cases}
       0 & \text{if }G=0\\
       0.001 & \text{if }G=0.001\\
       0.01 & \text{if }G=0.25\\
       0.011 & \text{if }G=0.251\\
       0.99 & \text{if }G=0.75\\
       0.991 & \text{if }G=0.751\\
       1.99 & \text{if }G=1\\
       1.991 & \text{if }G=1.001
     \end{cases},
   \end{align*}
   it can be shown that $U_1',U_2'$ belong to the (new) feasible set of $\cM$,
   and that induce different policies in $\cM'$. This concludes the proof.
 \end{proof}
 
   \proptransferabilityr*
   \begin{proof}
     Similarly to the proof of Proposition \ref{prop: transferring utilities p},
     we provide two examples, one with $\cG^{p,r'}=\cG^{p,r}$, and the other with
     $\cG^{p,r'}\neq\cG^{p,r}$. Let us begin with the former.
 
     Consider a simple MDP
     $\cM=\tuple{\cS,\cA,H, s_{\text{init}},p,r}$ with three states
     $\cS=\{s_{\text{init}},s_1,s_2\}$, two actions
     $\cA=\{a_1,a_2\}$, horizon $H=2$, initial state $ s_{\text{init}}$,
     transition model $p$ such that:
     \begin{align*}
       p_1(s'|s_{\text{init}},a_1)=\begin{cases}
         1/2 & \text{if }s'=s_1\\
         1/2 & \text{if }s'=s_{2}
       \end{cases},\\
       p_1(s'|s_{\text{init}},a_2)=\begin{cases}
         0.9 & \text{if }s'=s_{1}\\
         0.1 & \text{if }s'=s_{2}
       \end{cases},
     \end{align*}
     and reward function $r$ that assigns
     $r_1(s_{\text{init}},a_1)=0$, $r_1(s_{\text{init}},a_2)=0.5$, and:
     \begin{align*}
       r_2(s,a)=\begin{cases}
         0 & \text{if }s=s_1\wedge (a=a_1\lor a=a_2)\\
         1 & \text{if }s=s_{2}\wedge (a=a_1\lor a=a_2)
       \end{cases}.
     \end{align*}
     Note that the support of the return function is $\cG^{p,r}=\{0,0.5,1,1.5\}$.
     We are given an expert's policy $\pi^E$ that prescribes action $a_1$ at stage
     1 in state $s_{\text{init}}$, and arbitrary actions in other states (the
     specific action is not relevant). The MDP
     $\cM$ is represented in Figure \ref{fig: mdp for proof
     transferability r}.
   
     \begin{figure}[t]
       \centering
       \begin{tikzpicture}[node distance=3.5cm]
       \node[state,initial] at (-1,0) (s0) {$s_{\text{init}}$};
       \node[state] at (2.5,0.75) (s2) {$s_{1}$};
       \node[state] at (2.5,-0.75) (s3) {$s_{2}$};
       \node[state, draw=none] at (4.5,0.75) (a2) {};
       \node[state, draw=none] at (4.5,-0.75) (a3) {};
       \node[draw=none, fill=black] at (0.8,1.4) (a01) {};
       \node[draw=none,fill=black] at (0.3,-1.5) (a02) {};
       \draw (s0) edge[-, solid, above] node{$a_1$} (a01);
       \draw (s0) edge[-, solid, above] node{$a_2$} (a02);
       \draw (s2) edge[->, solid, above] node{$a_1,a_2$} (a2);
       \draw (s3) edge[->, solid, above] node{$a_1,a_2$} (a3);
       \draw (a01) edge[->, solid, above] node{\scriptsize$1/2$} (s2);
       \draw (a01) edge[->, solid, above] node{\scriptsize$1/2$} (s3);
       \draw (a02) edge[->, solid, below] node{\scriptsize$0.9$} (s2);
       \draw (a02) edge[->, solid, below] node{\scriptsize$0.1$} (s3);
     \end{tikzpicture}
     \caption{MDP for the proof of Proposition \ref{prop: transferring utilities r}.}
     \label{fig: mdp for proof transferability r}
     \end{figure}
   
     Now, we show that the utilities
     $U_1,U_2\in\fU$, defined in points of the support $\cG^{p,r}$ as (and connected in
     arbitrary continuous strictly-increasing manner between these points):
     \begin{align*}
       U_1(G)=\begin{cases}
         0 & \text{if }G=0\\
         0.1 & \text{if }G=0.5\\
         0.9 & \text{if }G=1\\
         1.5 & \text{if }G=1.5
       \end{cases},\qquad
       U_2(G)=\begin{cases}
         0 & \text{if }G=0\\
         0.1 & \text{if }G=0.5\\
         0.8 & \text{if }G=1\\
         1.5 & \text{if }G=1.5
       \end{cases},
     \end{align*}
     belong to the feasible set $\cU_{p,r,\pi^E}$, and, when transferred to the new
     MDP $\cM'=\tuple{\cS,\cA,H, s_{\text{init}},p,r'}$, with
     reward function $r'\neq r$ defined as:
     \begin{align*}
       &r_1'(s_{\text{init}},a_1)=0.5, \qquad r_1(s_{\text{init}},a_2)=0,\\
       &r_2'(s,a)=\begin{cases}
         1 & \text{if }s=s_1\wedge (a=a_1\lor a=a_2)\\
         0 & \text{if }s=s_{2}\wedge (a=a_1\lor a=a_2)
       \end{cases},
     \end{align*}
     impose different optimal policies, i.e., utility $U_2$ keeps making action
     $a_1$ optimal from state $s_{\text{init}}$ even in $\cM'$, while $U_1$ makes
     action $a_2$ optimal. This will demonstrate the thesis of the proposition.
 
     Let us begin by showing that $U_1,U_2\in\cU_{p,r,\pi^E}$ belong to the feasible
     set of $\cM$ with policy $\pi^E$. Let $\overline{\pi}$ be the policy that
     plays action $a_2$ in state $s_{\text{init}}$. Then, the distribution of
     returns induced by policies $\pi^E$ and $\overline{\pi}$ are (we represent
     values only at points in $\cG^{p,r}=\{0,0.5,1,1.5\}$):
     \begin{align*}
       \eta^{p,r,\pi^E}&=[0.5, 0, 0.5, 0]^\intercal\\
       \eta^{p,r,\overline{\pi}}&=[0, 0.9, 0, 0.1]^\intercal.
     \end{align*}
     Thus, policy $\pi^E$ is optimal under some utility $U$ if and only if the
     values assigned by $U$ to points in $\cG^{p,r}=\{0,0.5,1,1.5\}$ (denoted,
     respectively, by $U^1,U^2,U^3,U^4$) satisfy:
     \begin{align*}
       U^\intercal (\eta^{p,r,\pi^E}-\eta^{p,r,\overline{\pi}})=
       [0.5, -0.9, 0.5, -0.1] U =0.5 U^1-0.9 U^2 + 0.5 U^3-0.1 U^4\ge 0,
     \end{align*}
     where we have overloaded the notation and denoted with
     $U\coloneqq[U^1,U^2,U^3,U^4]^\intercal$ both the utility and the vector of
     values assigned to points in $\cG^{p,r}$.
     By imposing normalization constraints ($U(0)=0,U(2)=2$), we get $U^1=0$, and
     by imposing also the monotonicity constraints, we get that utility $U$ is in
     the feasible set $\cU_{p,r,\pi^E}$ if and only if:
     \begin{align*}
       \begin{cases}
         U^4\ge 5 U^3 - 9 U^2\\
         0<U^2<U^3<U^4<2
       \end{cases}.
     \end{align*}
     Clearly, both utilities $U_1,U_2$ satisfy these constraints, thus they
     belong to the feasible set $\cU_{p,r,\pi^E}$.
     Now, concerning problem $\cM'$, the performances of $\pi^E,\overline{\pi}$
     w.r.t. utilities $U_1,U_2$ are:
     \begin{align*}
       J^{\pi^E}(U_1;p,r')&= 0 U_1(0)+0.5 U_1(0.5)+0 U_1(1)+0.5 U_1(1.5)=
       1.6/2=0.8,\\
       J^{\overline{\pi}}(U_1;p,r')&= 0.1 U_1(0)+0 U_1(0.5)+0.9 U_1(1)+0 U_1(1.5)=
       0.9\times 0.9=0.81,\\
       J^{\pi^E}(U_2;p,r')&= 0 U_2(0)+0.5 U_2(0.5)+0 U_2(1)+0.5 U_2(1.5)=
       1.6/2=0.8,\\
       J^{\overline{\pi}}(U_2;p,r')&= 0.1 U_2(0)+0 U_2(0.5)+0.9 U_2(1)+0 U_2(1.5)=
       0.9\times 0.8=0.72.
     \end{align*}
     Clearly, $J^{\pi^E}(U_1;p,r')<J^{\overline{\pi}}(U_1;p,r')$, but
     $J^{\pi^E}(U_2;p,r')>J^{\overline{\pi}}(U_2;p,r')$, thus we conclude that the
     set of policies induced by utilities $U_1,U_2$ in $\cM'$ do not intersect,
     since they start from $s_{\text{init}}$ with different actions
     $\Pi^*_{p,r'}(U_1)\cap\Pi^*_{p,r'}(U_2)=\{\}$. This concludes the proof
     with an example that satisfies $\cG^{p,r'}=\cG^{p,r}$.
 
     If we want an example that does \emph{not} satisfy $\cG^{p,r'}=\cG^{p,r}$,
     then we can consider exactly the same example with $\cM$ and $\cM'$, but
     using $r_1'(s_{\text{init}},a_2)=0.001$. In this manner, we see that
     $\cG^{p,r}=\{0,0.5,1,1.5\}$, and $\cG^{p',r}=\{0.001,0.5,1.001,1.5\}$, which
     are different. Nevertheless, by choosing $U_1',U_2'$ as:
     \begin{align*}
       U_1'(G)=\begin{cases}
         0 & \text{if }G=0\\
         0.001 & \text{if }G=0.001\\
         0.1 & \text{if }G=0.5\\
         0.9 & \text{if }G=1\\
         0.901 & \text{if }G=1.001\\
         1.5 & \text{if }G=1.5
       \end{cases},\qquad
       U_2'(G)=\begin{cases}
         0 & \text{if }G=0\\
         0.001 & \text{if }G=0.001\\
         0.1 & \text{if }G=0.5\\
         0.8 & \text{if }G=1\\
         0.801 & \text{if }G=1.001\\
         1.5 & \text{if }G=1.5
       \end{cases},
     \end{align*}
     it can be shown that $U_1',U_2'$ still belong to the feasible set of $\cM$
     (the constraints are the same), and that induce different policies in
     $\cM'$. This concludes the proof.
   \end{proof}
 
   \propimitation*
   \begin{proof}
     Consider a simple MDP
     $\cM=\tuple{\cS,\cA,H, s_{\text{init}},p,r}$ with four states
     $\cS=\{s_{\text{init}},s_1,s_2,s_3\}$, three actions
     $\cA=\{a_1,a_2,a_3\}$, horizon $H=2$, initial state $ s_{\text{init}}$,
     transition model $p$ such that:
     \begin{align*}
       &p_1(s_2|s_{\text{init}},a_1)=1,\qquad
       p_1(s_1|s_{\text{init}},a_3)=1,\\
       &p_1(s'|s_{\text{init}},a_2)=\begin{cases}
         0.91 & \text{if }s'=s_{1}\\
         0.09 & \text{if }s'=s_{3}
       \end{cases},
     \end{align*}
     and reward function $r$ that assigns
     $r_1(s_{\text{init}},a_1)=r_1(s_{\text{init}},a_2)=r_1(s_{\text{init}},a_3)=0$, and:
     \begin{align*}
       r_2(s,a)=\begin{cases}
         0 & \text{if }s=s_1\wedge (a=a_1\lor a=a_2 \lor a=a_3)\\
         0.5 & \text{if }s=s_{2}\wedge (a=a_1\lor a=a_2 \lor a=a_3)\\
         1 & \text{if }s=s_{3}\wedge (a=a_1\lor a=a_2 \lor a=a_3)
       \end{cases}.
     \end{align*}
     Note that the support of the return function is $\cG^{p,r}=\{0,0.5,1\}$. We
     are given an expert's policy $\pi^E$ that prescribes action $a_1$ at stage 1
     in state $s_{\text{init}}$, and arbitrary actions in other states (the
     specific action is not relevant). The MDP
     $\cM$ is represented in Figure \ref{fig: mdp for proof
     imitation}.
     \begin{figure}[t]
       \centering
       \begin{tikzpicture}[node distance=3.5cm]
       \node[state,initial] at (-1,0) (s0) {$s_{\text{init}}$};
       \node[state] at (2.5,1.5) (s2) {$s_{1}$};
       \node[state] at (2.5,0) (s3) {$s_{2}$};
       \node[state] at (2.5,-1.5) (s4) {$s_{3}$};
       \node[state, draw=none] at (4.5,1.5) (a2) {};
       \node[state, draw=none] at (4.5,0) (a3) {};
       \node[state, draw=none] at (4.5,-1.5) (a4) {};
       \node[draw=none,fill=black] at (0.3,-1.5) (a02) {};
       \draw (s0) edge[->, solid, above] node{$a_1$} (s3);
       \draw (s0) edge[->, solid, above] node{$a_3$} (s2);
       \draw (s0) edge[-, solid, above] node{$a_2$} (a02);
       \draw (s2) edge[->, solid, above] node{\scriptsize $a_1,a_2,a_3$} (a2);
       \draw (s3) edge[->, solid, above] node{\scriptsize $a_1,a_2,a_3$} (a3);
       \draw (s4) edge[->, solid, above] node{\scriptsize $a_1,a_2,a_3$} (a4);
       \draw (a02) edge[->, solid, below] node{\scriptsize$0.91$} (s2);
       \draw (a02) edge[->, solid, below] node{\scriptsize$0.09$} (s4);
     \end{tikzpicture}
     \caption{MDP for the proof of Proposition \ref{prop: imitating the expert}.}
     \label{fig: mdp for proof imitation}
     \end{figure}
 
     Now, we show that the utilities
     $U_1,U_2\in\fU$, defined in points of the support $\cG^{p,r}$ as (and connected in
     arbitrary continuous strictly-increasing manner between these points):
     \begin{align*}
       U_1(G)=\begin{cases}
         0 & \text{if }G=0\\
         0.1 & \text{if }G=0.5\\
         0.1/0.09 & \text{if }G=1
       \end{cases},\qquad
       U_2(G)=\begin{cases}
         0 & \text{if }G=0\\
         1.099 & \text{if }G=0.5\\
         1.1 & \text{if }G=1
       \end{cases},
     \end{align*}
     belong to the feasible set $\cU_{p,r,\pi^E}$, and that, for any
     $\epsilon\in[0,0.1]$, there exists a policy $\pi$ for which it holds both
     that $J^*(U_1;p,r)-J^\pi(U_1;p,r)=\epsilon$ and
     $J^*(U_2;p,r)-J^\pi(U_2;p,r)\ge 1$.
 
     First, let us show that both $U_1,U_2$ belong to the feasible utility set.
     Let $\pi^1,\pi^2,\pi^3$ be the policies that play, respectively, action
     $a_1,a_2,a_3$ in state $s_{\text{init}}$ (note that $\pi^1=\pi^E$). Then,
     their performances for arbitrary utility $U$ are:
     \begin{align*}
       &J^{\pi^1}(U; p,r)=U(0.5),\\
       &J^{\pi^2}(U; p,r)=0.09U(1)+0.91U(0)=0.09U(1),\\
       &J^{\pi^3}(U; p,r)=U(0)=0,
     \end{align*}
     where we have used the normalization condition. Replacing $U$ with $U_1$, we
     get $J^*(U_1;p,r)=J^{\pi^1}(U_1; p,r)=0.1\popblue{=}J^{\pi^2}(U_1; p,r)=0.1>J^{\pi^3}(U_1;
     p,r)=0$. Instead, replacing with $U_2$, we get $J^*(U_2;p,r)=J^{\pi^1}(U_2;
     p,r)=1.099>J^{\pi^2}(U_2; p,r)=0.09\times 1.1>J^{\pi^3}(U_2; p,r)=0$. Therefore, both
     $U_1,U_2\in\cU_{p,r,\pi^E}$.
 
     Now, for any $\alpha\in[0,1]$ let us denote by $\pi_\alpha$ the policy that,
     at state $s_{\text{init}}$, plays action $a_3$ w.p. $\alpha$, and action
     $a_2$ w.p. $1-\alpha$. We show that, for any $\epsilon\in[0,0.1]$, policy
     $\pi_{\epsilon/0.1}$ is $\epsilon$-optimal for utility $U_1$, and its
     suboptimality is at least $1$ under utility $U_2$. For any $\alpha\in[0,1]$,
     the expected utilities of policy $\pi_\alpha$ under $U_1$ and $U_2$ are:
     \begin{align*}
       &J^{\pi_\alpha}(U_1;p,r)=(1-\alpha)\times 0.09\times U_1(1)=(1-\alpha)\times 0.1,\\
       &J^{\pi_\alpha}(U_2;p,r)=(1-\alpha)\times 0.09\times U_2(1)=(1-\alpha)\times 0.099,
     \end{align*}
     from which we derive that the suboptimalities of such policy under $U_1$ and $U_2$ are:
     \begin{align*}
       &J^*(U_1;p,r)-J^{\pi_\alpha}(U_1;p,r)=0.1-(1-\alpha)\times 0.1=0.1\alpha,\\
       &J^*(U_2;p,r)-J^{\pi_\alpha}(U_2;p,r)=1.099-(1-\alpha)\times 0.099=1+0.099\alpha.
     \end{align*}
     Thus, for any $\epsilon\in[0,0.1]$, policy $\pi_{\epsilon/0.1}$ is
     $\epsilon$-optimal for utility $U_1$, but it is at least $1$-suboptimal for
     utility $U_2$.
 
     The intuition is that utilities $U_1$ and $U_2$ assess in completely
     different manners the policies that play action $a_2$, although they both
     describe policy $\pi^E$ as optimal. This concludes the proof.
   \end{proof}
 
   \propnobounddall*
   \begin{proof}
     Consider a simple MDP
     $\cM=\tuple{\cS,\cA,H, s_{\text{init}},p,r}$ with three states
     $\cS=\{s_{\text{init}},s_1,s_2\}$, three actions
     $\cA=\{a_1,a_2,a_3\}$, horizon $H=2$, initial state $ s_{\text{init}}$,
     transition model $p$ such that:
     \begin{align*}
       p_1(s_1|s_{\text{init}},a_1)=1,\qquad 
       p_1(s_2|s_{\text{init}},a_2)=p_1(s_2|s_{\text{init}},a_2)=1,
     \end{align*}
     and reward function $r$ that assigns
     $r_1(s_{\text{init}},a_1)=r_1(s_{\text{init}},a_2)=0$, $r_1(s_{\text{init}},a_2)=1$, and:
     \begin{align*}
       r_2(s,a)=\begin{cases}
         0 & \text{if }s=s_1\wedge (a=a_1\lor a=a_2\lor a_3)\\
         1 & \text{if }s=s_{2}\wedge (a=a_1\lor a=a_2\lor a_3)
       \end{cases}.
     \end{align*}
     Note that the support of the return function is $\cG^{p,r}=\{0,1,2\}$. We
     are given an expert's policy $\pi^E$ that prescribes action $a_3$ at stage 1
     in state $s_{\text{init}}$, and arbitrary actions in the other states (the
     specific action is not relevant). The MDP
     $\cM$ is represented in Figure \ref{fig: mdp for proof no
     bound d all}.
   
     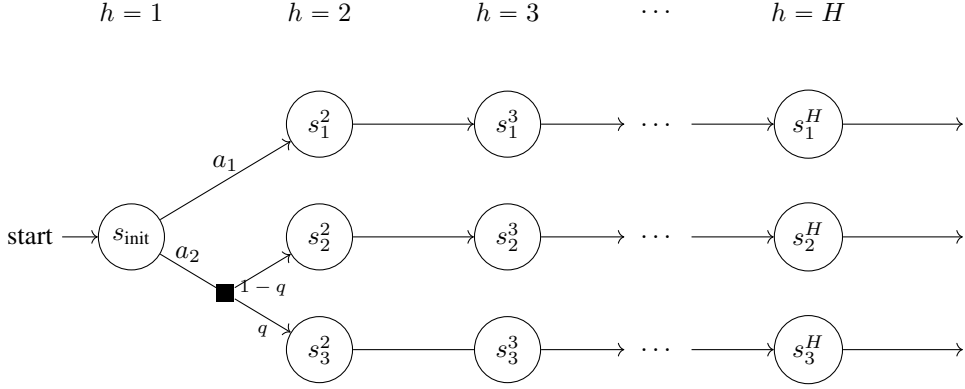
\begin{figure}[t]
       \centering
       \begin{tikzpicture}[node distance=3.5cm]
       \node[state,initial] at (-1,0) (s0) {$s_{\text{init}}$};
       \node[state] at (2.5,0.75) (s2) {$s_{1}$};
       \node[state] at (2.5,-0.75) (s3) {$s_{2}$};
       \node[state, draw=none] at (4.5,0.75) (a2) {};
       \node[state, draw=none] at (4.5,-0.75) (a3) {};
       \draw (s2) edge[->, solid, above] node{\scriptsize$a_1,a_2,a_3$} (a2);
       \draw (s3) edge[->, solid, above] node{\scriptsize$a_1,a_2,a_3$} (a3);
       \draw (s0) edge[->, solid, above] node{\scriptsize$a_1$} (s2);
       \draw (s0) edge[->, solid, above] node{\scriptsize$a_2,a_3$} (s3);
     \end{tikzpicture}
     \caption{MDP for the proof of Proposition \ref{prop: no bound d all}.}
     \label{fig: mdp for proof no bound d all}
     \end{figure}
 
     Consider two utilities $U_1,U_2$, that take on the following values in
     $\cG^{p,r}$:
     \begin{align*}
       &U_1(G)=\begin{cases}
         0& \text{if }G=0\\
         0.1& \text{if }G=1\\
         2& \text{if }G=2
       \end{cases},\\
       &U_2(G)=\begin{cases}
         0& \text{if }G=0\\
         1.1& \text{if }G=1\\
         2& \text{if }G=2
       \end{cases}.
     \end{align*}
     It is immediate that both utilities belong to the feasible set
     $\cU_{p,r,\pi^E}$. Nevertheless, if we denote by $\overline{\pi}$ the policy
     that plays action $a_2$ in state $s_{\text{init}}$, we see that
     $J^{\overline{\pi}}(U_1;p,r)=0.1$, while $J^{\overline{\pi}}(U_2;p,r)=1.1$,
     so that the difference is $1$.
   \end{proof}
 
   \propmultipledemonstrations*
   \begin{proof}
     We provide a constructive proof that shows which values of $ s_0,p,r$ it is
     sufficient to choose for recovering $U^E$ exactly. The construction is
     articulated into two parts. First, we aim to recover the value of $U^E(1)$,
     i.e., for $G=1$; next, we recover the utility for all other possible values
     of return. The intuition is that we construct a Standard Gamble (SG) between
     two policies over the entire horizon \citep{Wakker2010prospect}.
 
     To infer $U^E(1)$, we use the $ s_0,p,r$ values that provide the MDP
     described in Figure \ref{fig: mdp for proof multiple demonstrations}.
 
     \begin{figure}[t]
       \centering
       \begin{tikzpicture}[node distance=3.5cm]
       \node[state,initial] at (-1,0) (s0) {$s_{\text{init}}$};
       \node[state] at (1.5,1.5) (s2) {$s_{1}^2$};
       \node[state] at (1.5,0) (s3) {$s_{2}^2$};
       \node[state] at (1.5,-1.5) (s4) {$s_{3}^2$};
       \node[state] at (4,1.5) (s5) {$s_{1}^3$};
       \node[state] at (4,0) (s6) {$s_{2}^3$};
       \node[state] at (4,-1.5) (s7) {$s_{3}^3$};
       \node[state] at (8,1.5) (s8) {$s_{1}^H$};
       \node[state] at (8,0) (s9) {$s_{2}^H$};
       \node[state] at (8,-1.5) (s10) {$s_{3}^H$};
       \node[state, draw=none] at (6,1.5) (a2) {$\dotsc$};
       \node[state, draw=none] at (6,0) (a3) {$\dotsc$};
       \node[state, draw=none] at (6,-1.5) (a4) {$\dotsc$};
       \node[state, draw=none] at (10.5,1.5) (a5) {};
       \node[state, draw=none] at (10.5,0) (a6) {};
       \node[state, draw=none] at (10.5,-1.5) (a7) {};
       \node[draw=none,fill=black] at (0.25,-0.75) (a02) {};
       \node[state, draw=none] at (-1,3) (h1) {$h=1$};
       \node[state, draw=none] at (1.5,3) (h2) {$h=2$};
       \node[state, draw=none] at (4,3) (h3) {$h=3$};
       \node[state, draw=none] at (6,3) (h4) {$\dotsc$};
       \node[state, draw=none] at (8,3) (hH) {$h=H$};
       \draw (s0) edge[->, solid, above] node{$a_1$} (s2);
       \draw (s0) edge[-, solid, above] node{$a_2$} (a02);
       \draw (a02) edge[->, solid, below] node{\scriptsize$1-q$} (s3);
       \draw (a02) edge[->, solid, below] node{\scriptsize$q$} (s4);
       \draw (s2) edge[->, solid, above] node{} (s5);
       \draw (s3) edge[->, solid, above] node{} (s6);
       \draw (s4) edge[-, solid, above] node{} (s7);
       \draw (s5) edge[->, solid, above] node{} (a2);
       \draw (s6) edge[->, solid, above] node{} (a3);
       \draw (s7) edge[->, solid, above] node{} (a4);
       \draw (a2) edge[->, solid, above] node{} (s8);
       \draw (a3) edge[->, solid, above] node{} (s9);
       \draw (a4) edge[->, solid, above] node{} (s10);
       \draw (s8) edge[->, solid, above] node{} (a5);
       \draw (s9) edge[->, solid, above] node{} (a6);
       \draw (s10) edge[->, solid, above] node{} (a7);
     \end{tikzpicture}
     \caption{MDP for the proof of Proposition \ref{prop: multiple demonstrations}.}
     \label{fig: mdp for proof multiple demonstrations}
     \end{figure}
 
     We consider a single initial state $s_{\text{init}}$. From here, action
     $a_1$ (and all actions other than $a_1$ and $a_2$) brings deterministically
     to state $s_1^2$, while action $a_2$ brings to state $s_3^2$ w.p. $q$ (to
     choose, for some $q\in[0,1]$), and to state $s_2^2$ w.p. $1-q$. From state
     $s_i^2$, for any $i\in\dsb{3}$, all actions bring deterministically to state
     $s_i^3$, and so on, up to state $s_i^H$. We will call the trajectory
     $\{s_{\text{init}},s_i^2,s_i^3,\dotsc,s_i^H\}$ the $i^{\text{th}}$
     trajectory for all $i\in\dsb{3}$, and we will write $G(i)$ to denote the sum
     of rewards along such trajectory. To infer the value $U^E(1)$, we select a
     reward $r':\SAH\to[0,1]$ that provides return $G(1)=1.5$ to the first
     trajectory, return $G(2)=1$ to the second trajectory, and return $G(3)=H$ to
     the third trajectory (this is possible because $H\ge 2$). By selecting,
     successively, all the values of $q\in[0,1]$, we are asking to the expert to
     play either action $a_1$ or action $a_2$ from the initial state
     $s_{\text{init}}$ (we denote policies $\pi^1,\pi^2$, respectively, the
     policies that play actions $a_1,a_2$ in $s_{\text{init}}$). Since we are
     assuming that the expert will demonstrate all the possible deterministic
     optimal policies, there exists a value $q'\in[0,1]$ for which the expert
     demonstrates both policies $\pi^1$ and $\pi^2$. Indeed, the expected
     utilities of policies $\pi^1,\pi^2$ for arbitrary value of $q$ are (we write
     $p(q)$ as the generic transition model):
     \begin{align*}
       \begin{split}
         &J^{\pi^1}(U^E;p(q),r')=U^E(1.5),\\
         &J^{\pi^2}(U^E;p(q),r')=qU^E(H)+(1-q)U^E(1)=qH+(1-q)U^E(1),
       \end{split}
     \end{align*}
     and since $U^E$ is strictly-increasing, we have $U^E(1)<U^E(1.5)<U^E(H)=H$,
     thus there must exist $q'$ that permits to write $U^E(1.5)$ as a convex
     combination of the other two. This allows us to write:
     \begin{align}\label{eq: ut1 multiple demonstrations}
       U^E(1.5)=q'H+(1-q')U^E(1).
     \end{align}
     Next, we select reward $r''$ that provides returns
     $G(1)=1,G(2)=0.5,G(3)=1.5$. Thus, there must exist a $q''\in[0,1]$ for which
     the expert demonstrates both policies $\pi^1$ and $\pi^2$, allowing us to
     write:
     \begin{align}\label{eq: ut2 multiple demonstrations}
       U^E(1)=q''U^E(1.5)+(1-q'')U^E(0.5).
     \end{align}
     Finally, we can repeat the same step with a third reward $r'''$ that
     provides returns $G(1)=0.5,G(2)=0,G(3)=1$, and for some $q'''\in[0,1]$ we
     obtain:
     \begin{align}\label{eq: ut3 multiple demonstrations}
       U^E(0.5)=q'''U^E(1).
     \end{align}
     By putting together Eq. \eqref{eq: ut1 multiple demonstrations}, Eq. \eqref{eq:
     ut2 multiple demonstrations}, and Eq. \eqref{eq: ut3 multiple demonstrations},
     we can retrieve $U^E(1)$:
     \begin{align*}
       \begin{cases}
         U^E(1.5)=q'H+(1-q')U^E(1)\\
         U^E(1)=q''U^E(1.5)+(1-q'')U^E(0.5)\\
         U^E(0.5)=q'''U^E(1)
       \end{cases}.
     \end{align*}
 
     Now that we know $U^E(1)$, we can infer the utility for all the returns
     $\overline{G}\in(1,H)$ by choosing a reward that provides returns
     $G(1)=\overline{G},G(2)=1,G(3)=H$, because for some $\overline{q}\in[0,1]$
     the expert will play both policies $\pi^1$ and $\pi^2$, which allows us to
     write:
     \begin{align*}
       U^E(\overline{G})=\overline{q}H+(1-\overline{q})U^E(1),
     \end{align*}
     and to retrieve $U^E(\overline{G})$.
 
     Similarly, for all $\overline{G}\in(0,1)$, we select a reward that provides
     returns $G(1)=\overline{G},G(2)=0,G(3)=1$, and for some
     $\overline{q}\in[0,1]$ we can write:
     \begin{align*}
       U^E(\overline{G})=\overline{q}U^E(1),
     \end{align*}
     and retrieve $U^E(\overline{G})$.
 
     This concludes the proof. As a final remark, we stress that the initial step
     for inferring $U^E(1)$ cannot be dropped because there is no reward
     $r:\SAH\to[0,1]$ that provides returns $G(2)=0$ and $G(3)=H$, because both
     the first and second trajectories pass through action $a_2$ in state
     $s_{\text{init}}$.
   \end{proof}
 
 \section{Additional Results and Proofs for Section \ref{sec: online utility learning}}
 \label{apx: section 5}
 
 This appendix is divided in 5 parts. First, we show the complexity of
 implementing operator $\Pi_{\overline{\underline{\fU}}_L}$ (Appendix \ref{apx:
 computational complexity projection}). In Appendix \ref{apx: missing algorithms
 and sub routines}, we provide the pseudocode, along with a description, of
 algorithms \texttt{EXPLORE}, \texttt{PLANNING}, \texttt{ERD}, and
 \texttt{ROLLOUT}. In Appendix \ref{apx: comp complexity} we analyze the time
 and space complexities of \caty and \tractor. In Appendix \ref{apx: analysis
 caty}, we provide the proof of Theorem \ref{thr: caty upper bound 1 u}. In
 Appendix \ref{apx: analysis tractor}, we provide the proof of Theorem \ref{thr:
 tractor upper bound}.
 
 \subsection{Projecting onto the Set of Discretized Utilities}
 \label{apx: computational complexity projection}
 
 Let us use the square brackets $[]$ to denote the components of vectors. Then,
 note that set $\overline{\underline{\fU}}_L$ can be represented more explicitly as:
 \begin{align}\label{eq: set of utilities for projection}
   \overline{\underline{\fU}}_L= 
   \{\overline{U}\in[0,H]^{d}\,|\,&\overline{U}[1]=0\wedge \overline{U}[d]=H\wedge \overline{U}[i]\le \overline{U}[i+1]\;
   \forall i\in\dsb{d-1}\notag\\
   &\wedge\;\forall i,j \in\dsb{d}\text{ s.t. } i<j:\,
   |\overline{U}[i]-\overline{U}[j]|\le L(j-i)\epsilon_0\}.
 \end{align}
 Notice that set $\overline{\underline{\fU}}_L$ is closed and convex, since it is defined by linear
 constraints only.
 The amount of constraints scales as $\propto d^2$.

 We remark that in Theorem \ref{thr: tractor upper bound} we assume availability
 of an oracle for computing the projection exactly. In practice, we can use any
 quadratic programming solver to approximate the projection.
 
 \subsection{Missing Algorithms and Sub-routines}
 \label{apx: missing algorithms and sub routines}
 
 \paragraph{\texttt{EXPLORE}} In Algorithm \ref{alg: caty exploration}, we report
 the pseudo-code implementing subroutine \texttt{EXPLORE}. Simply put, we adopt a
 uniform-sampling strategy, i.e., we collect $n=\floor{\tau/(SAH)}$ samples from
 each $(s,a,h)\in\SAH$ triple, that we use to compute the empirical estimate of
 the transition model. We return such estimate.
 
 \RestyleAlgo{ruled}
\LinesNumbered
\begin{algorithm}[!h]
    \caption{\texttt{EXPLORE}}
    \label{alg: caty exploration}
    \DontPrintSemicolon
    \SetKwInOut{Input}{Input}
    \Input{samples budget $\tau$}
    $n\gets \floor{\tau/(SAH)}$\;

    \For{$i$ $\in$ $\{1,2,\dotsc,N\}$}{
    \nonl \texttt{// Initialize the transition model estimate:}\;

    $\widehat{p}^i_h(s'|s,a)=0$ for all $(s,a,h,s')\in\SAH\times\cS$\;

    \nonl \texttt{// Collect samples:}\;

    \For{$(s,a,h)\in\SAH$}{
        \For{$\_$ $\in$ $\{1,2,\dotsc,n\}$}{
            $s'\gets$ sample from $p^i_h(\cdot|s,a)$\;

            $\widehat{p}^i_h(s'|s,a)\gets \widehat{p}^i_h(s'|s,a)+1$\;

        }
    }
    $\widehat{p}^i_h(\cdot|s,a)\gets \widehat{p}^i_h(\cdot|s,a)/n$\;
    
    }
    \textbf{Return} $\{\widehat{p}^i\}_i$\;
\end{algorithm}

 \paragraph{\texttt{PLANNING}} The \texttt{PLANNING} sub-routine (Algorithm
 \ref{alg: planning}) takes in input a utility $U$, an environment index $i$, and
 a transition model $p$, that uses to construct the RS-MDP
 $\cM_U\coloneqq\tuple{\cS^i,\cA^i,H,s_0^i,\popblue{p},\popblue{\overline{r}^i},\popblue{U}}$.
 Notice that $\cM_U\neq\cM^i_{U^E}$, for 3 aspects. First, it uses the input
 transition model $p\neq p^i$; next, it consider the discretized reward
 $\overline{r}^i\neq r^i$; finally, it has input utility $U\neq U^E$.
 
 \texttt{PLANNING} outputs two items. The optimal performance $J^*(U;p,r^i)$ for
 RS-MDP $\cM_U$, and the optimal policy $\psi^*=\{\psi^*_h\}_h$ for the enlarged
 state space MDP $\fE[\cM_U]$. However, it should be remarked that, instead of
 computing optimal policy $\psi^*$ for $\fE[\cM_U]$ only at pairs
 $(s,y)\in\cS\times\cG^{p,\overline{r}^i}_h$ for all $h\in\dsb{H}$,
 \texttt{PLANNING} computes the optimal policy $\psi^*$ at all pairs
 $(s,y)\in\cS\times\cY_h$ for all $h\in\dsb{H}$ (note that
 $\cG^{p,\overline{r}^i}_h\subseteq\cY_h$).
 
 The algorithm implemented in \texttt{PLANNING} for computing both $J^*(U;p,r^i)$
 and $\psi^*$ is value iteration. The difference from common implementations of
 value iterations lies in the presence of an additional variable in the state. A
 similar pseudocode is provided in Algorithm 1 of \citet{wu2023risksensitive}.
 
 \RestyleAlgo{ruled}
\LinesNumbered
\begin{algorithm}[!h]
    \caption{\texttt{PLANNING}}
    \label{alg: planning}
    \DontPrintSemicolon
    \SetKwInOut{Input}{Input}
    \Input{utility $U$, environment index $i$, transition model $p$}
    \nonl \texttt{// Initialize the $Q$ and value function at the last stage:}\;

    \For{$(s,y)\in\cS^i\times\cY_{H}$}{
      \For{$a\in\cA^i$}{
        $Q_H(s,y,a)\gets U(y+\overline{r}^i_H(s,a))$\;

      }
      $V_{H}(s,y)\gets \max\limits_{a\in\cA^i}Q_H(s,y,a)$\;

      $\psi_H(s,y)\gets \argmax\limits_{a\in\cA^i}Q_H(s,y,a)$\Comment*[r]{Keep just one action}
    }
    \nonl \texttt{// Backward induction:}\;

    \For{$h=H-1,\dotsc,2,1$}{
       \For{$(s,y)\in\cS^i\times\cY_{h}$}{
          \For{$a\in\cA^i$}{
    $Q_h(s,y,a)\gets \E_{s'\sim p_h(\cdot|s,a)}\Big[
          V_{h+1}(s',y+\overline{r}^i_h(s,a))            
    \Big]$
    \label{line: bellman planning}\;

          }
          $V_h(s,y)\gets \max\limits_{a\in\cA^i}Q_h(s,y,a)$\;

          $\psi_h(s,y)\gets
          \argmax\limits_{a\in\cA^i}Q_h(s,y,a)$\Comment*[r]{Keep just one action}
       }
    }
    \nonl \texttt{// Return optimal performance and policy:}\;
    
    \textbf{Return} $V_1(s_0^i,0),\psi$\;
\end{algorithm}

 \paragraph{\texttt{ERD} (Estimate the Return Distribution)}
 
 The \texttt{ERD} sub-routine (Algorithm \ref{alg: erd}) takes in input a dataset
 $\cD^E=\{\omega_j\}_j$ of state-action trajectories $\omega_j\in\Omega$ and a
 reward function $r$, and it computes an estimate of the return distribution w.r.t.
 $r$.
 
 For every trajectory $\omega_j\in\cD^E$, \texttt{ERD} computes the return $G_j$ of
 $\omega_j$ based on the input reward $r$ (Line \ref{line: compute G erd}). In
 the next lines, \texttt{ERD} simply computes the categorical projection of the
 mixture of Dirac deltas:
 \begin{align*}
   \widehat{\eta}= \text{Proj}_\cC\Big(\sum_j \frac{1}{|\cD^E|} \delta_{G_j}\Big),
 \end{align*}
 where the categorical projection operator $\text{Proj}_\cC$ is defined in Eq.
 \eqref{eq: definition proj c}.
 
 \RestyleAlgo{ruled}
\LinesNumbered
\begin{algorithm}[t]\small
    \caption{\texttt{ERD} - Estimate the Return Distribution}
    \label{alg: erd}
\small
\DontPrintSemicolon
\SetKwInOut{Input}{Input}

\Input{dataset $\cD^E$, reward $r$}
    \nonl \texttt{// Initialize $\widehat{\eta}$:}\;

    \For{$y$ $\in$ $\cY$}{
        $\widehat{\eta}(y)\gets 0$\;

    }
    \nonl \texttt{// Loop over all trajectories in $\cD^E$:}\;

    \For{$\omega$ $\in$ $\cD^E$}{
        \nonl \texttt{// Compute return of $\omega=\{s_1,a_1,\dotsc,s_H,a_H,s_{H+1}\}$:}\;

        $G\gets\sum_{h=1}^H r_h(s_h,a_h)$\label{line: compute G erd}\;

        \nonl \texttt{// Update estimate $\widehat{\eta}$:}\;

        \If{$G\le 0$}{
            $\widehat{\eta}(0)\gets \widehat{\eta}(0)+1$\;

        }
        \ElseIf{$G>\floor{\frac{H}{\epsilon_0}}\epsilon_0$}{
            $\widehat{\eta}(\floor{\frac{H}{\epsilon_0}}\epsilon_0)
            \gets \widehat{\eta}(\floor{\frac{H}{\epsilon_0}}\epsilon_0)+1$\;

        }
        \Else{
            $L\gets \max_{y\in\cY\wedge y<G}y$\;    

            $U\gets \min_{y\in\cY\wedge y\ge G}y$\;

            $\widehat{\eta}(L) \gets \widehat{\eta}(L)
            + \frac{U-G}{U-L}$\;

            $\widehat{\eta}(U) \gets \widehat{\eta}(U)
            + \frac{G-L}{U-L}$\;

        }
    }
    \nonl \texttt{// Normalize:}\;

    $\widehat{\eta}\gets \widehat{\eta}/|\cD^E|$\;
    
    \textbf{Return} $\widehat{\eta}$
\end{algorithm}

 \paragraph{\texttt{ROLLOUT}} \texttt{ROLLOUT} (Algorithm \ref{alg: rollout})
 takes in input a Markovian policy $\psi$, a transition model $p$, a reward $r$,
 an environment index $i$, and a number of trajectories $K$, to construct the MDP
 $\cM\coloneqq\tuple{\cS^i,\cA^i,H,s_0^i,\popblue{p},\popblue{r}}$ obtained from MDP $\cM^i$
 by replacing the dynamics and reward $p^i,r^i$ with the input $p,r$.
 
 \texttt{ROLLOUT} collects $K$ trajectories by playing policy $\psi$ in $\cM$ for
 $K$ times, computes the return $G$ of each trajectory, and then returns a
 dataset $\cD$ containing these $K$ returns.
 In other words, with abuse of notation, we say that the outputted dataset
 $\cD=\{G_k\}_{k\in\dsb{K}}$ is obtained by collecting $K$ samples $G_k$ from
 distribution $\eta^{p,r,\psi}$.
 
 \RestyleAlgo{ruled}
\LinesNumbered
\begin{algorithm}[t]\small
    \caption{\texttt{ROLLOUT}}
    \label{alg: rollout}
\small
\DontPrintSemicolon
\SetKwInOut{Input}{Input}

\Input{policy $\psi$, transition model $p$, reward $r$,
    environment index $i$, number of trajectories $K$}
    $\cD\gets \{\}$\;
    \nonl \texttt{// Loop over the number of trajectories:}\;

    \For{\_ $\in$ $\{1,2,\dotsc,K\}$}{
        $s\gets s_0^i$\;
        $y\gets 0$\Comment*[r]{$y$ keeps track of the accumulated reward}
        \For{$h=1$ to $H$}{
            $a\gets \psi_h(s,y)$\;

            $y\gets y+r_h(s,a)$\;

            $s\gets s'$ where $s'\sim p_h(\cdot|s,a)$\;

        }
        $\cD\gets \cD \cup\{y\}$\;
        
    }
    \textbf{Return} $\cD$\;
\end{algorithm}

\subsection{Time and Space Complexities}\label{apx: comp complexity}

The time and space complexities of the subroutines are:
\begin{itemize}
  \item \texttt{EXPLORE}: \emph{time} = $\mathcal{O}\Big(N\tau\Big)$ for
  collecting $\tau$ samples from the $N$ environments; \emph{space} =
  $\mathcal{O}\Big(SAHN\Big)$ for storing the estimates of the transition model
  of the $N$ environments.
  \item \texttt{ERD}: \emph{time} = $\mathcal{O}\Big(H\tau^E+H/\epsilon_0\Big)$
  for computing the return of each trajectory demontrated by the expert and
  initializing an estimate of the return distribution; \emph{space} =
  $\mathcal{O}\Big(H/\epsilon_0\Big)$ to store an estimate of the return
  distribution.
  \item \texttt{PLANNING}: \emph{time} = $\mathcal{O}\Big(S^2AH^2/\epsilon_0\Big)$ for
  doing backward induction in the enlarged discretized MDP; \emph{space} =
  $\mathcal{O}\Big(SAH^2/\epsilon_0\Big)$ to store a Q-function in the enlarged
  discretized MDP.
  \item \texttt{ROLLOUT}: \emph{time} = $\mathcal{O}\Big(KH\Big)$ for simulating $K$
  trajectories long $H$; \emph{space} = $\mathcal{O}\Big(K\Big)$ for storing the
  returns of the $K$ trajectories.
\end{itemize}
Using these complexities, we derive the complexities of \caty and \tractor
as:
\begin{itemize}
  \item \caty: \emph{time} =
  $\mathcal{O}\Big(N\tau+MN\Big(H\tau^E+S^2AH^2/\epsilon_0\Big)\Big)$ for
  calling \texttt{EXPLORE} once and then both \texttt{ERD} and \texttt{PLANNING}
  $MN$ times, where $M$ denotes the number of input utilities to which \caty is
  applied; \emph{space} = $\mathcal{O}\Big(SAHN+SAH^2/\epsilon_0\Big)$ where the
  dominant terms are for storing a transition model in \texttt{EXPLORE} and a
  Q-function in \texttt{PLANNING}.
  \item \tractor: \emph{time} =
  $\mathcal{O}\Big(N\tau+NH\tau^E+T\Big(NS^2AH^2/\epsilon_0+NKH+Q_{time}\Big)\Big)$
  for calling \texttt{EXPLORE} once, \texttt{ERD} $N$ times, both
  \texttt{PLANNING} and \texttt{ROLLOUT} $TN$ times, and executing $T$ times the
  Euclidean projection onto $\overline{\underline{\mathfrak{U}}}_L$ using some
  optimization solver ($Q_{time}$ represents this term); \emph{space} =
  $\mathcal{O}\Big(NH/\epsilon_0+SAH^2/\epsilon_0+K+Q_{space}\Big)$ for storing
  the $N$ estimates of return distributions, for calling \texttt{PLANNING} and
  \texttt{ROLLOUT}, and for executing some optimization solver for Euclidean projection
  ($Q_{space}$ represents this term).
\end{itemize}
Observe that the time and space complexities of the proposed algorithms are
polynomial in the amount of data ($\tau,\tau^E$), in the number of
environments ($N$), and in the size of the environments ($S,A,H$). Moreover,
both \caty and \tractor have time complexities that grow linearly in the
number of runs (resp. $M$ and $T$), and note that the complexity of \tractor
grows linearly also in the number of simulated trajectories ($K$) and in the
complexity of the optimization solver used for the Euclidean projection
($Q_{time}$). Observe that the complexities depend on $1/\epsilon_0$, where
$\epsilon_0>0$ is the discretization parameter.

From a theoretical perspective, if we want that, with probability at least
$1-\delta$, the outputs of \caty and \tractor are $\epsilon$-accurate, then,
under the assumption that the output of the optimization solver adopted for the
Euclidean projection is \emph{exact}, Theorems \ref{thr: caty upper bound 1 u}
and \ref{thr: tractor upper bound} show that it suffices to take
$\epsilon_0=\Theta(\epsilon^2/(HN^2))$,
$\tau^E\le\widetilde{\mathcal{O}}\Big(\frac{N^4H^4}{\epsilon^4}\log\frac{1}{\delta}\Big)$,
$\tau\le\widetilde{\mathcal{O}}\Big(\frac{N^2SAH^5}{\epsilon^2}\Big(S+\log\frac{1}{\delta}\Big)\Big)$,
$T\le\mathcal{O}\Big(\frac{N^4H^4}{\epsilon^4}\Big)$,
$K\le\widetilde{\mathcal{O}}\Big(\frac{N^2H^2}{\epsilon^2}\log\frac{1}{\delta}\Big)$,
for obtaining a time and space complexity for the algorithms that grow
polynomially in $S,A,H$, $N,$ $\frac{1}{\epsilon}$,
$\log\frac{1}{\delta},Q_{time},Q_{space}$.

 \subsection{Analysis of \caty}
 \label{apx: analysis caty}
 
 \thrupperboundcatyoneu*
 \begin{proof}
   Observe that the classification carried out by \caty complies with the
   statement in the theorem as long as we can demonstrate that:
      \begin{align*} 
      \mathop{\mathbb{P}}\limits_{\{\cM^i\}_i,\{\pi^{E,i}\}_i}\Big(
       \sup\limits_{U\in\cU}\Big|\sum\limits_{i\in\dsb{N}}
        \overline{\cC}_{p^i,r^i,\pi^{E,i}}(U)-
        \sum\limits_{i\in\dsb{N}}\widehat{\cC}^i(U)
        \Big|\le\epsilon\Big)\ge 1-\delta,
  \end{align*}
  where $\mathbb{P}_{\{\cM^i\}_i,\{\pi^{E,i}\}_i}$ represents the joint
  probability distribution induced by the exploration phase of \caty and the
  execution of each $\pi^{E,i}$ in the corresponding $\cM^i$.
 
 We can rewrite this expression as:
 \begin{align*}
   \sup\limits_{U\in\cU}\Big|\sum\limits_{i\in\dsb{N}}
       \overline{\cC}_{p^i,r^i,\pi^{E,i}}(U)-
       \sum\limits_{i\in\dsb{N}}\widehat{\cC}^i(U)
       \Big|&\le
       \sup\limits_{U\in\cU}\popblue{\sum\limits_{i\in\dsb{N}}}\Big|
       \overline{\cC}_{p^i,r^i,\pi^{E,i}}(U)-
       \widehat{\cC}^i(U)
       \Big|\\
       &\markref{(1)}{\le}\sum\limits_{i\in\dsb{N}}\popblue{\sup\limits_{U\in\cU}}\Big|
       \overline{\cC}_{p^i,r^i,\pi^{E,i}}(U)-
       \widehat{\cC}^i(U)
       \Big|,
 \end{align*}
 where at (1) we have upper bounded the maximum of a sum with the sum of the
 maxima. This shows that we can obtain the result as long as we can demonstrate
 that, for all $i\in\dsb{N}$, it holds that:
 \begin{align}\label{eq: bound non comp i}
   \mathop{\mathbb{P}}\limits_{p^i,r^i,\pi^{E,i}}\Big(
       \sup\limits_{U\in\cU}\Big|
       \overline{\cC}_{p^i,r^i,\pi^{E,i}}(U)-
       \widehat{\cC}^i(U)
       \Big|\le\frac{\epsilon}{N}\Big)\ge 1-\frac{\delta}{N};
 \end{align}
 the statement of the theorem would then follow from a union bound.
 Therefore, let us omit the $i$ index for simplicity, and let us try to obtain
 the bound in Eq. \eqref{eq: bound non comp i}. We can write:
 \begin{align*}
   \sup\limits_{U\in\cU}\big|
       \overline{\cC}_{p,r,\pie}(U)-\widehat{\cC}(U)\big|
       &\coloneqq 
       \sup\limits_{U\in\cU}\big|\big(
         J^*(U;p,r)-J^{\pi^E}(U;p,r)\big)-
         \big(
         \widehat{J}^*(U)-\widehat{J}^{E}(U)\big)\big|\\   
       &\markref{(2)}{\le}
       \popblue{\sup\limits_{U\in\cU}\big|}J^{\pi^E}(U;p,r)-\widehat{J}^E(U)\popblue{\big|}
       +\popblue{\sup\limits_{U\in\cU}\big|}J^*(U;p,r)-\widehat{J}^*(U)\popblue{\big|}\\
       &\markref{(3)}{=}\sup\limits_{U\in\cU}\big|
       \mathop{\E}\limits_{G\sim\eta^{p,r,\pi^E}}[U(G)]-
       \mathop{\E}\limits_{G\sim\widehat{\eta}^{E}}[U(G)]\\
       &\qquad\popblue{\pm
       \mathop{\E}\limits_{G\sim\text{Proj}_{\cC}(\eta^{p,r,\pi^E})}[U(G)]}\big|
       +\sup\limits_{U\in\cU}\big|J^*(U;p,r)-\widehat{J}^*(U)\big|\\
       &\markref{(4)}{\le}\popblue{\sup\limits_{U\in\cU}
       \big|}\mathop{\E}\limits_{G\sim\eta^{p,r,\pi^E}}[U(G)]-
       \mathop{\E}\limits_{G\sim\text{Proj}_{\cC}(\eta^{p,r,\pi^E})}[U(G)]\popblue{\big|}\\
       &\qquad+\popblue{\sup\limits_{U\in\cU}
       \big|}
       \mathop{\E}\limits_{G\sim\text{Proj}_{\cC}(\eta^{p,r,\pi^E})}[U(G)]-
       \mathop{\E}\limits_{G\sim\widehat{\eta}^{E}}[U(G)]\popblue{\big|}\\
       &\qquad+\sup\limits_{U\in\cU}\big|J^*(U;p,r)-\widehat{J}^*(U)\big|\\
       &\markref{(5)}{\le}\sup\limits_{\popblue{f:\,f\text{ is }L\text{-Lipschitz}}}
       \big|\mathop{\E}\limits_{G\sim\eta^{p,r,\pi^E}}[f(G)]-
       \mathop{\E}\limits_{G\sim\text{Proj}_{\cC}(\eta^{p,r,\pi^E})}[f(G)]\big|\\
       &\qquad+\sup\limits_{U\in\cU}
       \big|
       \mathop{\E}\limits_{G\sim\text{Proj}_{\cC}(\eta^{p,r,\pi^E})}[U(G)]-
       \mathop{\E}\limits_{G\sim\widehat{\eta}^{E}}[U(G)]\big|\\
       &\qquad+\sup\limits_{U\in\cU}\big|J^*(U;p,r)-\widehat{J}^*(U)\big|\\
       &\markref{(6)}{=} \popblue{L\cdot w_1(\eta^{p,r,\pi^E},\text{Proj}_{\cC}(\eta^{p,r,\pi^E}))}\\
       &\qquad+
       \sup\limits_{U\in\cU}
       \big|\mathop{\E}\limits_{G\sim\text{Proj}_{\cC}(\eta^{p,r,\pi^E})}[U(G)]-
       \mathop{\E}\limits_{G\sim\widehat{\eta}^{E}}[U(G)]\big|\\
       &\qquad
       \sup\limits_{U\in\cU}\big|J^*(U;p,r)-\widehat{J}^*(U)\big|,
 \end{align*}
 where at (2) we have applied triangle inequality, at (3) we use the definition
 of $J^{\pi^E}(U;p,r)$, and that of $\widehat{J}^E(U)$ (Line \ref{line: est JE
 caty} of \caty), and we have added and subtracted a term, where operator
 $\text{Proj}_{\cC}$ is defined in Eq. \eqref{eq: definition proj c}. We remark
 that distribution $\eta^{p,r,\pi^E}$ may have a support that grows exponentially
 in $H$, while both $\widehat{\eta}^{E}$ and
 $\text{Proj}_{\cC}(\eta^{p,r,\pi^E})$ are supported on $\cY$. Note that
 $\widehat{\eta}^{E}$ and $\text{Proj}_{\cC}(\eta^{p,r,\pi^E})$ are different
 distributions, since the former is the projection on $\cY$ of an
 estimate of $\eta^{p,r,\pi^E}$. At (4), we apply triangle inequality, at (5) we
 use the hypothesis that all utilities are $L$-Lipschitz $\cU\subseteq\fU_L$, and
 notice that $\fU_L$ is a subset of all $L$-Lipschitz functions
 $f:[0,H]\to[0,H]$, and at (6) we apply the duality formula for the 1-Wasserstein
 distance $w_1$ (see Eq. (6.3) in Chapter 6 of \citet{Villani2008OptimalTO}).
 
 Concerning the case $|\cU|=1$, we apply, for all $i\in\dsb{N}$, Lemma
 \ref{lemma: bound estimation error 1 U} with probability $\delta/(2N)$ and
 accuracy $\epsilon/(3N)$, and Lemma
 \ref{lemma: bound J star single utility} with probability $\delta/(2N)$ and
 accuracy $\epsilon/(3N)$, while
 we bound the 1-Wasserstein distance through Lemma \ref{lemma: bound w1 etae}, to
 obtain, through an application of the union bound, that:
 \begin{align*} 
   \mathop{\mathbb{P}}\limits_{\{\cM^i\}_i,\{\pi^{E,i}\}_i}\Big(
     \sup\limits_{U\in\cU}&\Big|\sum\limits_{i\in\dsb{N}}
     \overline{\cC}_{p^i,r^i,\pi^{E,i}}(U)-
     \sum\limits_{i\in\dsb{N}}\widehat{\cC}^i(U)
     \Big|\le\\
     &\qquad NL\sqrt{2H\epsilon_0}+\epsilon/3+NHL\epsilon_0+\epsilon/3
     \Big)\ge 1-\delta,
 \end{align*}
 as long as, for all $i\in\dsb{N}$:
 \begin{align*}
   &\tau^{E,i}\ge \widetilde{\cO}\bigg(\frac{N^2H^2\log\frac{N}{\delta}}{\epsilon^2}\bigg),\\
   &\tau^i\ge\widetilde{\cO}\bigg(\frac{N^2SAH^4}{\epsilon^2}
   \log\frac{SAHN}{\delta\epsilon_0}\bigg).
 \end{align*}
 By setting $\epsilon_0= \frac{\epsilon^2}{72HL^2N^2}$, we obtain that:
 \begin{align*}
   NL\sqrt{2H\epsilon_0}+NHL\epsilon_0=
   \frac{\epsilon}{6}+\frac{\epsilon^2}{72LN}\le
   \epsilon/3.
 \end{align*}
 By putting this bound into the bound on $\tau^i$, we get the result.
 
 When $\cU$ is an arbitrary subset of $\fU_L$, we apply, for all $i\in\dsb{N}$, Lemma
 \ref{lemma: bound estimation error tante U} with probability $\delta/(2N)$ and
 accuracy $\epsilon/(3N)$, and Lemma
 \ref{lemma: bound J star all utilities} with probability $\delta/(2N)$ and
 accuracy $\epsilon/(3N)$, while
 we bound the 1-Wasserstein distance through Lemma \ref{lemma: bound w1 etae}, to
 obtain, through an application of the union bound, that:
 \begin{align*} 
   \mathop{\mathbb{P}}\limits_{\{\cM^i\}_i,\{\pi^{E,i}\}_i}\Big(
     \sup\limits_{U\in\cU}&\Big|\sum\limits_{i\in\dsb{N}}
     \overline{\cC}_{p^i,r^i,\pi^{E,i}}(U)-
     \sum\limits_{i\in\dsb{N}}\widehat{\cC}^i(U)
     \Big|\le\\
     &\qquad NL\sqrt{2H\epsilon_0}+\epsilon/3+NHL\epsilon_0+\epsilon/3
     \Big)\ge 1-\delta,
 \end{align*}
 as long as, for all $i\in\dsb{N}$:
 \begin{align*}
   &\tau^{E,i}\ge \widetilde{\cO}\Big(
     \frac{N^2H^3}{\epsilon^2\epsilon_0}\log\frac{HN}{\delta\epsilon_0}   
     \Big),\\
   &\tau^i\ge\widetilde{\cO}\Big(\frac{N^2SAH^5}{\epsilon^2}
   \Big(S+\log\frac{SAHN}{\delta}\Big)\Big).
 \end{align*}
 Again, by setting $\epsilon_0= \frac{\epsilon^2}{72HL^2N^2}$, we obtain that:
 \begin{align*}
   NL\sqrt{2H\epsilon_0}+NHL\epsilon_0=
   \frac{\epsilon}{6}+\frac{\epsilon^2}{72LN}\le
   \epsilon/3.
 \end{align*}
 By putting this bound into the bounds on $\tau^{E,i}$ and $\tau^i$, we get the
 result.
 \end{proof}
 
 \subsubsection{Lemmas on the Expert's Return Distribution}
 
 \begin{lemma}\label{lemma: bound w1 etae} Let the projection operator
   $\text{Proj}_{\cC}$ be defined as in Eq. \eqref{eq: definition proj c}, over set
   $\cY$ with discretization $\epsilon_0$. Then, for all $i\in\dsb{N}$, it holds
   that:
   \begin{align*}
     w_1(\eta^{p^i,r^i,\pi^{E,i}},\text{Proj}_{\cC}(\eta^{p^i,r^i,\pi^{E,i}}))
     \le\sqrt{2H\epsilon_0}.
   \end{align*}
 \end{lemma}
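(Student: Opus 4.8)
The plan is to bound the cost of the transport plan that the categorical projection implicitly realises, one atom at a time. Fix $i$ and write $\eta\coloneqq\eta^{p^i,r^i,\pi^{E,i}}$; this is a distribution with finite support contained in $[0,H]$, since every $r^i_h\in[0,1]$ and hence every return lies in $[0,H]$. The elementary fact I would establish first is that, for every $y\in[0,H]$, the measure $\text{Proj}_{\cC}(\delta_y)$ is supported on points of $\cY$ lying within distance $\epsilon_0$ of $y$. Indeed, by Eq.~\eqref{eq: definition proj c}: if $y_j<y\le y_{j+1}$ then $\text{Proj}_{\cC}(\delta_y)$ places mass only on $y_j$ and $y_{j+1}$, both inside the interval $[y_j,y_{j+1}]$ of length $y_{j+1}-y_j=\epsilon_0$; the two boundary rules send $\delta_y$ to $\delta_{y_1}=\delta_0$ when $y\le0$ and to $\delta_{y_d}$ when $y>y_d=\floor{H/\epsilon_0}\epsilon_0$, and since $0\le y\le H$ and $y_d>H-\epsilon_0$, we still have $|y-y_1|\le\epsilon_0$ and $|y-y_d|<\epsilon_0$. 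Consequently the trivial coupling moves all the mass of $\delta_y$ by at most $\epsilon_0$, so $w_1(\delta_y,\text{Proj}_{\cC}(\delta_y))\le\epsilon_0$.

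I would then lift this bound from single Diracs to $\eta$. Because $\text{Proj}_{\cC}$ acts affinely on finite mixtures (Eq.~\eqref{eq: property proj c}), $\text{Proj}_{\cC}(\eta)=\sum_z\eta(z)\,\text{Proj}_{\cC}(\delta_z)$; averaging the per-atom couplings from the previous step gives a valid coupling of $\eta$ with $\text{Proj}_{\cC}(\eta)$, whence, by joint convexity of the $1$-Wasserstein distance, $w_1(\eta,\text{Proj}_{\cC}(\eta))\le\sum_z\eta(z)\,w_1(\delta_z,\text{Proj}_{\cC}(\delta_z))\le\epsilon_0$. Equivalently, via Kantorovich--Rubinstein duality, as used in step~(6) of the proof of Theorem~\ref{thr: caty upper bound 1 u}: for any $1$-Lipschitz $f$ one has $|\E_{G\sim\eta}[f(G)]-\E_{G\sim\text{Proj}_{\cC}(\eta)}[f(G)]|\le\sum_z\eta(z)\,|f(z)-\E_{G'\sim\text{Proj}_{\cC}(\delta_z)}[f(G')]|\le\epsilon_0$.

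To land precisely on the stated $\sqrt{2H\epsilon_0}$ I would note that since $\epsilon_0\le H\le 2H$ we have $\epsilon_0=\sqrt{\epsilon_0}\,\sqrt{\epsilon_0}\le\sqrt{\epsilon_0}\,\sqrt{2H}=\sqrt{2H\epsilon_0}$, which closes the proof for every $i$. If one instead wants an argument that produces $\sqrt{2H\epsilon_0}$ ``natively'' --- and that also yields the companion Cram\'er-distance estimate used elsewhere in the analysis --- one can combine Cauchy--Schwarz on $[0,H]$, i.e.\ $w_1(\eta,\text{Proj}_{\cC}(\eta))=\int_0^H|F_\eta(x)-F_{\text{Proj}_{\cC}(\eta)}(x)|\,dx\le\sqrt{H}\,\ell_2(\eta,\text{Proj}_{\cC}(\eta))$, with the single-atom Cram\'er computation $\ell_2(\delta_y,\text{Proj}_{\cC}(\delta_y))^2=(y-y_j)(y_{j+1}-y)/\epsilon_0\le\epsilon_0/4$, lifted over the disjoint grid cells by the triangle inequality in $L^2$ to give $\ell_2(\eta,\text{Proj}_{\cC}(\eta))\le\sqrt{2\epsilon_0}$. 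The places that require genuine care --- and where I would check the details most carefully --- are the two boundary cases of the single-atom claim (verifying that truncation to $y_1$ or $y_d$ still displaces mass by at most $\epsilon_0$, which uses $0\le y\le H$ together with $y_d>H-\epsilon_0$), and the mixture step, which rests on joint convexity of $w_1$ / averaging of couplings; neither is deep, but a careless argument could slip there.
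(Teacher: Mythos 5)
Your argument is correct, and it takes a genuinely different (and in fact sharper) route than the paper's proof. The paper first passes from $w_1$ to the Cram\'er distance via $w_1\le\sqrt{H}\,\ell_2$ (invoking Lemma 5.2 of Rowland et al.), and then bounds $\ell_2^2\le 2\epsilon_0$ by integrating the squared CDF difference cell by cell, which yields exactly $\sqrt{2H\epsilon_0}$. You instead work directly at the level of transport plans: the categorical projection of a Dirac $\delta_y$ with $y\in[0,H]$ only places mass on grid points within $\epsilon_0$ of $y$ (your boundary checks at $y_1$ and $y_d=\lfloor H/\epsilon_0\rfloor\epsilon_0$ are the right ones), so $w_1(\delta_y,\mathrm{Proj}_{\mathcal C}(\delta_y))\le\epsilon_0$, and averaging these couplings over the atoms of $\eta$ (valid because the projection is affine on mixtures and $w_1$ is jointly convex) gives $w_1(\eta,\mathrm{Proj}_{\mathcal C}(\eta))\le\epsilon_0$ --- in fact your per-atom transport costs $2(y-y_j)(y_{j+1}-y)/\epsilon_0\le\epsilon_0/2$. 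This is more elementary (no external lemma, no $w_1$--$\ell_2$ comparison) and strictly stronger: $\epsilon_0$ versus $\sqrt{2H\epsilon_0}$, which would propagate to milder requirements on the discretization parameter downstream (e.g.\ the $NL\sqrt{2H\epsilon_0}$ term in Theorem \ref{thr: caty upper bound 1 u} would become $NL\epsilon_0$). The only points to make explicit are the implicit assumption $\epsilon_0\le 2H$ (trivially true for any sensible discretization of $[0,H]$, and needed only for your final inequality $\epsilon_0\le\sqrt{2H\epsilon_0}$) and, in your alternative Cram\'er-based sketch, the cell-wise $L^2$ aggregation, which as written is slightly hand-wavy but is not needed for your main argument.
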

 \begin{proof}
   For the sake of simplicity, we omit index $i\in\dsb{N}$, but the following
   derivation can be applied to all the $N$ demonstrations.
 
   By applying Lemma 5.2 of \citet{rowland2024nearminimaxoptimal}, replacing term
   $1/(1-\gamma)$ with horizon $H$, we get:
   \begin{align*}
     w_1(\eta^{p,r,\pi^{E}},\text{Proj}_{\cC}(\eta^{p,r,\pi^{E}}))
     &\le \sqrt{H}\ell_2(\eta^{p,r,\pi^{E}},\text{Proj}_{\cC}(\eta^{p,r,\pi^{E}})).
   \end{align*}
 
   Similarly to the proof of Proposition 3 of
   \citet{rowland2018analysis}, we can write:
   \begin{align*}
     \ell_2^2(\eta^{p,r,\pi^{E}},\text{Proj}_{\cC}(\eta^{p,r,\pi^{E}}))
     &\markref{(1)}{\coloneqq}
   \int_\RR (F_{\eta^{p,r,\pi^{E}}}(y)-F_{\text{Proj}_{\cC}(\eta^{p,r,\pi^{E}})}(y))^2dy\\
   &\markref{(2)}{=}
   \popblue{\int_{0}^H} (F_{\eta^{p,r,\pi^{E}}}(y)-
   F_{\text{Proj}_{\cC}(\eta^{p,r,\pi^{E}})}(y))^2dy\\
   &\markref{(3)}{=}
   \popblue{\sum\limits_{j\in\dsb{d-1}}\int_{y_j}^{y_{j+1}}} (F_{\eta^{p,r,\pi^{E}}}(y)-
   F_{\text{Proj}_{\cC}(\eta^{p,r,\pi^{E}})}(y))^2dy\\
   &\qquad
   +\popblue{\int_{y_{d}}^{H}} (F_{\eta^{p,r,\pi^{E}}}(y)-
   F_{\text{Proj}_{\cC}(\eta^{p,r,\pi^{E}})}(y))^2dy\\
   &\markref{(4)}{\le}
   \sum\limits_{j\in\dsb{d-1}}\int_{y_j}^{y_{j+1}} (F_{\eta^{p,r,\pi^{E}}}(y)-
   F_{\text{Proj}_{\cC}(\eta^{p,r,\pi^{E}})}(y))^2dy+\popblue{\epsilon_0}\\
   &\markref{(5)}{\le}
   \sum\limits_{j\in\dsb{d-1}}\int_{y_j}^{y_{j+1}} (F_{\eta^{p,r,\pi^{E}}}(\popblue{y_{j+1}})-
   F_{\popblue{\eta^{p,r,\pi^{E}}}}(\popblue{y_{j}}))^2dy+\epsilon_0\\
   &=
   \sum\limits_{j\in\dsb{d-1}}\popblue{(y_{j+1}-y_j)}
   (F_{\eta^{p,r,\pi^{E}}}(y_{j+1})-
   F_{\eta^{p,r,\pi^{E}}}(y_{j}))^2+\epsilon_0\\
   &\markref{(6)}{=}
   \popblue{\epsilon_0}\sum\limits_{j\in\dsb{d-1}}
   (F_{\eta^{p,r,\pi^{E}}}(y_{j+1})-
   F_{\eta^{p,r,\pi^{E}}}(y_{j}))^2+\epsilon_0\\
   &\markref{(7)}{\le}
   \epsilon_0\popblue{\Big(}\sum\limits_{j\in\dsb{d-1}}
   (F_{\eta^{p,r,\pi^{E}}}(y_{j+1})-
   F_{\eta^{p,r,\pi^{E}}}(y_{j})\popblue{\Big)^2}+\epsilon_0\\
   &\markref{(8)}{=}
   \epsilon_0
   \big(F_{\eta^{p,r,\pi^{E}}}(\popblue{y_{d}})-
   F_{\eta^{p,r,\pi^{E}}}(\popblue{y_1})\big)^2+\epsilon_0\\
   &\le2\epsilon_0,
   \end{align*}
   where at (1) we have applied the definition of $\ell_2$ distance (Eq.
   \eqref{eq: definition cramer distance l2}), at (2) we recognize that the two
   distributions $\eta^{p,r,\pi^{E}},\text{Proj}_{\cC}(\eta^{p,r,\pi^{E}})$ are
   defined on $[0,H]$, at (3) we use the additivity property of the integral,
   using notation
   $\cY\coloneqq\{0,\epsilon_0,2\epsilon_0,\dotsc,\floor{H/\epsilon_0}\epsilon_0\}$,
   $d\coloneqq |\cY|=\floor{H/\epsilon_0}+1$, $y_1\coloneqq 0, y_2\coloneqq
   \epsilon_0, y_3\coloneqq 2\epsilon_0,\dotsc, y_d\coloneqq
   \floor{H/\epsilon_0}\epsilon_0$, (notation introduced in Section \ref{sec:
   online utility learning}). At (4) we upper bound $\int_{y_{d}}^{H}
   (F_{\eta^{p,r,\pi^{E}}}(y)-
   F_{\text{Proj}_{\cC}(\eta^{p,r,\pi^{E}})}(y))^2dy\le \int_{y_{d}}^{H}dy
   =H-y_d=H-\floor{H/\epsilon_0}\epsilon_0=\epsilon_0(H/\epsilon_0-\floor{H/\epsilon_0})\le\epsilon_0
   $ since the difference of cumulative distribution functions is bounded by 1.
   At (5), thanks to the definition of the projection operator
   $\text{Proj}_{\cC}$ (Eq. \eqref{eq: definition proj c}), we notice that, for
   $y\in[y_j,y_{j+1}]$, it holds that
   $F_{\text{Proj}_{\cC}(\eta^{p,r,\pi^{E}})}(y)\in[F_{\eta^{p,r,\pi^{E}}}(y_j),
   F_{\eta^{p,r,\pi^{E}}}(y_{j+1})]$, thus we can upper bound the integrand
   through the maximum, constant, difference of cumulative distribution
   functions. At (6) we use the definition of set $\cY$, i.e., an
   $\epsilon_0$-covering of the $[0,H]$ interval, at (7) we use the
   Cauchy-Schwarz's inequality $\sum_j (x_j)^2\le (\sum_j x_j)^2$ for $x_j\ge 0$,
   and noticed that the summands are always non-negative, at (8) we apply a
   telescoping argument.
 
   The result follows by taking the square root of both sides.
 \end{proof}
 
 \begin{lemma}
   \label{lemma: expectation erd is projected true distribution}
   Let $i\in\dsb{N}$, and let $f\in[0,H]^d$ be an arbitrary $d$-dimensional
   vector. Denote by
   $G_1,G_2,\dotsc,G_{\tau^{E,i}}\overset{\text{i.i.d.}}{\sim}\eta^{p^i,r^i,\pi^{E,i}}$
   the random variables representing the returns of the $\tau^{E,i}$ trajectories
   inside dataset $\cD^{E,i}$. Let $\widehat{\eta}^{E,i}$ be the random output of
   Algorithm \ref{alg: erd} that depends on the random variables
   $G_1,G_2,\dotsc,G_{\tau^{E,i}}$. Then, it holds that:
   \begin{align*}
     \E_{G_1,G_2,\dotsc,G_{\tau^{E,i}}\sim \eta^{p^i,r^i,\pi^{E,i}}}\bigg[
     \E_{y\sim\widehat{\eta}^{E,i}}\Big[f(y)\Big]  
     \bigg]
     = \E_{y\sim\text{Proj}_{\cC}(\eta^{p^i,r^i,\pi^{E,i}})
     }\Big[f(y)\Big].
   \end{align*}
 \end{lemma}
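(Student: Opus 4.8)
The plan is to unwind the definition of \texttt{ERD} (Algorithm \ref{alg: erd}) and identify its expected output as the categorical projection of the true return distribution. First I would observe that, by construction, the (random) distribution $\widehat{\eta}^{E,i}$ produced by \texttt{ERD} on the sample $G_1,\dots,G_{\tau^{E,i}}$ is exactly the categorical projection of the empirical distribution of the returns, i.e.
\begin{align*}
  \widehat{\eta}^{E,i}
  = \text{Proj}_\cC\Big(\frac{1}{\tau^{E,i}}\sum_{j\in\dsb{\tau^{E,i}}}\delta_{G_j}\Big),
\end{align*}
since the branches in \texttt{ERD} (clipping below $y_1=0$, clipping above $y_d$, and the linear split between the two surrounding grid points) are a line-by-line implementation of the single-Dirac rule in Eq. \eqref{eq: definition proj c}, and the final normalization divides by $|\cD^{E,i}|=\tau^{E,i}$.

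Next I would use the affine extension property of $\text{Proj}_\cC$ in Eq. \eqref{eq: property proj c}: applying the projection to a mixture gives the mixture of the projections of each atom. Hence for any fixed vector $f\in[0,H]^d$,
\begin{align*}
  \E_{y\sim\widehat{\eta}^{E,i}}[f(y)]
  = \frac{1}{\tau^{E,i}}\sum_{j\in\dsb{\tau^{E,i}}}\E_{y\sim\text{Proj}_\cC(\delta_{G_j})}[f(y)].
\end{align*}
Define $\phi(g)\coloneqq \E_{y\sim\text{Proj}_\cC(\delta_g)}[f(y)]$; this is a deterministic function of $g$. Each $G_j$ is distributed as $\eta^{p^i,r^i,\pi^{E,i}}$, so taking the outer expectation over $G_1,\dots,G_{\tau^{E,i}}$ and using linearity gives $\E_{G_j}[\phi(G_j)] = \E_{g\sim\eta^{p^i,r^i,\pi^{E,i}}}[\phi(g)]$ for every $j$, and the average over $j$ collapses to this common value. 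Finally, invoking Eq. \eqref{eq: property proj c} once more in the other direction, $\E_{g\sim\eta}[\phi(g)] = \E_{g\sim\eta}\big[\E_{y\sim\text{Proj}_\cC(\delta_g)}[f(y)]\big] = \E_{y\sim\text{Proj}_\cC(\eta)}[f(y)]$, which is exactly the claimed identity with $\eta=\eta^{p^i,r^i,\pi^{E,i}}$.

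I do not expect a genuine obstacle here; the only care needed is the bookkeeping in the first step, namely verifying that the three conditional branches in \texttt{ERD} (including the edge cases $G\le 0$ and $G>\floor{H/\epsilon_0}\epsilon_0$) reproduce exactly the piecewise definition of $\text{Proj}_\cC(\delta_g)$ in Eq. \eqref{eq: definition proj c}, and that the weights $\tfrac{U-G}{U-L}$ and $\tfrac{G-L}{U-L}$ match $\tfrac{y_{i+1}-y}{y_{i+1}-y_i}$ and $\tfrac{y-y_i}{y_{i+1}-y_i}$. After that, everything is linearity of expectation plus the affine-mixture property of the projection operator, and the i.i.d.\ assumption on the $G_j$ is used only to pass from the per-sample expectation to the population expectation.
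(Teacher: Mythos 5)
Your proposal is correct and follows essentially the same route as the paper's proof: the paper, too, exploits that \texttt{ERD} outputs $\text{Proj}_\cC$ of the empirical return distribution and that $\text{Proj}_\cC$ acts affinely on mixtures of Diracs (Eq. \eqref{eq: property proj c}), combined with linearity of expectation and the i.i.d.\ samples; it merely carries this out by explicitly expanding both $\text{Proj}_{\cC}(\eta^{p^i,r^i,\pi^{E,i}})$ and $\widehat{\eta}^{E,i}$ as weighted sums of $\delta_{y_j}$ and matching coefficients, whereas you package the same computation through the map $\phi(g)=\E_{y\sim\text{Proj}_\cC(\delta_g)}[f(y)]$. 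The only point to make explicit (which the paper handles by writing $\eta^{p^i,r^i,\pi^{E,i}}$ as a finite mixture on its support) is that the final swap $\E_{g\sim\eta}[\phi(g)]=\E_{y\sim\text{Proj}_\cC(\eta)}[f(y)]$ uses the finiteness of the return support.
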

 \begin{proof}
   We omit index $i$ for simplicity, but the proof can be carried out for all
   $i\in\dsb{N}$ independently. To prove the statement, we use the notation
   described in Appendix \ref{apx: missing algorithms and sub routines} for the
   Dirac delta, to provide an explicit representation of both the distribution
   $\text{Proj}_{\cC}(\eta^{p,r,\pi^{E}})$ and the ``random'' distribution
   $\widehat{\eta}^E$.
 
   We consider distribution $\eta^{p,r,\pi^{E}}$ supported on
   $\cZ\coloneqq\{z_1,z_2,\dotsc,z_M\}\subseteq[0,H]$, while distributions
   $\text{Proj}_{\cC}(\eta^{p,r,\pi^{E}}),\widehat{\eta}^E$ are supported on set 
   $\cY=\{y_1,y_2,\dotsc,y_d\}\subseteq[0,H]$.
   
   W.r.t. distribution $\text{Proj}_{\cC}(\eta^{p,r,\pi^{E}})$, we can write:
   \begin{align*}
     \text{Proj}_{\cC}(\eta^{p,r,\pi^{E}})
     &=\text{Proj}_{\cC}\Big(
       \sum\limits_{k\in\dsb{M}}
     \eta^{p,r,\pi^E}(z_k) \delta_{z_k}
     \Big)\\
     &\markref{(1)}{=}
       \sum\limits_{k\in\dsb{M}}
       \eta^{p,r,\pi^E}(z_k)\popblue{\text{Proj}_{\cC}
     (\delta_{z_k})}\\
     &\markref{(2)}{=}
     \sum\limits_{k\in\dsb{M}}
     \eta^{p,r,\pi^E}(z_k)\popblue{\Big(
     \delta_{y_1}\indic{z_k\le y_1}
     +\delta_{y_d}\indic{z_k> y_d}}\\
     &\qquad\popblue{+ \sum\limits_{j\in\dsb{d-1}}\Big(
     \frac{y_{j+1}-z_k}{y_{j+1}-y_j}\delta_{y_j}+
     \frac{z_k-y_j}{y_{j+1}-y_j}\delta_{y_{j+1}}
     \Big)\indic{z_k\in(y_j,y_{j+1}]}
     \Big)}\\
     &=
     \popblue{\delta_{y_1}}\sum\limits_{k\in\dsb{M}}\eta^{p,r,\pi^E}(z_k)\Big(\indic{z_k\le y_1}
     +\frac{y_2-z_k}{y_{2}-y_1}\indic{z_k\in(y_1,y_2]}
     \Big)\\
     &\qquad+
     \sum\limits_{j\in\{2,\dotsc,d-1\}}\popblue{\delta_{y_j}}
     \Big(\sum\limits_{k\in\dsb{M}}
     \eta^{p,r,\pi^E}(z_k)\Big(\frac{y_{j+1}-z_k}{y_{j+1}-y_j}\indic{z_k\in(y_i,y_{j+1}]}\\
     &\qquad+
     \frac{z_k-y_{j-1}}{y_{i}-y_{j-1}}\indic{z_k\in(y_{j-1},y_{i}]}\Big)
     \Big)\\
     &\qquad+\popblue{\delta_{y_d}}\sum\limits_{k\in\dsb{M}}\eta^{p,r,\pi^E}(z_k)\Big(
       \indic{z_k> y_d}+\frac{z_k-y_{d-1}}{y_{d}-y_{d-1}}\indic{z_k\in(y_{d-1},y_d]}\Big),
   \end{align*}
   where at (1) we have applied the extension in Eq. \eqref{eq: property proj c} of
   the projection operator $\text{Proj}_{\cC}$ to finite mixtures of Dirac
   distributions, and at (2) we have applied its definition (Eq. \eqref{eq:
   definition proj c}).
 
   Concerning distribution $\widehat{\eta}^E$, based on Algorithm \ref{alg: erd},
   we can write:
   \begin{align*}
     \widehat{\eta}^E
     &=\frac{\popblue{\delta_{y_1}}}{\tau^E}\Big(
     \sum\limits_{t\in\dsb{\tau^E}}
     \Big(
     \indic{G_t\le y_1}+  
     \frac{y_2-G_t}{y_{2}-y_1}\indic{G_t\in(y_1,y_2]}
     \Big)  
     \Big)\\
     &\qquad+
     \sum\limits_{j\in\{2,\dotsc,d-1\}}\frac{\popblue{\delta_{y_j}}}{\tau^E}
     \Big(\sum\limits_{t\in\dsb{\tau^E}}
     \Big(\frac{y_{j+1}-G_t}{y_{j+1}-y_j}\indic{G_t\in(y_i,y_{j+1}]}\\
     &\qquad+
     \frac{G_t-y_{j-1}}{y_{i}-y_{j-1}}\indic{G_t\in(y_{j-1},y_{i}]}\Big)
     \Big)\\
     &\qquad+\frac{\popblue{\delta_{y_d}}}{\tau^E}\Big(\sum\limits_{t\in\dsb{\tau^E}}
     \Big(
       \indic{G_t> y_d}+\frac{G_t-y_{d-1}}{y_{d}-y_{d-1}}\indic{G_t\in(y_{d-1},y_d]}\Big)\Big).
   \end{align*}
 
 Now, if we take the expectation of the random vector $\widehat{\eta}^E$ w.r.t.
 $\eta^{p,r,\pi^{E}}$, we get:
 \begin{align*}
   &\E_{G_1,G_2,\dotsc,G_{\tau^{E}}\sim \eta^{p,r,\pi^{E}}}\Big[
     \widehat{\eta}^{E}
     \Big]\\
     &\qquad=\E_{G_1,G_2,\dotsc,G_{\tau^{E}}\sim \eta^{p,r,\pi^{E}}}\bigg[
       \frac{\popblue{\delta_{y_1}}}{\tau^E}\Big(
     \sum\limits_{t\in\dsb{\tau^E}}
     \Big(
     \indic{G_t\le y_1}+  
     \frac{y_2-G_t}{y_{2}-y_1}\indic{G_t\in(y_1,y_2]}
     \Big)  
     \Big)\\
     &\qquad\qquad+
     \sum\limits_{j\in\{2,\dotsc,d-1\}}\frac{\popblue{\delta_{y_j}}}{\tau^E}
     \Big(\sum\limits_{t\in\dsb{\tau^E}}
     \Big(\frac{y_{j+1}-G_t}{y_{j+1}-y_j}\indic{G_t\in(y_i,y_{j+1}]}\\
     &\qquad\qquad+
     \frac{G_t-y_{j-1}}{y_{i}-y_{j-1}}\indic{G_t\in(y_{j-1},y_{i}]}\Big)
     \Big)\\
     &\qquad\qquad+\frac{\popblue{\delta_{y_d}}}{\tau^E}\Big(\sum\limits_{t\in\dsb{\tau^E}}
     \Big(
       \indic{G_t> y_d}+\frac{G_t-y_{d-1}}{y_{d}-y_{d-1}}\indic{G_t\in(y_{d-1},y_d]}\Big)\Big)
     \bigg]\\
       &\qquad\markref{(3)}{=}
       \E_{G\sim \eta^{p,r,\pi^{E}}}\bigg[
       \popblue{\delta_{y_1}}\Big(
     \indic{G\le y_1}+  
     \frac{y_2-G}{y_{2}-y_1}\indic{G\in(y_1,y_2]}
     \Big)\\
     &\qquad\qquad+
     \sum\limits_{j\in\{2,\dotsc,d-1\}}\popblue{\delta_{y_j}}
     \Big(\frac{y_{j+1}-G}{y_{j+1}-y_j}\indic{G\in(y_i,y_{j+1}]}\\
     &\qquad\qquad+
     \frac{G-y_{j-1}}{y_{i}-y_{j-1}}\indic{G\in(y_{j-1},y_{i}]}\Big)\\
     &\qquad\qquad+\popblue{\delta_{y_d}}\Big(
       \indic{G> y_d}+\frac{G-y_{d-1}}{y_{d}-y_{d-1}}\indic{G\in(y_{d-1},y_d]}
       \Big)
     \bigg]\\
     &\qquad\markref{(4)}{=}
     \popblue{\delta_{y_1}}\sum\limits_{k\in\dsb{M}}\eta^{p,r,\pi^E}(z_k)\Big(\indic{z_k\le y_1}
     +\frac{y_2-z_k}{y_{2}-y_1}\indic{z_k\in(y_1,y_2]}
     \Big)\\
     &\qquad\qquad+
     \sum\limits_{j\in\{2,\dotsc,d-1\}}\popblue{\delta_{y_j}}
     \Big(\sum\limits_{k\in\dsb{M}}
     \eta^{p,r,\pi^E}(z_k)\Big(\frac{y_{j+1}-z_k}{y_{j+1}-y_j}\indic{z_k\in(y_i,y_{j+1}]}\\
     &\qquad\qquad+
     \frac{z_k-y_{j-1}}{y_{i}-y_{j-1}}\indic{z_k\in(y_{j-1},y_{i}]}\Big)
     \Big)\\
     &\qquad\qquad+\popblue{\delta_{y_d}}\sum\limits_{k\in\dsb{M}}\eta^{p,r,\pi^E}(z_k)\Big(
       \indic{z_k> y_d}+\frac{z_k-y_{d-1}}{y_{d}-y_{d-1}}\indic{z_k\in(y_{d-1},y_d]}\Big)\\
     &\qquad\markref{(5)}{=}
     \text{Proj}_{\cC}(\eta^{p,r,\pi^{E}}),
 \end{align*}
 where at (3) we use the fact that $G_1,G_2,\dotsc,G_{\tau^{E}}$ are independent
 and identically distributed, at (4) we apply the linearity of the expectation,
 we notice that $\delta_{y_j}$ does not depend on $G$ for all $j\in\dsb{d}$, and
 we notice that, for any $y\in\cY$, it holds that $\E_{G\sim
 \eta^{p,r,\pi^{E}}}\big[\indic{G\le y}\big]=\eta^{p,r,\pi^E}(G\le
 y)=\sum_{k\in\dsb{M}} \eta^{p,r,\pi^E}(z_k) \indic{z_k\le y}$, where we have
 abused notation by writing $\eta^{p,r,\pi^E}(G\le y)$ to mean the probability,
 under distribution $\eta^{p,r,\pi^E}$, that event $\{G\le y\}$ happens.
 Moreover, similarly, we notice that, for any $y,y'\in\cY$, it holds that $\E_{G\sim
 \eta^{p,r,\pi^{E}}}\big[G\cdot\indic{G\in[y,y']}\big]= \sum_{k\in\dsb{M}} z_k
 \eta^{p,r,\pi^E}(z_k) \indic{z_k\in[y,y']}$. At (5) we simply recognize
 $\text{Proj}_{\cC}(\eta^{p,r,\pi^{E}})$ using the previous expression.
 
 This concludes the proof because the equality of the Dirac delta representations
 means that the expectations of any function w.r.t. these two distributions
 coincide.
 \end{proof}
 
 \begin{lemma}
   \label{lemma: bound estimation error 1 U}
   Let $i\in\dsb{N}$ and let $\epsilon,\delta\in(0,1)$. If $|\cU|=1$, then, with
   probability at least $1-\delta$, we have:
   \begin{align*}
     \sup\limits_{U\in\cU}
       \Big|\mathop{\E}\limits_{G\sim\text{Proj}_{\cC}(\eta^{p^i,r^i,\pi^{E,i}})
       }[U(G)]-
       \mathop{\E}\limits_{G\sim\widehat{\eta}^{E,i}}[U(G)]\Big|\le \epsilon,
   \end{align*}
   as long as:
   \begin{align*}
     \tau^E\ge c \frac{H^2\log\frac{2}{\delta}}{\epsilon^2},
   \end{align*}
   where $c$ is some positive constant.
 \end{lemma}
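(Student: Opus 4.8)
The plan is to reduce the claim to Hoeffding's inequality for an average of i.i.d.\ bounded scalars; because $\cU$ is a singleton, no union bound over utilities is needed and the $\sup_{U\in\cU}$ is vacuous (this is precisely what makes the $|\cU|=1$ case so much cheaper than the general one handled later by a covering argument).

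First I would expose the additive structure of $\widehat{\eta}^{E,i}$. Writing $G_1,\dots,G_{\tau^{E,i}}\overset{\text{i.i.d.}}{\sim}\eta^{p^i,r^i,\pi^{E,i}}$ for the returns of the demonstrated trajectories, Algorithm~\ref{alg: erd} together with the affine extension~\eqref{eq: property proj c} of the categorical projection gives
\begin{align*}
  \widehat{\eta}^{E,i}
  &= \text{Proj}_{\cC}\Big(\tfrac{1}{\tau^{E,i}}\textstyle\sum_{t=1}^{\tau^{E,i}}\delta_{G_t}\Big)
  = \tfrac{1}{\tau^{E,i}}\textstyle\sum_{t=1}^{\tau^{E,i}}\text{Proj}_{\cC}(\delta_{G_t}).
\end{align*}
Letting $U$ be the unique element of $\cU$ and $X_t\coloneqq\E_{y\sim\text{Proj}_{\cC}(\delta_{G_t})}[U(y)]$, linearity of the expectation then yields $\E_{y\sim\widehat{\eta}^{E,i}}[U(y)]=\tfrac{1}{\tau^{E,i}}\sum_{t}X_t$. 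The random variables $X_t$ are i.i.d.\ (each is a fixed, piecewise-linear function of the single sample $G_t$), and since $\text{Proj}_{\cC}(\delta_{G_t})$ is a probability distribution supported on $\cY$ and $U\in\fU$ takes values in $[0,H]$, we have $X_t\in[0,H]$ almost surely.

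Second, I would identify the mean: by Lemma~\ref{lemma: expectation erd is projected true distribution} applied with $f$ equal to the vector of values of $U$ on $\cY$, $\E[X_t]=\E_{y\sim\text{Proj}_{\cC}(\eta^{p^i,r^i,\pi^{E,i}})}[U(y)]$, which is exactly the left-hand quantity in the claim. Finally, Hoeffding's inequality for the i.i.d.\ sequence $(X_t)_{t=1}^{\tau^{E,i}}$ with range $[0,H]$ gives
\begin{align*}
  \mathbb{P}\Big(\big|\tfrac{1}{\tau^{E,i}}\textstyle\sum_{t}X_t-\E[X_1]\big|\ge\epsilon\Big)\le 2\exp\!\big(-2\tau^{E,i}\epsilon^2/H^2\big),
\end{align*}
and requiring the right-hand side to be at most $\delta$ yields $\tau^{E,i}\ge\tfrac{H^2}{2\epsilon^2}\log\tfrac{2}{\delta}$, i.e.\ the claim holds with $c=\tfrac12$. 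There is no substantive obstacle here; the only point requiring a little care is to perform the concentration at the level of the scalar $\sum_t X_t$ rather than of the random probability vector $\widehat{\eta}^{E,i}$ itself, which is precisely what the decomposition above enables, the boundedness $X_t\in[0,H]$ being immediate from $U\in\fU$ and $\text{Proj}_{\cC}(\delta_{G_t})$ being a distribution on $\cY\subseteq[0,H]$.
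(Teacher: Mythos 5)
Your proposal is correct and follows essentially the same route as the paper: identify the mean via Lemma \ref{lemma: expectation erd is projected true distribution} and then apply Hoeffding's inequality to the empirical quantity $\E_{G\sim\widehat{\eta}^{E,i}}[U(G)]$, which is an average of i.i.d.\ terms bounded in $[0,H]$. Your explicit decomposition $\widehat{\eta}^{E,i}=\tfrac{1}{\tau^{E,i}}\sum_t \text{Proj}_{\cC}(\delta_{G_t})$ merely makes transparent the i.i.d.\ structure that the paper's proof leaves implicit, and your explicit constant $c=\tfrac12$ is consistent with the paper's unspecified constant.
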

 \begin{proof}
   Let $U$ be the only function inside $\cU$. Let us omit index $i$ for simplicity. Then, 
   we can write:
   \begin{align*}
     \Big|\mathop{\E}\limits_{G\sim\widehat{\eta}^{E}}[U(G)]-
     \mathop{\E}\limits_{G\sim\text{Proj}_{\cC}(\eta^{p,r,\pi^{E}})
       }[U(G)]\Big|
       &\markref{(1)}{=}
       \Big|\mathop{\E}\limits_{G\sim\widehat{\eta}^{E}}[U(G)]-
      \popblue{\mathop{\E}\limits_{\eta^{p,r,\pi^{E}}}\Big[}
      \mathop{\E}\limits_{G\sim\widehat{\eta}^{E}}[U(G)]\popblue{\Big]}\Big|\\
       &\markref{(2)}{\le} c H\sqrt{\frac{\log\frac{2}{\delta}}{\tau^E}},
   \end{align*}
   where at (1) we have applied Lemma \ref{lemma: expectation erd is projected
   true distribution}, and at (2) we have applied the Hoeffding's inequality
   noticing that function $U$ is bounded in $[0,H]$, and denoting with $c$ some positive
   constant.
 
   By imposing:
   \begin{align*}
     c H\sqrt{\frac{\log\frac{2}{\delta}}{\tau^E}}\le\epsilon,
   \end{align*}
   and solving w.r.t. $\tau^E$, we get the result.
 \end{proof}
 
 \begin{lemma}\label{lemma: bound estimation error tante U}
   Let $i\in\dsb{N}$ and let $\epsilon,\delta\in(0,1)$. Then, with probability at least
   $1-\delta$, we have:
   \begin{align*}
     \sup\limits_{U\in\cU}
       \Big|\mathop{\E}\limits_{G\sim\text{Proj}_{\cC}(\eta^{p^i,r^i,\pi^{E,i}})
       }[U(G)]-
       \mathop{\E}\limits_{G\sim\widehat{\eta}^{E,i}}[U(G)]\Big|\le\epsilon,
   \end{align*}
   as long as:
   \begin{align*}
     \tau^E\ge \widetilde{\cO}\Big(
     \frac{H^3}{\epsilon^2\epsilon_0}\log\frac{H}{\delta\epsilon_0}   
     \Big).
   \end{align*}
 \end{lemma}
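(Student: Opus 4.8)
The plan is to reduce the claim to a uniform deviation bound over the finite-dimensional set of \emph{discretized} utilities and then control it via a covering argument. As in the proof of Lemma~\ref{lemma: bound estimation error 1 U}, I would drop the index $i$ and observe that both $\text{Proj}_\cC(\eta^{p,r,\pi^{E}})$ and $\widehat\eta^{E}$ are supported on the grid $\cY$, so for every $U\in\cU$ the quantity $\E_{G\sim\widehat\eta^{E}}[U(G)]-\E_{G\sim\text{Proj}_\cC(\eta^{p,r,\pi^{E}})}[U(G)]$ depends on $U$ only through its restriction $\overline U\in\overline\fU^\cY\subseteq[0,H]^{d}$, with $d=|\cY|=\Theta(H/\epsilon_0)$, and is a \emph{linear} functional of $\overline U$. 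Moreover, by Lemma~\ref{lemma: expectation erd is projected true distribution}, for a fixed $U$ the term $\E_{G\sim\widehat\eta^{E}}[U(G)]$ is an average of $\tau^{E}$ i.i.d.\ random variables bounded in $[0,H]$ whose mean is exactly $\E_{G\sim\text{Proj}_\cC(\eta^{p,r,\pi^{E}})}[U(G)]$, so a single-point deviation bound follows from Hoeffding's inequality.

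Next I would build, for a parameter $\alpha>0$ to be tuned, an $\alpha$-net $\mathcal{N}$ of $[0,H]^{d}$ in the $\ell_\infty$ norm, of cardinality $|\mathcal{N}|\le(cH/\alpha)^{d}$ for a universal constant $c$; in particular $\log|\mathcal{N}|\le d\log(cH/\alpha)=\widetilde\Theta(H/\epsilon_0)$. Applying Hoeffding's inequality with failure probability $\delta/|\mathcal{N}|$ to each net point and taking a union bound gives, with probability at least $1-\delta$,
\begin{align*}
  \max_{\overline U\in\mathcal{N}}\Big|\E_{G\sim\widehat\eta^{E}}[\overline U(G)]-\E_{G\sim\text{Proj}_\cC(\eta^{p,r,\pi^{E}})}[\overline U(G)]\Big|\le c'H\sqrt{\frac{\log(|\mathcal{N}|/\delta)}{\tau^{E}}}.
\end{align*}
To pass from the net to all of $\cU$, for an arbitrary $U\in\cU$ I would pick the closest $\overline U'\in\mathcal{N}$ (so $\|\overline U-\overline U'\|_\infty\le\alpha$) and bound the residual by H\"older's inequality, using that $\widehat\eta^{E}$ and $\text{Proj}_\cC(\eta^{p,r,\pi^{E}})$ are probability distributions on $\cY$ and hence $\|\widehat\eta^{E}-\text{Proj}_\cC(\eta^{p,r,\pi^{E}})\|_1\le2$; this costs only an additive $2\alpha$, \emph{deterministically}. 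Choosing $\alpha=\Theta(\epsilon)$ (or, to match the $\log(H/(\delta\epsilon_0))$ form in the statement, $\alpha=\Theta(\min\{\epsilon,\epsilon_0\})$), the total deviation is at most $\epsilon/2+c'H\sqrt{\log(|\mathcal{N}|/\delta)/\tau^{E}}$, and forcing the second term below $\epsilon/2$ while substituting $\log(|\mathcal{N}|/\delta)=\widetilde\Theta\big((H/\epsilon_0)\log(H/(\delta\epsilon_0))\big)$ yields the claimed requirement $\tau^{E}\ge\widetilde\cO\big(H^{2}\cdot(H/\epsilon_0)/\epsilon^{2}\big)=\widetilde\cO\big(\tfrac{H^{3}}{\epsilon^{2}\epsilon_0}\log\tfrac{H}{\delta\epsilon_0}\big)$.

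The only real subtlety — and the step I would check most carefully — is the covering: because the discretized utilities live in a $d$-dimensional set with $d=\Theta(H/\epsilon_0)$, the net size is $\exp(\widetilde\Theta(H/\epsilon_0))$, which is exactly what injects the $1/\epsilon_0$ factor into $\tau^{E}$ and is the qualitative gap between this lemma and the single-utility Lemma~\ref{lemma: bound estimation error 1 U}. One could hope to shrink $\mathcal{N}$ by exploiting the monotonicity and $L$-Lipschitzness encoded in $\overline\fU_L^\cY$, but for the bound as stated the crude box-cover of $[0,H]^{d}$ suffices; everything else (Hoeffding, union bound, H\"older, tuning $\alpha$) is routine.
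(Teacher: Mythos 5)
Your proposal is correct and follows essentially the same route as the paper's proof: reduce to the restriction of $U$ onto the grid $\cY$ (so the problem lives in $[0,H]^d$ with $d=\Theta(H/\epsilon_0)$), use the unbiasedness of $\widehat\eta^{E}$ (Lemma \ref{lemma: expectation erd is projected true distribution}) plus Hoeffding at a single point, then a covering of $[0,H]^d$ with a union bound and a discretization residual, and finally tune the radius to obtain $\tau^{E}\ge\widetilde\cO(H^3\log(H/(\delta\epsilon_0))/(\epsilon^2\epsilon_0))$. The only (immaterial) differences are cosmetic: the paper covers in the $2$-norm with radius $\epsilon'=1/\tau^{E}$ and bounds the residual pointwise, whereas you cover in the $\infty$-norm with radius $\Theta(\epsilon)$ and bound the residual via H\"older against the two probability measures.
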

 \begin{proof}
   Again, let us omit index $i$ for simplicity. First, for all possible functions
   $U\in\cU$, we denote by $\overline{U}\in\overline{\fU}_L$ the function in
   $\overline{\fU}_L$ that takes on the values that the function $U$ assigns to
   the points of set $\cY$. This permits us to write:
   \begin{align*}
     &\sup\limits_{U\in\cU}\Big|\mathop{\E}\limits_{G\sim\widehat{\eta}^{E}}[U(G)]-
     \mathop{\E}\limits_{G\sim\text{Proj}_{\cC}(\eta^{p,r,\pi^{E}})
       }[U(G)]\Big|\\
       &\qquad=
       \sup\limits_{\overline{U}\in\overline{\fU}_L}
       \Big|\mathop{\E}\limits_{G\sim\widehat{\eta}^{E}}[\overline{U}(G)]-
     \mathop{\E}\limits_{G\sim\text{Proj}_{\cC}(\eta^{p,r,\pi^{E}})
       }[\overline{U}(G)]\Big|\\
       &\qquad\markref{(1)}{\le}
       \sup\limits_{\overline{U}\in[0,H]^d}
       \Big|\mathop{\E}\limits_{G\sim\widehat{\eta}^{E}}[\overline{U}(G)]-
     \mathop{\E}\limits_{G\sim\text{Proj}_{\cC}(\eta^{p,r,\pi^{E}})
       }[\overline{U}(G)]\Big|\\
       &\qquad\markref{(2)}{=}
       \sup\limits_{\overline{U}\in[0,H]^d}
       \Big|\mathop{\E}\limits_{G\sim\widehat{\eta}^{E}}[\overline{U}(G)]-
      \popblue{\mathop{\E}\limits_{\eta^{p,r,\pi^{E}}}\Big[}
      \mathop{\E}\limits_{G\sim\widehat{\eta}^{E}}[\overline{U}(G)]\popblue{\Big]}\Big|,
   \end{align*} 
   where at (1) we upper bound by considering all the possible vectors
   $\overline{U}\in[0,H]^d$, and at (2) we apply Lemma \ref{lemma: expectation
   erd is projected true distribution}.
 
   Now, similarly to the proof of Lemma 7.2 in \citet{agarwal2021RL}, we
   construct an $\epsilon'$-covering of set $[0,H]^d$, call it $\cN_{\epsilon'}$,
   with $|\cN_{\epsilon'}|\le (1+2H\sqrt{d}/\epsilon')^d$ such that, for all
   $f\in[0,H]^d$, there exists $f'\in\cN_{\epsilon'}$ for which
   $\|f-f'\|_2\le\epsilon'$. By applying a union bound over all
   $f'\in\cN_{\epsilon'}$ and Lemma \ref{lemma: bound estimation error 1 U}, we
   have that, with probability at least $1-\delta$, for all
   $f'\in\cN_{\epsilon'}$, it holds that:
   \begin{align}\label{eq: bound norm 1 proof Hoeffding}
     \Big|\mathop{\E}\limits_{G\sim\widehat{\eta}^{E}}[f'(G)]-
      \mathop{\E}\limits_{\eta^{p,r,\pi^{E}}}\Big[
      \mathop{\E}\limits_{G\sim\widehat{\eta}^{E}}[f'(G)]\Big]\Big|
      \le cH\sqrt{\frac{d\log\frac{2(1+2H\sqrt{d}/\epsilon')}{\delta}}{\tau^E}}.
   \end{align}
   Next, for any $f\in[0,H]^d$, denote its closest points (in 2-norm) from
   $\cN_{\epsilon'}$ as $f'$. Then, we have:
   \begin{align*}
     &\Big|\mathop{\E}\limits_{G\sim\widehat{\eta}^{E}}[f(G)]-
      \mathop{\E}\limits_{\eta^{p,r,\pi^{E}}}\Big[
      \mathop{\E}\limits_{G\sim\widehat{\eta}^{E}}[f(G)]\Big]
      \Big|\\
      &\qquad=\Big|\mathop{\E}\limits_{G\sim\widehat{\eta}^{E}}[f(G)]-
      \mathop{\E}\limits_{\eta^{p,r,\pi^{E}}}\Big[
      \mathop{\E}\limits_{G\sim\widehat{\eta}^{E}}[f(G)]\Big]
      \popblue{\pm \Big(\mathop{\E}\limits_{G\sim\widehat{\eta}^{E}}[f'(G)]-
      \mathop{\E}\limits_{\eta^{p,r,\pi^{E}}}\Big[
      \mathop{\E}\limits_{G\sim\widehat{\eta}^{E}}[f'(G)]\Big]\Big)}
      \Big|\\
      &\qquad\markref{(3)}{\le}\Big|\mathop{\E}\limits_{G\sim\widehat{\eta}^{E}}[\popblue{f'}(G)]-
      \mathop{\E}\limits_{\eta^{p,r,\pi^{E}}}\Big[
      \mathop{\E}\limits_{G\sim\widehat{\eta}^{E}}[\popblue{f'}(G)]\Big]\Big|\\
      &\qquad\qquad+\Big|\mathop{\E}\limits_{G\sim\widehat{\eta}^{E}}[\popblue{f}(G)-\popblue{f'}(G)]\Big|
      +\Big|\mathop{\E}\limits_{\eta^{p,r,\pi^{E}}}\Big[
       \mathop{\E}\limits_{G\sim\widehat{\eta}^{E}}[\popblue{f}(G)-\popblue{f'}(G)]\Big]\Big|\\
     &\qquad\markref{(4)}{\le}
     cH\sqrt{\frac{d\log\frac{2(1+2H\sqrt{d}/\epsilon')}{\delta}}{\tau^E}}+2\epsilon'\\
     &\qquad\markref{(5)}{\le}
     \popblue{c'}H\sqrt{\frac{d\log\frac{Hd\tau^E}{\delta}}{\tau^E}}\\
     \end{align*}
     where at (3) we apply triangle inequality, at (4) we apply the result in Eq.
     \eqref{eq: bound norm 1 proof Hoeffding}, and the fact that, by definition of
     $\epsilon'$-covering, $\|f-f'\|_2\le \epsilon'$ entails that
     $|f(y)-f(y')|\le\epsilon'$ for all $y\in\cY$; at (5) we set
     $\epsilon'=1/\tau^E$, and we simplify.
 
     The result follows by upper bounding $d\le H/\epsilon_0+1$, and then by
     setting:
     \begin{align}\label{eq: bound difference etaE for tractor}
       c''H\sqrt{\frac{H\log\frac{H\tau^E}{\delta\epsilon_0}}{\epsilon_0\tau^E}}\le\epsilon,
     \end{align}
     and solving w.r.t. $\tau^E$, and noticing that for all $\tau^E$ greater than
     some constant, we can get rid of the logarithmic terms in $\tau^E$.
 \end{proof}
 
 \subsubsection{Lemmas on the Optimal Performance for Single Utility}
 \label{apx: lemmas optimal performance single utility}
 
 In this section, we will omit index $i\in\dsb{N}$ since the following derivations
 can be carried out for each $i$.
 
 We denote the arbitrary MDP in $\{\cM^i\}_i$ as $\cM=\tuple{\cS,\cA,H,s_0,p,r}$,
 and its analogous with discretized reward $\overline{r}$, defined at all
 $(s,a,h)\in\SAH$ as $\overline{r}_h(s,a)\coloneqq\Pi_\cR[r_h(s,a)]$, as
 $\overline{\cM}\coloneqq \tuple{\cS,\cA,H,s_0,p,\overline{r}}$. We denote the
 analogous MDPs with empirical transition model $\widehat{p}$ as $\widehat{\cM}
 =\tuple{\cS,\cA,H,s_0,\widehat{p},r}$ and $\widehat{\overline{\cM}}\coloneqq
 \tuple{\cS,\cA,H,s_0,\widehat{p},\overline{r}}$.
 
 Given any utility $U\in\fU_L$, we denote the corresponding RS-MDPs,
 respectively, as
 $\cM_U,\overline{\cM}_U,\widehat{\cM}_U,\widehat{\overline{\cM}}_U$. Concerning
 the discretized RS-MDPs $\overline{\cM}_U$ and $\widehat{\overline{\cM}}_U$, we
 denote the corresponding enlarged state space MDPs, respectively, as
 $\fE[\overline{\cM}_U]=\tuple{\{\cS\times\cY_h\}_h,\cA,H,(s_0,0),\fp,\fr}$ and
 $\fE[\widehat{\overline{\cM}}_U]=\tuple{\{\cS\times\cY_h\}_h
 ,\cA,H,(s_0,0),\widehat{\fp},\fr}$, where we decided to define such enlarged
 state space MDPs using the state space $\{\cS\times\cY_h\}_h$ considered by
 Algorithm \ref{alg: planning} (\texttt{PLANNING}) instead of, respectively,
 $\{\cS\times\cG^{p,\overline{r}}_h\}_h$ and
 $\{\cS\times\cG^{\widehat{p},\overline{r}}_h\}_h$. Thus, the transition models
 $\fp$ and $\widehat{\fp}$, from any $h\in\dsb{H}$ and $(s,y,a)\in\SYAh$, assign
 to the next state $(s',y')\in\SYh$ the probability: $\fp_h(s',y'|s,y,a)\coloneqq
 p_h(s'|s,a)\indic{y'=y+\overline{r}_h(s,a)}$ and
 $\widehat{\fp}_h(s',y'|s,y,a)\coloneqq
 \widehat{p}_h(s'|s,a)\indic{y'=y+\overline{r}_h(s,a)}$.
 Moreover, the reward function $\fr$, in any $h\in\dsb{H}$ and $(s,y,a)\in\SYAh$,
 is $\fr_h(s,y,a)=0$ if $h<H$, and $\fr_h(s,y,a)=U(y+\overline{r}_h(s,a))$ if
 $h=H$.
 
 We will make extensive use of notation for $V$- and $Q$- functions introduced in
 Appendix \ref{apx: additional notation}.
 
 We are now ready to proceed with the analysis. In general, the analysis shares
 similarities to that of Theorem 3 of \citet{wu2023risksensitive}, but we use
 results also from \citet{azar2013minimax} to obtain tighter bounds.
 
 \begin{lemma}\label{lemma: bound J star single utility} Let
   $\epsilon,\delta\in(0,1)$. For any fixed $L$-Lipschitz utility function
   $U\in\fU_L$, it suffices to execute \caty with:
   \begin{align*}
     &\tau\le\widetilde{\cO}\Big(\frac{SAH^4}{\epsilon^2}
     \log\frac{SAH}{\delta\epsilon_0}\Big),
   \end{align*}
   to obtain $\big|J^*(U;p,r)-\widehat{J}^*(U)\big|\le HL\epsilon_0+\epsilon$
   w.p. $1-\delta$.
 \end{lemma}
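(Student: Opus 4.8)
The plan is to split the error through a triangle inequality that isolates the reward discretization from the transition estimation, namely $|J^*(U;p,r)-\widehat{J}^*(U)|\le |J^*(U;p,r)-J^*(U;p,\overline{r})| + |J^*(U;p,\overline{r})-\widehat{J}^*(U)|$, and to bound the two pieces separately. For the first, deterministic, piece I would use that the law over trajectories induced by a policy $\pi$ in the environment $\tuple{\cS,\cA,H,s_0,p}$ does not depend on the reward at all (policies depend on states and actions, not rewards), so $J^\pi(U;p,r)$ and $J^\pi(U;p,\overline{r})$ differ only through the trajectory returns. Since $|r_h(s,a)-\overline{r}_h(s,a)|=|r_h(s,a)-\Pi_{\cR}[r_h(s,a)]|\le\epsilon_0$ for every $(s,a,h)\in\SAH$, every trajectory's return changes by at most $H\epsilon_0$, hence $L$-Lipschitzness of $U$ gives $|J^\pi(U;p,r)-J^\pi(U;p,\overline{r})|\le LH\epsilon_0$ for all $\pi$; taking suprema and using $|\max_\pi f-\max_\pi g|\le\max_\pi|f-g|$ bounds the first piece by $HL\epsilon_0$.

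The core of the argument is the second piece, which I would first rewrite exactly. The output $\widehat{J}^*(U)$ of \texttt{PLANNING} coincides with the optimal performance $J^*(U;\widehat{p},\overline{r})$ of the RS-MDP $\widehat{\overline{\cM}}_U$: the backward induction evaluates $U$ only at points of $\cY$, where $\overline{U}=U$, and it runs over the full $\cS\times\cY_h\supseteq\cS\times\cG^{\widehat{p},\overline{r}}_h$, so it implements value iteration in $\fE[\widehat{\overline{\cM}}_U]$. By Theorem~3.1 of \citet{bauerle2014more} (equivalently Theorem~1 of \citet{wu2023risksensitive}), $J^*(U;p,\overline{r})=V^*_1(s_0,0;\fp,\fr)$ and $J^*(U;\widehat{p},\overline{r})=V^*_1(s_0,0;\widehat{\fp},\fr)$, where $\fE[\overline{\cM}_U]$ and $\fE[\widehat{\overline{\cM}}_U]$ share the state space $\{\cS\times\cY_h\}_h$ and the terminal-only reward $\fr_h(s,y,a)=U(y+\overline{r}_h(s,a))\indic{h=H}\in[0,H]$, and their kernels $\fp,\widehat{\fp}$ differ only through $p_h(\cdot|s,a)$ versus $\widehat{p}_h(\cdot|s,a)$, the $y$-coordinate being advanced deterministically in both. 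Thus the second piece equals $|V^*_1(s_0,0;\fp,\fr)-V^*_1(s_0,0;\widehat{\fp},\fr)|$: the optimal-value gap between two $H$-horizon MDPs with rewards in $[0,H]$ that agree on everything but the transition kernel, with $\widehat{\fp}$ built from the empirical estimate of $p$ using $n=\floor{\tau/(SAH)}$ samples per $(s,a,h)$.

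For this gap I would run the variance-aware analysis of \citet{azar2013minimax}, adapted to non-stationary $H$-horizon MDPs with rewards in $[0,H]$. By the simulation/value-difference lemma the gap is controlled by $\sum_{h=1}^H\E\big[(p_h-\widehat{p}_h)(\cdot|s_h,a_h)\cdot V^*_{h+1}(\cdot,\cdot;\fp)\big]$, i.e.\ averages of the \emph{fixed} functions $s'\mapsto V^*_{h+1}(s',y;\fp)\in[0,H]$ against $(p_h-\widehat{p}_h)(\cdot|s,a)$ (one direction is immediate, the other uses the self-bounding recursion of \citet{azar2013minimax}, which produces the lower-order $H^2/n$ term). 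Empirical Bernstein together with a union bound over the $SAH$ triples $(s,a,h)$ and the at most $|\cY|=\cO(H/\epsilon_0)$ relevant values of $y$ gives, simultaneously, $|(p_h-\widehat{p}_h)(\cdot|s,a)\cdot V|\le\cO\big(\sqrt{\V_{p_h(\cdot|s,a)}[V]\,\iota/n}+H\iota/n\big)$ with $\iota=\widetilde{\cO}(\log(SAH/(\delta\epsilon_0)))$. Cauchy--Schwarz over $h$ plus the law of total variance in the enlarged MDP --- whose return $U(G)$ lies in $[0,H]$, so $\sum_h\E\big[\V_{\fp_h}[V^*_{h+1}]\big]\le H^2$ --- then yields $|V^*_1(\fp)-V^*_1(\widehat{\fp})|\le\widetilde{\cO}\big(\sqrt{H^3\iota/n}+H^2\iota/n\big)$. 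Requiring this to be at most $\epsilon$ forces $n\ge\widetilde{\cO}\big(H^3\epsilon^{-2}\log(SAH/(\delta\epsilon_0))\big)$, i.e.\ $\tau=SAH\,n\le\widetilde{\cO}\big(SAH^4\epsilon^{-2}\log(SAH/(\delta\epsilon_0))\big)$; adding the $HL\epsilon_0$ discretization term gives the claim with probability $1-\delta$.

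I expect this last step to be the main obstacle: getting linear rather than quadratic dependence on $S$. The crude route --- bounding each simulation-lemma term by $\|\widehat{p}_h-p_h\|_1\|V^*_{h+1}\|_\infty\le\cO(H\sqrt{S/n})$ --- costs an extra factor $SH$ and only yields $\tau\propto S^2AH^5/\epsilon^2$ (which is exactly what happens in the ``all utilities'' case, where $V^*(\fp)$ cannot be fixed and one must cover the value space). The tightening requires keeping $V^*(\fp)$ fixed, replacing the $\ell_1$ bound by Bernstein, and reconciling the variances produced by the simulation lemma (which appear under the visitation law of $\widehat{\fp}$) with the law of total variance (stated under the law of $\fp$); handling that mismatch cleanly is the delicate part and is precisely where the analysis leans on \citet{azar2013minimax}.
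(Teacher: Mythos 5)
Your proposal is correct and follows essentially the same route as the paper: the paper also splits off the reward-discretization error (via Proposition 3 of \citet{wu2023risksensitive}, giving the $HL\epsilon_0$ term, which is exactly your Lipschitz argument) and then bounds $|J^*(\fp,\fr)-J^*(\widehat{\fp},\fr)|$ in the enlarged state space MDP by an \citet{azar2013minimax}-style variance-aware analysis (simulation-lemma decomposition, Bernstein with a union bound over the $SAH$ triples and the $d=\cO(H/\epsilon_0)$ cumulative-reward values, total-variance/telescoping bound $\sqrt{H^3}$), yielding the same $n\ge\widetilde{\cO}(H^3\epsilon^{-2}\log(SAHd/\delta))$ and hence $\tau\le\widetilde{\cO}(SAH^4\epsilon^{-2}\log(SAH/(\delta\epsilon_0)))$. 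The variance-mismatch issue you flag (variances under $\widehat{\fp}$ versus $\fp$) is precisely what the paper's intermediate lemmas resolve, so your plan contains no gap.
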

 \begin{proof}
   For an arbitrary utility $U\in\fU_L$, we can write:
   \begin{align*}
     |J^*(U;p,r)-\widehat{J}^*(U)|
     &\markref{(1)}{=}|J^*(U;p,r)-\widehat{J}^*(U)\popblue{\pm
     J^*(\fp,\fr)}|\\
     &\markref{(2)}{\le} \popblue{|}J^*(U;p,r)-J^*(\fp,\fr)\popblue{|}
     +\popblue{|}J^*(\fp,\fr)-\widehat{J}^*(U)\popblue{|}\\
     &\markref{(3)}{=} |J^*(U;p,r)-J^*(\fp,\fr)|+|J^*(\fp,\fr)-\popblue{J^*(\widehat{\fp},\fr)}|\\
     &\markref{(4)}{\le} \popblue{HL\epsilon_0}+|J^*(\fp,\fr)-J^*(\widehat{\fp},\fr)|\\
     &=HL\epsilon_0+\popblue{|V_1^*(s_0,0;\fp,\fr)-V_1^*(s_0,0;\widehat{\fp},\fr)|}\\
     &\le HL\epsilon_0+\popblue{\max\limits_{\hsya}
     |Q^*_h(s,y,a;\fp,\fr)-Q^*_h(s,y,a;\widehat{\fp},\fr)|}\\
     &\markref{(5)}{\le}HL\epsilon_0+\popblue{\epsilon'},
   \end{align*}
   where at (1) we add and subtract the optimal expected utility in the enlarged
   MDP $\fE[\overline{\cM}_U]$ considered by Algorithm \ref{alg: planning}, but
   with the true transition model $\fp$. At (2) we apply triangle inequality, at
   (3) we recognize that the estimate $\widehat{J}^*(U)$ used in \caty and
   outputted by \texttt{PLANNING} (Algorithm \ref{alg: planning}) is the optimal
   expected utility for the discretized problem with estimated dynamics
   $\widehat{\fp}$, at (4) we use Proposition 3 of \citet{wu2023risksensitive},
   since $U$ is $L$-Lipschitz, and at (5) we apply Lemma \ref{lemma: lemma 9} to
   bound the distance between $Q$-functions.
   
   By setting:
   \begin{align*}
     \underbrace{c\sqrt{\frac{H^3\log\frac{4SAHd}{\delta}}{n}}}_{\le\epsilon/3}
     +\underbrace{cH^2\bigg(\frac{\log\frac{16SAHd}{\delta}}{n}\bigg)^{3/4}}_{\le\epsilon/3}
     +\underbrace{cH^3\frac{\log\frac{16SAHd}{\delta}}{n}}_{\le\epsilon/3}\le\epsilon,
   \end{align*}
   and solving w.r.t. $\epsilon$:
   \begin{align*}
     \begin{cases}
       n\ge c'\frac{H^3\log\frac{4SAHd}{\delta}}{\epsilon^2}\\
       n\ge c''\frac{H^{8/3}\log\frac{16SAHd}{\delta}}{\epsilon^{4/3}}\\
       n\ge c'''\frac{H^3\log\frac{16SAHd}{\delta}}{\epsilon}
     \end{cases}.
   \end{align*}
   Taking the largest bound, we get:
   \begin{align*}
     &n\ge c \frac{H^3\log\frac{16SAHd}{\delta}}{\epsilon^2},
   \end{align*}
   for some positive constant $c$. Since $d\le H/\epsilon_0+1$, we can write:
   \begin{align*}
     &\tau\ge c' \frac{SAH^4\log\frac{c''SAH}{\delta\epsilon_0}}{\epsilon^2},
   \end{align*}
   for some positive constants $c',c''$, where we used that $\tau=SAHn$.
 \end{proof}
 
 The proof of the following lemma is organized in many lemmas, and is based on
 the proof of Theorem 1 of \citet{azar2013minimax}.
 \begin{lemma}\label{lemma: lemma 9}
   For any $\delta\in(0,1)$, we have:
   \begin{align*}
     \max\limits_{\hsya}
     |Q^*_h(s,y,a;\fp,\fr)-Q^*_h(s,y,a;\widehat{\fp},\fr)|\le\epsilon',
   \end{align*}
   w.p. at least $1-\delta$, where $\epsilon'$ is defined as:
   \begin{align*}
     \epsilon'\coloneqq c\sqrt{\frac{H^3\log\frac{4SAHd}{\delta}}{n}}
     +cH^2\bigg(\frac{\log\frac{16SAHd}{\delta}}{n}\bigg)^{3/4}
     +cH^3\frac{\log\frac{16SAHd}{\delta}}{n},
   \end{align*}
   for some positive constant $c$.
 \end{lemma}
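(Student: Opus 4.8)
The plan is to port the sample-complexity analysis of model-based planning with a generative model of \citet{azar2013minimax} to the enlarged state space MDP $\fE[\overline{\cM}_U]=\tuple{\{\cS\times\cY_h\}_h,\cA,H,(s_0,0),\fp,\fr}$ used by \texttt{PLANNING}, organizing the argument into a handful of supporting lemmas. The first observation I would make is that the enlarged dynamics carry no ``extra'' estimation error: since $\fp_h(s',y'|s,y,a)=p_h(s'|s,a)\indic{y'=y+\overline{r}_h(s,a)}$ and $\widehat{\fp}_h(s',y'|s,y,a)=\widehat{p}_h(s'|s,a)\indic{y'=y+\overline{r}_h(s,a)}$, the cumulative-reward coordinate evolves deterministically and identically under both models, so for any fixed function $V$ on $\cS\times\cY_{h+1}$ the quantity $\E_{(s',y')\sim\fp_h(\cdot|s,y,a)}[V(s',y')]$ is just the inner product between $p_h(\cdot|s,a)\in\Delta^\cS$ and the $S$-dimensional slice $V(\cdot,y+\overline{r}_h(s,a))$. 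Uniform control over all $(s,y,a,h)$ therefore reduces to controlling $\big|(p_h(\cdot|s,a)-\widehat{p}_h(\cdot|s,a))^\top v\big|$ for each of the at most $SAHd$ fixed vectors $v$ that arise (one per triple $(s,a,h)$ and per value of $y\in\cY_h$), with $d\le H/\epsilon_0+1$; this is the source of the $\log\frac{SAHd}{\delta}$ factor. I would also note that $\fr$ is supported only at $h=H$, where $Q^*_H(s,y,a;\fp,\fr)=Q^*_H(s,y,a;\widehat{\fp},\fr)=U(y+\overline{r}_H(s,a))$, so the recursion for the $Q$-difference is seeded at zero and runs over $H-1$ steps.

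Next I would write the standard value-difference decomposition via the Bellman optimality equations: for $h<H$,
\begin{align*}
  Q^*_h(s,y,a;\fp,\fr)-Q^*_h(s,y,a;\widehat{\fp},\fr)
  &= (p_h(\cdot|s,a)-\widehat{p}_h(\cdot|s,a))^\top V^*_{h+1}(\cdot,y{+}\overline{r}_h(s,a);\fp,\fr)\\
  &\quad+\E_{s'\sim\widehat{p}_h(\cdot|s,a)}\big[V^*_{h+1}(s',y{+}\overline{r}_h(s,a);\fp,\fr)-V^*_{h+1}(s',y{+}\overline{r}_h(s,a);\widehat{\fp},\fr)\big].
\end{align*}
The second term recurses. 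The first is the one-step estimation error, and here the key point is that $V^*_{h+1}(\cdot;\fp,\fr)$ does \emph{not} depend on the random samples, so a direct application of Bernstein's inequality (with no union bound over value functions) bounds it by $\widetilde{\cO}\big(\sqrt{\V_{s'\sim p_h(\cdot|s,a)}[V^*_{h+1}]\,\log\tfrac{SAHd}{\delta}/n}+H\log\tfrac{SAHd}{\delta}/n\big)$, where $\V$ denotes the variance over the next state. Combining this with an $\ell_1$/Bernstein concentration bound for each $\widehat{p}_h(\cdot|s,a)$ and a union bound over the $\le SAHd$ triples of the enlarged MDP gives a per-step error with a variance-dependent leading term.

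To get the $\sqrt{H^3/n}$ rather than $\sqrt{H^4/n}$ leading term, I would then invoke the law of total variance along trajectories: $\sum_h \V_{s'\sim p_h}[V^*_{h+1}]=O(H^2)$. Feeding the per-step Bernstein bounds through the recursion, Cauchy–Schwarz across stages, and absorbing the $V^*(\cdot;\fp,\fr)$ versus $V^*(\cdot;\widehat{\fp},\fr)$ mismatch into lower-order terms via the self-bounding argument of \citet{azar2013minimax}, one obtains
\[
  \max_{\hsya}|Q^*_h(s,y,a;\fp,\fr)-Q^*_h(s,y,a;\widehat{\fp},\fr)|\le c\sqrt{\tfrac{H^3\log\frac{4SAHd}{\delta}}{n}}+cH^2\Big(\tfrac{\log\frac{16SAHd}{\delta}}{n}\Big)^{3/4}+cH^3\tfrac{\log\frac{16SAHd}{\delta}}{n}
\]
for a universal constant $c$, which is exactly $\epsilon'$. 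The supporting lemmas are thus: (i) $\ell_1$/Bernstein concentration of each $\widehat{p}_h(\cdot|s,a)$; (ii) the Bernstein bound on $(p_h-\widehat{p}_h)^\top V^*_{h+1}$; (iii) the total-variance bound; (iv) the recursion assembling (ii)–(iii) under the union bound over enlarged-MDP triples.

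The main obstacle is step (iv): naively recursing couples the fixed $V^*(\cdot;\fp,\fr)$ with the random $V^*(\cdot;\widehat{\fp},\fr)$, and a crude union bound over all possible value functions would cost a $d$-dependent factor in the exponent and destroy the rate. The Azar-style remedy — first bounding $\|V^*(\cdot;\widehat{\fp},\fr)-V^*(\cdot;\fp,\fr)\|_\infty$ crudely, using that to re-center the Bernstein inequality on the deterministic $V^*(\cdot;\fp,\fr)$, and only then closing the recursion — together with carefully tracking the discretization through $d\le H/\epsilon_0+1$, is the delicate bookkeeping; the rest (plugging $n=\lfloor\tau/(SAH)\rfloor$, solving for $\tau$, dropping logarithmic factors in $\tau$) is routine and matches the computations already displayed in the excerpt.
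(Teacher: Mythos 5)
Your proposal takes essentially the same route as the paper: its Lemmas 3--8 are exactly your ingredients --- a value-difference recursion in the enlarged MDP (which the paper states as two one-sided inequalities using the respective optimal policies $\psi^*,\widehat{\psi}^*$ rather than the exact equality you wrote), a crude Hoeffding bound on $\|V^*(\cdot;\fp,\fr)-V^{\psi^*}(\cdot;\widehat{\fp},\fr)\|_\infty$ used to transfer variances to the empirical model, Bernstein applied to $(p_h-\widehat{p}_h)^\top V^*_{h+1}(\cdot;\fp,\fr)$, the law-of-total-variance bound giving $\sqrt{H^3}$, and a union bound over the at most $SAHd$ enlarged triples arising from the deterministic cumulative-reward coordinate. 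The assembly and the resulting $\epsilon'$ match the paper's proof, so there is no gap.
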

 \begin{proof}
   We upper bound one side, and then the other. For all the $\hsya$,
   it holds that:
   \begin{align*}
     &Q^*_h(s,a,y;\fp,\fr)-Q^*_h(s,y,a;\widehat{\fp},\fr)\\
     &\qquad\markref{(1)}{\le}\mathop{\E}\limits_{\widehat{\fp},\fr,\psi^*}\bigg[
       \sum\limits_{h'=h}^H \sum\limits_{s'\in\cS}\Big(
         p_{h'}(s'|s_{h'},a_{h'})-\widehat{p}_{h'}(s'|s_{h'},a_{h'})\Big)
         V_{h'+1}^{\psi^*}(s',y_{h'+1};\fp,\fr)\\
         &\qquad\qquad\Big|\,s_h=s,y_h=y,a_h=a
         \bigg]\\
     &\qquad\markref{(2)}{\le}\mathop{\E}\limits_{\widehat{\fp},\fr,\psi^*}\bigg[
   \sum\limits_{h'=h}^H 
   c\sqrt{\frac{c_1 
     \V_{s'\sim \widehat{p}_{h'}(\cdot|s_{h'},a_{h'})}
     [V^{\psi^*}_{h'+1}(s',y_{h'+1};\widehat{\fp},\fr)]}{n}}
     +b_2\\
     &\qquad\qquad\Big|\,s_h=s,y_h=y,a_h=a      
     \bigg]\\
     &\qquad=\popblue{c\sqrt{\frac{c_1}{n}}}\mathop{\E}\limits_{\widehat{\fp},\fr,\psi^*}\bigg[
       \sum\limits_{h'=h}^H 
       \sqrt{
         \V_{s'\sim \widehat{p}_{h'}(\cdot|s_{h'},a_{h'})}[V^{\psi^*}_{h'+1}(s',y_{h'+1};\widehat{\fp},\fr)]}
       \,\Big|\,s_h=s,y_h=y,a_h=a      
         \bigg]\\
         &\qquad\qquad+\popblue{Hb_2}\\
     &\qquad\markref{(3)}{\le}
     c\sqrt{\frac{c_1}{n}}\popblue{\sqrt{H^3}}+Hb_2\\
     &\qquad=c\sqrt{\frac{H^3\log\frac{\popblue{4}SAHY}{\delta}}{n}}
     +c'H^2\bigg(\frac{\log\frac{\popblue{16}SAHY}{\delta}}{n}\bigg)^{3/4}
     +c''H^3\frac{\log\frac{\popblue{16}SAHY}{\delta}}{n}\\
     &\qquad\eqqcolon \epsilon',
   \end{align*}
   where at (1) we have applied Lemma \ref{lemma: lemma 3}, at (2) we have
   applied Lemma \ref{lemma: lemma 6} with $\delta/2$ of probability, at (3) we have applied Lemma \ref{lemma:
   lemma 8}.
 
   The proof for the other side of inequality is completely analogous, and it
   holds w.p. $1-\delta/2$. The result follows through the application of a union bound.
 \end{proof}
 
 \begin{lemma}\label{lemma: lemma 3}
   For any tuple $\hsya$, it holds that:
   \begin{align*}
     &Q^*_h(s,y,a;\fp,\fr)-Q^*_h(s,y,a;\widehat{\fp},\fr)\le 
     \mathop{\E}\limits_{\widehat{\fp},\fr,\psi^*}\bigg[
       \sum\limits_{h'=h}^H \sum\limits_{s'\in\cS}\\
       &\qquad\Big(
         p_{h'}(s'|s_{h'},a_{h'})-\widehat{p}_{h'}(s'|s_{h'},a_{h'})\Big)
         V_{h'+1}^{\psi^*}(s',y_{h'+1};\fp,\fr)\,\Big|\,s_h=s,y_h=y,a_h=a
         \bigg],\\
         &Q^*_h(s,y,a;\fp,\fr)-Q^*_h(s,y,a;\widehat{\fp},\fr)\ge 
     \mathop{\E}\limits_{\widehat{\fp},\fr,\widehat{\psi}^*}\bigg[
       \sum\limits_{h'=h}^H \sum\limits_{s'\in\cS}\\
       &\qquad\Big(
         p_{h'}(s'|s_{h'},a_{h'})-\widehat{p}_{h'}(s'|s_{h'},a_{h'})\Big)
         V_{h'+1}^{\psi^*}(s',y_{h'+1};\fp,\fr)\,\Big|\,s_h=s,y_h=y,a_h=a      
         \bigg],
   \end{align*}
   where $\psi^*,\widehat{\psi}^*$ are the optimal policies respectively in
   problems $\fp,\fr$ and $\widehat{\fp},\fr$.
 \end{lemma}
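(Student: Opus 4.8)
The plan is to reduce the statement to a standard policy-comparison argument combined with a telescoping (``simulation'') identity on the enlarged state space $\fE[\overline{\cM}_U]$. For the upper bound, I would use that $\psi^*$ is optimal for the enlarged MDP with the \emph{true} dynamics, so $Q^*_h(s,y,a;\fp,\fr)=Q^{\psi^*}_h(s,y,a;\fp,\fr)$, whereas $\psi^*$ is merely one feasible Markovian policy for the MDP with the \emph{estimated} dynamics, so $Q^*_h(s,y,a;\widehat\fp,\fr)\ge Q^{\psi^*}_h(s,y,a;\widehat\fp,\fr)$. Subtracting the two relations gives
\[
Q^*_h(s,y,a;\fp,\fr)-Q^*_h(s,y,a;\widehat\fp,\fr)\le Q^{\psi^*}_h(s,y,a;\fp,\fr)-Q^{\psi^*}_h(s,y,a;\widehat\fp,\fr).
\]
The lower bound is symmetric: since $\widehat{\psi}^*$ is optimal for the estimated dynamics, $Q^*_h(s,y,a;\widehat\fp,\fr)=Q^{\widehat{\psi}^*}_h(s,y,a;\widehat\fp,\fr)$ and $Q^*_h(s,y,a;\fp,\fr)\ge Q^{\widehat{\psi}^*}_h(s,y,a;\fp,\fr)$, which lower-bounds the left-hand side by the same difference evaluated at $\widehat{\psi}^*$ in place of $\psi^*$.

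It then remains to expand $Q^{\psi}_h(s,y,a;\fp,\fr)-Q^{\psi}_h(s,y,a;\widehat\fp,\fr)$ for a fixed Markovian policy $\psi$ on the enlarged space. I would unroll the Bellman recursion $V^{\psi}_{h'}(s,y;\fp,\fr)=\fr_{h'}(s,y,\psi_{h'}(s,y))+\sum_{s',y'}\fp_{h'}(s',y'|s,y,\psi_{h'}(s,y))V^{\psi}_{h'+1}(s',y';\fp,\fr)$, and its analogue for $\widehat\fp$, from stage $h$ up to $H$. The per-stage reward terms cancel because $\fr$ does not depend on the dynamics; and because $\fp$ and $\widehat\fp$ share the \emph{deterministic} transition on the cumulative-reward coordinate — they differ only through $p_{h'}(\cdot|s,a)$ versus $\widehat p_{h'}(\cdot|s,a)$ on the $s$-marginal, with $y'=y+\overline r_{h'}(s,a)$ forced — the sum over pairs $(s',y')$ collapses to a sum over $s'$ at the shifted value $y_{h'+1}$. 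Adding and subtracting $\sum_{s'}\widehat p_{h'}(s'|s,a)V^{\psi}_{h'+1}(s',y_{h'+1};\fp,\fr)$ at each step and unrolling yields
\begin{align*}
&Q^{\psi}_h(s,y,a;\fp,\fr)-Q^{\psi}_h(s,y,a;\widehat\fp,\fr)\\
&\quad=\E_{\widehat\fp,\fr,\psi}\Big[\textstyle\sum_{h'=h}^H\sum_{s'\in\cS}\big(p_{h'}(s'|s_{h'},a_{h'})-\widehat p_{h'}(s'|s_{h'},a_{h'})\big)V^{\psi}_{h'+1}(s',y_{h'+1};\fp,\fr)\,\Big|\,s_h=s,y_h=y,a_h=a\Big],
\end{align*}
where the outer expectation runs over trajectories generated by $\psi$ under $\widehat\fp$. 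Instantiating $\psi=\psi^*$ in the upper bound and $\psi=\widehat{\psi}^*$ in the lower bound (so that the value function appearing on the lower-bound side is $V^{\widehat{\psi}^*}$) gives exactly the claimed pair of inequalities; note that only the upper bound is actually invoked in Lemma~\ref{lemma: lemma 9}.

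I do not expect a genuine obstacle: the only points needing care are (i) getting the two policy-comparison inequalities pointed in the correct direction, and (ii) the bookkeeping of the two-component state, i.e.\ tracking that only the $s$-marginal of the transition is perturbed while the cumulative-reward coordinate evolves deterministically and identically under $\fp$ and $\widehat\fp$ — which is precisely what turns the right-hand side into a single sum over $s'$ evaluated at $y_{h'+1}$. This lemma is the enlarged-state-space analogue of the classical value-difference decomposition of \citet{azar2013minimax}, and it is the starting point for the variance-based (Bernstein-type) refinement carried out in the subsequent lemmas.
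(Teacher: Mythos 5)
Your upper bound follows the paper's route exactly: optimality of $\widehat{\psi}^*$ under $\widehat{\fp}$ reduces the difference to $Q^{\psi^*}_h(s,y,a;\fp,\fr)-Q^{\psi^*}_h(s,y,a;\widehat{\fp},\fr)$ for the \emph{fixed} policy $\psi^*$, and the add-and-subtract Bellman unrolling, with the deterministic $y$-coordinate collapsing the sum over $(s',y')$ to a sum over $s'$ at $y_{h'+1}$, gives the first inequality. That part is fine.

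The lower bound, however, has a genuine gap. You reduce to $Q^{\widehat{\psi}^*}_h(s,y,a;\fp,\fr)-Q^{\widehat{\psi}^*}_h(s,y,a;\widehat{\fp},\fr)$ and then apply the same fixed-policy simulation identity to $\widehat{\psi}^*$, which produces $V^{\widehat{\psi}^*}_{h'+1}(s',y_{h'+1};\fp,\fr)$ inside the expectation --- you even say so in your parenthetical --- whereas the lemma claims the bound with $V^{\psi^*}_{h'+1}(s',y_{h'+1};\fp,\fr)$ inside \emph{both} inequalities. Since $p_{h'}(\cdot|s,a)-\widehat{p}_{h'}(\cdot|s,a)$ is a signed quantity, you cannot pass from your expression to the stated one by the monotonicity $V^{\widehat{\psi}^*}\le V^{\psi^*}$, so what you prove is a different statement. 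The distinction is not cosmetic: downstream (Lemmas \ref{lemma: lemma 4}--\ref{lemma: lemma 6}, used in Lemma \ref{lemma: lemma 9}) the inner function must be $V^{\psi^*}_{h'+1}(\cdot;\fp,\fr)$, a quantity independent of the sampled $\widehat{p}$, so that Hoeffding/Bernstein can be applied per $(s,a,h)$; replacing it with $V^{\widehat{\psi}^*}_{h'+1}(\cdot;\fp,\fr)$, which depends on $\widehat{p}$ through the argmax defining $\widehat{\psi}^*$, would break those concentration steps (or force a union bound over all policies). Also, contrary to your closing remark, the lower bound \emph{is} invoked: Lemma \ref{lemma: lemma 9} bounds ``the other side'' analogously using the second inequalities of Lemmas \ref{lemma: lemma 3}, \ref{lemma: lemma 5} and \ref{lemma: lemma 6}. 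The paper's lower-bound proof is not a fixed-policy identity: it starts from the exact identity $Q^*_h(s,y,a;\fp,\fr)-Q^*_h(s,y,a;\widehat{\fp},\fr)=Q^{\psi^*}_h(s,y,a;\fp,\fr)-Q^{\widehat{\psi}^*}_h(s,y,a;\widehat{\fp},\fr)$, adds and subtracts $\sum_{(s',y')}\widehat{\fp}_h(s',y'|s,y,a)V^{\psi^*}_{h+1}(s',y';\fp,\fr)$, and then at each step uses $V^{\psi^*}_{h+1}(s',y';\fp,\fr)\ge Q^{\psi^*}_{h+1}(s',y',\widehat{\psi}^*_{h+1}(s',y');\fp,\fr)$ to switch the action to the one chosen by $\widehat{\psi}^*$; unfolding yields a trajectory generated by $\widehat{\psi}^*$ under $\widehat{\fp}$ while keeping $V^{\psi^*}(\cdot;\fp,\fr)$ inside --- precisely the interleaved structure your plan misses.
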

 \begin{proof}
   For any $\hsya$, we can write:
   \begin{align*}
     &Q^*_h(s,y,a;\fp,\fr)-Q^*_h(s,y,a;\widehat{\fp},\fr)\\
     &\qquad= Q^{\popblue{\psi^*}}_h(s,y,a;\fp,\fr)-
     Q^{\popblue{\widehat{\psi}^*}}_h(s,y,a;\widehat{\fp},\fr)\\
     &\qquad\markref{(1)}{\le} Q^{\psi^*}_h(s,y,a;\fp,\fr)-
     Q^{\popblue{\psi^*}}_h(s,y,a;\widehat{\fp},\fr)\\
     &\qquad\markref{(2)}{=}\fr_h(s,y,a)+\sum\limits_{(s',y')\in\cS\times\cY_{h+1}}
       \fp_{h}(s',y'|s,y,a)V_{h+1}^{\psi^*}(s',y';\fp,\fr)\\
       &\qquad\qquad-\Big(
         \fr_h(s,y,a)+\sum\limits_{(s',y')\in\cS\times\cY_{h+1}}
         \widehat{\fp}_{h}(s',y'|s,y,a)V_{h+1}^{\psi^*}(s',y';\widehat{\fp},\fr)\Big)\\
     &\qquad\markref{(3)}{=}\sum\limits_{(s',y')\in\cS\times\cY_{h+1}}
     \fp_{h}(s',y'|s,y,a)V_{h+1}^{\psi^*}(s',y';\fp,\fr)\\
     &\qquad\qquad-\sum\limits_{(s',y')\in\cS\times\cY_{h+1}}
     \widehat{\fp}_{h}(s',y'|s,y,a)V_{h+1}^{\psi^*}(s',y';\widehat{\fp},\fr)
       \\&\qquad\qquad\popblue{\pm \sum\limits_{(s',y')\in\cS\times\cY_{h+1}}
       \widehat{\fp}_{h}(s',y'|s,y,a)V_{h+1}^{\psi^*}(s',y';\fp,\fr)}\\
     &\qquad=\sum\limits_{(s',y')\in\cS\times\cY_{h+1}}\Big(
       \fp_{h}(s',y'|s,y,a)-\widehat{\fp}_{h}(s',y'|s,y,a)\Big)V_{h+1}^{\psi^*}(s',y';\fp,\fr)\\
     &\qquad\qquad+\sum\limits_{(s',y')\in\cS\times\cY_{h+1}}
     \widehat{\fp}_{h}(s',y'|s,y,a)\Big(V_{h+1}^{\psi^*}(s',y';\fp,\fr)-
       V_{h+1}^{\psi^*}(s',y';\widehat{\fp},\fr)
       \Big)\\
     &\qquad\markref{(4)}{=}\sum\limits_{(s',y')\in\cS\times\cY_{h+1}}\Big(
       \popblue{p_{h}(s'|s,a)\indic{y+\overline{r}_h(s,a)=y'}}\\
       &\qquad\qquad-
       \popblue{\widehat{p}_{h}(s'|s,a) \indic{y+\overline{r}_h(s,a)=y'}}
       \Big)V_{h+1}^{\psi^*}(s',y';\fp,\fr)\\
     &\qquad\qquad+\sum\limits_{(s',y')\in\cS\times\cY_{h+1}}
     \widehat{\fp}_{h}(s',y'|s,y,a)\Big(V_{h+1}^{\psi^*}(s',y';\fp,\fr)-
       V_{h+1}^{\psi^*}(s',y';\widehat{\fp},\fr)
       \Big)\\
       &\qquad\markref{(5)}{=}\popblue{\sum\limits_{s'\in\cS}}\Big(
         p_{h}(s'|s,a)-\widehat{p}_{h}(s'|s,a)\Big)
         \popblue{\sum\limits_{y'\in\cY_{h+1}}}\indic{y+\overline{r}_h(s,a)=y'}V_{h+1}^{\psi^*}(s',y';\fp,\fr)\\
     &\qquad\qquad+\sum\limits_{(s',y')\in\cS\times\cY_{h+1}}
     \widehat{\fp}_{h}(s',y'|s,y,a)\Big(V_{h+1}^{\psi^*}(s',y';\fp,\fr)-
       V_{h+1}^{\psi^*}(s';\widehat{\fp},\fr)
       \Big)\\
       &\qquad\markref{(6)}{=}\sum\limits_{s'\in\cS}\Big(
         p_{h}(s'|s,a)-\widehat{p}_{h}(s'|s,a)\Big)\popblue{V_{h+1}^{\psi^*}(s',y+\overline{r}_h(s,a);\fp,\fr)}\\
     &\qquad\qquad+\sum\limits_{(s',y')\in\cS\times\cY_{h+1}}
     \widehat{\fp}_{h}(s',y'|s,y,a)\Big(V_{h+1}^{\psi^*}(s',y';\fp,\fr)-
       V_{h+1}^{\psi^*}(s',y';\widehat{\fp},\fr)
       \Big)\\
     &\qquad=
     \sum\limits_{s'\in\cS}\Big(
       p_{h}(s'|s,a)-\widehat{p}_{h}(s'|s,a)\Big)V_{h+1}^{\psi^*}(s',y+\overline{r}_h(s,a);\fp,\fr)\\
       &\qquad\qquad+\sum\limits_{(s',y')\in\cS\times\cY_{h+1}}
         \widehat{\fp}_{h}(s',y'|s,y,a)\\
       &\qquad\qquad\cdot\Big(\popblue{Q}_{h+1}^{\psi^*}(s',y',\popblue{\psi^*_{h+1}(s',y')};\fp,\fr)-
         \popblue{Q}_{h+1}^{\psi^*}(s',y',\popblue{\psi^*_{h+1}(s',y')};\widehat{\fp},\fr)
         \Big),
   \end{align*}
   where at (1) we have used that $\widehat{\psi}^*$ is the optimal policy in
   $\widehat{\fp},\fr$, and thus $Q^{\psi^*}_h(s,a;\widehat{\fp},\fr)\le
   Q^{\widehat{\psi}^*}_h(s,a;\widehat{\fp},\fr)$.
   At (2) we apply the Bellman equation, at (3) we add and subtract the expected
   under $\widehat{\fp}$ optimal value function under $\fp$, at (4) we use the
   definition of transition model $\fp,\widehat{\fp}$, at (5) we split the
   summations, at (6) we recognize that the indicator function takes on value $1$
   only when $y+\overline{r}_h(s,a)=y'$. Finally, we unfold the recursion to obtain the result.
 
   Concerning the second equation, for any $\hsya$, we can write:
   \begin{align*}
     &Q^*_h(s,y,a;\fp,\fr)-Q^*_h(s,y,a;\widehat{\fp},\fr)\\
     &\qquad= Q^{\popblue{\psi^*}}_h(s,y,a;\fp,\fr)-
     Q^{\popblue{\widehat{\psi}^*}}_h(s,y,a;\widehat{\fp},\fr)\\
     &\qquad\markref{(7)}{=}\fr_h(s,y,a)+\sum\limits_{(s',y')\in\cS\times\cY_{h+1}}
     \fp_{h}(s',y'|s,y,a)V_{h+1}^{\psi^*}(s',y';\fp,\fr)\\
     &\qquad\qquad-\Big(
       \fr_h(s,y,a)+\sum\limits_{(s',y')\in\cS\times\cY_{h+1}}
       \widehat{\fp}_{h}(s',y'|s,y,a)V_{h+1}^{\widehat{\psi}^*}(s',y';\widehat{\fp},\fr)\Big)\\
     &\qquad\markref{(8)}{=}\sum\limits_{(s',y')\in\cS\times\cY_{h+1}}
     \fp_{h}(s',y'|s,y,a)V_{h+1}^{\psi^*}(s',y';\fp,\fr)\\
     &\qquad\qquad-\sum\limits_{(s',y')\in\cS\times\cY_{h+1}}
       \widehat{\fp}_{h}(s',y'|s,y,a)V_{h+1}^{\widehat{\psi}^*}(s',y';\widehat{\fp},\fr)\\
       &\qquad\qquad\popblue{\pm \sum\limits_{(s',y')\in\cS\times\cY_{h+1}}
       \widehat{\fp}_{h}(s',y'|s,y,a)V_{h+1}^{\psi^*}(s',y';\fp,\fr)}\\
     &\qquad=\sum\limits_{(s',y')\in\cS\times\cY_{h+1}}\Big(
       \fp_{h}(s',y'|s,y,a)-\widehat{\fp}_{h}(s',y'|s,y,a)\Big)V_{h+1}^{\psi^*}(s',y';\fp,\fr)\\
       &\qquad\qquad+\sum\limits_{(s',y')\in\cS\times\cY_{h+1}}
         \widehat{\fp}_{h}(s',y'|s,y,a)\Big(V_{h+1}^{\psi^*}(s',y';\fp,\fr)-
         V_{h+1}^{\widehat{\psi}^*}(s',y';\widehat{\fp},\fr)
         \Big)\\
     &\qquad=\sum\limits_{(s',y')\in\cS\times\cY_{h+1}}\Big(
       \popblue{p_{h}(s'|s,a)\indic{y+\overline{r}_h(s,a)=y'}}\\
       &\qquad\qquad-
       \popblue{\widehat{p}_{h}(s'|s,a)\indic{y+\overline{r}_h(s,a)=y'}}\Big)V_{h+1}^{\psi^*}(s',y';\fp,\fr)\\
     &\qquad\qquad+\sum\limits_{(s',y')\in\cS\times\cY_{h+1}}
     \widehat{\fp}_{h}(s',y'|s,y,a)\Big(V_{h+1}^{\psi^*}(s',y';\fp,\fr)-
       V_{h+1}^{\widehat{\psi}^*}(s',y';\widehat{\fp},\fr)
       \Big)\\
       &\qquad=\popblue{\sum\limits_{s'\in\cS}}\Big(
         p_{h}(s'|s,a)-\widehat{p}_{h}(s'|s,a)\Big)
         \popblue{\sum\limits_{y'\in\cY_{h+1}}}\indic{y+\overline{r}_h(s,a)=y'}V_{h+1}^{\psi^*}(s',y';\fp,\fr)\\
     &\qquad\qquad+\sum\limits_{(s',y')\in\cS\times\cY_{h+1}}
     \widehat{\fp}_{h}(s',y'|s,y,a)\Big(V_{h+1}^{\psi^*}(s',y';\fp,\fr)-
       V_{h+1}^{\widehat{\psi}^*}(s',y';\widehat{\fp},\fr)
       \Big)\\
       &\qquad=\sum\limits_{s'\in\cS}\Big(
         p_{h}(s'|s,a)-\widehat{p}_{h}(s'|s,a)\Big)
         \popblue{V_{h+1}^{\psi^*}(s',y+\overline{r}_h(s,a);\fp,\fr)}\\
     &\qquad\qquad+\sum\limits_{(s',y')\in\cS\times\cY_{h+1}}
     \widehat{\fp}_{h}(s',y'|s,y,a)\Big(V_{h+1}^{\psi^*}(s',y';\fp,\fr)-
       V_{h+1}^{\widehat{\psi}^*}(s',y';\widehat{\fp},\fr)
       \Big)\\
     &\qquad\markref{(9)}{\ge}\sum\limits_{s'\in\cS}\Big(
       p_{h}(s'|s,a)-\widehat{p}_{h}(s'|s,a)\Big)
       V_{h+1}^{\psi^*}(s',y+\overline{r}_h(s,a);\fp,\fr)\\
       &\qquad\qquad+\sum\limits_{(s',y')\in\cS\times\cY_{h+1}}
         \widehat{\fp}_{h}(s',y'|s,y,a)\\
         &\qquad\qquad\cdot\Big(\popblue{Q}_{h+1}^{\psi^*}
         (s',y',\popblue{\widehat{\psi}^*_{h+1}(s',y')};\fp,\fr)-
         \popblue{Q}_{h+1}^{\widehat{\psi}^*}(s',y',\widehat{\psi}^*_{h+1}(s',y');\widehat{\fp},\fr)
         \Big),
   \end{align*}
   where at (7) we have applied the Bellman equation, at (8) we have added and
   subtracted a term, and at (9) we have used that
   $V_{h+1}^{\psi^*}(s',y';\fp,\fr)=Q_{h+1}^{\psi^*}(s',y',\psi^*_{h+1}(s',y');\fp,\fr)\ge
   Q_{h+1}^{\psi^*}(s',y',\widehat{\psi}^*_{h+1}(s',y');\fp,\fr)$, since $\psi^*_{h+1}(s',y')$ is
   the optimal action under $\fp,\fr$, and so, it cannot be worse than action
   $\widehat{\psi}^*_{h+1}(s',y')$. By unfolding the recursion, we obtain the result.
 \end{proof}
 
 \begin{lemma}\label{lemma: lemma 4}
   For any $\delta\in(0,1)$, w.p. at least $1-\delta$, it holds that:
   \begin{align*}
     \max_{\hsy}|V_h^*(s,y;\fp,\fr)-V^{\psi^*}_h(s,y;\widehat{\fp},\fr)|&\le
     cH^2\sqrt{\frac{\log\frac{2SAHd}{\delta}}{n}},\\
     \max_{\hsy}|V_h^*(s,y;\fp,\fr)-V^{*}_h(s,y;\widehat{\fp},\fr)|&\le
     cH^2\sqrt{\frac{\log\frac{2SAHd}{\delta}}{n}}.
   \end{align*}
   where $c$ is some positive constant.
 \end{lemma}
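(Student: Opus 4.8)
The plan is to reduce the whole statement to a single concentration event for the \emph{original} transition kernels and then propagate it through the enlarged MDP $\fE[\overline{\cM}_U]$ by the usual Bellman/value-difference recursion. The structural fact that makes this clean is that $\fp$ and $\widehat{\fp}$ differ only through $p$ and $\widehat{p}$, which are distributions over the \emph{original} state space $\cS$, and that every value function we will ever need to concentrate against is a value function of the \emph{true} enlarged MDP, hence fixed (independent of the samples drawn by \texttt{EXPLORE}). Concretely, for each $h\in\dsb{H}$, each $(s,a)\in\cS\times\cA$ and each $\tilde y\in\cY_{h+1}$, the map $s'\mapsto V^*_{h+1}(s',\tilde y;\fp,\fr)$ takes values in $[0,H]$ and does not depend on $\widehat p$, while $\langle \widehat p_h(\cdot|s,a),\, V^*_{h+1}(\cdot,\tilde y;\fp,\fr)\rangle=\tfrac{1}{n}\sum_{i=1}^n V^*_{h+1}(X_i,\tilde y;\fp,\fr)$ with $X_1,\dots,X_n\overset{\text{i.i.d.}}{\sim}p_h(\cdot|s,a)$. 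First I would apply Hoeffding's inequality to each such inner product with failure probability $\delta/(SAHd)$ and take a union bound over the at most $SAHd$ triples $(h,s,a,\tilde y)$ (using $|\cY_{h+1}|\le d$), getting an event $\mathcal{E}$ with $\P(\mathcal{E})\ge 1-\delta$ on which
\[
  \bigl|\langle p_h(\cdot|s,a)-\widehat p_h(\cdot|s,a),\, V^*_{h+1}(\cdot,\tilde y;\fp,\fr)\rangle\bigr|\le H\sqrt{\tfrac{\log(2SAHd/\delta)}{2n}}\eqqcolon t\qquad\text{for all }h,s,a,\tilde y.
\]

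Then I would establish both displayed inequalities on $\mathcal{E}$. For the second one, I use the Bellman optimality recursion: since $\fr$ is the same under $\fp$ and $\widehat\fp$, we have $\|V^*_h(\fp,\fr)-V^*_h(\widehat\fp,\fr)\|_\infty\le\max_{s,y,a}\bigl|\fp_h V^*_{h+1}(\fp,\fr)(s,y,a)-\widehat\fp_h V^*_{h+1}(\widehat\fp,\fr)(s,y,a)\bigr|$; adding and subtracting $\widehat\fp_h V^*_{h+1}(\fp,\fr)$ and using $\fp_h(s',y'|s,y,a)=p_h(s'|s,a)\indic{y'=y+\overline r_h(s,a)}$, the first resulting term is one of the inner products controlled by $\mathcal{E}$ (with $\tilde y=y+\overline r_h(s,a)$), hence $\le t$, whereas the second is $\le\|V^*_{h+1}(\fp,\fr)-V^*_{h+1}(\widehat\fp,\fr)\|_\infty$ because $\widehat\fp_h$ is a probability distribution. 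Unrolling from $h=H$, where the two optimal value functions coincide (both equal $\max_a U(y+\overline r_H(s,a))$, which does not involve the kernel), down to $h$ gives $\|V^*_h(\fp,\fr)-V^*_h(\widehat\fp,\fr)\|_\infty\le(H-h)\,t\le H^2\sqrt{\log(2SAHd/\delta)/(2n)}$, i.e.\ the claim with $c=1/\sqrt2$. For the first inequality I instead use the value-difference identity for the \emph{fixed} policy $\psi^*$ (optimal in $\fp,\fr$, so $V^*_h(\fp,\fr)=V^{\psi^*}_h(\fp,\fr)$), namely $V^{\psi^*}_h(s,y;\fp,\fr)-V^{\psi^*}_h(s,y;\widehat\fp,\fr)=\sum_{h'=h}^{H}\E_{\widehat\fp,\psi^*}\bigl[(\fp_{h'}-\widehat\fp_{h'})V^{\psi^*}_{h'+1}(\fp,\fr)\,\big|\,s_h=s,y_h=y\bigr]$, which is exactly the upper part of Lemma~\ref{lemma: lemma 3} (note $V^{\psi^*}_{h'+1}(\fp,\fr)=V^*_{h'+1}(\fp,\fr)$). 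On $\mathcal{E}$ each summand, evaluated at any reachable $(s_{h'},y_{h'},\psi^*_{h'}(s_{h'},y_{h'}))$, equals an inner product bounded by $t$, so the conditional expectation is $\le t$ and the whole sum is $\le(H-h)\,t\le H^2\sqrt{\log(2SAHd/\delta)/(2n)}$. Both bounds thus hold simultaneously on $\mathcal{E}$, which has probability at least $1-\delta$.

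The step that needs care — and the only place a naive argument breaks — is keeping every empirical inner product pinned to a value function of the \emph{true} model: one must never concentrate against $V^{\widehat\psi^*}(\widehat\fp,\fr)$ or $V^{\psi^*}(\widehat\fp,\fr)$, which are data-dependent. The uniformity of $\mathcal{E}$ over \emph{all} $(s,a,\tilde y)$ is precisely what makes this possible, since it lets us bound conditional expectations taken along trajectories generated by possibly data-dependent policies (e.g.\ $\widehat\psi^*$ in the lower bound of Lemma~\ref{lemma: lemma 3}). The same uniformity is what keeps the logarithmic factor at $\log(2SAHd/\delta)$ — the union bound ranges only over original state–action pairs and the $d$ possible values of the accumulated reward — so the rate is $\sqrt S$-free, in the style of \citet{azar2013minimax}; the sharper $\sqrt{H^3}$ rate of Lemma~\ref{lemma: lemma 9} is then recovered by bootstrapping this crude $H^2$ estimate via Bernstein's inequality.
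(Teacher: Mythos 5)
Your proposal is correct and follows essentially the same route as the paper: the identical Hoeffding-plus-union-bound event over the $SAHd$ tuples $(s,a,h,\tilde y)$, concentrating only against the fixed true-model value functions $V^{\psi^*}_{h+1}(\cdot,\cdot;\fp,\fr)=V^*_{h+1}(\cdot,\cdot;\fp,\fr)$, then the value-difference expansion under $\widehat{\fp}$ for the fixed policy $\psi^*$ and a recursion for the optimal values, each yielding $H$ terms bounded by $H\sqrt{\log(2SAHd/\delta)/(2n)}$. Your handling of the second inequality via a direct sup-norm Bellman recursion (rather than the paper's unfolding into a trajectory expectation under the greedy policy $\overline{\psi}$) is a cosmetic variant of the same argument and is equally valid.
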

 \begin{proof}
   First, we observe that, for any $\hsy$, by following passages similar
   to those in the proof of Lemma \ref{lemma: lemma 3}:
   \begin{align*}
     &|V_h^*(s,y;\fp,\fr)-V^{\psi^*}_h(s,y;\widehat{\fp},\fr)|\\
     &\qquad=
     |Q_h^{\popblue{\psi^*}}(s,y,\popblue{\psi^*_h(s,y)};\fp,\fr)-
     Q^{\psi^*}_h(s,y,\popblue{\psi^*_h(s,y)};\widehat{\fp},\fr)|\\
     &\qquad=\Big|\fr_h(s,y,\psi^*_h(s,y))+\sum\limits_{(s',y')\in\cS\times\cY_{h+1}}
       \fp_{h}(s',y'|s,y,\psi^*_h(s,y))V_{h+1}^{\psi^*}(s',y';\fp,\fr)\\
       &\qquad\qquad-\Big(
         \fr_h(s,y,\psi^*_h(s,y))+\sum\limits_{(s',y')\in\cS\times\cY_{h+1}}
         \widehat{\fp}_{h}(s',y'|s,y,\psi^*_h(s,y))V_{h+1}^{\psi^*}(s',y';\widehat{\fp},\fr)\Big)\Big|\\
     &\qquad=\Big|\sum\limits_{(s',y')\in\cS\times\cY_{h+1}}
     \fp_{h}(s',y'|s,y,\psi^*_h(s,y))V_{h+1}^{\psi^*}(s',y';\fp,\fr)\\
     &\qquad\qquad-\sum\limits_{(s',y')\in\cS\times\cY_{h+1}}
       \widehat{\fp}_{h}(s',y'|s,y,\psi^*_h(s,y))V_{h+1}^{\psi^*}(s',y';\widehat{\fp},\fr)\\
       &\qquad\qquad\popblue{\pm \sum\limits_{(s',y')\in\cS\times\cY_{h+1}}
       \widehat{\fp}_{h}(s',y'|s,y,\psi^*_h(s,y))V_{h+1}^{\psi^*}(s',y';\fp,\fr)}\Big|\\
     &\qquad=\Big|\sum\limits_{(s',y')\in\cS\times\cY_{h+1}}\Big(
     \fp_{h}(s',y'|s,y,\psi^*_h(s,y))-\widehat{\fp}_{h}(s',y'|s,y,\psi^*_h(s,y))\Big)V_{h+1}^{\psi^*}(s',y';\fp,\fr)\\
     &\qquad\qquad+\sum\limits_{(s',y')\in\cS\times\cY_{h+1}}
       \widehat{\fp}_{h}(s',y'|s,y,\psi^*_h(s,y))\Big(V_{h+1}^{\psi^*}(s',y';\fp,\fr)-
       V_{h+1}^{\psi^*}(s',y';\widehat{\fp},\fr)
       \Big)\Big|\\
   &\qquad=\Big|\popblue{\sum\limits_{s'\in\cS}\Big(
     p_{h}(s'|s,\psi^*_h(s,y))-\widehat{p}_{h}(s'|s,\psi^*_h(s,y))\Big)
     V_{h+1}^{\psi^*}(s',y+\overline{r}_h(s,\psi^*_h(s,y));\fp,\fr)}\\
     &\qquad\qquad+\sum\limits_{(s',y')\in\cS\times\cY_{h+1}}
       \widehat{\fp}_{h}(s',y'|s,y,\psi^*_h(s,y))\Big(V_{h+1}^{\psi^*}(s',y';\fp,\fr)-
       V_{h+1}^{\psi^*}(s',y';\widehat{\fp},\fr)
       \Big)\Big|\\
       &\qquad=\dotsc\\
     &\qquad=\bigg|\mathop{\E}\limits_{\widehat{\fp},\fr,\psi^*}\bigg[
       \sum\limits_{h'=h}^H \sum\limits_{s'\in\cS}\Big(
         p_{h'}(s'|s_{h'},a_{h'})-\widehat{p}_{h'}(s'|s_{h'},a_{h'})\Big)
         V_{h'+1}^{\psi^*}(s',y_{h'+1};\fp,\fr)\\
         &\qquad\qquad\Big|\,s_h=s,y_h=y
         \bigg]\bigg|\\
     &\qquad\markref{(1)}{\le}
     \mathop{\E}\limits_{\widehat{\fp},\fr,\psi^*}\bigg[
       \sum\limits_{h'=h}^H \popblue{\bigg|}\sum\limits_{s'\in\cS}\Big(
         p_{h'}(s'|s_{h'},a_{h'})-\widehat{p}_{h'}(s'|s_{h'},a_{h'})\Big)
         V_{h'+1}^{\psi^*}(s',y_{h'+1};\fp,\fr)\popblue{\bigg|}\\
         &\qquad\qquad\bigg|\,s_h=s,y_h=y  
         \bigg],
   \end{align*}
 where at (1) we have brought the absolute value inside the expectation.
 
 Similarly, for the other term, for any $\hsy$, we can write:
 \begin{align*}
   &|V_h^*(s,y;\fp,\fr)-V^*_h(s,y;\widehat{\fp},\fr)|\\
   &\qquad=|V_h^{\popblue{\psi^*}}(s,y;\fp,\fr)
   -V^{\popblue{\widehat{\psi}^*}}_h(s,y;\widehat{\fp},\fr)|\\
   &\qquad\markref{(2)}{=}
   |\popblue{\max\limits_{a\in\cA}}Q_h^{\psi^*}(s,y,\popblue{a};\fp,\fr)-
   \popblue{\max\limits_{a\in\cA}}Q^{\widehat{\psi}^*}_h(s,y,\popblue{a};\widehat{\fp},\fr)|\\
   &\qquad\markref{(3)}{\le}
   \popblue{\max\limits_{a\in\cA}}|Q_h^{\psi^*}(s,y,a;\fp,\fr)-
   Q^{\widehat{\psi}^*}_h(s,y,a;\widehat{\fp},\fr)|\\
   &\qquad=\max\limits_{a\in\cA}\Big|\fr_h(s,y,a)+\sum\limits_{(s',y')\in\cS\times\cY_{h+1}}
   \fp_{h}(s',y'|s,y,a)V_{h+1}^{\psi^*}(s',y';\fp,\fr)\\
   &\qquad\qquad-\Big(
     \fr_h(s,y,a)+\sum\limits_{(s',y')\in\cS\times\cY_{h+1}}
     \widehat{\fp}_{h}(s',y'|s,y,a)V_{h+1}^{\widehat{\psi}^*}(s',y';\widehat{\fp},\fr)\Big)\Big|\\
 &\qquad=\max\limits_{a\in\cA}\Big|\sum\limits_{(s',y')\in\cS\times\cY_{h+1}}
 \fp_{h}(s',y'|s,y,a)V_{h+1}^{\psi^*}(s',y';\fp,\fr)\\
 &\qquad\qquad-\sum\limits_{(s',y')\in\cS\times\cY_{h+1}}
   \widehat{\fp}_{h}(s',y'|s,y,a)V_{h+1}^{\widehat{\psi}^*}(s',y';\widehat{\fp},\fr)\\
   &\qquad\qquad\popblue{\pm \sum\limits_{(s',y')\in\cS\times\cY_{h+1}}
   \widehat{\fp}_{h}(s',y'|s,y,a)V_{h+1}^{\psi^*}(s',y';\fp,\fr)}\Big|\\
 &\qquad=\max\limits_{a\in\cA}\Big|\sum\limits_{(s',y')\in\cS\times\cY_{h+1}}\Big(
 \fp_{h}(s',y'|s,y,a)-\widehat{\fp}_{h}(s',y'|s,y,a)\Big)V_{h+1}^{\psi^*}(s',y';\fp,\fr)\\
 &\qquad\qquad+\sum\limits_{(s',y')\in\cS\times\cY_{h+1}}
   \widehat{\fp}_{h}(s',y'|s,y,a)\Big(V_{h+1}^{\psi^*}(s',y';\fp,\fr)-
   V_{h+1}^{\widehat{\psi}^*}(s',y';\widehat{\fp},\fr)
   \Big)\Big|\\
   &\qquad\markref{(4)}{\le}\popblue{\Big|}\sum\limits_{(s',y')\in\cS\times\cY_{h+1}}\Big(
     \fp_{h}(s',y'|s,y,\popblue{\overline{a}})-\widehat{\fp}_{h}(s',y'|s,y,
     \popblue{\overline{a}})\Big)V_{h+1}^{\psi^*}(s',y';\fp,\fr)\popblue{\Big|}\\
     &\qquad\qquad+\popblue{\Big|}\sum\limits_{(s',y')\in\cS\times\cY_{h+1}}
       \widehat{\fp}_{h}(s',y'|s,y,\popblue{\overline{a}})\Big(V_{h+1}^{\psi^*}(s',y';\fp,\fr)-
       V_{h+1}^{\widehat{\psi}^*}(s',y';\widehat{\fp},\fr)
       \Big)\popblue{\Big|}\\
   &\qquad=\Big|\popblue{\sum\limits_{s'\in\cS}\Big(
     p_{h}(s'|s,\overline{a})-\widehat{p}_{h}(s'|s,\overline{a})\Big)
     V_{h+1}^{\psi^*}(s',y+\Pi_{\cR}[r_h(s,\overline{a})];\fp,\fr)}\Big|\\
     &\qquad\qquad+\Big|\sum\limits_{(s',y')\in\cS\times\cY_{h+1}}
       \widehat{\fp}_{h}(s',y'|s,y,\overline{a})\Big(V_{h+1}^{\psi^*}(s',y';\fp,\fr)-
       V_{h+1}^{\widehat{\psi}^*}(s',y';\widehat{\fp},\fr)
       \Big)\Big|\\
 &\qquad\le\dotsc\\
 &\qquad\markref{(5)}{\le}
 \mathop{\E}\limits_{\widehat{\fp},\fr,\popblue{\overline{\psi}}}\bigg[
       \sum\limits_{h'=h}^H \bigg|\sum\limits_{s'\in\cS}\Big(
         p_{h'}(s'|s_{h'},a_{h'})-\widehat{p}_{h'}(s'|s_{h'},a_{h'})\Big)
         V_{h'+1}^{\psi^*}(s',y_{h'+1};\fp,\fr)\bigg|\\
         &\qquad\qquad\bigg|\,s_h=s,y_h=y
         \bigg],
 \end{align*}
 where at (2) we have applied the Bellman optimality equation, at (3) we have
 upper bounded the difference of maxima with the maximum of the difference, at
 (4) we denote the maximal action by $\overline{a}$, and we apply triangle
 inequality; at (5) we have unfolded the recursion and called $\overline{\psi}$
 the resulting policy.
 
 Now, for some $\epsilon\in(0,1)$, let us denote by $\cE$ the event defined as:
 \begin{align*}
   \cE\coloneqq\bigg\{&
   \forall \hsya:\\
   &\qquad
   \big|\sum\limits_{s'\in\cS}\Big(
     p_h(s'|s,a)-\widehat{p}_h(s'|s,a)\Big)
     V_{h+1}^{\psi^*}(s',y+\overline{r}_h(s,a);\fp,\fr)\big|\le
     \epsilon
   \bigg\}
 \end{align*}
 We can write:
 \begin{align*}
   \P(\cE^\complement)&=\P\bigg(
     \exists \hsya:\\
     &\qquad\qquad
   \big|\sum\limits_{s'\in\cS}\Big(
     p_h(s'|s,a)-\widehat{p}_h(s'|s,a)\Big)
     V_{h+1}^{\psi^*}(s',y+\overline{r}_h(s,a);\fp,\fr)\big|>
     \epsilon
   \bigg)\\
   &\markref{(6)}{\le}
   \sum\limits_{\hsya}\\
   &\qquad\qquad
   \P\bigg(
   \big|\sum\limits_{s'\in\cS}\Big(
     p_h(s'|s,a)-\widehat{p}_h(s'|s,a)\Big)
     V_{h+1}^{\psi^*}(s',y+\overline{r}_h(s,a);\fp,\fr)\big|>
     \epsilon
   \bigg)\\
   &\markref{(7)}{\le}
   \sum\limits_{\hsya}
   2e^{\frac{-2 n \epsilon^2}{H^2}}\\
   &=2SAHd e^{\frac{-2 n \epsilon^2}{H^2}},
 \end{align*}
 where at (6) we have applied a union bound over all tuples
 $\hsya$, and at (7) we have applied Hoeffding's inequality, by
 recalling that we collect $n$ samples (see Algorithm \ref{alg: caty
 exploration}) for any $(s,a,h)\in\SAH$ triple, and that vector
 $V_{h+1}^{\psi^*}(\cdot,y+\overline{r}_h(s,a);\fp,\fr)$ bounded by $[0,H]$ is
 independent of the randomness in $\widehat{p}_h(\cdot|s,a)$. It should be
 remarked that our collection of samples depends only on $\SAH$, and not on
 $\cY_h$; such term enters the expression only through the union bound, because
 we have to apply Hoeffding's inequality for all the value functions considered,
 which are as many as $|\cY_h|$ . Note that we use $d=|\cY_{H+1}|$ since it is
 the largest $|\cY_h|$ among $h\in\dsb{H+1}$.
 
 This probability is at most $\delta$ if:
 \begin{align*}
   2SAHd e^{\frac{-2 n \epsilon^2}{H^2}}\le\delta\iff
   \epsilon \ge H\sqrt{\frac{\log\frac{2SAHd}{\delta}}{2 n}}.
 \end{align*}
 
 By plugging into the previous expressions, we obtain that, w.p. $1-\delta$:
 \begin{align*}
   &|V_h^*(s,y;\fp,\fr)-V^{\psi^*}_h(s,y;\widehat{\fp},\fr)|\\
   &\qquad\le
   \mathop{\E}\limits_{\widehat{\fp},\fr,\psi^*}\bigg[
       \sum\limits_{h'=h}^H \bigg|\sum\limits_{s'\in\cS}\Big(
         p_{h'}(s'|s_{h'},a_{h'})-\widehat{p}_{h'}(s'|s_{h'},a_{h'})\Big)
         V_{h'+1}^{\psi^*}(s',y_{h'+1};\fp,\fr)\bigg|\\
         &\qquad\qquad\bigg|\,s_h=s,y_h=y
         \bigg]\\
   &\qquad\le \mathop{\E}\limits_{\widehat{\fp},\fr,\psi^*}\bigg[
     \sum\limits_{h'=h}^H 
     H\sqrt{\frac{\log\frac{2SAHd}{\delta}}{2 n}}
     \,\bigg|\,s_h=s,y_h=y  
       \bigg]\\
   &\qquad= H^2\sqrt{\frac{\log\frac{2SAHd}{\delta}}{2 n}},
 \end{align*}
  and also:
  \begin{align*}
   &|V_h^*(s,y;\fp,\fr)-V^*_h(s,y;\widehat{\fp},\fr)|\\
   &\qquad\le
   \mathop{\E}\limits_{\widehat{\fp},\fr,\overline{\psi}}\bigg[
       \sum\limits_{h'=h}^H \bigg|\sum\limits_{s'\in\cS}\Big(
         p_{h'}(s'|s_{h'},a_{h'})-\widehat{p}_{h'}(s'|s_{h'},a_{h'})\Big)
         V_{h'+1}^{\psi^*}(s',y_{h'+1};\fp,\fr)\bigg|\\
         &\qquad\qquad\bigg|\,s_h=s,y_h=y
         \bigg]\\
   &\qquad\le \mathop{\E}\limits_{\widehat{\fp},\fr,\overline{\psi}}\bigg[
     \sum\limits_{h'=h}^H 
     H\sqrt{\frac{\log\frac{2SAHd}{\delta}}{2 n}}
     \,\bigg|\,s_h=s,y_h=y    
       \bigg]\\
   &\qquad= H^2\sqrt{\frac{\log\frac{2SAHd}{\delta}}{2 n}}.
 \end{align*}
 
 This concludes the proof.
 
 \end{proof}
 
 \begin{lemma}\label{lemma: lemma 5}
   For any $\delta\in(0,1)$, w.p. at least $1-\delta$, it holds that, for all $\hsya$:
   \begin{align*}
     &\sqrt{\V_{s'\sim p_h(\cdot|s,a)}[V^*_{h+1}(s',y+\overline{r}_h(s,a);\fp,\fr)]}\le\\
     &\qquad\qquad
     \sqrt{\V_{s'\sim \widehat{p}_h(\cdot|s,a)}
     [V^{\psi^*}_{h+1}(s',y+\overline{r}_h(s,a);\widehat{\fp},\fr)]}+b_1,\\
     &\sqrt{\V_{s'\sim p_h(\cdot|s,a)}[V^*_{h+1}(s',y+\overline{r}_h(s,a);\fp,\fr)]}\le\\
     &\qquad\qquad\sqrt{\V_{s'\sim \widehat{p}_h(\cdot|s,a)}
     [V^{*}_{h+1}(s',y+\overline{r}_h(s,a);\widehat{\fp},\fr)]}+b_1,
   \end{align*}
   where $b_1$ is defined as:
   \begin{align*}
     b_1\coloneqq cH\bigg(\frac{\log\frac{4SAHY}{\delta}}{n}\bigg)^{1/4}
     + c'H^2\sqrt{\frac{\log\frac{4SAHY}{\delta}}{n}},
   \end{align*}
   for some positive constants $c,c'$.
 \end{lemma}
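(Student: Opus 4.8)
The plan is to bound the left-hand side by routing through the \emph{deterministic} value function $V^*_{h+1}(\cdot;\fp,\fr)$ in two steps: (i) a change of the underlying distribution, from $p_h(\cdot|s,a)$ to $\widehat p_h(\cdot|s,a)$, carried out while keeping this deterministic function as the integrand; and (ii) a change of the integrand, from $V^*_{h+1}(\cdot;\fp,\fr)$ to $V^{\psi^*}_{h+1}(\cdot;\widehat\fp,\fr)$ (resp.\ to $V^*_{h+1}(\cdot;\widehat\fp,\fr)$), carried out under the fixed distribution $\widehat p_h(\cdot|s,a)$. The two steps are glued by the elementary observation that, for any probability measure $Q$, the map $X\mapsto\sqrt{\V_{s'\sim Q}[X(s')]}$ is a seminorm on $L^2(Q)$ and hence subadditive in $X$ (Cauchy--Schwarz on the covariance term), together with $\sqrt{a+b}\le\sqrt a+\sqrt b$ for $a,b\ge0$.

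\textbf{Step (i).} Fix $\hsya$, let $y''\coloneqq y+\overline r_h(s,a)$, and write $g\coloneqq V^*_{h+1}(\cdot,y'';\fp,\fr):\cS\to[0,H]$, a function that is \emph{fixed} since it depends only on the true transition model and on the non-random reward. Expanding $\V_{s'\sim p_h(\cdot|s,a)}[g]-\V_{s'\sim\widehat p_h(\cdot|s,a)}[g]=\big(\E_p[g^2]-\E_{\widehat p}[g^2]\big)-\big((\E_p g)^2-(\E_{\widehat p}g)^2\big)$ and applying Hoeffding's inequality to the $n$ i.i.d.\ samples defining $\widehat p_h(\cdot|s,a)$ (using $g\in[0,H]$, $g^2\in[0,H^2]$, and $|(\E_p g)^2-(\E_{\widehat p}g)^2|\le 2H\,|\E_p g-\E_{\widehat p}g|$), I would get $|\V_p[g]-\V_{\widehat p}[g]|\le \beta$ with $\beta=c_0 H^2\sqrt{\log(SAHd/\delta)/n}$, after a union bound over the $\cO(SAHd)$ distinct quadruples $(s,a,h,y'')$. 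Hence $\sqrt{\V_p[g]}\le\sqrt{\V_{\widehat p}[g]+\beta}\le\sqrt{\V_{\widehat p}[g]}+\sqrt\beta$, and $\sqrt\beta$ is of order $H(\log(SAHd/\delta)/n)^{1/4}$, which is the first term of $b_1$.

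\textbf{Step (ii).} Let $f$ denote either $V^{\psi^*}_{h+1}(\cdot,y'';\widehat\fp,\fr)$ or $V^*_{h+1}(\cdot,y'';\widehat\fp,\fr)$. Subadditivity of $\sqrt{\V_{\widehat p}[\cdot]}$ gives $\sqrt{\V_{\widehat p}[g]}\le\sqrt{\V_{\widehat p}[f]}+\sqrt{\V_{\widehat p}[g-f]}$. By Lemma~\ref{lemma: lemma 4}, both $\max|V^*_{h+1}(\cdot;\fp,\fr)-V^{\psi^*}_{h+1}(\cdot;\widehat\fp,\fr)|$ and $\max|V^*_{h+1}(\cdot;\fp,\fr)-V^*_{h+1}(\cdot;\widehat\fp,\fr)|$ are at most $cH^2\sqrt{\log(2SAHd/\delta)/n}$, so $\|g-f\|_\infty$ is bounded by the same quantity; therefore $\V_{\widehat p}[g-f]\le\|g-f\|_\infty^2$ and $\sqrt{\V_{\widehat p}[g-f]}\le cH^2\sqrt{\log(2SAHd/\delta)/n}$, the second term of $b_1$. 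Chaining Steps (i) and (ii), union-bounding over all $\hsya$, and splitting $\delta$ between the Hoeffding event ($\delta/2$) and the event of Lemma~\ref{lemma: lemma 4} ($\delta/2$) yields $\sqrt{\V_p[V^*_{h+1}(\cdot;\fp,\fr)]}\le\sqrt{\V_{\widehat p}[V^{\psi^*}_{h+1}(\cdot;\widehat\fp,\fr)]}+b_1$ (and likewise with $V^*_{h+1}(\cdot;\widehat\fp,\fr)$ on the right), with $b_1$ as stated.

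The main obstacle is exactly what forces this routing through $g=V^*_{h+1}(\cdot;\fp,\fr)$: the quantities $V^{\psi^*}_{h+1}(\cdot;\widehat\fp,\fr)$ and $V^*_{h+1}(\cdot;\widehat\fp,\fr)$ are \emph{random}, being functions of the sampled model $\widehat p$, so one cannot concentrate $\sum_{s'}(p_h(s'|s,a)-\widehat p_h(s'|s,a))f(s')$ directly --- the integrand is correlated with the very noise one wants to control. The fix is to apply concentration only to the deterministic $g$ and to pay for the substitution of $g$ by $f$ solely through the deterministic sup-norm bound of Lemma~\ref{lemma: lemma 4}. The only place requiring genuine care is the bookkeeping: keeping track of which objects are random, and getting the union-bound cardinality right --- the factor scaling with $\cY$ (written $Y$ in the statement of $b_1$, of the same order as $d=|\cY|$) enters through the value functions indexed by the accumulated-reward component, even though the sampling scheme of \texttt{EXPLORE} itself only indexes over $\SAH$.
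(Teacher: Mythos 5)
Your proof is correct and follows essentially the same route as the paper: split into a measure change with the deterministic integrand $V^*_{h+1}(\cdot;\fp,\fr)$ (Hoeffding on $g$ and $g^2$, yielding the $H(\log(\cdot)/n)^{1/4}$ term via $\sqrt{a+b}\le\sqrt a+\sqrt b$) and an integrand change under $\widehat p$ controlled by the sup-norm bound of Lemma~\ref{lemma: lemma 4}, with the same $SAHd$ union-bound cardinality and $\delta/2$--$\delta/2$ split. The only difference is cosmetic: you invoke subadditivity of the standard deviation directly, whereas the paper expands $\V[X+Y]$ with a covariance term and applies Cauchy--Schwarz, which is the same inequality in squared form.
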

 \begin{proof}
   In the following, we will use $\overline{y}$ as a label for $y+\overline{r}_h(s,a)$.
   We begin with the first expression. We can write, for any $\hsya$:
   \begin{align*}
     &\V_{s'\sim p_h(\cdot|s,a)}[V^*_{h+1}(s',\overline{y};\fp,\fr)]\\
     &\qquad=\V_{s'\sim p_h(\cdot|s,a)}[V^*_{h+1}(s',\overline{y};\fp,\fr)]
 \popblue{\pm \V_{s'\sim \widehat{p}_h(\cdot|s,a)}[V^*_{h+1}(s',\overline{y};\fp,\fr)]}\\
 &\qquad=\Big(\V_{s'\sim p_h(\cdot|s,a)}[V^*_{h+1}(s',\overline{y};\fp,\fr)]-
 \V_{s'\sim \widehat{p}_h(\cdot|s,a)}[V^*_{h+1}(s',\overline{y};\fp,\fr)]\Big)\\
 &\qquad\qquad+
 \V_{s'\sim \widehat{p}_h(\cdot|s,a)}[V^*_{h+1}(s',\overline{y};\fp,\fr)]\\
 &\qquad\markref{(1)}{=}
 \sum\limits_{s'\in\cS}\Big(p_h(s'|s,a)-\widehat{p}_h(s'|s,a)\Big)
 V^{*^{\popblue{2}}}_{h+1}(s',\overline{y};\fp,\fr)\\
 &\qquad\qquad -
 \Big[\Big(\sum\limits_{s'\in\cS}p_h(s'|s,a)V^{*}_{h+1}(s',\overline{y};\fp,\fr)\Big)^{\popblue{2}}\\
 &\qquad\qquad-
 \Big(\sum\limits_{s'\in\cS}\widehat{p}_h(s'|s,a)
 V^{*}_{h+1}(s',\overline{y};\fp,\fr)\Big)^{\popblue{2}}\Big]\\
 &\qquad\qquad +\V_{s'\sim \widehat{p}_h(\cdot|s,a)}[V^*_{h+1}(s',\overline{y};\fp,\fr)\popblue{\pm 
 V^{\psi^*}_{h+1}(s',\overline{y};\widehat{\fp},\fr)}]\\
 &\qquad\markref{(2)}{=}
 \sum\limits_{s'\in\cS}\Big(p_h(s'|s,a)-\widehat{p}_h(s'|s,a)\Big)
 V^{*^{2}}_{h+1}(s',\overline{y};\fp,\fr)\\
 &\qquad\qquad -
 \Big[\Big(\sum\limits_{s'\in\cS}p_h(s'|s,a)V^{*}_{h+1}(s',\overline{y};\fp,\fr)\Big)^{2}\\
 &\qquad\qquad-
 \Big(\sum\limits_{s'\in\cS}\widehat{p}_h(s'|s,a)
 V^{*}_{h+1}(s',\overline{y};\fp,\fr)\Big)^{2}\Big]\\
 &\qquad\qquad +\popblue{\V_{s'\sim \widehat{p}_h(\cdot|s,a)}
 [V^*_{h+1}(s',\overline{y};\fp,\fr)-
 V^{\psi^*}_{h+1}(s',\overline{y};\widehat{\fp},\fr)]}\\
 &\qquad\qquad\popblue{+
 \V_{s'\sim \widehat{p}_h(\cdot|s,a)}[V^{\psi^*}_{h+1}
 (s',\overline{y};\widehat{\fp},\fr)]}\\
 &\qquad\qquad\popblue{+2\text{Cov}_{s'\sim \widehat{p}_h(\cdot|s,a)}
 [V^*_{h+1}(s',\overline{y};\fp,\fr)-
 V^{\psi^*}_{h+1}(s',\overline{y};\widehat{\fp},\fr),}\\
 &\qquad\qquad\popblue{V^{\psi^*}_{h+1}(s',\overline{y};\widehat{\fp},\fr)]}\\
 &\qquad\markref{(3)}{\le}
 \sum\limits_{s'\in\cS}\Big(p_h(s'|s,a)-\widehat{p}_h(s'|s,a)\Big)
 V^{*^{2}}_{h+1}(s',\overline{y};\fp,\fr)\\
 &\qquad\qquad -
 \Big[\Big(\sum\limits_{s'\in\cS}p_h(s'|s,a)V^{*}_{h+1}(s',\overline{y};\fp,\fr)\Big)^{2}\\
 &\qquad\qquad-
 \Big(\sum\limits_{s'\in\cS}\widehat{p}_h(s'|s,a)
 V^{*}_{h+1}(s',\overline{y};\fp,\fr)\Big)^{2}\Big]\\
 &\qquad\qquad +\V_{s'\sim \widehat{p}_h(\cdot|s,a)}[V^*_{h+1}(s',\overline{y};\fp,\fr)-
 V^{\psi^*}_{h+1}(s',\overline{y};\widehat{\fp},\fr)]\\
 &\qquad\qquad+
 \V_{s'\sim \widehat{p}_h(\cdot|s,a)}[V^{\psi^*}_{h+1}(s',\overline{y};\widehat{\fp},\fr)]\\
 &\qquad\qquad+2\popblue{\Big(\V_{s'\sim \widehat{p}_h(\cdot|s,a)}[V^*_{h+1}(s',\overline{y};\fp,\fr)-
 V^{\psi^*}_{h+1}(s',\overline{y};\widehat{\fp},\fr)]}\\
 &\qquad\qquad\popblue{\cdot
 \V_{s'\sim \widehat{p}_h(\cdot|s,a)}[V^{\psi^*}_{h+1}(s',\overline{y};\widehat{\fp},\fr)]\Big)^{1/2}}\\
 &\qquad=\sum\limits_{s'\in\cS}\Big(p_h(s'|s,a)-\widehat{p}_h(s'|s,a)\Big)
 V^{*^{2}}_{h+1}(s',\overline{y};\fp,\fr)\\
 &\qquad\qquad -
 \Big[\Big(\sum\limits_{s'\in\cS}p_h(s'|s,a)V^{*}_{h+1}(s',\overline{y};\fp,\fr)\Big)^{2}\\
 &\qquad\qquad-
 \Big(\sum\limits_{s'\in\cS}\widehat{p}_h(s'|s,a)
 V^{*}_{h+1}(s',\overline{y};\fp,\fr)\Big)^{2}\Big]\\
 &\qquad\qquad +\popblue{\Big[
 \sqrt{\V_{s'\sim \widehat{p}_h(\cdot|s,a)}[V^*_{h+1}(s',\overline{y};\fp,\fr)-
 V^{\psi^*}_{h+1}(s',\overline{y};\widehat{\fp},\fr)]}}\\
 &\qquad\qquad\popblue{+\sqrt{
   \V_{s'\sim \widehat{p}_h(\cdot|s,a)}[V^{\psi^*}_{h+1}(s',\overline{y};\widehat{\fp},\fr)]
 }
 \Big]^2},
   \end{align*}
 where at (1) we have used the common formula for the variance
 $\V[X]=\E[X^2]-\E[X]^2$,
 at (2) we have decomposed the variance of a sum as $\V[X+Y] = \V[X] + \V[Y] +
 2\text{Cov}[X,Y]$,
 at (3) we have applied Cauchy-Schwarz's inequality to bound the covariance with
 the product of the variances $|\text{Cov}[X,Y]|\le\sqrt{\V[X]\V[Y]}$.
 
 Next, observe that:
 \begin{align*}
   &\V_{s'\sim \widehat{p}_h(\cdot|s,a)}[V^*_{h+1}(s',\overline{y};\fp,\fr)-
 V^{\psi^*}_{h+1}(s',\overline{y};\widehat{\fp},\fr)]\\
 &\qquad\markref{(4)}{=}
 \E_{s'\sim \widehat{p}_h(\cdot|s,a)}[(V^*_{h+1}(s',\overline{y};\fp,\fr)-
 V^{\psi^*}_{h+1}(s',\overline{y};\widehat{\fp},\fr))^{\popblue{2}}]\\
 &\qquad\qquad-
 \E_{s'\sim \widehat{p}_h(\cdot|s,a)}[V^*_{h+1}(s',\overline{y};\fp,\fr)-
 V^{\psi^*}_{h+1}(s',\overline{y};\widehat{\fp},\fr)]^{\popblue{2}}\\
 &\qquad\markref{(5)}{\le}
 \E_{s'\sim \widehat{p}_h(\cdot|s,a)}[(V^*_{h+1}(s',\overline{y};\fp,\fr)-
 V^{\psi^*}_{h+1}(s',\overline{y};\widehat{\fp},\fr))^2]\\
 &\qquad\markref{(6)}{\le}
 \|(V^*_{h+1}(\popblue{\cdot},\overline{y};\fp,\fr)-
 V^{\psi^*}_{h+1}(\popblue{\cdot},\overline{y};\widehat{\fp},\fr))^2\|_\infty\\
 &\qquad=
 \|V^*_{h+1}(\cdot,\overline{y};\fp,\fr)-
 V^{\psi^*}_{h+1}(\cdot,\overline{y};\widehat{\fp},\fr)\|_\infty^{\popblue{2}},
 \end{align*}
 where at (4) we have used $\V[X]=\E[X^2]-\E[X]^2$, at (5) we recognize that the
 second term is a square, thus always positive, and we remove it, and at (6) we
 have upper bounded the expected value, an average, through the infinity norm.
 
 Thanks to this expression, we can continue to upper bound the previous term
 as:
 \begin{align*}
   &\V_{s'\sim p_h(\cdot|s,a)}[V^*_{h+1}(s',\overline{y};\fp,\fr)]\\
   &\qquad\le
   \sum\limits_{s'\in\cS}\Big(p_h(s'|s,a)-\widehat{p}_h(s'|s,a)\Big)
 V^{*^{2}}_{h+1}(s',\overline{y};\fp,\fr)\\
 &\qquad\qquad -
 \Big[\Big(\sum\limits_{s'\in\cS}p_h(s'|s,a)V^{*}_{h+1}(s',\overline{y};\fp,\fr)\Big)^{2}\\
 &\qquad\qquad-
 \Big(\sum\limits_{s'\in\cS}\widehat{p}_h(s'|s,a)
 V^{*}_{h+1}(s',\overline{y};\fp,\fr)\Big)^{2}\Big]\\
 &\qquad\qquad
 +\Big[\popblue{\|V^*_{h+1}(\cdot,\overline{y};\fp,\fr)-
 V^{\psi^*}_{h+1}(\cdot,\overline{y};\widehat{\fp},\fr)\|_\infty}\\
 &\qquad\qquad+\sqrt{
   \V_{s'\sim \widehat{p}_h(\cdot|s,a)}[V^{\psi^*}_{h+1}(s',\overline{y};\widehat{\fp},\fr)]}\Big]^2\\
 &\qquad\markref{(7)}{=}
   \sum\limits_{s'\in\cS}\Big(p_h(s'|s,a)-\widehat{p}_h(s'|s,a)\Big)
 V^{*^{2}}_{h+1}(s',\overline{y};\fp,\fr)\\
 &\qquad\qquad -
 \Big[\popblue{\Big(\sum\limits_{s'\in\cS}(p_h(s'|s,a)-\widehat{p}_h(s'|s,a))
 V^{*}_{h+1}(s',\overline{y};\fp,\fr)\Big)}\\
 &\qquad\qquad\popblue{\cdot
 \Big(\sum\limits_{s'\in\cS}(p_h(s'|s,a)+\widehat{p}_h(s'|s,a))V^{*}_{h+1}(s',\overline{y};\fp,\fr)\Big)}\Big]\\
 &\qquad\qquad
 +\Big[\|V^*_{h+1}(\cdot,\overline{y};\fp,\fr)-
 V^{\psi^*}_{h+1}(\cdot,\overline{y};\widehat{\fp},\fr)\|_\infty\\
 &\qquad\qquad+\sqrt{
   \V_{s'\sim \widehat{p}_h(\cdot|s,a)}[V^{\psi^*}_{h+1}(s',\overline{y};\widehat{\fp},\fr)]}\Big]^2\\
 &\qquad\markref{(8)}{\le}
   \sum\limits_{s'\in\cS}\Big(p_h(s'|s,a)-\widehat{p}_h(s'|s,a)\Big)
 V^{*^{2}}_{h+1}(s',\overline{y};\fp,\fr)\\
 &\qquad\qquad -
 \Big[\Big(\sum\limits_{s'\in\cS}(p_h(s'|s,a)-\widehat{p}_h(s'|s,a))
 V^{*}_{h+1}(s',\overline{y};\fp,\fr)\Big)\\
 &\qquad\qquad
 \cdot\Big(\sum\limits_{s'\in\cS}(p_h(s'|s,a)+\widehat{p}_h(s'|s,a))V^{*}_{h+1}(s',\overline{y};\fp,\fr)\Big)\Big]\\
 &\qquad\qquad
 +\Big[\popblue{cH^2\sqrt{\frac{\log\frac{4SAHd}{\delta}}{ n}}}+\sqrt{
   \V_{s'\sim \widehat{p}_h(\cdot|s,a)}[V^{\psi^*}_{h+1}(s',\overline{y};\widehat{\fp},\fr)]}\Big]^2\\
   &\qquad\markref{(9)}{\le}
   \sum\limits_{s'\in\cS}\Big(p_h(s'|s,a)-\widehat{p}_h(s'|s,a)\Big)
 V^{*^{2}}_{h+1}(s',\overline{y};\fp,\fr)\\
 &\qquad\qquad+\popblue{2H
 \Big|}\sum\limits_{s'\in\cS}(p_h(s'|s,a)-\widehat{p}_h(s'|s,a))V^{*}_{h+1}(s',\overline{y};\fp,\fr)\popblue{\Big|}\\
 &\qquad\qquad
 +\Big[cH^2\sqrt{\frac{\log\frac{4SAHd}{\delta}}{ n}}+\sqrt{
   \V_{s'\sim \widehat{p}_h(\cdot|s,a)}[V^{\psi^*}_{h+1}(s',\overline{y};\widehat{\fp},\fr)]}\Big]^2\\
   &\qquad\markref{(10)}{\le}
   \sum\limits_{s'\in\cS}\Big(p_h(s'|s,a)-\widehat{p}_h(s'|s,a)\Big)
 V^{*^{2}}_{h+1}(s',\overline{y};\fp,\fr)\\
 &\qquad\qquad+\popblue{2cH^2\sqrt{\frac{\log\frac{4SAHd}{\delta}}{ n}}}\\
 &\qquad\qquad
 +\Big[cH^2\sqrt{\frac{\log\frac{4SAHd}{\delta}}{ n}}+\sqrt{
   \V_{s'\sim \widehat{p}_h(\cdot|s,a)}[V^{\psi^*}_{h+1}(s',\overline{y};\widehat{\fp},\fr)]}\Big]^2\\
   &\qquad\markref{(11)}{\le}
   \popblue{cH^2\sqrt{\frac{\log\frac{4SAHd}{\delta}}{ n}}}
 +2cH^2\sqrt{\frac{\log\frac{4SAHd}{\delta}}{ n}}\\
 &\qquad\qquad
 +\Big[cH^2\sqrt{\frac{\log\frac{4SAHd}{\delta}}{ n}}+\sqrt{
   \V_{s'\sim \widehat{p}_h(\cdot|s,a)}[V^{\psi^*}_{h+1}(s',\overline{y};\widehat{\fp},\fr)]}\Big]^2\\
   &\qquad=\popblue{3}cH^2\sqrt{\frac{\log\frac{4SAHd}{\delta}}{ n}}\\
   &\qquad\qquad
   +\Big[cH^2\sqrt{\frac{\log\frac{4SAHd}{\delta}}{ n}}+\sqrt{
     \V_{s'\sim \widehat{p}_h(\cdot|s,a)}[V^{\psi^*}_{h+1}(s',\overline{y};\widehat{\fp},\fr)]}\Big]^2,
 \end{align*}
 where at (7) we have applied the common formula $x^2-y^2=(x-y)(x+y)$, at (8) we
 have applied Lemma \ref{lemma: lemma 4} using probability $\delta'=\delta/2$,
 and noticing that, for how the discretized MDP is constructed, we have that
 $\overline{y}\in\cY$, at (9) we have upper bounded the second term with the
 absolute value and recognized that the value function does not exceed $H$ and
 the sum of probabilities is no greater than $2$; at (10) we recognize that, in
 the proof of Lemma \ref{lemma: lemma 4}, we had already bounded that term, thus,
 under the event $\cE$ which holds w.p. $1-\delta/2$, we have that bound; at (11)
 we have applied Hoeffding's inequality to all tuples $\hsya$ with
 probability $\delta/(2SAHd)$, and noticed that the square of the value function does
 not exceed $H^2$.
 
 Observe that the previous formula holds for all $\hsya$ w.p.
 $1-\delta$ (by summing the two $\delta/2$ through a union bound). By taking the
 square root of both sides, we obtain:
 \begin{align*}
   &\sqrt{\V_{s'\sim p_h(\cdot|s,a)}[V^*_{h+1}(s',\overline{y};\fp,\fr)]}\\
   &\qquad\le
   \Big(3cH^2\sqrt{\frac{\log\frac{4SAHd}{\delta}}{ n}}
   +\Big[cH^2\sqrt{\frac{\log\frac{4SAHd}{\delta}}{ n}}\\
   &\qquad\qquad+\sqrt{
     \V_{s'\sim \widehat{p}_h(\cdot|s,a)}
     [V^{\psi^*}_{h+1}(s',\overline{y};\widehat{\fp},\fr)]}\Big]^2\Big)^{1/2}\\
     &\qquad\markref{(12)}{\le}\underbrace{c'H\sqrt[4]{\frac{\log\frac{4SAHY}{\delta}}{ n}}
     +cH^2\sqrt{\frac{\log\frac{4SAHY}{\delta}}{ n}}}_{\popblue{\eqqcolon b_1}}\\
     &\qquad\qquad+\sqrt{
       \V_{s'\sim \widehat{p}_h(\cdot|s,a)}[V^{\psi^*}_{h+1}(s',\overline{y};\widehat{\fp},\fr)]}\\
     &\qquad= \sqrt{
       \V_{s'\sim \widehat{p}_h(\cdot|s,a)}[V^{\psi^*}_{h+1}(s',\overline{y};\widehat{\fp},\fr)]}+ b_1,
 \end{align*}
 where at (12) we have used the fact that $\sqrt{a+b}\le\sqrt{a}+\sqrt{b}$.
 
 To prove the second formula, the passages are basically the same, the only
 difference is that, at passage (1), we sum and subtract
 $V^{\widehat{\psi}^*}_{h+1}(s',\overline{y};\widehat{\fp},\fr)$ instead of
 $V^{\psi^*}_{h+1}(s',\overline{y};\widehat{\fp},\fr)$, and that at passage (8) we
 apply the other expression in Lemma \ref{lemma: lemma 4}. This concludes the
 proof.
 \end{proof}
 
 \begin{lemma}\label{lemma: lemma 6}
   For any $\delta\in(0,1)$, define:
   \begin{align*}
     c_1&\coloneqq \log\frac{2SAHd}{\delta},\\
     b_2&\coloneqq cH\bigg(\frac{\log\frac{8SAHd}{\delta}}{n}\bigg)^{3/4}
     +c'H^2\frac{\log\frac{8SAHd}{\delta}}{n},
   \end{align*}
   for some positive constants $c,c'$.
   Then, w.p. at least $1-\delta$, we have, for all $\hsya$:
   \begin{align*}
     &\sum\limits_{s'\in\cS}\Big(p_h(s'|s,a)-\widehat{p}_h(s'|s,a)\Big)
     V^*_{h+1}(s',y+\overline{r}_h(s,a);\fp,\fr)\\
     &\qquad\le c''\sqrt{\frac{c_1 
     \V_{s'\sim \widehat{p}_h(\cdot|s,a)}[V^{\psi^*}_{h+1}(s',y+\overline{r}_h(s,a);\widehat{\fp},\fr)]}{n}}
     +b_2,\\
     &\sum\limits_{s'\in\cS}\Big(p_h(s'|s,a)-\widehat{p}_h(s'|s,a)\Big)
     V^*_{h+1}(s',y+\overline{r}_h(s,a);\fp,\fr)\\
     &\qquad\ge -c'''\sqrt{\frac{c_1 
     \V_{s'\sim \widehat{p}_h(\cdot|s,a)}[V^{*}_{h+1}(s',y+\overline{r}_h(s,a);\widehat{\fp},\fr)]}{n}}
     +b_2,
   \end{align*}
   for some positive constants $c'',c'''$.
 \end{lemma}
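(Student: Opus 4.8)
The plan is to treat Lemma~\ref{lemma: lemma 6} as a Bernstein-type concentration statement in the enlarged MDP, following the template of Theorem~1 of \citet{azar2013minimax}. First I would fix a triple $(s,a,h)$ and abbreviate $\overline{y}\coloneqq y+\overline{r}_h(s,a)$, observing the crucial point that the vector $V^*_{h+1}(\cdot,\overline{y};\fp,\fr)$ depends only on the true model $\fp,\fr$ and is therefore \emph{independent} of the $n$ i.i.d.\ samples $s'\sim p_h(\cdot|s,a)$ used by \texttt{EXPLORE} (Algorithm~\ref{alg: caty exploration}) to build $\widehat{p}_h(\cdot|s,a)$. Hence $\sum_{s'}\widehat{p}_h(s'|s,a)V^*_{h+1}(s',\overline{y};\fp,\fr)$ is an empirical mean of $n$ i.i.d.\ random variables bounded in $[0,H]$ with true mean $\sum_{s'}p_h(s'|s,a)V^*_{h+1}(s',\overline{y};\fp,\fr)$, so Bernstein's inequality gives, with probability at least $1-\delta'$,
\[
\Big|\sum_{s'\in\cS}\big(p_h(s'|s,a)-\widehat{p}_h(s'|s,a)\big)V^*_{h+1}(s',\overline{y};\fp,\fr)\Big|
\le \sqrt{\frac{2\V_{s'\sim p_h(\cdot|s,a)}[V^*_{h+1}(s',\overline{y};\fp,\fr)]\log\frac{2}{\delta'}}{n}}+\frac{H\log\frac{2}{\delta'}}{3n}.
\]

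Second, I would replace the \emph{true-model} variance $\V_{s'\sim p_h(\cdot|s,a)}[V^*_{h+1}(s',\overline{y};\fp,\fr)]$ in the leading term by the \emph{empirical-model} variances demanded by the statement. This is exactly what Lemma~\ref{lemma: lemma 5} supplies: its first inequality upper bounds $\sqrt{\V_{s'\sim p_h(\cdot|s,a)}[V^*_{h+1}(s',\overline{y};\fp,\fr)]}$ by $\sqrt{\V_{s'\sim\widehat{p}_h(\cdot|s,a)}[V^{\psi^*}_{h+1}(s',\overline{y};\widehat{\fp},\fr)]}+b_1$ (this yields the first, upper, inequality of Lemma~\ref{lemma: lemma 6}), while its second inequality does the same with $V^*_{h+1}(\cdot,\overline{y};\widehat{\fp},\fr)$ in place of $V^{\psi^*}_{h+1}$ (this yields the second, lower, inequality). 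Plugging either of these into the Bernstein bound and using $\sqrt{a+b}\le\sqrt a+\sqrt b$, the leading term becomes $\sqrt{2\log\tfrac2{\delta'}/n}\cdot\big(\sqrt{\V_{s'\sim\widehat{p}_h(\cdot|s,a)}[\,\cdot\,]}+b_1\big)$; after substituting $b_1=\Theta\big(H(\log(\cdot)/n)^{1/4}+H^2\sqrt{\log(\cdot)/n}\big)$ from Lemma~\ref{lemma: lemma 5}, the $b_1$ part contributes $\Theta\big(H(\log(\cdot)/n)^{3/4}+H^2\log(\cdot)/n\big)$, which together with the $\Theta(H\log(\cdot)/n)$ Bernstein remainder is absorbed into the stated $b_2$, while $\log\tfrac2{\delta'}$ becomes $c_1=\log\tfrac{2SAHd}{\delta}$ after the union bound.

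Third, I would take a union bound over all $SAHd$ tuples $(s,a,h,y)$ (with $y\in\cY_h\subseteq\cY$, $|\cY|=d$), splitting the failure budget $\delta$ between the Bernstein step — with $\delta'=\delta/(2SAHd)$, which produces the $\log\tfrac{2SAHd}{\delta}$ and $\log\tfrac{8SAHd}{\delta}$ factors in $c_1$ and $b_2$ — and the single invocation of Lemma~\ref{lemma: lemma 5} with budget $\delta/2$. The main obstacle is precisely this transition from the true-model variance to the empirical-model variance of the empirically-estimated value function: $V^{\psi^*}_{h+1}(\cdot;\widehat{\fp},\fr)$ and $V^*_{h+1}(\cdot;\widehat{\fp},\fr)$ are functions of the very samples defining $\widehat{p}$, so one cannot concentrate with them as the test vector and must route everything through the fixed vector $V^*_{h+1}(\cdot;\fp,\fr)$ and then pay the $b_1$ correction of Lemma~\ref{lemma: lemma 5}; keeping track of the constants, the nesting of the two high-probability events, and the merging of the various $n^{-3/4}$ and $n^{-1}$ lower-order terms into the stated $b_2$ is where the bookkeeping effort lies.
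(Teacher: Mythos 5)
Your proposal is correct and follows essentially the same route as the paper: a Bernstein bound on $\sum_{s'}(p_h-\widehat p_h)(s'|s,a)\,V^*_{h+1}(s',\overline y;\fp,\fr)$ (valid because this value vector is built from the true model and is independent of the samples defining $\widehat p$), applied with failure probability $\delta/(2SAHd)$ per tuple, followed by Lemma \ref{lemma: lemma 5} (with budget $\delta/2$) to trade the true-model variance for the empirical-model variances $\V_{s'\sim\widehat p_h}[V^{\psi^*}_{h+1}(\cdot;\widehat{\fp},\fr)]$ and $\V_{s'\sim\widehat p_h}[V^{*}_{h+1}(\cdot;\widehat{\fp},\fr)]$ on the two sides, with the $b_1$ correction and the Bernstein remainder absorbed into $b_2$ and a final union bound. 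The bookkeeping you describe (the $n^{-3/4}$ and $n^{-1}$ terms, the two nested events, and the $\log\frac{2SAHd}{\delta}$ versus $\log\frac{8SAHd}{\delta}$ factors) matches the paper's derivation.
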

 \begin{proof}
   Again, we will write $\overline{y}$ instead of $y+\overline{r}_h(s,a)$ for simplicity.
   For all $\hsya$, we can write:
   \begin{align*}
     &\sum\limits_{s'\in\cS}\Big(p_h(s'|s,a)-\widehat{p}_h(s'|s,a)\Big)
     V^*_{h+1}(s',\overline{y};\fp,\fr)\\
     &\qquad\markref{(1)}{\le}
     \sqrt{\frac{2 \V_{s'\sim p_h(\cdot|s,a)}[V^*_{h+1}
     (s',\overline{y};\fp,\fr)]\log\frac{2SAHd}{\delta}}{n}}
     +\frac{2H\log\frac{2SAHd}{\delta}}{3n}\\
     &\qquad\markref{(2)}{\le}
     \sqrt{\frac{2\log\frac{2SAHd}{\delta}}{n}}
     \Big(\sqrt{\V_{s'\sim \popblue{\widehat{p}}_h(\cdot|s,a)}
     [V^{\popblue{\psi^*}}_{h+1}(s',\overline{y};\popblue{\widehat{\fp}},R)]}+b_1\Big)
     +\frac{2H\log\frac{2SAHd}{\delta}}{3n}\\
     &\qquad\markref{(3)}{=}c\sqrt{\frac{\popblue{c_1} 
     \V_{s'\sim \widehat{p}_h(\cdot|s,a)}[V^{\psi^*}_{h+1}(s',\overline{y};\widehat{\fp},\fr)]}{n}}
     +c'\sqrt{\frac{\popblue{c_1}}{n}}
     H\bigg(\frac{\log\frac{\popblue{8}SAHd}{\delta}}{n}\bigg)^{1/4}\\
     &\qquad\qquad
     + c''\sqrt{\frac{\popblue{c_1}}{n}}H^2\sqrt{\frac{\log\frac{\popblue{8}SAHd}{\delta}}{n}}
     +c'''H\frac{\popblue{c_1}}{n}\\
     &\qquad\le c\sqrt{\frac{c_1 
     \V_{s'\sim \widehat{p}_h(\cdot|s,a)}[V^{\psi^*}_{h+1}(s',\overline{y};\widehat{\fp},\fr)]}{n}}
     +
     c'H\bigg(\frac{\log\frac{8SAHd}{\delta}}{n}\bigg)^{3/4}
     +c''''H^2\frac{\log\frac{8SAHd}{\delta}}{n},
   \end{align*}
   where at (1) we have applied the Bernstein's inequality using $\delta/(2SAHd)$
   as probability for all $\hsya$, and at (2) we have applied Lemma
   \ref{lemma: lemma 5} with $\delta/2$ of probability, and a union bound to
   guarantee the event to hold w.p. $1-\delta$, at (3) we use the definition of
   $c_1\coloneqq\log\frac{2SAHd}{\delta}$, and denoted by $c,c',c'',c'''$ some
   positive constants.
 
  For the other expression, an analogous derivation can be carried out. In
  particular, we use the other side of the Bernstein's inequality, and the other
  expression in Lemma \ref{lemma: lemma 5}.
 \end{proof}
 
 \begin{lemma}\label{lemma: lemma 7}
 For any $\hsya$ and deterministic policy $\psi$, let $\Sigma^\psi_h(s,y,a)$ be defined as:
 \begin{align*}
   \Sigma^\psi_h(s,y,a)\coloneqq \mathop{\E}\limits_{\fp,\fr,\psi}\Big[
   \Big|\sum\limits_{h'=h}^H \fr_{h'}(s_{h'},y_{h'},a_{h'})
   -Q^\psi_h(s,y,a;\fp,\fr)\Big|^2\,|\,s_h=s,y_h=y,a_h=a  
   \Big].
 \end{align*}
 Then, function $\Sigma$ satisfies the Bellman equation, i.e., for any
 $\hsya$ and deterministic policy $\psi$:
 \begin{align*}
   \Sigma^\psi_h(s,y,a)=&\V_{s'\sim p_h(\cdot|s,a)}[V^{\psi}_{h+1}(s',y+\overline{r}_h(s,a);\fp,\fr)]\\
   &+ \mathop{\E}\limits_{s'\sim p_h(\cdot|s,a)}
   [\Sigma^\psi_{h+1}(s',y+\overline{r}_h(s,a),\psi_{h+1}(s',y+\overline{r}_h(s,a)))].
 \end{align*}
 \end{lemma}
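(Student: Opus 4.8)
The plan is to recognize $\Sigma^{\psi}_h(s,y,a)$ as the conditional variance of the enlarged-MDP return-to-go $G\coloneqq\sum_{h'=h}^H\fr_{h'}(s_{h'},y_{h'},a_{h'})$ given $s_h=s,y_h=y,a_h=a$, and then to derive the identity as a one-step variance decomposition (a law-of-total-variance argument for $Q$-functions of a fixed policy, instantiated on $\fE[\cdot]$). First I would dispose of the base case: at $h=H$ the return-to-go equals $\fr_H(s,y,a)$, which is deterministic given $(s,y,a)$, so $\Sigma^{\psi}_H\equiv 0$, and the claimed equation holds trivially with the convention that $V^{\psi}_{H+1}$ and $\Sigma^{\psi}_{H+1}$ vanish identically.

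For $h<H$, along a trajectory generated by $\fp,\fr,\psi$ conditioned on $s_h=s,y_h=y,a_h=a$, I would use that $\fr_h\equiv 0$ (so $G=\sum_{h'=h+1}^H\fr_{h'}$) and that the Bellman equation gives $Q^{\psi}_h(s,y,a;\fp,\fr)=\E_{s'\sim p_h(\cdot|s,a)}[V^{\psi}_{h+1}(s',y+\overline{r}_h(s,a);\fp,\fr)]$, recalling that under $\fp$ the coordinate $y_{h+1}=y+\overline{r}_h(s,a)$ is deterministic given $(s,y,a)$. Then I would write the decomposition
\begin{align*}
G-Q^{\psi}_h(s,y,a;\fp,\fr)=\big(G-V^{\psi}_{h+1}(s_{h+1},y_{h+1};\fp,\fr)\big)+\big(V^{\psi}_{h+1}(s_{h+1},y_{h+1};\fp,\fr)-Q^{\psi}_h(s,y,a;\fp,\fr)\big),
\end{align*}
square both sides, and take the conditional expectation. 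The cross term vanishes: conditioning further on $(s_{h+1},y_{h+1})$, the first factor $G-V^{\psi}_{h+1}(s_{h+1},y_{h+1};\fp,\fr)$ has conditional mean zero by the definition of $V^{\psi}_{h+1}$, while the second factor is measurable w.r.t. $(s_{h+1},y_{h+1})$.

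It then remains to identify the two surviving second-moment terms. Since $y_{h+1}$ is deterministic and $Q^{\psi}_h(s,y,a;\fp,\fr)$ is exactly $\E_{s'\sim p_h(\cdot|s,a)}[V^{\psi}_{h+1}(s',y+\overline{r}_h(s,a);\fp,\fr)]$, the conditional expectation of the square of the second factor is by definition $\V_{s'\sim p_h(\cdot|s,a)}[V^{\psi}_{h+1}(s',y+\overline{r}_h(s,a);\fp,\fr)]$, the first term on the right-hand side of the claim. For the first factor, I would condition on $s_{h+1}=s'$ with $y_{h+1}=y+\overline{r}_h(s,a)$ fixed; since $\psi$ is deterministic, $a_{h+1}=\psi_{h+1}(s',y+\overline{r}_h(s,a))$, hence $V^{\psi}_{h+1}(s',y+\overline{r}_h(s,a);\fp,\fr)=Q^{\psi}_{h+1}(s',y+\overline{r}_h(s,a),\psi_{h+1}(s',y+\overline{r}_h(s,a));\fp,\fr)$, and the inner conditional second moment is precisely $\Sigma^{\psi}_{h+1}(s',y+\overline{r}_h(s,a),\psi_{h+1}(s',y+\overline{r}_h(s,a)))$; averaging over $s'\sim p_h(\cdot|s,a)$ yields the second term of the claim. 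Combining the two pieces gives the stated recursion.

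I expect the only genuinely delicate point to be the conditioning bookkeeping in the enlarged MDP: one must carefully exploit that the accumulated-reward coordinate $y$ evolves deterministically (so it contributes no variance at the one-step level) and that, because $\psi$ is deterministic, conditioning on $a_{h+1}$ is redundant given $(s_{h+1},y_{h+1})$; with these observations in place the rest is the textbook variance decomposition.
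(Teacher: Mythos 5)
Your proposal is correct and follows essentially the same route as the paper's proof: add and subtract the next-step value at the state-cumulative-reward pair reached (the paper writes it as $Q^\psi_{h+1}(s',\overline{y},\psi_{h+1}(s',\overline{y});\fp,\fr)$, which equals your $V^\psi_{h+1}(s',\overline{y};\fp,\fr)$ since $\psi$ is deterministic), expand the square, kill the cross term by conditioning on $(s_{h+1},y_{h+1})$, and identify the two surviving second moments as the one-step variance and $\E_{s'\sim p_h(\cdot|s,a)}[\Sigma^\psi_{h+1}]$. The only cosmetic differences are that you invoke $\fr_h\equiv 0$ for $h<H$ and treat the $h=H$ base case explicitly, while the paper keeps the deterministic stage reward $\fr_h(s,y,a)$ general and simply subtracts it off; both are valid here.
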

 \begin{proof}
   For all $\hsya$ and deterministic policy $\psi$, we can write (we
   denote $a'\coloneqq \psi_{h+1}(s',y+\overline{r}_h(s,a))$ and
   $\overline{y}\coloneqq y+\overline{r}_h(s,a)$ for notational simplicity, and
   we remark that $\overline{y}$ is \emph{not} a random variable):
   \begin{align*}
     \Sigma^\psi_h(s,y,a)&\coloneqq \mathop{\E}\limits_{\fp,\fr,\psi}\Big[
       \Big|\sum\limits_{h'=h}^H \fr_{h'}(s_{h'},y_{h'},a_{h'}) -
        Q^\psi_h(s,y,a;\fp,\fr)\Big|^2\,|\,s_h=s,y_h=y,a_h=a  
       \Big]\\
       &\markref{(1)}{=}
       \popblue{\mathop{\E}\limits_{s'\sim p_h(\cdot|s,a)}}
       \bigg[\mathop{\E}\limits_{\fp,\fr,\psi}\Big[
       \Big|\sum\limits_{h'=h}^H \fr_{h'}(s_{h'},y_{h'},a_{h'})
       \popblue{-Q^\psi_{h+1}(s',\overline{y},a';\fp,\fr)}\\
       &\qquad- \big(Q^\psi_h(s,y,a;\fp,\fr)\popblue{-Q^\psi_{h+1}(s',\overline{y},a';\fp,\fr)}\big)\Big|^2
       \\
       &\qquad|\,s_h=s,a_h=a,y_h=y,s_{h+1}=s'\Big]\bigg]\\
       &\markref{(2)}{=}
       \mathop{\E}\limits_{s'\sim p_h(\cdot|s,a)}
       \bigg[\mathop{\E}\limits_{\fp,\fr,\psi}\Big[
       \Big|\sum\limits_{h'=\popblue{h+1}}^H \fr_{h'}(s_{h'},y_{h'},a_{h'})
       -Q^\psi_{h+1}(s',\overline{y},a';\fp,\fr)\\
       &\qquad- \big(Q^\psi_h(s,y,a;\fp,\fr)-\popblue{\fr_h(s,y,a)}-
       Q^\psi_{h+1}(s',\overline{y},a';\fp,\fr)\big)\Big|^2\\
       &\qquad|\,s_{h+1}=s',y_{h+1}=\overline{y}\Big]\bigg]\\
       &\markref{(3)}{=}
       \mathop{\E}\limits_{s'\sim p_h(\cdot|s,a)}
       \bigg[\mathop{\E}\limits_{\fp,\fr,\psi}\Big[
       \Big|\sum\limits_{h'=h+1}^H \fr_{h'}(s_{h'},y_{h'},a_{h'})
       -Q^\psi_{h+1}(s',\overline{y},a';\fp,\fr)\Big|^2\\
       &\qquad|\,s_{h+1}=s',y_{h+1}=\overline{y}\Big]\bigg]\\
       &\qquad
       -2 \mathop{\E}\limits_{s'\sim p_h(\cdot|s,a)}
       \bigg[\big(Q^\psi_h(s,y,a;\fp,\fr)-\fr_h(s,y,a)-Q^\psi_{h+1}(s',\overline{y},a';\fp,\fr)\big)\\
       &\qquad\cdot\underbrace{\mathop{\E}\limits_{\fp,\fr,\psi}\Big[
         \sum\limits_{h'=h+1}^H \fr_{h'}(s_{h'},y_{h'},a_{h'})
       -Q^\psi_{h+1}(s',\overline{y},a';\fp,\fr)\,\big|\,s_{h+1}=s',y_{h+1}=\overline{y}
       \Big]}_{\popblue{=0}}\bigg]\\
       &\qquad+
       \mathop{\E}\limits_{s'\sim p_h(\cdot|s,a)}
       \Big[\big|Q^\psi_h(s,y,a;\fp,\fr)-\fr_h(s,y,a)-Q^\psi_{h+1}(s',\overline{y},a';\fp,\fr)\big|^2\Big]\\
       &\markref{(4)}{=}
       \mathop{\E}\limits_{s'\sim p_h(\cdot|s,a)}
       \bigg[\\
       &\qquad\underbrace{\mathop{\E}\limits_{\fp,\fr,\psi}\Big[
       \Big|\sum\limits_{h'=h+1}^H \fr_{h'}(s_{h'},y_{h'},a_{h'})
       -Q^\psi_{h+1}(s',\overline{y},a';\fp,\fr)\Big|^2\,|\,s_{h+1}=s',y_{h+1}=\overline{y}\Big]}_{
         \popblue{= \Sigma^\psi_{h+1}(s',\overline{y},a')}}\bigg]\\
       &\qquad+
       \underbrace{\mathop{\E}\limits_{s'\sim p_h(\cdot|s,a)}
       \Big[\big|Q^\psi_h(s,y,a;\fp,\fr)-\fr_h(s,y,a)-Q^\psi_{h+1}(s',\overline{y},a';\fp,\fr)\big|^2\Big]}_{
         \popblue{\eqqcolon \V_{s'\sim p_h(\cdot|s,a)}[Q^{\psi}_{h+1}(s',\overline{y},a';\fp,\fr)]
         =\V_{s'\sim p_h(\cdot|s,a)}[V^{\psi}_{h+1}(s',\overline{y};\fp,\fr)]}
       }\\
       &=
       \mathop{\E}\limits_{s'\sim p_h(\cdot|s,a)}
       [\Sigma^\psi_{h+1}(s',\overline{y},a')]+\V_{s'\sim p_h(\cdot|s,a)}[V^{\psi}_{h+1}(s',\overline{y};\fp,\fr)],
   \end{align*}
   at (1) we add and subtract a term, at (2) we bring out the non-random reward
   received at $h$, at (3) we compute the square and use the linearity of
   expectation, at (4) we use the fact that $\E_{\fp,\fr,\psi}\big[ \sum_{h'=h+1}^H
   \fr_{h'}(s_{h'},y_{h'},a_{h'}) -Q^\psi_{h+1}(s',\overline{y},a';\fp,\fr)\,\big|\,s_{h+1}=s'
   \big]=Q^\psi_{h+1}(s',\overline{y},a';\fp,\fr)-Q^\psi_{h+1}(s',\overline{y},a';\fp,\fr)=0$ because of linearity of
   expectation. 
 \end{proof}
 
 \begin{lemma}\label{lemma: lemma 8}
   Let $\psi$ be any policy, and let $\fp$ be any
 transition model associated to an arbitrary inner dynamics $p$. Then, for all
 $\hsya$, it holds that:
 \begin{align*}
   \bigg|\mathop{\E}\limits_{\fp,\fr,\psi}\bigg[\sum\limits_{h'=h}^H
   \sqrt{\V_{s'\sim p_{h'}(\cdot|s_{h'},a_{h'})}[V^\psi_{h'+1}(s',y_{h'+1};\fp,\fr)]}
   \,\bigg|\,s_h=s,y_h=y,a_h=a\bigg]\bigg| \le \sqrt{H^3}.
 \end{align*}
 \end{lemma}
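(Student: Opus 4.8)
The plan is to treat this as a law-of-total-variance bound, analogous to Lemma~8 of \citet{azar2013minimax}. First I would drop the outer absolute value, which is vacuous since every term $\sqrt{\V_{s'\sim p_{h'}(\cdot|s_{h'},a_{h'})}[V^\psi_{h'+1}(s',y_{h'+1};\fp,\fr)]}$ is non-negative. Since the inner sum $\sum_{h'=h}^{H}$ has at most $H$ terms, I would apply the Cauchy--Schwarz inequality pathwise,
\begin{align*}
  \sum_{h'=h}^{H}\sqrt{\V_{s'\sim p_{h'}(\cdot|s_{h'},a_{h'})}[V^\psi_{h'+1}]}
  \le \sqrt{H\sum_{h'=h}^{H}\V_{s'\sim p_{h'}(\cdot|s_{h'},a_{h'})}[V^\psi_{h'+1}]},
\end{align*}
then take the trajectory expectation and push it inside the square root via Jensen's inequality (concavity of $\sqrt{\cdot}$). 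This reduces the claim to proving
\begin{align*}
  \E_{\fp,\fr,\psi}\Big[\sum_{h'=h}^{H}\V_{s'\sim p_{h'}(\cdot|s_{h'},a_{h'})}[V^\psi_{h'+1}(s',y_{h'+1};\fp,\fr)]\,\Big|\,s_h=s,y_h=y,a_h=a\Big]\le H^2 .
\end{align*}

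For this I would invoke Lemma~\ref{lemma: lemma 7}: rearranging its Bellman recursion, the per-stage variance at $h'$ equals $\Sigma^\psi_{h'}-\E_{s'}[\Sigma^\psi_{h'+1}]$. Summing over $h'=h,\dots,H$ and taking the conditional expectation under the law induced by $(\fp,\fr,\psi)$ starting from $(s,y,a)$, the tower property matches $\E\big[\E_{s'}[\Sigma^\psi_{h'+1}(s',y_{h'+1},\psi_{h'+1}(s',y_{h'+1}))]\big]$ with $\E[\Sigma^\psi_{h'+1}(s_{h'+1},y_{h'+1},a_{h'+1})]$, where $a_{h'+1}=\psi_{h'+1}(s_{h'+1},y_{h'+1})$ and $y_{h'+1}=y_{h'}+\overline{r}_{h'}(s_{h'},a_{h'})$, so the sum telescopes to $\Sigma^\psi_h(s,y,a)$ (the stage-$(H+1)$ term vanishing since there are no rewards after stage $H$, i.e., $V^\psi_{H+1}\equiv 0$).

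It then remains to bound $\Sigma^\psi_h(s,y,a)\le H^2$, which is immediate from its definition and the structure of the enlarged MDP: the reward $\fr$ is either $0$ or a utility value in $[0,H]$, so the whole-episode return $\sum_{h'=h}^{H}\fr_{h'}(s_{h'},y_{h'},a_{h'})$ and its conditional mean $Q^\psi_h(s,y,a;\fp,\fr)$ both lie in $[0,H]$; hence their difference has absolute value at most $H$, and squaring and averaging gives $\Sigma^\psi_h(s,y,a)\le H^2$. Chaining everything, $\E_{\fp,\fr,\psi}[\sum_{h'=h}^{H}\sqrt{\V[\cdot]}\mid\cdots]\le\sqrt{H}\,\sqrt{\Sigma^\psi_h(s,y,a)}\le\sqrt{H}\,\sqrt{H^2}=\sqrt{H^3}$. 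I do not expect any real obstacle here; the only delicate point is the bookkeeping in the telescoping step---threading the (deterministic, enlarged-state) policy's action through the successive conditionings and checking that the terminal term is zero.
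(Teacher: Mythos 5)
Your proposal is correct and follows essentially the same route as the paper's proof: Cauchy--Schwarz over the $H$ summands, Jensen's inequality to move the expectation inside the square root, the Bellman recursion of Lemma \ref{lemma: lemma 7} to telescope the summed variances to $\Sigma^\psi_h(s,y,a)$ (with the stage-$(H+1)$ term vanishing), and the bound $\Sigma^\psi_h(s,y,a)\le H^2$ from the $[0,H]$ range of returns in the enlarged MDP. No gaps.
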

 \begin{proof}
   For all $\hsya$, we can write (note that this derivation is
   independent of $\fp,p$, so we might use even $\widehat{\fp},\widehat{p}$ in the
   proof):
   \begin{align*}
     &\Big|\mathop{\E}\limits_{\fp,\fr,\psi}\Big[\sum\limits_{h'=h}^H
   \sqrt{\V_{s'\sim p_{h'}(\cdot|s_{h'},a_{h'})}[V^\psi_{h'+1}(s',y_{h'+1};\fp,\fr)]}
   \,|\,s_h=s,y_h=y,a_h=a\Big]\Big|\\
   &\qquad\markref{(1)}{\le}
   \Big|\mathop{\E}\limits_{\fp,\fr,\psi}\Big[\sqrt{\popblue{H\sum\limits_{h'=h}^H}
   \V_{s'\sim p_{h'}(\cdot|s_{h'},a_{h'})}[V^\psi_{h'+1}(s',y_{h'+1};\fp,\fr)]}
   \,|\,s_h=s,y_h=y,a_h=a\Big]\Big|\\
   &\qquad\markref{(2)}{\le}
   \popblue{\sqrt{H}}\sqrt{\popblue{\mathop{\E}\limits_{\fp,\fr,\psi}}\Big[\sum\limits_{h'=h}^H
   \V_{s'\sim p_{h'}(\cdot|s_{h'},a_{h'})}[V^\psi_{h'+1}(s',y_{h'+1};\fp,\fr)]
   \,|\,s_h=s,y_h=y,a_h=a\Big]}\\
   &\qquad\markref{(3)}{=}
   \sqrt{H}\bigg(\mathop{\E}\limits_{\fp,\fr,\psi}\Big[\sum\limits_{h'=h}^H
   \popblue{\Sigma^\psi_{h'}(s_{h'},y_{h'},a_{h'})-\E_{s'\sim p_{h'}(\cdot|s_{h'},a_{h'})}
   \big[\Sigma^\psi_{h'+1}(s',y_{h'+1},\psi_{h'+1}(s',y_{h'+1}))\big]}\\
   &\qquad\qquad|\,s_h=s,y_h=y,a_h=a\Big]\bigg)^{1/2}\\
   &\qquad=
   \sqrt{H}\sqrt{\mathop{\E}\limits_{\fp,\fr,\psi}\Big[\sum\limits_{h'=h}^H
   \Sigma^\psi_{h'}(s_{h'},y_{h'},a_{h'})-\popblue{\Sigma^\psi_{h'+1}(s_{h'+1},y_{h'+1},a_{h'+1})}
   \,|\,s_h=s,y_h=y,a_h=a\Big]}\\
   &\qquad\markref{(4)}{=}
   \sqrt{H}\sqrt{\mathop{\E}\limits_{\fp,\fr,\psi}\Big[
   \popblue{\Sigma^\psi_{h}(s_h,y_h,a_h)-\underbrace{\Sigma^\psi_{H+1}(s_{H+1},y_{H+1},a_{H+1})}_{=0}}
   \,|\,s_h=s,y_h=y,a_h=a\Big]}\\
   &\qquad=
   \sqrt{H}\sqrt{
   \Sigma^\psi_{h}(s,y,a)}\\
   &\qquad\markref{(5)}{\le}
   \sqrt{H}\sqrt{
   H^2}\\
   &\qquad=
   \sqrt{H^3},
   \end{align*}
   where at (1) we have applied the Cauchy-Schwarz's inequality, at (2) we have
   applied Jensen's inequality, at (3) we have applied Lemma \ref{lemma: lemma
   7}, at (4) we have used telescoping, and at (5) we have bounded
   $\Sigma^\psi_{h}(s,y,a)\le H^2$ for all $\hsya$.
 \end{proof}
 
 \subsubsection{Lemmas on the Optimal Performance for Multiple Utilities}
 
 To prove the following results, we will make use of the notation introduced in
 the previous section.
 
 \begin{lemma}\label{lemma: bound J star all utilities}
   Let $\epsilon,\delta\in(0,1)$. It suffices to execute \caty with:
   \begin{align*}
     &\tau\le\widetilde{\cO}\Big(\frac{SAH^5}{\epsilon^2}
     \Big(S+\log\frac{SAH}{\delta}\Big)\Big),
   \end{align*}
   to obtain $\sup_{U\in\fU_L}\big|J^*(U;p,r)-\widehat{J}^*(U)\big|\le HL\epsilon_0+\epsilon$
   w.p. $1-\delta$.
 \end{lemma}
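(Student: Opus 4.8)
The plan is to mirror the decomposition used in the proof of Lemma~\ref{lemma: bound J star single utility}, but to control the transition-estimation error with a crude, \emph{utility-independent} $\ell_1$ bound rather than with the variance-aware (Bernstein) argument of Lemmas~\ref{lemma: lemma 5}--\ref{lemma: lemma 8}: the latter is tied to a \emph{fixed} $U$ because the optimal policy $\psi^*$ of the enlarged MDP $\fE[\overline{\cM}_U]$ depends on $U$, whereas the estimate $\widehat{p}$ produced by \texttt{EXPLORE} does not depend on $U$ at all, which is exactly what lets a single high-probability event serve \emph{all} utilities at once. Concretely, for every $U\in\fU_L$ I would write, as in Lemma~\ref{lemma: bound J star single utility},
\begin{align*}
  |J^*(U;p,r)-\widehat{J}^*(U)|
  \le \underbrace{|J^*(U;p,r)-J^*(\fp,\fr)|}_{\text{discretization}}
  + \underbrace{|J^*(\fp,\fr)-J^*(\widehat{\fp},\fr)|}_{\text{estimation}},
\end{align*}
where $\fr,\fp,\widehat{\fp}$ are the enlarged reward and dynamics built from the discretized reward $\overline{r}$ together with $p$ (resp.\ $\widehat{p}$). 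By Proposition~3 of \citet{wu2023risksensitive} the first term is at most $HL\epsilon_0$, and this bound is uniform over $\fU_L$ since it depends on $U$ only through its Lipschitz constant.

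For the estimation term I would bound $|J^*(\fp,\fr)-J^*(\widehat{\fp},\fr)|\le\max_{\hsya}|Q^*_h(s,y,a;\fp,\fr)-Q^*_h(s,y,a;\widehat{\fp},\fr)|$ and plug in the two inequalities of Lemma~\ref{lemma: lemma 3}, which rewrite this difference as an expectation of $\sum_{h'\ge h}\sum_{s'}\big(p_{h'}(s'|s_{h'},a_{h'})-\widehat{p}_{h'}(s'|s_{h'},a_{h'})\big)V^{\psi^*}_{h'+1}(s',y_{h'+1};\fp,\fr)$. Because $\fr$ takes values in $[0,H]$, every value function occurring here is bounded by $H$ no matter what $U$ is, so using $\big|\sum_{s'}\big(p_{h'}(s'|s,a)-\widehat{p}_{h'}(s'|s,a)\big)V(s')\big|\le\|p_{h'}(\cdot|s,a)-\widehat{p}_{h'}(\cdot|s,a)\|_1\|V\|_\infty$ termwise gives the crude but $U$-free bound
\begin{align*}
  \max_{\hsya}|Q^*_h(s,y,a;\fp,\fr)-Q^*_h(s,y,a;\widehat{\fp},\fr)|
  \le H^2\max_{(s,a,h)\in\SAH}\big\|p_h(\cdot|s,a)-\widehat{p}_h(\cdot|s,a)\big\|_1.
\end{align*}
It then remains to invoke a standard $\ell_1$-deviation bound for the empirical distribution of a categorical variable over $S$ outcomes from $n=\floor{\tau/(SAH)}$ i.i.d.\ samples (as used in \citet{azar2013minimax}), combined with a union bound over the $SAH$ triples, to get $\max_{(s,a,h)\in\SAH}\|p_h(\cdot|s,a)-\widehat{p}_h(\cdot|s,a)\|_1\le c\sqrt{(S+\log(SAH/\delta))/n}$ w.p.\ at least $1-\delta$; this event mentions no utility, so on it the two displays above hold simultaneously for all $U\in\fU_L$.

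Combining the pieces, w.p.\ at least $1-\delta$, $\sup_{U\in\fU_L}|J^*(U;p,r)-\widehat{J}^*(U)|\le HL\epsilon_0+cH^2\sqrt{(S+\log(SAH/\delta))/n}$; forcing the second term to be $\le\epsilon$ gives $n\ge c'H^4(S+\log(SAH/\delta))/\epsilon^2$, and multiplying by $SAH$ (the sample budget spent by \texttt{EXPLORE}) yields the claimed $\tau=\widetilde{\cO}\big(SAH^5(S+\log(SAH/\delta))/\epsilon^2\big)$. The step I expect to require the most care is the uniformity argument in the middle paragraph: one must verify that Lemma~\ref{lemma: lemma 3} is legitimately applied with one common estimated model $\widehat{p}$ for every $U$, and that the only $U$-dependent quantities left — the value functions $V^{\psi^*}_{h'+1}(\cdot;\fp,\fr)$ of $\fE[\overline{\cM}_U]$ — are bounded by $H$ regardless of $U$; both are immediate from the construction of the enlarged MDP, and it is precisely this that makes a covering of $\fU_L$ unnecessary here (in contrast with the return-distribution estimation of Lemma~\ref{lemma: bound estimation error tante U}), at the cost of the extra factor $H$ and the additive $S$ relative to the single-utility bound of Lemma~\ref{lemma: bound J star single utility}.
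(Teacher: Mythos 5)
Your proposal is correct and follows essentially the same route as the paper: the same add-and-subtract decomposition into a discretization term bounded by $HL\epsilon_0$ via Proposition 3 of \citet{wu2023risksensitive} and an estimation term, together with the same key uniformity observation that the concentration event for $\widehat{p}$ produced by \texttt{EXPLORE} is utility-independent while every value function of the enlarged MDP is bounded by $H$ regardless of $U$, so no covering of $\fU_L$ is needed (this is exactly the point made in the footnote of Lemma \ref{lemma: for multiple utilities}). The only deviation is the concentration tool for the estimation term: the paper (Lemma \ref{lemma: for multiple utilities}) unrolls the value recursion, passes to $\|p_h(\cdot|s,a)-\widehat{p}_h(\cdot|s,a)\|_1$ and controls it through a KL concentration event plus Pinsker's inequality (then invoking Lemma J.3 of \citet{lazzati2024offline} to handle the $n$ inside the logarithm), whereas you use a direct Weissman-type $\ell_1$ deviation bound with a union bound over the $SAH$ triples; both yield the same $\sqrt{(S+\log(SAH/\delta))/n}$ rate and hence the same $\widetilde{\cO}\big(SAH^5(S+\log\frac{SAH}{\delta})/\epsilon^2\big)$ budget.
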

 \begin{proof}
   Similarly to the proof of Lemma \ref*{lemma: bound J star all utilities}, we
   can write:
   \begin{align*}
     \sup\limits_{U\in\fU_L}&|J^*(U;p,r)-\widehat{J}^*(U)|\\
     &=\sup\limits_{U\in\fU_L}|J^*(U;p,r)-\widehat{J}^*(U)\popblue{\pm
     J^*(\fp,\fr)}|\\
     &\le \popblue{\sup\limits_{U\in\fU_L}|}J^*(U;p,r)-J^*(\fp,\fr)\popblue{|}
     +\popblue{\sup\limits_{U\in\fU_L}|}J^*(\fp,\fr)-\widehat{J}^*(U)\popblue{|}\\
     &=\sup\limits_{U\in\fU_L}|J^*(U;p,r)-J^*(\fp,\fr)|+
     \sup\limits_{U\in\fU_L}|J^*(\fp,\fr)-\popblue{J^*(\widehat{\fp},\fr)}|\\
     &\le \popblue{HL\epsilon_0}+\sup\limits_{U\in\fU_L}|J^*(\fp,\fr)-J^*(\widehat{\fp},\fr)|\\
     &\markref{(1)}{\le} HL\epsilon_0+\popblue{H^2 \sqrt{\frac{2}{n}\Big(\log\frac{SAH}{\delta}
     +(S-1)\log\big(e(1+n/(S-1))\big)\Big)}}\\
     &\le HL\epsilon_0+\popblue{\epsilon},
   \end{align*}
   where at (1) we have applied the formula in Lemma \ref{lemma: for multiple
   utilities}.
   
   By enforcing such quantity to be smaller than $\epsilon$, we get:
   \begin{align*}
     &H^2 \sqrt{\frac{2}{n}\Big(\log\frac{SAH}{\delta}
     +(S-1)\log\big(e(1+n/(S-1))\big)\Big)}\le\\
     &\qquad\frac{H^2\sqrt{\log\big(e(1+n/(S-1))\big)}}{\sqrt{n}}
     \sqrt{2\Big(\log\frac{SAH}{\delta}+(S-1)\Big)}\le\epsilon\\
     &\qquad\iff n \ge 2\frac{H^4}{\epsilon^2}\Big(\log\frac{SAH}{\delta}+(S-1)\Big)
     \log\big(e(1+n/(S-1))\big).
   \end{align*}
   By summing over all $(s,a,h)\in\SAH$,
   and
   by applying Lemma J.3 of \citet{lazzati2024offline},
   we obtain that:
   \begin{align*}
     \tau=SAHn\ge\widetilde{\cO}\bigg(
     \frac{SAH^5}{\epsilon^2}\Big(\log\frac{SAH}{\delta}+S\Big)
     \bigg).
   \end{align*}
 \end{proof}
 
 \begin{lemma}\label{lemma: for multiple utilities} For any $\delta\in(0,1)$, for
   all utility functions $U\in\fU_L$ at the same time, we have:
   \begin{align*}
     |J^*_h(\fp,\fr)-J^*_h(\widehat{\fp},\fr)|\le
     H^2 \sqrt{\frac{2}{n}\Big(\log\frac{SAH}{\delta}
     +(S-1)\log\big(e(1+n/(S-1))\big)\Big)},
   \end{align*}
   w.p. at least $1-\delta$.
 \end{lemma}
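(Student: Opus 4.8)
The plan is to bound the optimal-value gap by the worst-case $\ell_1$ error of the empirical transition model, a quantity that does \emph{not} depend on the utility $U$, and only afterwards invoke concentration; this is precisely what makes the estimate hold uniformly over $\fU_L$.

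First I would reuse the telescoping argument already carried out in the proof of Lemma~\ref{lemma: lemma 4} (its second displayed chain): for any fixed $U\in\fU_L$ and any $\hsya$ it gives
\begin{align*}
  |V_h^*(s,y;\fp,\fr)-V^*_h(s,y;\widehat\fp,\fr)|\le
  \E_{\widehat\fp,\fr,\overline\psi}\Big[\sum_{h'=h}^H\Big|\sum_{s'\in\cS}\big(p_{h'}(s'|s_{h'},a_{h'})-\widehat p_{h'}(s'|s_{h'},a_{h'})\big)V^*_{h'+1}(s',y_{h'+1};\fp,\fr)\Big|\,\Big|\,s_h=s,y_h=y\Big],
\end{align*}
where $\overline\psi$ is the (data-dependent) policy obtained by unfolding the recursion. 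Since $\fr$ takes values in $[0,H]$, each value $V^*_{h'+1}(\cdot,\cdot;\fp,\fr)\in[0,H]$, so by H\"older's inequality the inner term is at most $H\,\|p_{h'}(\cdot|s_{h'},a_{h'})-\widehat p_{h'}(\cdot|s_{h'},a_{h'})\|_1$; bounding the trajectory-visited state-action pairs by the worst case and the sum by its length yields
\begin{align*}
  \max_{\hsya}|V_h^*(s,y;\fp,\fr)-V^*_h(s,y;\widehat\fp,\fr)|\le H^2\max_{(s,a,h)\in\SAH}\|p_h(\cdot|s,a)-\widehat p_h(\cdot|s,a)\|_1 .
\end{align*}
The right-hand side is free of $U$, hence it simultaneously upper bounds $|J^*_h(\fp,\fr)-J^*_h(\widehat\fp,\fr)|$ for every $U\in\fU_L$.

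It then remains to control the empirical $\ell_1$ error. Recalling that the exploration phase (Algorithm~\ref{alg: caty exploration}) draws exactly $n$ i.i.d.\ samples from each $p_h(\cdot|s,a)$, I would invoke the method of types: there are at most $\binom{n+S-1}{S-1}\le(e(1+n/(S-1)))^{S-1}$ empirical distributions over $S$ symbols, and for any such type $q$ one has $\P(\widehat p_h(\cdot|s,a)=q)\le e^{-nD(q\|p_h(\cdot|s,a))}\le e^{-n\|q-p_h(\cdot|s,a)\|_1^2/2}$ by Pinsker's inequality. Summing over the types with $\|q-p_h(\cdot|s,a)\|_1\ge\epsilon$ gives $\P(\|\widehat p_h(\cdot|s,a)-p_h(\cdot|s,a)\|_1\ge\epsilon)\le(e(1+n/(S-1)))^{S-1}e^{-n\epsilon^2/2}$, and a union bound over the $SAH$ triples shows that, with probability at least $1-\delta$, $\max_{(s,a,h)}\|p_h(\cdot|s,a)-\widehat p_h(\cdot|s,a)\|_1\le\sqrt{\tfrac{2}{n}\big(\log\tfrac{SAH}{\delta}+(S-1)\log(e(1+n/(S-1)))\big)}$. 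Combining this with the previous display gives the claimed bound.

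The main obstacle is not technical depth but bookkeeping in the first step: one must keep the value functions inside the expectation evaluated under the \emph{true} $\fp$ (so that, for each $U$, they are fixed vectors in $[0,H]^S$ independent of the sampled data) while the policy $\overline\psi$ and the measure $\widehat\fp$ in the outer expectation are allowed to be data-dependent — exactly as in Lemma~\ref{lemma: lemma 4}. Once this is set up, replacing the per-coordinate Hoeffding union bound used there by the coarser $\ell_1$-deviation bound removes all dependence on $U$, so the final union bound ranges only over $\SAH$ and the estimate is automatically uniform over $\fU_L$.
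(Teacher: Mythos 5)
Your proposal is correct and follows essentially the same route as the paper: the value gap is reduced, via the telescoping/unfolding argument from Lemma \ref{lemma: lemma 4}, to $H$ times the per-stage $\ell_1$ deviations of $\widehat{p}$ (which is uniform over $U\in\fU_L$ since the value functions are merely bounded in $[0,H]$), and the $\ell_1$ deviation is then controlled by a method-of-types concentration bound combined with Pinsker's inequality and a union bound over $\SAH$. The only cosmetic difference is that you derive the $(S-1)\log\big(e(1+n/(S-1))\big)$ term directly from counting types, whereas the paper invokes Proposition 1 of \citet{jonsson2020planningmarkovdecisionprocesses} for the KL concentration and applies Pinsker afterwards — the content is identical.
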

 \begin{proof}
   Let us denote by $\cE$ the event defined as:
 \begin{align*}
   \cE\coloneqq\bigg\{&
   \forall n\in\Nat,\,\forall \hsya:\\
   &n\text{KL}\big(\widehat{p}_h(\cdot|s,a)\|p_h(\cdot|s,a)\big)\le \log\frac{SAH}{\delta}
   +(S-1)\log\big(e(1+n/(S-1))\big)
   \bigg\}.
 \end{align*}
 We can write:
 \begin{align*}
   \P(\cE^\complement)&=\P\bigg(\exists n\in\Nat,\,
     \exists \hsya:\\
   &\qquad
   n\text{KL}\big(\widehat{p}_h(\cdot|s,a)\|p_h(\cdot|s,a)\big)> \log\frac{SAH}{\delta}
   +(S-1)\log\big(e(1+n/(S-1))\big)
   \bigg)\\
   &\markref{(1)}{=}\P\bigg(\exists n\in\Nat,\,
     \popblue{\exists (s,a,h)\in\SAH}:\\
   &\qquad
   n\text{KL}\big(\widehat{p}_h(\cdot|s,a)\|p_h(\cdot|s,a)\big)> \log\frac{SAH}{\delta}
   +(S-1)\log\big(e(1+n/(S-1))\big)
   \bigg)\\
   &\markref{(2)}{\le}
   \sum\limits_{(s,a,h)\in\SAH}
   \P\bigg(
     \exists n\in\Nat,\,
     n\text{KL}\big(\widehat{p}_h(\cdot|s,a)\|p_h(\cdot|s,a)\big)>\\
     &\qquad\log\frac{SAH}{\delta}
   +(S-1)\log\big(e(1+n/(S-1))\big)
   \bigg)\\
   &\markref{(3)}{\le}
   \sum\limits_{(s,a,h)\in\SAH}\frac{\delta}{SAH}\\
   &\le\delta,
 \end{align*}
 where at (1) we realize that there is no dependence on variable $y$, thus we can
 drop it,\footnote{Therefore, differently from the event for a single utility,
 now there is no dependence on $d$ in the bound. Intuitively, $d$ appeared in the
 case of a single utility because we had to apply Hoeffding's inequality $d$
 times, because we had, potentially, $d$ different value functions (as many as
 the states). Since now we provide the bound for all the possible value functions
 (1-norm bound), then the dependence on $d$ disappears.}
 at (2) we have applied a union bound over all triples $(s,a,h)\in\SAH$,
 and at (3) we have applied Proposition 1 of
 \citet{jonsson2020planningmarkovdecisionprocesses}.
 
 Next, for all utilities $U\in\fU_L$ at the same time, for all the tuples
 $\hsy$, we can write:
 \begin{align*}
   &|V_h^*(s,y;\fp,\fr)-V^{*}_h(s,y;\widehat{\fp},\fr)|\\
   &\qquad\markref{(4)}{\le}
   \mathop{\E}\limits_{\widehat{\fp},\fr,\overline{\psi}}\bigg[
     \sum\limits_{h'=h}^H \bigg|\sum\limits_{s'\in\cS}\Big(
       p_{h'}(s'|s_{h'},a_{h'})-\widehat{p}_{h'}(s'|s_{h'},a_{h'})\Big)
       V_{h'+1}^{\psi^*}(s',y_{h'+1};\fp,\fr)\bigg|\\
       &\qquad\qquad\Big|\,s_h=s,y_h=y,a_h=a
       \bigg]\\
   &\qquad\markref{(5)}{\le} \popblue{H} \mathop{\E}\limits_{\widehat{\fp},\fr,\overline{\psi}}\bigg[
     \sum\limits_{h'=h}^H \popblue{\|}p_{h'}(\cdot|s_{h'},a_{h'})-
     \widehat{p}_{h'}(\cdot|s_{h'},a_{h'})\popblue{\|_1}
     \,\Big|\,s_h=s,y_h=y,a_h=a
       \bigg]\\
   &\qquad\markref{(6)}{\le}H \mathop{\E}\limits_{\widehat{\fp},\fr,\overline{\psi}}\bigg[
     \sum\limits_{h'=h}^H \popblue{\sqrt{2\text{KL}(\widehat{p}_{h'}
     (\cdot|s_{h'},a_{h'})\|p_{h'}(\cdot|s_{h'},a_{h'}))}}
     \,\Big|\,s_h=s,y_h=y,a_h=a
       \bigg]\\
   &\qquad\markref{(7)}{\le}H \mathop{\E}\limits_{\widehat{\fp},\fr,\overline{\psi}}\bigg[
     \sum\limits_{h'=h}^H \sqrt{\frac{2}{n}\Big(\log\frac{SAH}{\delta}
     +(S-1)\log\big(e(1+n/(S-1))\big)\Big)}
     \\
     &\qquad\qquad\Big|\,s_h=s,y_h=y,a_h=a
       \bigg]\\
   &\qquad=\popblue{H^2} \sqrt{\frac{2}{n}\Big(\log\frac{SAH}{\delta}
     +(S-1)\log\big(e(1+n/(S-1))\big)\Big)},
 \end{align*}
 where at (4) we apply the formula derived in the proof of Lemma \ref*{lemma:
 lemma 4} and triangle inequality, at (5) we have upper bounded with the 1-norm,
 defined as $\|f\|_1\coloneqq\sum_x |f(x)|$, at (6) we have applied Pinsker's
 inequality, at (7) we assume that concentration event $\cE$ holds.
 
 We remark that the guarantee provided by this theorem holds not only for
 $L$-Lipschitz utilities, but for all functions with the same dimensionality
 (since it is a bound in 1-norm).
 \end{proof}
 
 \subsection{Analysis of \tractor}
 \label{apx: analysis tractor}
 
 \thrupperboundtractor*
 \begin{proof}
   The proof draws inspiration from those of
   \citet{syed2007game} and \citet{schlaginhaufen2024transferability}.
 
   Given any distribution $\eta$ supported on $\cY$, and given any two utilities
   $U\in\underline{\fU}_L,\overline{U}\in\overline{\underline{\fU}}_L$ (where $U$
   is a function on $[0,H]$ and $\overline{U}$ is a vector on $\cY$), we will
   abuse notation and write both $U^\intercal \eta$ and $\overline{U}^\intercal
   \eta$, with obvious meaning. 
   
   Moreover, for $L>0$, we define operator $\fC_L:\overline{\underline{\fU}}_L\to
   2^{\underline{\fU}_L}$ (where $2^\cX$ denotes the power set of set $\cX$)
   that, given vector $\overline{U}\in\overline{\underline{\fU}}_L$, returns the
   set $\fC_L(\overline{U})\coloneqq \{U\in\underline{\fU}_L\,|\,\forall
   y\in\cY:\,U(y)=\overline{U}(y)\}$.
 
 First of all, we observe that the guarantee provided by the theorem follows
 directly by the following expression:
    \begin{align*} 
     \mathop{\mathbb{P}}\limits_{\cM^1,\cM^2,\dotsc,\cM^N}\Big(
       \sup\limits_{U\in\fC_L(\widehat{U})}\sum\limits_{i\in\dsb{N}}
       \overline{\cC}_{p^i,r^i,\pi^{E,i}}(U)\le\epsilon\Big)\ge 1-\delta,
  \end{align*}
  where $\mathbb{P}_{\cM^1,\cM^2,\dotsc,\cM^N}$ denotes the joint
  probability distribution obtained by the $N$ MDPs $\{\cM^i\}_i$.
   
   Let us denote by $\widehat{U}\coloneqq(\sum_{t=0}^{T-1}\overline{U}_t)/T$ the output of
   \tractor. Note that $\widehat{U}\in\overline{\underline{\fU}}_L$. We can
   write:
   \begin{align*}
     &\sup\limits_{U\in\fC_L(\widehat{U})}
     \sum\limits_{i\in\dsb{N}}\overline{\cC}_{p^i,r^i,\pi^{E,i}}
     (U)\\
           &\qquad\markref{(1)}{=}
           \sup\limits_{U\in\fC_L(\widehat{U})}
           \sum\limits_{i\in\dsb{N}}
           \bigg(J^*(U;p^i,r^i)-J^{\pi^{E,i}}(U;p^i,r^i)
           \popblue{\pm \widehat{U}^{\intercal}\widehat{\eta}^{E,i}}\bigg)\\    
           &\qquad\markref{(2)}{\le}
           \sup\limits_{U\in\fC_L(\widehat{U})}
           \sum\limits_{i\in\dsb{N}}
           \bigg(J^*(U;p^i,r^i)-\popblue{\widehat{U}^{\intercal}\widehat{\eta}^{E,i}}\bigg)
           +\popblue{\epsilon_1}\\ 
           &\qquad\markref{(3)}{=}
           \sup\limits_{U\in\fC_L(\widehat{U})}
           \sum\limits_{i\in\dsb{N}}\bigg(
 \popblue{\max\limits_{\eta\in \fD_i}U^\intercal \eta}
 -\widehat{U}^\intercal \widehat{\eta}^{E,i}\bigg)+\epsilon_1
           \\
           &\qquad\markref{(4)}{=}
           \popblue{\sup\limits_{\substack{
             U_0\in\fC_L(\overline{U}_0),\\
           \dotsc,\\U_{T-1}\in\fC_L(\overline{U}_{T-1})
             }}}
           \popblue{\frac{1}{T}}\sum\limits_{i\in\dsb{N}}
 \max\limits_{\eta\in \fD_i}
 \popblue{\sum\limits_{t=0}^{T-1}}
 \bigg(
   \popblue{U_t}^\intercal \eta
 -\popblue{\overline{U}_t}^\intercal \widehat{\eta}^{E,i}\bigg)+\epsilon_1
           \\
       &\qquad\markref{(5)}{\le}
           \frac{1}{T}\popblue{\sum\limits_{t=0}^{T-1}}
           \popblue{\sup\limits_{
             U_t\in\fC_L(\overline{U}_t)}}
           \sum\limits_{i\in\dsb{N}}
 \bigg(
   \max\limits_{\eta\in \fD_i}U_t^\intercal\eta
   \popblue{\pm \overline{U}_t^\intercal \widehat{\eta}^i_t}
 -\overline{U}_t^\intercal \widehat{\eta}^{E,i}\bigg)+\epsilon_1
           \\
           &\qquad\markref{(6)}{\le}
           \frac{1}{T}\sum\limits_{t=0}^{T-1}
           \sum\limits_{i\in\dsb{N}}
   \overline{U}_t^\intercal \Big(\widehat{\eta}^i_t
 - \widehat{\eta}^{E,i}\Big)\popblue{\pm
 \frac{1}{T}\min\limits_{\overline{U}\in\overline{\underline{\fU}}_L}\sum\limits_{t=0}^{T-1}
           \sum\limits_{i\in\dsb{N}}
           \overline{U}^\intercal \Big(\widehat{\eta}^i_t
 - \widehat{\eta}^{E,i}\Big)}+\epsilon_1+\popblue{\epsilon_2}
           \\
           &\qquad\markref{(7)}{\le}
 \frac{1}{T}\min\limits_{\overline{U}\in\overline{\underline{\fU}}_L}
 \sum\limits_{t=0}^{T-1}
           \sum\limits_{i\in\dsb{N}}
   \overline{U}^\intercal \Big(\widehat{\eta}^i_t
 - \widehat{\eta}^{E,i}\Big)+\epsilon_1+\epsilon_2+
 \underbrace{\popblue{\frac{2HN\sqrt{H/\epsilon_0}}{\sqrt{T}}}}_{\eqqcolon \epsilon_3}
           \\
 &\qquad\markref{(8)}{\le}
 \frac{1}{T}\sum\limits_{t=0}^{T-1}
 \sum\limits_{i\in\dsb{N}}
 \popblue{\overline{U}^{E,\intercal}} \Big(\widehat{\eta}^i_t
 - \widehat{\eta}^{E,i}\Big) \popblue{\pm U^{E,\intercal}
 \eta^{p^i,r^i,\pi^{E,i}}}+\epsilon_1+\epsilon_2+\epsilon_3
 \\
 &\qquad\markref{(9)}{\le}
 \frac{1}{T}\sum\limits_{t=0}^{T-1}
           \sum\limits_{i\in\dsb{N}}
           \overline{U}^{E,\intercal} \widehat{\eta}^i_t\popblue{\pm
           U^{E,\intercal}
   \eta^{p^i,r^i,\overline{\pi}^i_t}}
 - U^{E,\intercal}
 \eta^{p^i,r^i,\pi^{E,i}}+\popblue{2}\epsilon_1+\epsilon_2+\epsilon_3
 \\
 &\qquad\markref{(10)}{\le}
 \frac{1}{T}\sum\limits_{t=0}^{T-1}
           \sum\limits_{i\in\dsb{N}}
   \underbrace{U^{E,\intercal}
   \Big(\eta^{p^i,r^i,\overline{\pi}^i_t}-
 \eta^{p^i,r^i,\pi^{E,i}}\Big)}_{\popblue{\le0}}+2\epsilon_1+\epsilon_2+\epsilon_3
 +\popblue{\epsilon_4}
 \\
 &\qquad\markref{(11)}{\le}
 2\epsilon_1+\epsilon_2+\epsilon_3+\epsilon_4,
 \\
         \end{align*}
         where
         at (1) we apply the definition of (non)compatibility, at (2) we first
         upper bound the supremum of a sum with the sum of the supremum, and
         then we apply Lemma \ref{lemma: bound JE for tractor} w.p. $\delta/3$,
         and denote $\epsilon_1\coloneqq
         NL\sqrt{2H\epsilon_0}+\sum\limits_{i\in\dsb{N}}c H
         \sqrt{\frac{H\log\frac{NH\tau^{E,i}}
         {\delta\epsilon_0}}{\epsilon_0\tau^{E,i}}}$, at (3) we denote by
         $\fD_i$ the set of possible return distributions in environment $i$, at
         (4) we use the definition of $\widehat{U}$, and realize that all
         functions $U\in\fC_L(\widehat{U})$ can be constructed based on $T$
         functions $U_0\in\fC_L(\overline{U}_0),
         \dotsc,U_{T-1}\in\fC_L(\overline{U}_{T-1})$. At (5) we upper bound the
         maximum of the sum with the sum of maxima, and exchange the two
         summations, and we add and subtract the dot product between the
         (discretized) utility $U_t$ and the estimate of the return distribution
         computed at Line \ref{line: line compute etati tractor}; moreover, we
         bring the sup inside the summation. At (6) we upper bound the supremum
         of the sum with the sum of the supremum, and we apply Lemma \ref{lemma:
         bound J star for tractor} w.p. $\delta/3$, defining
         $\epsilon_2\coloneqq cNH^2
         \sqrt{\frac{1}{n}\Big(\log\frac{SAHN}{\delta}
         +(S-1)\log\big(e(1+n/(S-1))\big)\Big)} +NHL\epsilon_0+
         c'HN\sqrt{\frac{\log\frac{NT}{\delta}}{ K}}$,
         and we add and subtract a term, at (7) we apply Theorem H.2 from
         \citet{schlaginhaufen2024transferability} since set
         $\overline{\underline{\fU}}_L$ is closed and convex, where $D\coloneqq
         \max_{\overline{U},\overline{U}'\in \overline{\underline{\fU}}_L}
         \|\overline{U}-\overline{U}'\|_2=\sqrt{d-2}H =
         \sqrt{\floor{H/\epsilon_0}-1}H \le H\sqrt{H/\epsilon_0}$ (recall that we
         consider increasing and not strictly-increasing utilities),\footnote{The
         maximum is attained by discretized utilities
         $\overline{U},\overline{U}'$ that assign, respectively,
         $\overline{U}(y)=0$ and $\overline{U}'(y)=H$ to all the
         $y\in\cY\setminus\{y_1,y_d\}$.} and
         $\max_{\overline{U}\in\overline{\underline{\fU}}_L} \|\nabla
         \sum_{i\in\dsb{N}}\overline{U}^\intercal
         (\widehat{\eta}^i_t-\widehat{\eta}^{E,i})\|_2=\|
         \sum_{i\in\dsb{N}}\widehat{\eta}^i_t-\widehat{\eta}^{E,i}\|_2\le
         \sum_{i\in\dsb{N}}
         \|\widehat{\eta}^i_t\|_1+\|\widehat{\eta}^{E,i}\|_1=2N\eqqcolon G$
         (because $\widehat{\eta}^i_t$ and $\widehat{\eta}^{E,i}$ are probability
         distributions), with learning rate
         $\alpha=D/(G\sqrt{T})=H\sqrt{d-2}/(2N\sqrt{T})=
         \sqrt{\floor{H/\epsilon_0}-1}H/(2N\sqrt{T})$,
         at (8) we upper bound the minimum over utilities with a specific choice
         of utility, $\overline{U}^E$, and we add and subtract a term;  note that
         $\overline{U}^E\in\overline{\underline{\fU}}_L$ corresponds to the
         expert's utility $U^E\in \underline{\fU}_L$ (by hypothesis), i.e., for
         all $y\in\cY:\,\overline{U}^E(y)=U^E(y)$. Note that, by hypothesis,
         $U^E$ makes all the expert policies optimal, i.e., $\forall
         i\in\dsb{N}:\, U^{E,\intercal}\eta^{p^i,r^i,\pi^{E,i}} =\sup_{\pi}
         U^{E,\intercal}\eta^{p^i,r^i,\pi}$.
         At (9) we note that, under the good event of Lemma \ref{lemma: bound JE
         for tractor}, we can provide an upper bound using the term in Lemma
         \ref{lemma: bound JE for tractor} (since $U^E\in\underline{\fU}_L$); in
         addition, we sum and subtract a term that depends on some policy
         $\overline{\pi}^i_t$, whose existence is guaranteed by Lemma \ref{lemma:
         bound arbitrary difference of return distributions for tractor}, which
         we apply at the next step.
         At (10) we apply Lemma \ref{lemma: bound arbitrary difference of return
         distributions for tractor} w.p. $\delta/3$, and we define as
         $\epsilon_4$ the upper bound times $N$.
         Finally, at (11) we use the hypothesis that utility $U^E$
         makes the expert policy optimal in all environments.
 
         We want that $2\epsilon_1+\epsilon_2+\epsilon_3+\epsilon_4\le\epsilon$.
         We can rewrite the sum as:
         \begin{align*}
           &2\epsilon_1+\epsilon_2+\epsilon_3+\epsilon_4\\
           &\qquad=\Big(
             2NL\sqrt{2H\epsilon_0}+\frac{3}{2}LNH\epsilon_0\Big)+
             c\frac{HN\sqrt{H}}{\sqrt{\epsilon_0T}}\\
           &\qquad\qquad+
             c'\sum\limits_{i\in\dsb{N}}H
             \sqrt{\frac{H\log\frac{NH\tau^{E,i}}
             {\delta\epsilon_0}}{\epsilon_0\tau^{E,i}}}
         +c''NH\sqrt{\frac{\log\frac{NT}{\delta}}{K}} \\
           &\qquad\qquad+c'''NH^2
           \sqrt{\frac{1}{n}\Big(\log\frac{SAHN}{\delta}
           +(S-1)\log\big(e(1+n/(S-1))\big)\Big)}.
         \end{align*}
         By imposing each term smaller than $\epsilon/5$, we find that it
         suffices that
         \begin{align*}
           \begin{cases}
             \epsilon_0=\frac{\epsilon^2}{80 N^2L^2H}\\
             T\ge \cO\Big(\frac{N^2H^3}{\epsilon_0\epsilon^2}\Big)\ge
             \cO\Big(\frac{N^4H^4L^2}{\epsilon^4}\Big)\\
             \tau^{E,i}\ge \widetilde{\cO}\Big(
               \frac{H^3N^2\log\frac{NH}{\delta\epsilon_0}}{\epsilon_0\epsilon^2}  
             \Big)\ge
             \widetilde{\cO}\Big(
               \frac{H^4N^4L^2\log\frac{NHL}{\delta\epsilon}}{\epsilon^4}  
             \Big)\qquad\forall i\in\dsb{N}\\
             K\ge\widetilde{\cO}\Big(
               \frac{N^2H^2\log\frac{NT}{\delta}}{\epsilon^2}
             \Big)\ge 
             \widetilde{\cO}\Big(
               \frac{N^2H^2\log\frac{NHL}{\delta\epsilon}}{\epsilon^2}
             \Big)\\
             \tau^i\ge\widetilde{\cO}\Big(
               \frac{N^2SAH^5}{\epsilon^2}\Big(
               S+\log\frac{SAHN}{\delta}  
               \Big)\qquad\forall i\in\dsb{N}
           \end{cases},
         \end{align*}
         where we have used that $\tau^i=SAHn$ for all $i\in\dsb{N}$, and also
         used Lemma J.3 of \citet{lazzati2024offline}.
 
         The statement of the theorem follows through the application of a union bound.
 \end{proof}
 
 \begin{lemma}
   \label{lemma: bound JE for tractor}
   Let $\delta\in(0,1)$.
   Then, it holds that, w.p. at least $1-\delta$:
   \begin{align*}
     \sup\limits_{U\in\underline{\fU}_L}
     \sum\limits_{i\in\dsb{N}}\bigg|
       U^{\intercal}\widehat{\eta}^{E,i}-
       J^{\pi^{E,i}}(U;p^i,r^i)\bigg|\le 
     NL\sqrt{2H\epsilon_0}+\sum\limits_{i\in\dsb{N}}c H\sqrt{\frac{H\log\frac{NH\tau^{E,i}}
     {\delta\epsilon_0}}{\epsilon_0\tau^{E,i}}},
   \end{align*}
   where $c$ is some positive constant.
 \end{lemma}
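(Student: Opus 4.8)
The plan is to reduce the bound to two per-environment ingredients already developed in the analysis of \caty (cf.\ steps (3)--(6) in the proof of Theorem~\ref{thr: caty upper bound 1 u}): the categorical-projection bias, controlled in $1$-Wasserstein distance by Lemma~\ref{lemma: bound w1 etae}, and the statistical error of the empirical return distribution $\widehat{\eta}^{E,i}$, controlled uniformly over $L$-Lipschitz utilities by Lemma~\ref{lemma: bound estimation error tante U}. First I would move the supremum inside the sum over environments, using $\sup_{U}\sum_{i}|\cdot|\le\sum_{i}\sup_{U}|\cdot|$, so that it suffices to bound, for each fixed $i\in\dsb{N}$, the quantity $\sup_{U\in\underline{\fU}_L}|U^\intercal\widehat{\eta}^{E,i}-J^{\pi^{E,i}}(U;p^i,r^i)|$. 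Recalling $J^{\pi^{E,i}}(U;p^i,r^i)=\E_{G\sim\eta^{p^i,r^i,\pi^{E,i}}}[U(G)]$, I would add and subtract $\E_{G\sim\text{Proj}_{\cC}(\eta^{p^i,r^i,\pi^{E,i}})}[U(G)]$ inside the absolute value and split by the triangle inequality into (a) an estimation term $\sup_U|U^\intercal\widehat{\eta}^{E,i}-\E_{G\sim\text{Proj}_{\cC}(\eta^{p^i,r^i,\pi^{E,i}})}[U(G)]|$ and (b) a projection-bias term $\sup_U|\E_{G\sim\text{Proj}_{\cC}(\eta^{p^i,r^i,\pi^{E,i}})}[U(G)]-\E_{G\sim\eta^{p^i,r^i,\pi^{E,i}}}[U(G)]|$.

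For term (b), since every $U\in\underline{\fU}_L$ is $L$-Lipschitz, the Kantorovich--Rubinstein duality for $w_1$ (Eq.~\eqref{eq: definition wasserstein distance w1}) bounds it by $L\,w_1(\eta^{p^i,r^i,\pi^{E,i}},\text{Proj}_{\cC}(\eta^{p^i,r^i,\pi^{E,i}}))$, which by Lemma~\ref{lemma: bound w1 etae} is at most $L\sqrt{2H\epsilon_0}$; summing over $i$ gives the deterministic $NL\sqrt{2H\epsilon_0}$ contribution. For term (a), since $\E[\widehat{\eta}^{E,i}]=\text{Proj}_{\cC}(\eta^{p^i,r^i,\pi^{E,i}})$ (Lemma~\ref{lemma: expectation erd is projected true distribution}), I would invoke Lemma~\ref{lemma: bound estimation error tante U} for environment $i$ with confidence $1-\delta/N$; its internal estimate (Eq.~\eqref{eq: bound difference etaE for tractor} with $\delta$ replaced by $\delta/N$) yields $cH\sqrt{H\log(NH\tau^{E,i}/(\delta\epsilon_0))/(\epsilon_0\tau^{E,i})}$, and summing over $i$ produces the second contribution. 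A union bound over the $N$ environments turns the $N$ failure events of probability $\delta/N$ into total probability $\delta$, which completes the argument.

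There is essentially no hard obstacle here: the statement is an assembly of lemmas established for \caty, and the only point worth checking is that Lemma~\ref{lemma: bound estimation error tante U} was stated for an arbitrary subclass of $\fU_L$ while its proof only uses an $\epsilon'$-covering of $[0,H]^d$, hence it applies verbatim to the class $\underline{\fU}_L$ of merely increasing (rather than strictly increasing) $L$-Lipschitz utilities used by \tractor, with no change in the rate. One should also be careful to keep the probability budget split across the $N$ environments (and, within the larger proof of Theorem~\ref{thr: tractor upper bound}, across the other error sources), so that the $\log$ inside the square root picks up exactly the factor $N$ displayed in the statement.
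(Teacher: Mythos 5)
Your proposal is correct and follows essentially the same route as the paper: the paper's proof likewise repeats the decomposition from the proof of Theorem \ref{thr: caty upper bound 1 u} (add and subtract the expectation under $\text{Proj}_{\cC}(\eta^{p^i,r^i,\pi^{E,i}})$, bound the bias term by $L\,w_1$ via Lemma \ref{lemma: bound w1 etae} and the estimation term by the bound of Eq.~\eqref{eq: bound difference etaE for tractor} applied with probability $\delta/N$ over the supremum on $[0,H]^d$), and concludes with a union bound over the $N$ environments. Your remark that the covering argument in Lemma \ref{lemma: bound estimation error tante U} extends verbatim to increasing (not strictly increasing) utilities matches how the paper handles $\underline{\fU}_L$, since that bound is taken over all of $[0,H]^d$.
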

 \begin{proof}
   We can make the same derivation as in the proof of Theorem \ref{thr: caty
   upper bound 1 u} to upper bound the objective with the sum of two terms, which
   can then be bounded using Lemma \ref{lemma: bound w1 etae} and the expression
   (Eq. \eqref{eq: bound difference etaE for tractor}) obtained in the proof of
   Lemma \ref{lemma: bound estimation error tante U} w.p. $\delta/N$:
   \begin{align*}
     &\sup\limits_{U\in\underline{\fU}_L}
     \sum\limits_{i\in\dsb{N}}\bigg|
       U^{\intercal}\widehat{\eta}^{E,i}-
       J^{\pi^{E,i}}(U;p^i,r^i)\bigg|\\
     &\qquad\le
     L\sum\limits_{i\in\dsb{N}}w_1(\eta^{p^i,r^i,\pi^{E,i}},
     \text{Proj}_{\cC}(\eta^{p^i,r^i,\pi^{E,i}}))\\
     &\qquad\qquad+
     \sum\limits_{i\in\dsb{N}}
     \popblue{\sup\limits_{\overline{U}'\in[0,H]^d}}
     \big|\mathop{\E}\limits_{G\sim\text{Proj}_{\cC}(\eta^{p^i,r^i,\pi^{E,i}})}[\overline{U}'(G)]-
     \mathop{\E}\limits_{G\sim\widehat{\eta}^{E,i}}[\overline{U}'(G)]\big|\\
     &\qquad \le L N\sqrt{2H\epsilon_0}
     +\sum\limits_{i\in\dsb{N}}c H\sqrt{\frac{H\log\frac{NH\tau^{E,i}}
     {\delta\epsilon_0}}{\epsilon_0\tau^{E,i}}}.
   \end{align*}
   The result follows through the application of the union bound.
 \end{proof}
 
 \begin{lemma}\label{lemma: bound J star for tractor}
   Let $\delta\in(0,1)$. With
   probability at least $1-\delta$, for all $t\in\{0,1,\dotsc,T-1\}$, for all
   $i\in\dsb{N}$, it holds that:
   \begin{align*}
     \sup\limits_{
             U_t\in\fC_L(\overline{U}_t)}
     \max\limits_{\eta\in \fD_i}U_t^\intercal\eta
   - \overline{U}_t^\intercal \widehat{\eta}^i_t&\le 
   cH^2 \sqrt{\frac{1}{n}\Big(\log\frac{SAHN}{\delta}
   +(S-1)\log\big(e(1+n/(S-1))\big)\Big)}\\
   &\qquad+HL\epsilon_0+
   c'H\sqrt{\frac{\log\frac{NT}{\delta}}{ K}},
   \end{align*}
   where $c,c'$ are some positive constants.
 \end{lemma}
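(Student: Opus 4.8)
The plan is to fix $i\in\dsb{N}$, $t\in\{0,\dotsc,T-1\}$ and $U_t\in\fC_L(\overline{U}_t)$, and to relate both quantities in the statement to optimal performances of RS-MDPs. First, by definition of $\fD_i$ and of $J^*$ we have $\max_{\eta\in\fD_i}U_t^\intercal\eta=J^*(U_t;p^i,r^i)$. Second, let $\widehat{\psi}^{*,i}_t$ be the enlarged-state Markovian policy returned by \texttt{PLANNING} at Line \ref{line: planning tractor} and let $\widehat{\pi}^{*,i}_t$ be the non-Markovian policy it induces; by Theorem 3.1 of \citet{bauerle2014more} we have $\overline{U}_t^\intercal\eta^{\widehat{p}^i,\overline{r}^i,\widehat{\pi}^{*,i}_t}=J^*(\overline{U}_t;\widehat{p}^i,\overline{r}^i)$, and since \texttt{ROLLOUT} at Line \ref{line: rollout tractor} plays $\widehat{\psi}^{*,i}_t$ (equivalently, the induced $\widehat{\pi}^{*,i}_t$) in the MDP with dynamics $\widehat{p}^i$ and reward $\overline{r}^i$, the $K$ returns it collects are i.i.d.\ draws from $\eta^{\widehat{p}^i,\overline{r}^i,\widehat{\pi}^{*,i}_t}$, whence $\widehat{\eta}^i_t$ from Line \ref{line: line compute etati tractor} is an unbiased estimate of $\eta^{\widehat{p}^i,\overline{r}^i,\widehat{\pi}^{*,i}_t}$, i.e.\ $\E[\overline{U}_t^\intercal\widehat{\eta}^i_t]=J^*(\overline{U}_t;\widehat{p}^i,\overline{r}^i)$. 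Finally, because $\overline{r}^i$ takes values in $\cR$, all return distributions of the discretized-reward MDPs are supported on $\cY$, where every extension $U_t\in\fC_L(\overline{U}_t)$ coincides with $\overline{U}_t$; hence $J^*(U_t;\widehat{p}^i,\overline{r}^i)=J^*(\overline{U}_t;\widehat{p}^i,\overline{r}^i)$ and the whole bound is insensitive to the choice of extension, so that the $\sup$ over $U_t\in\fC_L(\overline{U}_t)$ is harmless.

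With these identities I would decompose
\[
  \max_{\eta\in\fD_i}U_t^\intercal\eta-\overline{U}_t^\intercal\widehat{\eta}^i_t
  =\underbrace{\big(J^*(U_t;p^i,r^i)-J^*(\overline{U}_t;\widehat{p}^i,\overline{r}^i)\big)}_{\text{(A)}}
  +\underbrace{\big(J^*(\overline{U}_t;\widehat{p}^i,\overline{r}^i)-\overline{U}_t^\intercal\widehat{\eta}^i_t\big)}_{\text{(B)}}.
\]
By the support identity above, (A) equals $J^*(U_t;p^i,r^i)-J^*(U_t;\widehat{p}^i,\overline{r}^i)$, which is exactly the signed version of the quantity controlled, uniformly over all $L$-Lipschitz utilities, in the proof of Lemma \ref{lemma: bound J star all utilities}: passing from $r^i$ to $\overline{r}^i$ costs at most $HL\epsilon_0$ by Proposition 3 of \citet{wu2023risksensitive} (using that $U_t$ is $L$-Lipschitz), and passing from $p^i$ to $\widehat{p}^i$ is bounded by Lemma \ref{lemma: for multiple utilities}; crucially the concentration event of that lemma involves only $\widehat{p}^i$ and holds for every utility at once, so it covers the data-dependent (random) $\overline{U}_t$ and all iterations $t$ simultaneously --- only a union bound over the $N$ environments is needed, turning $\log\frac{SAH}{\delta}$ into $\log\frac{SAHN}{\delta}$. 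This gives, with probability at least $1-\delta/2$, $\text{(A)}\le HL\epsilon_0+cH^2\sqrt{\frac1n\big(\log\frac{SAHN}{\delta}+(S-1)\log(e(1+n/(S-1)))\big)}$ for all $i,t$ and all $U_t$. For term (B), conditionally on $\widehat{p}^i$ and $\overline{U}_t$ the rollout returns are i.i.d.\ and $\overline{U}_t$ is valued in $[0,H]$, so Hoeffding's inequality together with a union bound over the $NT$ pairs $(i,t)$ yields $|\text{(B)}|\le c'H\sqrt{\log(NT/\delta)/K}$ with probability at least $1-\delta/2$.

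Combining the two events via a union bound and taking the $\sup$ over $U_t\in\fC_L(\overline{U}_t)$ (harmless by the discretized-support argument) gives the stated bound. The only points requiring care are purely bookkeeping: keeping the model-estimation error of (A) --- which is incurred once, uniformly in $t$ and in the choice of extension --- separate from the fresh Monte-Carlo noise of (B), which is what forces the $\log(NT/\delta)$ factor; and noting that \texttt{PLANNING} ranging over the full grid $\cS^i\times\cY_h$ rather than over $\cS^i\times\cG^{\widehat{p}^i,\overline{r}^i}_h$ is innocuous since the extra enlarged states are unreachable under $\widehat{p}^i$. The substantive estimation work has already been done in Lemma \ref{lemma: for multiple utilities}, so this lemma is essentially a decomposition plus a union bound.
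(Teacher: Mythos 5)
Your proposal is correct and follows essentially the same route as the paper: the same add-and-subtract of $J^*(U_t;\widehat{p}^i,\overline{r}^i)$, the same identification of $\overline{U}_t^\intercal\widehat{\eta}^i_t$ as a Monte-Carlo estimate of $J^*(\overline{U}_t;\widehat{p}^i,\overline{r}^i)$, the same use of the utility-uniform bound (Proposition 3 of \citet{wu2023risksensitive} for the $HL\epsilon_0$ term plus Lemma \ref{lemma: for multiple utilities}, i.e.\ Lemma \ref{lemma: bound J star all utilities}) to handle the data-dependent $\overline{U}_t$ with only an $N$-fold union bound, and Hoeffding with an $NT$-fold union bound for the rollout noise. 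The handling of the $\sup$ over extensions via the $\cY$-support argument matches the paper as well.
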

 \begin{proof}
   We use the notation in Section \ref{sec: online utility learning}. In
   particular, 
   let
   policy $\widehat{\pi}^{*,i}_t$ be the optimal policy in the RS-MDP
   $\widehat{\cM}^i_{\overline{U}_t}\coloneqq\tuple{\cS^i,\cA^i,H,s_0^i,
   \widehat{p}^i, \overline{r}^i,\overline{U}_t}$, i.e.:
 \begin{align*}
   J^{\widehat{\pi}^{*,i}_t}(\overline{U}_t;\widehat{p}^i,\overline{r}^i)
   = J^*(\overline{U}_t;\widehat{p}^i,\overline{r}^i)
   = J^*(U_t;\widehat{p}^i,\overline{r}^i),
 \end{align*}
 where the last passage holds trivially for all $U_t\in\fC_L(\overline{U}_t)$
 (because there is no evaluation of utility outside $\cY$).

   Thus,
   for all $t\in\{0,1,\dotsc,T-1\}$, we have:
   \begin{align*}
     &\sup\limits_{
       U_t\in\fC_L(\overline{U}_t)}
 \max\limits_{\eta\in \fD_i}U_t^\intercal\eta
 - \overline{U}_t^\intercal \widehat{\eta}^i_t\popblue{\pm 
 J^*(U_t;\widehat{p}^i,\overline{r}^i)}\\
   &\qquad\markref{(1)}{\le}\sup\limits_{
       U_t\in\fC_L(\overline{U}_t)}
   \Big|J^*(U_t;p^i,r^i)-
   J^*(U_t;\widehat{p}^i,\overline{r}^i)\Big|
   +
   \Big|\popblue{\overline{U}_t}^\intercal\Big(\widehat{\eta}^i_t-
   \popblue{\eta^{\widehat{p}^i,\overline{r}^i,\widehat{\pi}^{*,i}_t}}\Big)\Big|\\
   &\qquad \markref{(2)}{\le}
   HL\epsilon_0+cH^2 \sqrt{\frac{1}{n}\Big(\log\frac{SAHN}{\delta}
     +(S-1)\log\big(e(1+n/(S-1))\big)\Big)}\\
     &\qquad\qquad+
     \Big|\overline{U}_t^\intercal \Big(\widehat{\eta}^i_t-
     \eta^{\widehat{p}^i,\overline{r}^i,\widehat{\pi}^{*,i}_t}\Big)\Big|\\
   &\qquad \markref{(3)}{\le}
   HL\epsilon_0+cH^2 \sqrt{\frac{1}{n}\Big(\log\frac{SAHN}{\delta}
     +(S-1)\log\big(e(1+n/(S-1))\big)\Big)}\\
     &\qquad\qquad+
     c'H\sqrt{\frac{\log\frac{NT}{\delta}}{ K}},
   \end{align*}
   where at (1) we have applied the triangle inequality, and realized that in the
  second term there is no dependence on the value of utility outside of $\cY$;
  moreover, we have used that $J^*(U_t;\widehat{p}^i,\overline{r}^i)=
  \overline{U}_t^\intercal
  \eta^{\widehat{p}^i,\overline{r}^i,\widehat{\pi}^{*,i}_t}$ by definition of
  policy $\widehat{\pi}^{*,i}_t$.
  At (2) we apply Lemma \ref{lemma: bound J star all utilities} (our
  $J^*(U_t;\widehat{p}^i,\overline{r}^i)$ has the same meaning of
  $\widehat{J}^*(U)$ in the lemma, and we upper bound $\sup_{
  U_t\in\fC_L(\overline{U}_t)}$ with $\sup_{ U\in\underline{\fU}_L}$) w.p.
  $\delta/(2N)$,\footnote{We remark that, in doing so, we can still apply
  Proposition 3 of \citet{wu2023risksensitive} inside the proof of Lemma
  \ref{lemma: bound J star all utilities} even though we consider
  \emph{increasing} utilities instead of \emph{strictly-increasing} utilities;
  indeed, it is trivial to observe that the proof of Proposition 3 of
  \citet{wu2023risksensitive} does not depend on such property.} and we keep the
  confidence bound explicit, and we upper bound $d\le H/\epsilon_0+1$, 
  and at (3) we observe that $\widehat{\eta}^i_t$ is the empirical estimate of
  distribution $\eta^{\widehat{p}^i,\overline{r}^i,\widehat{\pi}^{*,i}_t}$ (see
  Line \ref{line: line compute etati tractor}) obtained through the sampling of
  $K$ sample returns $G_1,G_2,\dotsc,G_K \overset{\text{i.i.d.}}{\sim}
  \eta^{\widehat{p}^i,\overline{r}^i,\widehat{\pi}^{*,i}_t}$. Indeed, note that
  the policy $\widehat{\psi}^{*,i}_t$, computed at Line \ref{line: planning
  tractor} and optimal for
  $\fE[\widehat{\cM}^i_{\overline{U}_t}]=\tuple{\{\cS^i\times\cY_h\}_h,\cA^i,H,s_0^i,
  \widehat{\fp}^i, \fr^i_t}$,\footnote{See Section \ref{sec: preliminaries} for
  the meaning of $\widehat{\fp}^i$ and $\fr^i_t$; we use $\cY_h$ for all $h$ in
  the state space instead of the sets of partial returns
  $\{\cG^{\widehat{p}^i,\overline{r}^i}_h\}_h$ in order to obtain policy
  $\widehat{\psi}^{*,i}_t$ supported on the entire $\cS\times\cY_h$ space, and to
  make it compliant with Algorithm \ref{alg: planning}} provides policy
  $\widehat{\pi}^{*,i}_t$ through the formula in Section \ref{sec:
  preliminaries}, thus Line \ref{line: rollout tractor} is actually simulating
  $\widehat{\pi}^{*,i}_t$ in MDP $\widehat{\cM}^i$. Therefore, we can apply
  Hoeffding's inequality (e.g., see Lemma \ref{lemma: bound estimation error 1
  U}) w.p. $\delta/(2TN)$.

   The result follows through the application of the union bound.
 
   We remark that in one case we use probability $\delta/(2N)$ (without $T$)
   while in the other we use $\delta/(2NT)$ (with $T$), because in the former we
   provide a guarantee for all possible utilities w.r.t. the optimal performance,
   thus all the $T$ steps are already included; instead, in the latter, we
   provide a guarantee for a single utility and for a single policy at a specific
   $t\in\{0,\dotsc,T-1\}$, thus we have to compute a union bound with $T$.
 \end{proof}
 
 \begin{lemma}
   \label{lemma: bound arbitrary difference of return distributions for tractor}
   Let $\delta\in(0,1)$. With
   probability at least $1-\delta$, for all $i\in\dsb{N}$ and $t\in\{0,\dotsc,T-1\}$, under
   the good event in Lemma \ref{lemma: bound J star for tractor}, there exists a
   policy $\overline{\pi}^i_t$ such that:
   \begin{align*}
     \overline{U}^{E,\intercal} \widehat{\eta}^i_t- U^{E,\intercal}
   \eta^{p^i,r^i,\overline{\pi}^i}&\le LH\epsilon_0/2+
   cH\sqrt{\frac{\log\frac{NT}{\delta}}{ K}}\\
   &\qquad+c'H^2\sqrt{\frac{1}{n}\Big(\log\frac{SAHN}{\delta}
   +(S-1)\log\big(e(1+n/(S-1))\big)\Big)},
   \end{align*}
   where $c,c'$ are positive constants.
 \end{lemma}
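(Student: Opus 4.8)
The plan is to exhibit the policy $\overline{\pi}^i_t$ explicitly and then split the quantity of interest into a Monte-Carlo term, a transition-model-mismatch term, and a reward-discretization term, matching the three summands on the right-hand side. I would take $\widehat{\psi}^{*,i}_t$ to be the Markovian policy for the enlarged-state MDP $\fE[\widehat{\cM}^i_{\overline{U}_t}]$ computed at Line~\ref{line: planning tractor}, and define $\overline{\pi}^i_t$ to be the non-Markovian policy on $\cM^i=\tuple{\cS^i,\cA^i,H,s_0^i,p^i,r^i}$ that, on a history $\omega=\tuple{s_1,a_1,\dotsc,s_h}$, plays $\overline{\pi}^i_{t,h}(\omega)\coloneqq\widehat{\psi}^{*,i}_{t,h}\big(s_h,\sum_{h'<h}\overline{r}^i_{h'}(s_{h'},a_{h'})\big)$, i.e., it tracks the cumulative \emph{discretized} reward and feeds it to $\widehat{\psi}^{*,i}_t$. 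This is a legitimate policy since $\overline{r}^i$ is known to the learner, and by the enlarged-state-space correspondence it is exactly the policy that \texttt{ROLLOUT} simulates at Line~\ref{line: rollout tractor} inside $\widehat{\cM}^i\coloneqq\tuple{\cS^i,\cA^i,H,s_0^i,\widehat{p}^i,\overline{r}^i}$, so $\widehat{\eta}^i_t$ is the empirical distribution of $K$ i.i.d.\ returns of $\overline{\pi}^i_t$ in $\widehat{\cM}^i$. A triangle inequality then gives
\begin{align*}
  \overline{U}^{E,\intercal}\widehat{\eta}^i_t-U^{E,\intercal}\eta^{p^i,r^i,\overline{\pi}^i_t}
  &\le\underbrace{\big|\overline{U}^{E,\intercal}\widehat{\eta}^i_t-\overline{U}^{E,\intercal}\eta^{\widehat{p}^i,\overline{r}^i,\overline{\pi}^i_t}\big|}_{(\mathrm{I})}
  +\underbrace{\big|\overline{U}^{E,\intercal}\big(\eta^{\widehat{p}^i,\overline{r}^i,\overline{\pi}^i_t}-\eta^{p^i,\overline{r}^i,\overline{\pi}^i_t}\big)\big|}_{(\mathrm{II})}\\
  &\qquad+\underbrace{\big|\overline{U}^{E,\intercal}\eta^{p^i,\overline{r}^i,\overline{\pi}^i_t}-U^{E,\intercal}\eta^{p^i,r^i,\overline{\pi}^i_t}\big|}_{(\mathrm{III})}.
\end{align*}

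For $(\mathrm{I})$ I would note that $\overline{U}^{E,\intercal}\widehat{\eta}^i_t=\tfrac{1}{K}\sum_{k=1}^K\overline{U}^E(G_k)$ with $G_1,\dotsc,G_K$ i.i.d.\ from $\eta^{\widehat{p}^i,\overline{r}^i,\overline{\pi}^i_t}$ and each summand in $[0,H]$; Hoeffding's inequality together with a union bound over the $NT$ pairs $(i,t)$ controls $(\mathrm{I})$ by $cH\sqrt{\log(NT/\delta)/K}$ with probability at least $1-\delta$ --- this is the fresh randomness that the $\delta$ in the statement pays for (it is not part of the good event of Lemma~\ref{lemma: bound J star for tractor}, which only concentrates $\widehat{\eta}^i_t$ against $\overline{U}_t$). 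For $(\mathrm{III})$ I would observe that the law of the state-action trajectory generated by $\overline{\pi}^i_t$ in $\cM^i$ does not depend on which reward is attached, so the same $\omega$ underlies both terms; since its discretized return $\overline{G}(\omega)=\sum_h\overline{r}^i_h(s_h,a_h)$ lies in $\cY$ (hence $\overline{U}^E(\overline{G}(\omega))=U^E(\overline{G}(\omega))$ by definition of $\overline{U}^E$) and differs from the true return $G(\omega)=\sum_h r^i_h(s_h,a_h)$ by at most $\epsilon_0/2$ per step from the nearest-grid rounding $\overline{r}^i_h=\Pi_{\cR}[r^i_h]$, the $L$-Lipschitzness of $U^E$ bounds $(\mathrm{III})$ by $LH\epsilon_0/2$.

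The bulk of the work is $(\mathrm{II})$, which is a fixed-policy evaluation gap between the estimated and true enlarged transition models. Writing $\fr^{i,U^E}_h(s,y,a)\coloneqq\overline{U}^E(y+\overline{r}^i_h(s,a))\indic{h=H}\in[0,H]$ for the enlarged reward, $(\mathrm{II})$ equals $\big|V_1^{\widehat{\psi}^{*,i}_t}(s_0^i,0;\widehat{\fp}^i,\fr^{i,U^E})-V_1^{\widehat{\psi}^{*,i}_t}(s_0^i,0;\fp^i,\fr^{i,U^E})\big|$. I would unroll the Bellman equation for the fixed policy $\widehat{\psi}^{*,i}_t$ exactly as in the displayed computation inside the proof of Lemma~\ref{lemma: lemma 4} and then bound as in the proof of Lemma~\ref{lemma: for multiple utilities}: the gap is at most $H$ times the expected (along the roll-out of $\widehat{\psi}^{*,i}_t$ under $\widehat{\fp}^i$) sum over stages of $\|p^i_{h'}(\cdot|s_{h'},a_{h'})-\widehat{p}^i_{h'}(\cdot|s_{h'},a_{h'})\|_1$, and Pinsker's inequality plus the concentration event of Lemma~\ref{lemma: for multiple utilities} --- which is contained in the good event of Lemma~\ref{lemma: bound J star for tractor} that we condition on, and which holds uniformly over all $(s,a,h)$, all policies, and all utilities because it only controls the transition estimates in $1$-norm --- yields $(\mathrm{II})\le c'H^2\sqrt{\tfrac{1}{n}\big(\log(SAHN/\delta)+(S-1)\log\big(e(1+n/(S-1))\big)\big)}$. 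Summing $(\mathrm{I})+(\mathrm{II})+(\mathrm{III})$ gives the claimed inequality.

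The delicate point I expect to be the main obstacle is precisely that $(\mathrm{II})$ must be controlled by a quantity \emph{independent of the unknown utility $U^E$ and of the iteration-dependent policy $\widehat{\psi}^{*,i}_t$}; this is why the argument must route through the policy- and utility-agnostic $1$-norm concentration of Lemma~\ref{lemma: for multiple utilities} rather than a value-function-specific Hoeffding bound, and it is also what lets us avoid any extra union bound over $t$ for that term. The rest --- collecting constants, the union bound over $(i,t)$ for $(\mathrm{I})$, and checking that $\overline{\pi}^i_t$ played in $\widehat{\cM}^i$ induces the same return law as $\widehat{\psi}^{*,i}_t$ in $\fE[\widehat{\cM}^i_{\overline{U}_t}]$ (which is the enlarged-state correspondence of Section~\ref{sec: preliminaries}) --- is routine bookkeeping.
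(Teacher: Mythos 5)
Your proposal is correct and follows essentially the same route as the paper: the same three-way split into a Hoeffding/Monte-Carlo term (union bound over the $NT$ pairs), a fixed-policy transition-mismatch term controlled via the utility- and policy-agnostic $1$-norm/KL concentration under the conditioned-on good event, and a reward-discretization term of size $LH\epsilon_0/2$. The only cosmetic difference is that you construct $\overline{\pi}^i_t$ explicitly from $\widehat{\psi}^{*,i}_t$, whereas the paper obtains it from an auxiliary existence lemma (Lemma \ref{lemma: bound performance same policy different rewards for tractor}) whose proof is exactly your Lipschitz-plus-per-step-rounding argument for term $(\mathrm{III})$.
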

 \begin{proof}
   First, simply observe that $\widehat{\eta}^i_t$ is the empirical estimate (see
   Line \ref{line: line compute etati tractor}) of
   $\eta^{\widehat{p}^i,\overline{r}^i,\widehat{\pi}^{*,i}_t}$, thus,
   similarly to the proof of Lemma \ref{lemma: bound J star for tractor}, for all $i\in\dsb{N}$
   and $t\in\{0,1,\dotsc,T-1\}$, we can apply Hoeffding's inequality w.p. $\delta/(2TN)$:
   \begin{align*}
     \Big|\overline{U}^{E,\intercal} \Big(\widehat{\eta}^i_t
     -\eta^{\widehat{p}^i,\overline{r}^i,\widehat{\pi}^{*,i}_t}
     \Big)\Big|\le
     cH\sqrt{\frac{\log\frac{NT}{\delta}}{ K}}.
   \end{align*}
 
   Now, we compare distributions
   $\eta^{\widehat{p}^i,\overline{r}^i,\widehat{\pi}^{*,i}_t}$ and
   $\eta^{\popblue{p^i},\overline{r}^i,\widehat{\pi}^{*,i}_t}$. 
   Through straightforward passages, we can write:
   \begin{align*}
     &|U^{E,\intercal}\Big(\eta^{\widehat{p}^i,\overline{r}^i,\widehat{\pi}^{*,i}_t}-
     \eta^{p^i,\overline{r}^i,\widehat{\pi}^{*,i}_t}\Big)|\\
     &\qquad=
     |J^{\widehat{\pi}^{*,i}_t}(\overline{U}^E;\widehat{p}^i,\overline{r}^i)
     -
     J^{\widehat{\pi}^{*,i}_t}(\overline{U}^E;p^i,\overline{r}^i)|\\
     &\qquad=\Big|\sum\limits_{s'\in\cS}
     p^i_{1}(s'|s_0^i,\widehat{\pi}^{*,i}_{t,1}(s_0^i))
     V_{2}^{\widehat{\pi}^{*,i}_t}(s';p^i,\overline{r}^i)\\
     &\qquad\qquad-
     \sum\limits_{s'\in\cS}
     \widehat{p}^i_{1}(s'|s_0^i,\widehat{\pi}^{*,i}_{t,1}(s_0^i))
     V_{2}^{\widehat{\pi}^{*,i}_t}(s';\widehat{p}^i,\overline{r}^i)\Big|\\
   &\qquad\le \bigg|\sum\limits_{s'\in\cS}
     \Big(p^i_{1}(s'|s_0^i,\widehat{\pi}^{*,i}_{t,1}(s_0^i))-
     \widehat{p}^i_{1}(s'|s_0^i,\widehat{\pi}^{*,i}_{t,1}(s_0^i))\Big)
     V_{2}^{\widehat{\pi}^{*,i}_t}(s';p^i,\overline{r}^i)
   \bigg|\\
   &\qquad\qquad+\sum\limits_{s'\in\cS}
     \widehat{p}^i_{1}(s'|s_0^i,\widehat{\pi}^{*,i}_{t,1}(s_0^i,0))\Big|
       V_{2}^{\widehat{\pi}^{*,i}_t}(s';p^i,\overline{r}^i)-
       V_{2}^{\widehat{\pi}^{*,i}_t}(s';\widehat{p}^i,\overline{r}^i)
     \Big|\\
   &\qquad\le\dotsc\\
     &\qquad\le
     \mathop{\E}\limits_{\widehat{p}^i,\overline{r}^i,\widehat{\pi}^{*,i}_t}\bigg[
       \sum\limits_{h'=1}^H \bigg|\sum\limits_{s'\in\cS}\Big(
         p^i_{h'}(s'|s_{h'},a_{h'})-\widehat{p}^i_{h'}(s'|s_{h'},a_{h'})\Big)
         V_{h'+1}^{\widehat{\pi}^{*,i}_t}(s';p^i,\overline{r}^i)\bigg|\\
         &\qquad\qquad\bigg|\,s_1=s_0^i  
         \bigg]\\
 &\qquad\le
 \popblue{H}\mathop{\E}\limits_{\widehat{p}^i,\overline{r}^i,\widehat{\pi}^{*,i}_t}\bigg[
   \sum\limits_{h'=1}^H \popblue{\Big\|}
     p^i_{h'}(\cdot|s_{h'},a_{h'})-\widehat{p}^i_{h'}(\cdot|s_{h'},a_{h'})\popblue{\Big\|_1}
     \bigg|\,s_1=s_0^i  
     \bigg]\\
   &\qquad\le
   H\mathop{\E}\limits_{\widehat{p}^i,\overline{r}^i,\widehat{\pi}^{*,i}_t}\bigg[
     \sum\limits_{h'=1}^H \popblue{\sqrt{2\text{KL}(p^i_{h'}(\cdot|s_{h'},a_{h'})
     \|\widehat{p}^i_{h'}(\cdot|s_{h'},a_{h'}))}}
       \bigg|\,s_1=s_0^i  
       \bigg],
   \end{align*}
   where at the last passage we applied the Pinsker's inequality.
   Note that the previous derivation was possible as long as 
   as policy $\widehat{\pi}^{*,i}_t$ is defined over all the possible pairs
   state-cumulative reward $(s,y)\in\cS\times\cY_h$ for all $h\in\dsb{H}$.
   Since we construct it through policy $\widehat{\psi}^{*,i}_t$, obtained at
   Line \ref{line: planning tractor}, i.e., over the entire enlarged state space
   $\{\cS\times\cY_h\}_h$, then policy $\widehat{\pi}^{*,i}_t$ satsifies such
   property.
   Now, in the proof of Lemma \ref{lemma: bound J star for tractor} we used Lemma
   \ref{lemma: for multiple utilities}, in which event $\cE$ bounds the
   KL-divergence between transition models. Therefore, under the application of
   Lemma \ref{lemma: bound J star for tractor}, it holds that:
   \begin{align*}
     |U^{E,\intercal}\Big(\eta^{\widehat{p}^i,\overline{r}^i,\widehat{\pi}^{*,i}_t}-
     \eta^{p^i,\overline{r}^i,\widehat{\pi}^{*,i}_t}\Big)|
     \le H^2 \sqrt{\frac{2}{n}\Big(\log\frac{SAHN}{\delta}
     +(S-1)\log\big(e(1+n/(S-1))\big)\Big)},
   \end{align*}
   where $n$ is the number of samples takes at each $(s,a,h)\in\SAH$ in the
   $i\in\dsb{N}$ MDP.
 
   Therefore, we can finally write:
   \begin{align*}
     &\overline{U}^{E,\intercal} \widehat{\eta}^i_t- U^{E,\intercal}
     \eta^{p^i,r^i,\overline{\pi}^i}\popblue{\pm \overline{U}^{E,\intercal}
     \eta^{\widehat{p}^i,\overline{r}^i,\widehat{\pi}^{*,i}_t}
     \pm \overline{U}^{E,\intercal}
     \eta^{p^i,\overline{r}^i,\widehat{\pi}^{*,i}_t}}\\
     &\qquad=
     U^{E,\intercal}\Big(
       \eta^{p^i,\overline{r}^i,\widehat{\pi}^{*,i}_t}-\eta^{p^i,r^i,\overline{\pi}^i}
       \Big)
     +\overline{U}^{E,\intercal}\Big(
       \eta^{\widehat{p}^i,\overline{r}^i,\widehat{\pi}^{*,i}_t}
       -\eta^{p^i,\overline{r}^i,\widehat{\pi}^{*,i}_t}
     \Big)\\
     &\qquad\qquad
     + \overline{U}^{E,\intercal}
     \Big(\widehat{\eta}^i_t-
     \eta^{\widehat{p}^i,\overline{r}^i,\widehat{\pi}^{*,i}_t}
     \Big)
     \\
   &\qquad \markref{(1)}{\le}
   U^{E,\intercal}\Big(
       \eta^{p^i,\overline{r}^i,\widehat{\pi}^{*,i}_t}-\eta^{p^i,r^i,\overline{\pi}^i}
       \Big)+cH\sqrt{\frac{\log\frac{NT}{\delta}}{ K}}\\
   &\qquad\qquad
   +c'H^2 \sqrt{\frac{2}{n}\Big(\log\frac{SAHN}{\delta}
   +(S-1)\log\big(e(1+n/(S-1))\big)\Big)}\\
   &\qquad \markref{(2)}{\le} \popblue{LH\epsilon_0/2}
   +cH\sqrt{\frac{\log\frac{NT}{\delta}}{ K}}
   \\
   &\qquad\qquad
   +c'H^2 \sqrt{\frac{2}{n}\Big(\log\frac{SAHN}{\delta}
   +(S-1)\log\big(e(1+n/(S-1))\big)\Big)},
   \end{align*}
   where at (1) we have used the bounds derived earlier, and at (2) we have
   applied Lemma \ref{lemma: bound performance same policy different rewards for
   tractor}, noticing that we can choose policy $\overline{\pi}^i$ as we wish,
   and using that $k\le\epsilon_0/2$.
   
 \end{proof}
 
 \begin{lemma}\label{lemma: bound performance same policy different rewards for
   tractor} Let $\cM_1=\tuple{\cS,\cA,H,s_0,p,r^1}$ and
   $\cM_2=\tuple{\cS,\cA,H,s_0,p,r^2}$ be two MDPs with deterministic rewards
   that differ only in the reward function $r^1\neq r^2$, and assume that, for
   all $(s,a,h)\in\SAH$, it holds that $|r^1_h(s,a)-r^2_h(s,a)|\le k$, for some
   $k\ge 0$. Let $\pi^1$ be an arbitrary (potentially non-Markovian) policy that
   induces, in $\cM_1$, the distribution over returns $\eta^{p,r^1,\pi^1}$. Then,
   there exists a policy $\pi^2$ that induces in $\cM_2$ the distribution
   $\eta^{p,r^2,\pi^2}$ such that:
   \begin{align*}
     \sup\limits_{U\in\underline{\fU}_L}\Big|
     \E_{G\sim \eta^{p,r^1,\pi^1}}[U(G)]-
     \E_{G\sim \eta^{p,r^2,\pi^2}}[U(G)]
     \Big|\le LHk.
   \end{align*}
 \end{lemma}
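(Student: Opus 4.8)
The plan is to take $\pi^2$ to be \emph{literally the same object} as $\pi^1$. This is legitimate: since $\cM_1$ and $\cM_2$ share $\cS,\cA,H,s_0$, they share the trajectory spaces $\Omega_h$, and a (possibly non-Markovian) policy here is a family of maps $\pi_h:\Omega_h\to\cA$ whose \emph{domain does not involve the reward} --- it reads only the state--action history. So $\pi^2\coloneqq\pi^1$ is a well-defined policy for $\cM_2$, and the task reduces to controlling the effect of changing the reward while the policy (and the dynamics) stay fixed.

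First I would record the key structural fact that the trajectory law $\P_{p,r,\pi}$ over $\Omega$ does not depend on $r$ at all: it is determined by $s_0$, $p$, and $\pi$ alone. Writing $\mu$ for this common law of both $(p,r^1,\pi^1)$ and $(p,r^2,\pi^2)$, and writing $g^{j}(\omega)\coloneqq\sum_{h=1}^H r^j_h(s_h,a_h)\in[0,H]$ for the return of $\omega=\tuple{s_1,a_1,\dots,s_H,a_H,s_{H+1}}$ under $r^j$ (this lies in $[0,H]$ since rewards lie in $[0,1]$, so every $U\in\underline{\fU}_L$ can be evaluated there), the definition of the return distribution gives $\E_{G\sim\eta^{p,r^j,\pi^j}}[U(G)]=\E_{\omega\sim\mu}[U(g^j(\omega))]$ for $j\in\{1,2\}$. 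In particular the two expectations are computed against the \emph{same} measure $\mu$ on trajectories, differing only through the return functionals $g^1,g^2$.

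Next I would bound the two returns trajectory by trajectory: by the triangle inequality and the per-stage hypothesis $|r^1_h(s,a)-r^2_h(s,a)|\le k$, for every $\omega$ we get $|g^1(\omega)-g^2(\omega)|\le\sum_{h=1}^H|r^1_h(s_h,a_h)-r^2_h(s_h,a_h)|\le Hk$. Combining, for any $U\in\underline{\fU}_L$ I would write
\begin{align*}
  \Big|\E_{G\sim\eta^{p,r^1,\pi^1}}[U(G)]-\E_{G\sim\eta^{p,r^2,\pi^2}}[U(G)]\Big|
  &=\Big|\E_{\omega\sim\mu}\big[U(g^1(\omega))-U(g^2(\omega))\big]\Big|\\
  &\le\E_{\omega\sim\mu}\big|U(g^1(\omega))-U(g^2(\omega))\big|
  \le L\,\E_{\omega\sim\mu}\big|g^1(\omega)-g^2(\omega)\big|
  \le LHk,
\end{align*}
using $|\E X|\le\E|X|$ and then the $L$-Lipschitzness of $U$; taking the supremum over $U\in\underline{\fU}_L$ finishes the proof.

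There is essentially no technical obstacle here; the one point that must be articulated carefully is \emph{why} transferring $\pi^1$ verbatim is valid --- namely that in this model non-Markovian policies depend on the state--action history only (not on accumulated reward) and the trajectory distribution is reward-free, so the change $r^1\to r^2$ acts purely through the return functional under a fixed coupling $\mu$. Note that the argument exploits only the Lipschitz property of $U$, not monotonicity or the normalization $U(0)=0,U(H)=H$, so the stated bound in fact holds uniformly over \emph{all} $L$-Lipschitz functions on $[0,H]$, not just over $\underline{\fU}_L$.
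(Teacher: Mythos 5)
Your proposal is correct and follows essentially the same route as the paper's proof: both arguments reduce to the fact that the state--action trajectory law is reward-independent, bound the two returns pathwise by $Hk$ via the per-stage hypothesis, and conclude with $L$-Lipschitzness of $U$. The only cosmetic difference is that you take $\pi^2=\pi^1$ verbatim (legitimate, since the paper's non-Markovian policies are maps $\pi_h:\Omega_h\to\cA$ of the state--action history only), whereas the paper phrases this as \emph{constructing} a $\pi^2$ matching the state--action trajectory distribution, to also cover policies described through the (deterministic, hence redundant) received rewards.
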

 \begin{proof}
   A non-Markovian policy like $\pi^1$, in its most general form, prescribes
   actions at stages $h\in\dsb{H}$ depending on the sequence of
   state-action-reward
   $\tuple{s_1,a_1,r_1,s_2,a_2,r_2,\dotsc,s_{h-1},a_{h-1},r_{h-1},s_h}$ received
   so far. Since, by hypothesis, the reward functions are deterministic (see also
   Section \ref{sec: preliminaries}), then it is clear that the information
   contained in the rewards received so far ($\{r_1,r_2,\dotsc,r_{h-1}\}$) is
   already contained in the state-action pairs received
   $\tuple{s_1,a_1,s_2,a_2,\dotsc,s_{h-1},a_{h-1},s_h}$ (indeed, for
   deterministic reward $r^1$, we have that
   $r_1=r^1_1(s_1,a_1),r_2=r^1_2(s_2,a_2)$, and so on). This means that, for any
   non-Markovian policy in the MDP $\cM_1$, since it coincides with $\cM_2$
   except for the deterministic reward function, it is possible to construct a
   policy $\pi^2$ that induces the same distribution over \emph{state-action}
   trajectories, i.e., for any state-action trajectory
   $\omega=\tuple{s_1,a_1,s_2,a_2,\dotsc,s_{H-1},a_{H-1},s_H,a_H,s_{H+1}}\in\Omega$, it holds
   $\P_{p,r^1,\pi^1}(\omega)=\P_{p,r^2,\pi^2}(\omega)$.
 
 Therefore, we can write:
 \begin{align*}
   &\sup\limits_{U\in\underline{\fU}_L}\Big|
     \E_{G\sim \eta^{p,r^1,\pi^1}}[U(G)]-
     \E_{G\sim \eta^{p,r^2,\pi^2}}[U(G)]\Big|\\
     &\qquad\markref{(1)}{=}\sup\limits_{U\in\underline{\fU}_L}
     \Big|
     \sum\limits_{\omega\in\Omega}
     \P_{p,r^1,\pi^1}(\omega)
     U\Big(\sum\limits_{(s,a,h)\in\omega}r^1_h(s,a)\Big)\\
     &\qquad\qquad-
     \sum\limits_{\omega\in\Omega}
     \P_{p,r^2,\pi^2}(\omega)
     U\Big(\sum\limits_{(s,a,h)\in\omega}r^2_h(s,a)\Big)
     \Big|\\
     &\qquad\markref{(2)}{=}\sup\limits_{U\in\underline{\fU}_L}
     \Big|
     \sum\limits_{\omega\in\Omega}
     \P_{p,r^1,\pi^1}(\omega)
     U\Big(\sum\limits_{(s,a,h)\in\omega}r^1_h(s,a)\Big)\\
     &\qquad\qquad-
     \sum\limits_{\omega\in\Omega}
     \popblue{\P_{p,r^1,\pi^1}}(\omega)
     U\Big(\sum\limits_{(s,a,h)\in\omega}r^2_h(s,a)\Big)
     \Big|\\
     &=\sup\limits_{U\in\underline{\fU}_L}
     \Big|
     \sum\limits_{\omega\in\Omega}
     \P_{p,r^1,\pi^1}(\omega)
     \Big(U\Big(\sum\limits_{(s,a,h)\in\omega}r^1_h(s,a)\Big)-
     U\Big(\sum\limits_{(s,a,h)\in\omega}r^2_h(s,a)\Big)\Big)
     \Big|\\
     &\markref{(3)}{\le}
     \sup\limits_{U\in\underline{\fU}_L}
     \sum\limits_{\omega\in\Omega}
     \P_{p,r^1,\pi^1}(\omega)
     \popblue{\Big|}U\Big(\sum\limits_{(s,a,h)\in\omega}r^1_h(s,a)\Big)-
     U\Big(\sum\limits_{(s,a,h)\in\omega}r^2_h(s,a)\Big)\popblue{\Big|}
     \\
     &\markref{(4)}{\le}
     \sum\limits_{\omega\in\Omega}
     \P_{p,r^1,\pi^1}(\omega)
     \popblue{L}\popblue{\Big|\sum\limits_{(s,a,h)\in\omega}(r^1_h(s,a)-r^2_h(s,a))\Big|}
     \\
     &\markref{(5)}{\le}
     \sum\limits_{\omega\in\Omega}
     \P_{p,r^1,\pi^1}(\omega)
     L\sum\limits_{(s,a,h)\in\omega}\popblue{\Big|}r^1_h(s,a)-r^2_h(s,a)\popblue{\Big|}
     \\
     &\markref{(6)}{\le}
     \sum\limits_{\omega\in\Omega}
     \P_{p,r^1,\pi^1}(\omega)
     L\sum\limits_{(s,a,h)\in\omega}\popblue{k}
     \\
     &=LHk,
 \end{align*}
 where at (1) we use the fact that the expected utility w.r.t. the distribution
 over returns can be computed using the probability distribution over
 state-action trajectories (since the rewards are deterministic), at (2) we use
 that policy $\pi^2$ is constructed exactly to match the distribution over
 state-action trajectories, at (3) we apply triangle inequality, at (4) we use
 the fact that all utilities $U\in\underline{\fU}_L$ are
 $L$-Lipschitz, i.e., for all $x,y\in[0,H]$:
 $|U(x)-U(y)|\le L|x-y|$, at (5) we apply again the triangle
 inequality, and at (6) we use the hypothesis that $r^1,r^2$ are close to each
 other by parameter $k$.
 \end{proof}
 
 \section{Experimental Details}
 \label{apx: experimental details}
 
 In this appendix, we collect additional information about the experiments
 described in Section \ref{sec: experiments}. Appendix \ref{apx: data
 description} presents formally the MDP used for the collection of the data
 along with the questions posed to the participants. Appendix \ref{apx: details
 exp2} contains additional details on Experiment 2. Finally, Appendix \ref{apx:
 additional experiment} presents an additional experiment conducted on the
 collected data.
 
\subsection{Data Description}
   \label{apx: data description}
 
   Below, we describe the data collected.
 
 \subsubsection{Considered MDP}
 
   The 15 participants analyzed in the study have been provided with complete
   access to the MDP in Figure \ref{fig: mdp for data}, which we will denote by
   $\cM$. In other words, the participants \emph{know the transition model and
   the reward function of $\cM$ everywhere}.
 
   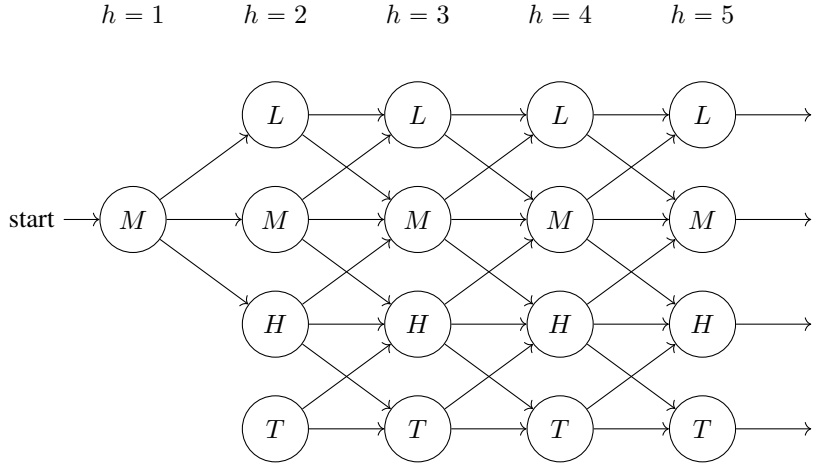
\begin{figure}[h!]
     \centering
     \begin{tikzpicture}[node distance=3.5cm]
     \node[initial,state] (M0) {$M$};
     \node[state, right=1cm of M0] (M1) {$M$};
     \node[state, right=1cm of M0, above=0.5cm of M1] (L1) {$L$};
     \node[state, right=1cm of M0, below=0.5cm of M1] (H1) {$H$};
     \node[state, right=1cm of M0, below=0.5cm of H1] (T1) {$T$};
     \node[state, right=1cm of L1] (L2) {$L$};
     \node[state, right=1cm of L1, below=0.5cm of L2] (M2) {$M$};
     \node[state, right=1cm of L1, below=0.5cm of M2] (H2) {$H$};
     \node[state, right=1cm of L1, below=0.5cm of H2] (T2) {$T$};
     \node[state, right=1cm of L2] (L3) {$L$};
     \node[state, right=1cm of L2, below=0.5cm of L3] (M3) {$M$};
     \node[state, right=1cm of L2, below=0.5cm of M3] (H3) {$H$};
     \node[state, right=1cm of L2, below=0.5cm of H3] (T3) {$T$};
     \node[state, right=1cm of L3] (L4) {$L$};
     \node[state, right=1cm of L3, below=0.5cm of L4] (M4) {$M$};
     \node[state, right=1cm of L3, below=0.5cm of M4] (H4) {$H$};
     \node[state, right=1cm of L3, below=0.5cm of H4] (T4) {$T$};
     \node[state, right=1cm of L4, draw=none] (L5) {};
     \node[state, right=1cm of M4, draw=none] (M5) {};
     \node[state, right=1cm of H4, draw=none] (H5) {};
     \node[state, right=1cm of T4, draw=none] (T5) {};
     \draw (M0) edge[->, solid, right] node{} (L1);
     \draw (M0) edge[->, solid, right] node{} (M1);
     \draw (M0) edge[->, solid, right] node{} (H1);
     \draw (L1) edge[->, solid, right] node{} (L2);
     \draw (L1) edge[->, solid, right] node{} (M2);
     \draw (M1) edge[->, solid, right] node{} (L2);
     \draw (M1) edge[->, solid, right] node{} (M2);
     \draw (M1) edge[->, solid, right] node{} (H2);
     \draw (H1) edge[->, solid, right] node{} (M2);
     \draw (H1) edge[->, solid, right] node{} (H2);
     \draw (H1) edge[->, solid, right] node{} (T2);
     \draw (T1) edge[->, solid, right] node{} (H2);
     \draw (T1) edge[->, solid, right] node{} (T2);
     \draw (L2) edge[->, solid, right] node{} (L3);
     \draw (L2) edge[->, solid, right] node{} (M3);
     \draw (M2) edge[->, solid, right] node{} (L3);
     \draw (M2) edge[->, solid, right] node{} (M3);
     \draw (M2) edge[->, solid, right] node{} (H3);
     \draw (H2) edge[->, solid, right] node{} (M3);
     \draw (H2) edge[->, solid, right] node{} (H3);
     \draw (H2) edge[->, solid, right] node{} (T3);
     \draw (T2) edge[->, solid, right] node{} (H3);
     \draw (T2) edge[->, solid, right] node{} (T3);
     \draw (L3) edge[->, solid, right] node{} (L4);
     \draw (L3) edge[->, solid, right] node{} (M4);
     \draw (M3) edge[->, solid, right] node{} (L4);
     \draw (M3) edge[->, solid, right] node{} (M4);
     \draw (M3) edge[->, solid, right] node{} (H4);
     \draw (H3) edge[->, solid, right] node{} (M4);
     \draw (H3) edge[->, solid, right] node{} (H4);
     \draw (H3) edge[->, solid, right] node{} (T4);
     \draw (T3) edge[->, solid, right] node{} (H4);
     \draw (T3) edge[->, solid, right] node{} (T4);
     \draw (L4) edge[->, solid, right] node{} (L5);
     \draw (M4) edge[->, solid, right] node{} (M5);
     \draw (H4) edge[->, solid, right] node{} (H5);
     \draw (T4) edge[->, solid, right] node{} (T5);
     \node[state, above=1.7cm of M0, draw=none] (h1) {$h=1$};
     \node[state, above=0.3cm of L1, draw=none] (h2) {$h=2$};
     \node[state, above=0.3cm of L2, draw=none] (h3) {$h=3$};
     \node[state, above=0.3cm of L3, draw=none] (h4) {$h=4$};
     \node[state, above=0.3cm of L4, draw=none] (h5) {$h=5$};
   \end{tikzpicture}
   \caption{\small The MDP used for data collection.}
   \label{fig: mdp for data}  
       \end{figure}
 
       Intuitively, states L (Low), M (Medium), H (High), and T (Top), represent
       4 ``levels'' so that the received reward increases when playing actions in
       ``higher'' states instead of ``lower'' states. Formally, MDP
       $\cM=\tuple{\cS,\cA,H,s_0,p,r}$ has four states $\cS=\{L,M,H,T\}$, and
       three actions for each state $\cA=\{a_0,a_+,a_-\}$. The horizon is $H=5$,
       i.e., the agent has to take 5 actions. The initial state is $s_0=M$. The
       transition model $p$ is stationary, i.e., it does not depend on the stage
       $h\in\dsb{H}$. Specifically, $p$ is depicted in Table \ref{table: p of
       mdp}. The intuition is that action $a_0$ keeps the agent in the same state
       deterministically, while action $a_+$ tries to bring the agent to the
       higher state with probability $1/3$, and action $a_-$ sometimes make the
       agent ``fall down'' to the lower state with probability $1/5$.
 
   \begin{table}[h!]
     \centering
      \begin{tabular}{||c | c c c c ||}
      \hline
      $p$ & $L$ & $M$ & $H$ & $T$\\
      \hline\hline
      $(L,a_0)$ & 1 & 0 & 0 & 0\\ 
      $(L,a_+)$ & $2/3$ & $1/3$ & 0 & 0\\ 
      $(L,a_-)$ & 1 & 0 & 0 & 0\\
      $(M,a_0)$ & 0 & 1 & 0 & 0\\ 
      $(M,a_+)$ & 0 & $2/3$ & $1/3$ & 0\\ 
      $(M,a_-)$ & $1/5$ & $4/5$ & 0 & 0\\
      $(H,a_0)$ & 0 & 0 & 1 & 0\\ 
      $(H,a_+)$ & 0 & 0 & $2/3$ & $1/3$\\ 
      $(H,a_-)$ & 0 & $1/5$ & $4/5$ & 0\\
      $(T,a_0)$ & 0 & 0 & 0 & 1\\ 
      $(T,a_+)$ & 0 & 0 & 0 & 1\\ 
      $(T,a_-)$ & 0 & 0 & $1/5$ & $4/5$\\
      \hline
      \end{tabular}
      \caption{\small The transition model $p$ of MDP $\cM$.}
      \label{table: p of mdp}
     \end{table}
 
     The reward function $r:\SAH\to\RR$ is deterministic, stationary, and depends
     only the state-action pair played. The specific values are depicted in Table
     \ref{table: r of mdp}. Note that we have written the reward values as
     numbers in $[0\text{\texteuro}, 1000\text{\texteuro}]$, to provide a monetary
     interpretation. Nevertheless, we will rescale the interval to $[0,1]$ during
     the analysis for normalization.    
     Observe that the same actions played in ``higher'' states
     (e.g., $H$ or $T$) provide higher rewards than when played in ``lower''
     states (e.g., $L$ or $M$). Moreover, notice that action $a_+$, which is the
     only action that tries to increase the state, does not provide reward at
     all, while the risky action $a_-$, which sometimes decreases the state,
     always provides double the reward than ``default'' action $a_0$.
 
     \begin{table}[h!]
       \centering
        \begin{tabular}{||c | c c c c ||}
        \hline
         & $L$ & $M$ & $H$ & $T$\\ [0.5ex] 
        \hline\hline
        $a_0$ & $0$\texteuro & $30$\texteuro & $100$\texteuro & $500$\texteuro\\ 
        $a_+$ & $0$\texteuro & $0$\texteuro & $0$\texteuro & $0$\texteuro\\ 
        $a_-$ & $0$\texteuro & $60$\texteuro & $200$\texteuro & $1000$\texteuro\\[1ex]
        \hline
        \end{tabular}
        \caption{The reward function $r$ of MDP $\cM$.}
        \label{table: r of mdp}
       \end{table}
 
 \subsubsection{Intuition behind agents behavior}
     The reward is interpreted as money. Playing MDP $\cM$ involves a trade-off
     between playing action $a_+$, which gives no money but potentially allows to
     collect more money in the future (by reaching ``higher'' states), and action
     $a_-$, which provides the greatest amount of money immediately, but
     potentially reduces the amount of money which can be earned in the future.
     Action $a_0$, being deterministic, provides a reference point, so that
     deterministically playing action $a_0$ for all the $H=5$ stages gives to the
     agent $30\times 5=150$\texteuro. Thus, playing actions $a_+,a_-$ other than $a_0$
     means that the agent accepts some risk to try to increase its earnings.
 
 \subsubsection{Questions asked to the participants}
     
     We remark that the participants have enough background knowledge to
     understand the MDP described.
     To each participant, we ask which action in $\{a_0,a_+,a_-\}$ it would play
     if it was in a certain state $s$, stage $h$, with cumulative reward up to
     now $y$, for many different values of triples
     $(s,h,y)\in\cS\times\dsb{H}\times[0\text{\texteuro},5000\text{\texteuro}]$. Specifically,
     the values of triples $s,h,y$ considered are:
     \begin{align*}
       &\tuple{M,1,0\text{\texteuro}}\qquad \tuple{M,2,0\text{\texteuro}}\qquad \tuple{M,2,30\text{\texteuro}}
       \qquad \tuple{M,2,60\text{\texteuro}} \qquad \tuple{H,2,0\text{\texteuro}}\\
       &\tuple{M,3,0\text{\texteuro}}\qquad \tuple{M,3,30\text{\texteuro}}\qquad \tuple{M,3,60\text{\texteuro}}
       \qquad \tuple{M,3,200\text{\texteuro}} \qquad \tuple{H,3,0\text{\texteuro}}\\
       &\tuple{H,3,30\text{\texteuro}}\qquad \tuple{H,3,60\text{\texteuro}}\qquad \tuple{H,3,200\text{\texteuro}}
       \qquad \tuple{T,3,0\text{\texteuro}} \qquad \tuple{M,4,0\text{\texteuro}}\\
       &\tuple{M,4,30\text{\texteuro}}\qquad \tuple{M,4,60\text{\texteuro}}\qquad \tuple{M,4,90\text{\texteuro}}
       \qquad \tuple{M,4,120\text{\texteuro}} \qquad \tuple{M,4,150\text{\texteuro}}\\
       &\tuple{M,4,180\text{\texteuro}}\qquad \tuple{M,4,300\text{\texteuro}}\qquad \tuple{M,4,400\text{\texteuro}}
       \qquad \tuple{H,4,0\text{\texteuro}} \qquad \tuple{H,4,30\text{\texteuro}}\\
       &\tuple{H,4,60\text{\texteuro}}\qquad \tuple{H,4,100\text{\texteuro}}\qquad \tuple{H,4,130\text{\texteuro}}
       \qquad \tuple{H,4,200\text{\texteuro}} \qquad \tuple{H,4,300\text{\texteuro}}\\
       &\tuple{H,4,1000\text{\texteuro}}\qquad \tuple{T,4,0\text{\texteuro}}\qquad \tuple{T,4,60\text{\texteuro}}.
     \end{align*}
 
     From state $L$, we assume all participants always play action $a_+$ since it
     is the only rational strategy. Moreover, from stage $h=5$, we assume that
     all participants always play action $a_-$ since, again, it is the only
     rational strategy.
 
     In all other possible combinations of values of $s,h,y$, we ``interpolate''
     by considering the action recommended by the participant in the closest $y'$
     to $y$, in the same $s,h$.
 
 \subsubsection{The return distribution of the participants' policies}
 \label{apx: return distribution of the participants}
 
 We now present the return distribution of the policies prescribed by the
 participants. Specifically, we have simulated 10000 times the policies of the
 participants, and we have computed the empirical estimate of their return
 distributions. Such values are reported in Figures \ref{fig: 1}, \ref{fig: 2},
 \ref{fig: 3}, \ref{fig: 4}, and \ref{fig: 5}, where we use notation $\eta^E_i$
 to denote the return distribution of participant $i$, with $i\in\dsb{15}$.
 
 \begin{figure}[!h]
   \centering
   \begin{minipage}[t!]{0.32\textwidth}
     \centering
     \includegraphics[width=0.95\linewidth]{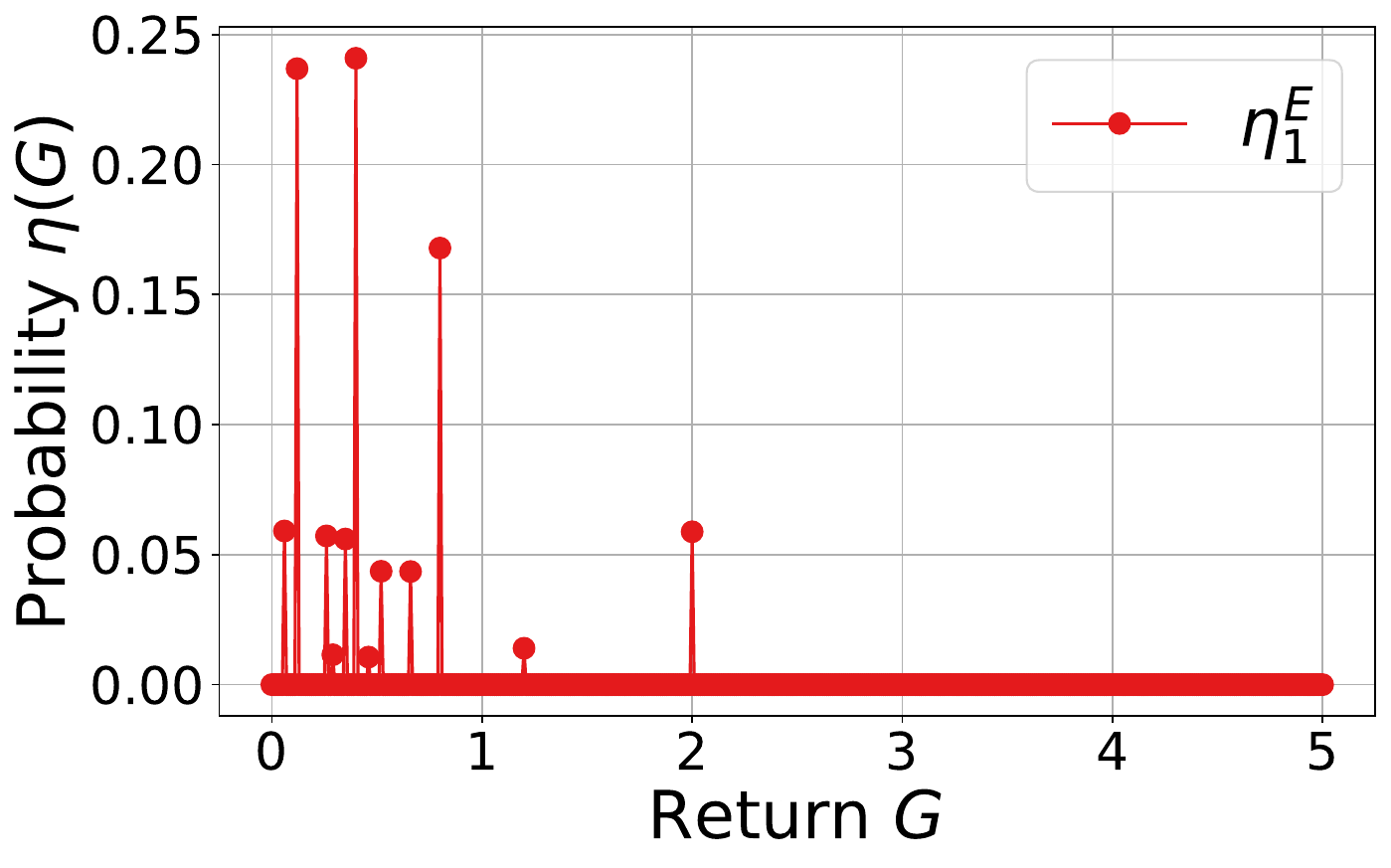}
 \end{minipage}
 \begin{minipage}[t!]{0.32\textwidth}
     \centering
     \includegraphics[width=0.95\linewidth]{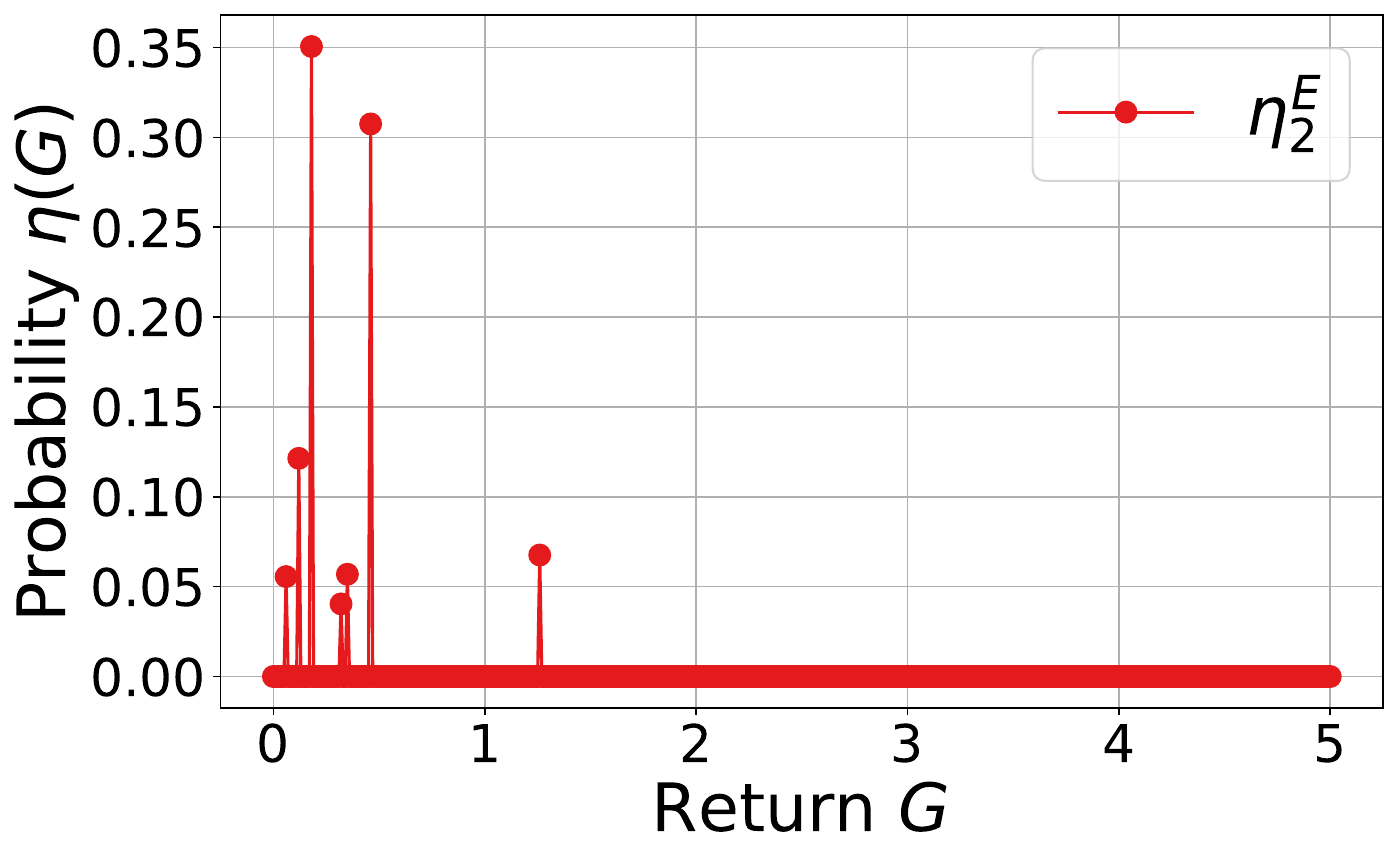}
 \end{minipage}
 \begin{minipage}[t!]{0.32\textwidth}
   \centering
   \includegraphics[width=0.95\linewidth]{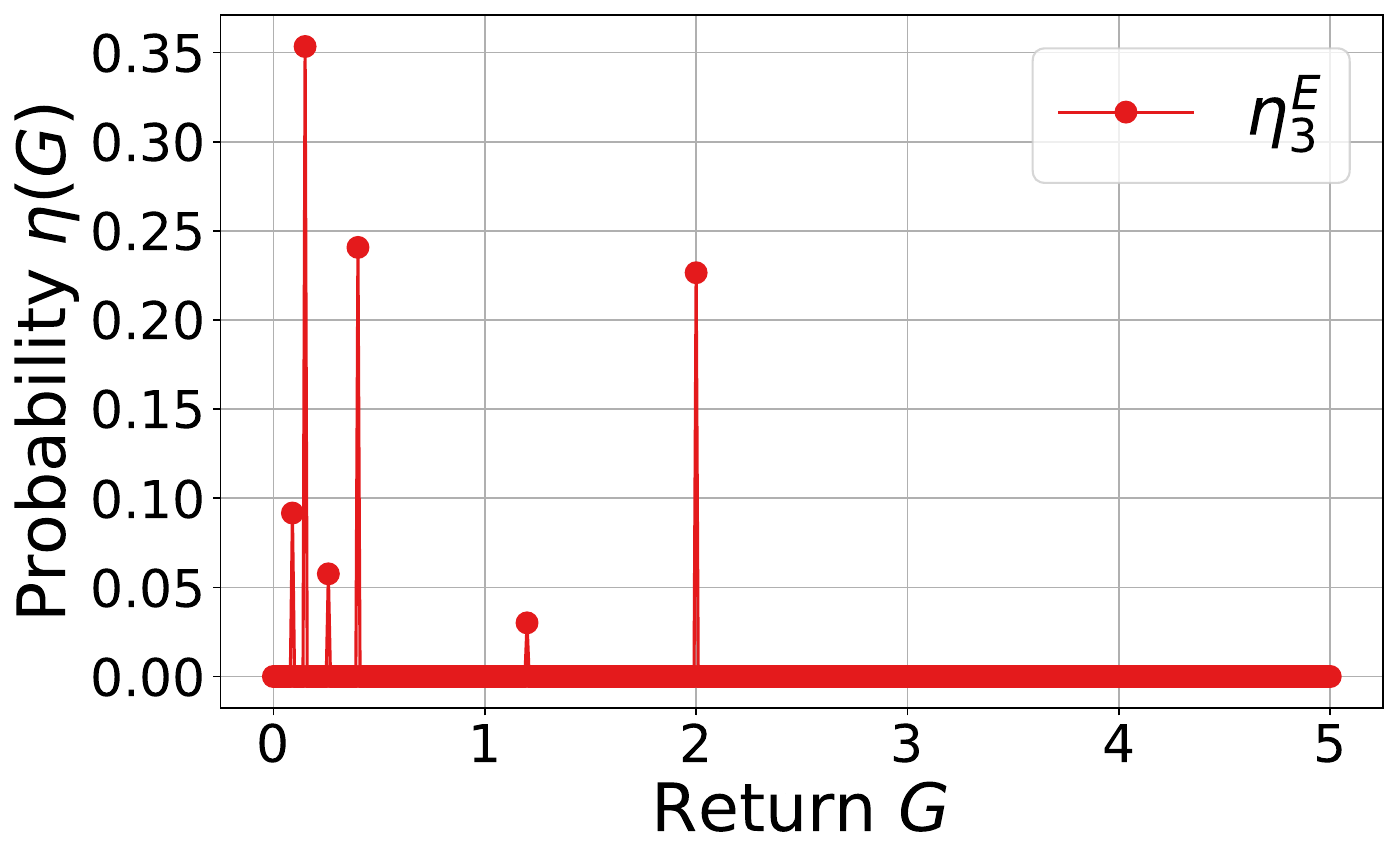}
 \end{minipage}
 \caption{\footnotesize Plot of $\eta^E_1,\eta^E_2$, and $\eta^E_3$.}
 \label{fig: 1}
  \end{figure}
 
  \begin{figure}[!h]
   \centering
   \begin{minipage}[t!]{0.32\textwidth}
     \centering
     \includegraphics[width=0.95\linewidth]{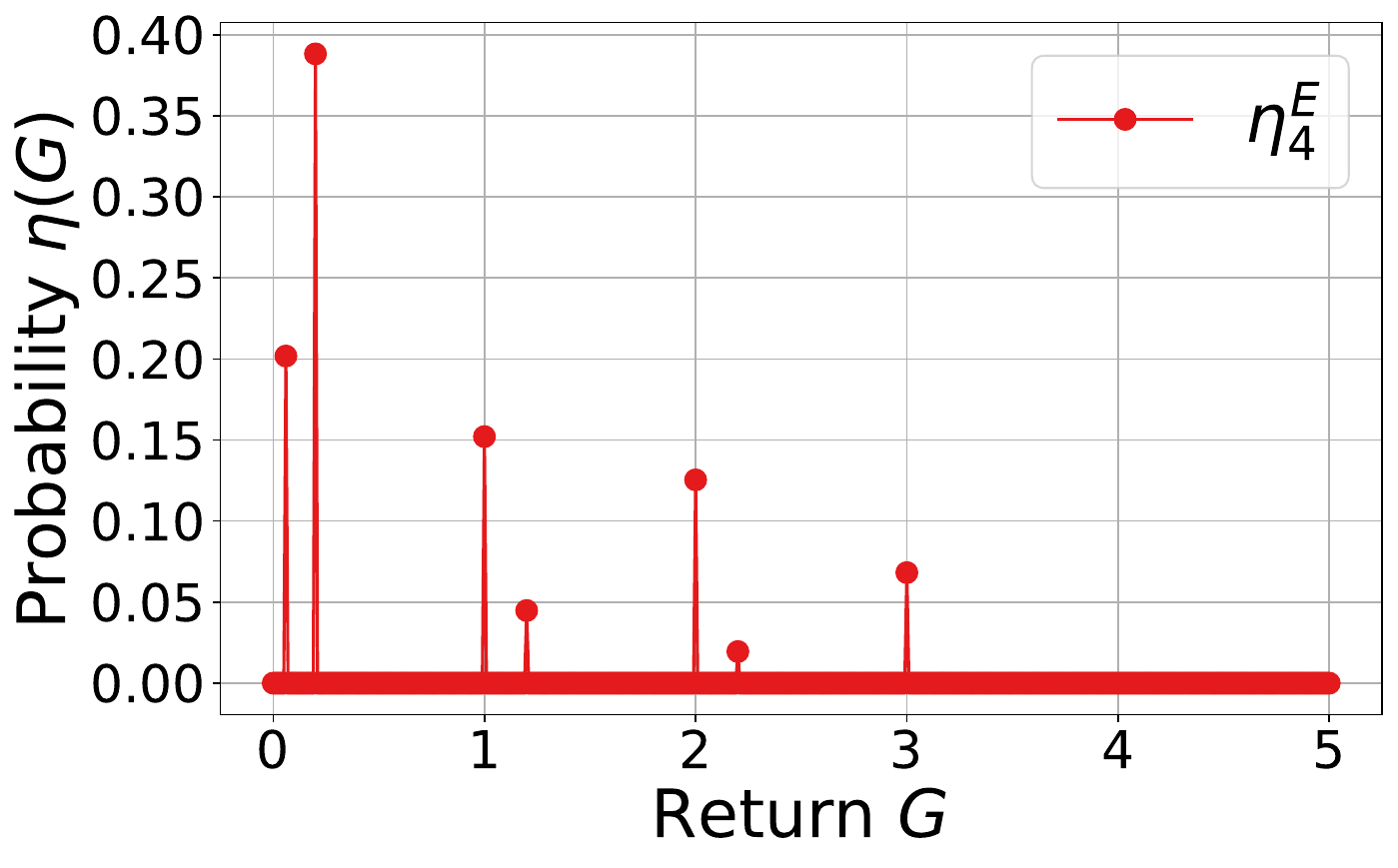}
 \end{minipage}
 \begin{minipage}[t!]{0.32\textwidth}
     \centering
     \includegraphics[width=0.95\linewidth]{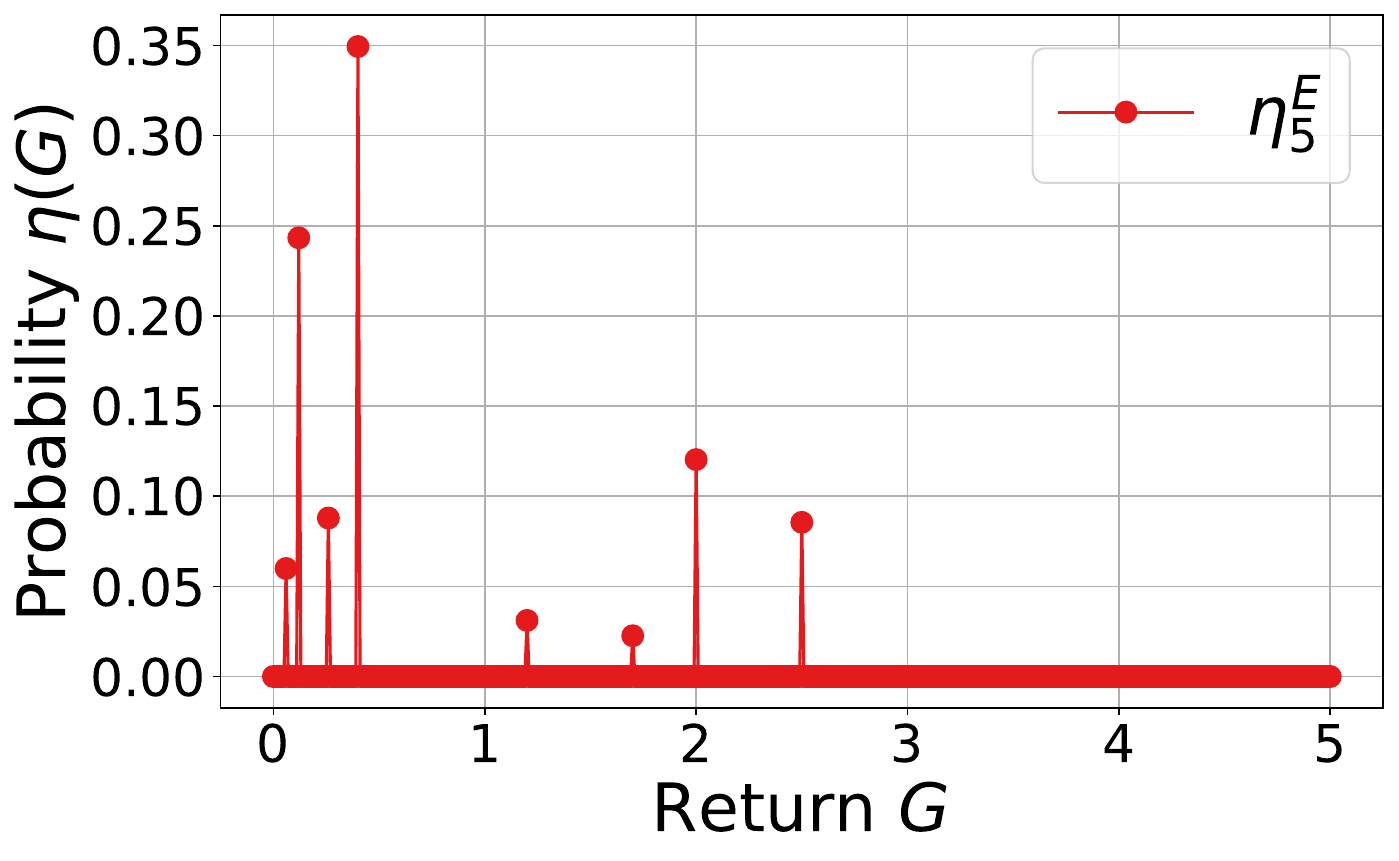}
 \end{minipage}
 \begin{minipage}[t!]{0.32\textwidth}
   \centering
   \includegraphics[width=0.95\linewidth]{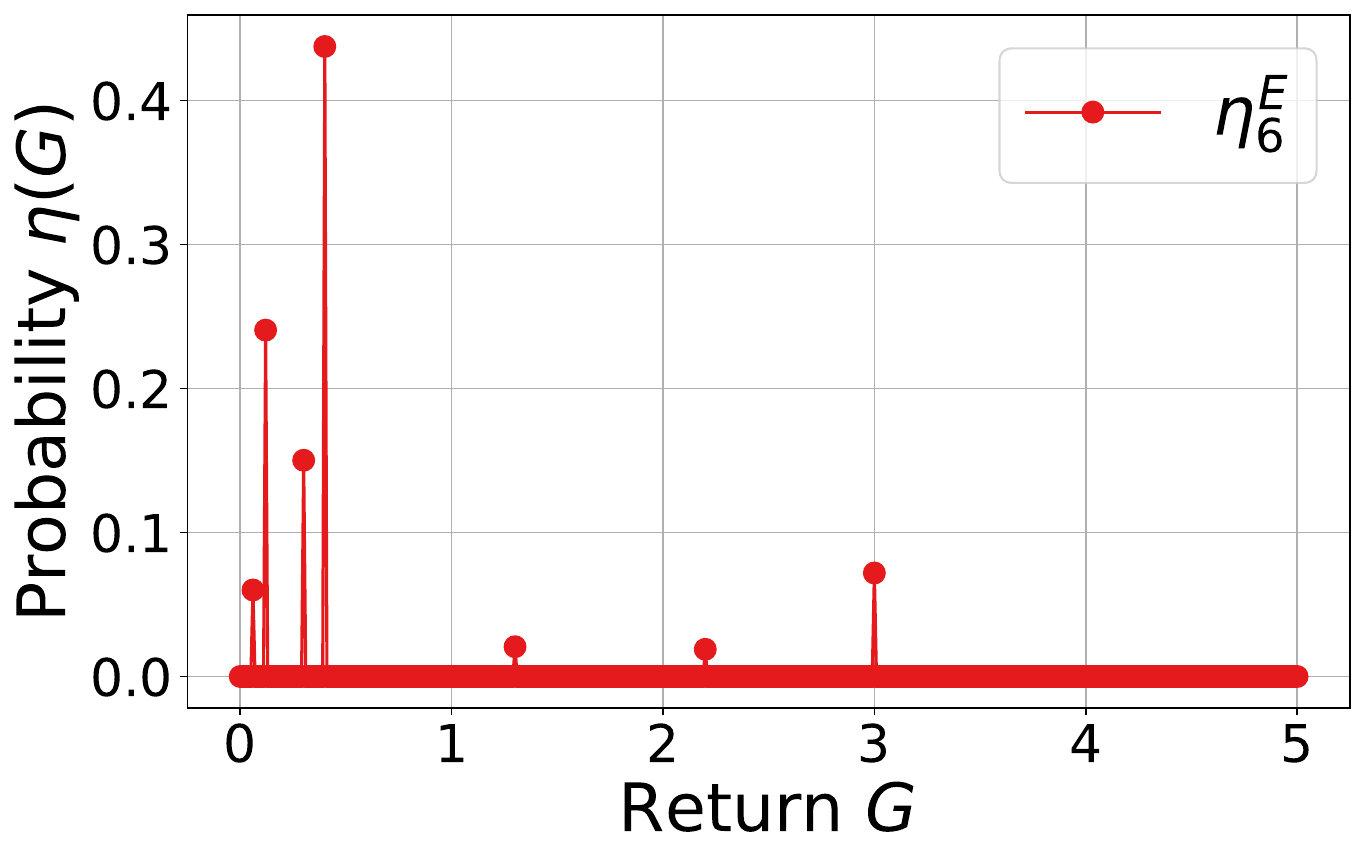}
 \end{minipage}
 \caption{\footnotesize Plot of $\eta^E_4,\eta^E_5$, and $\eta^E_6$.}
 \label{fig: 2}
  \end{figure}
 
  \begin{figure}[!h]
   \centering
   \begin{minipage}[t!]{0.32\textwidth}
     \centering
     \includegraphics[width=0.95\linewidth]{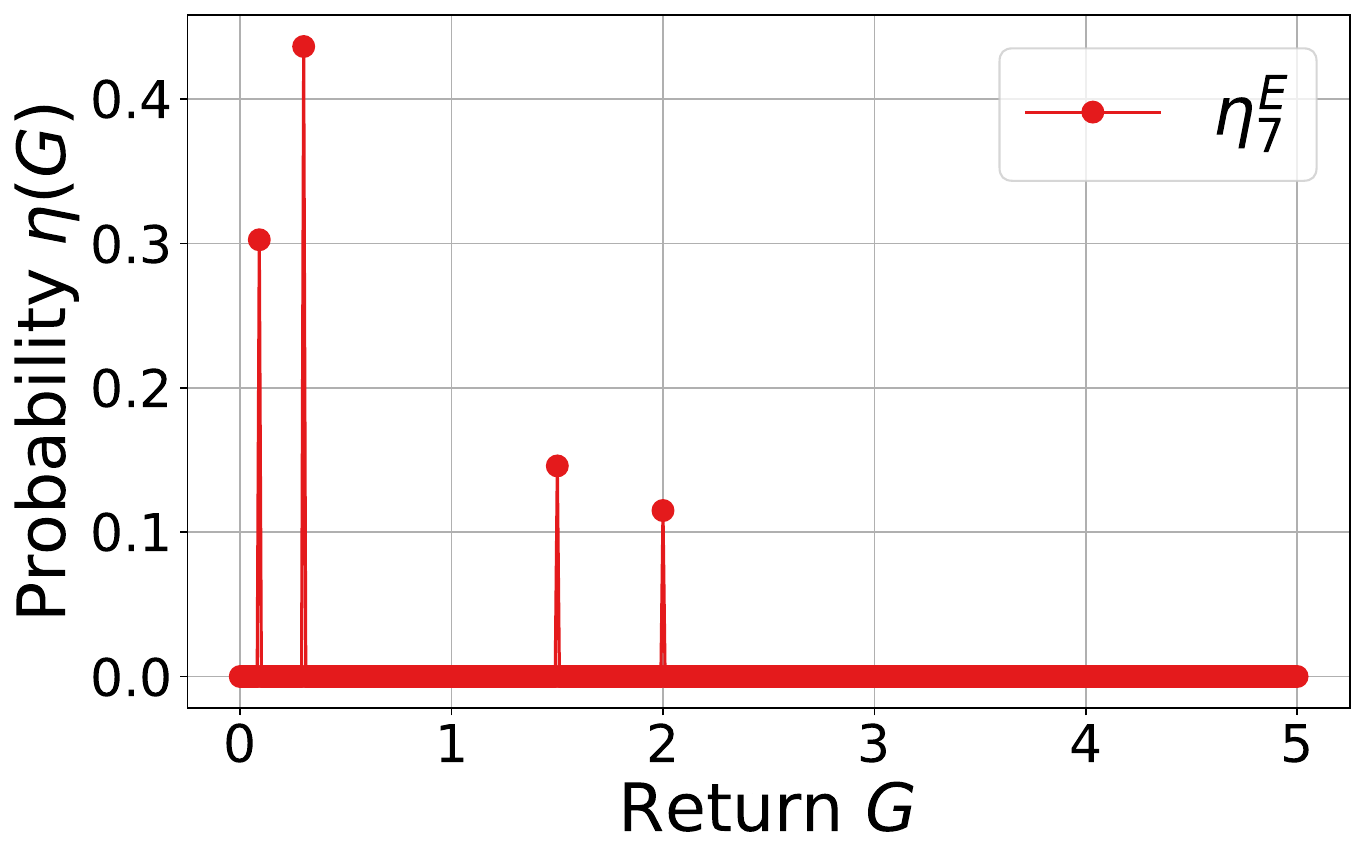}
 \end{minipage}
 \begin{minipage}[t!]{0.32\textwidth}
     \centering
     \includegraphics[width=0.95\linewidth]{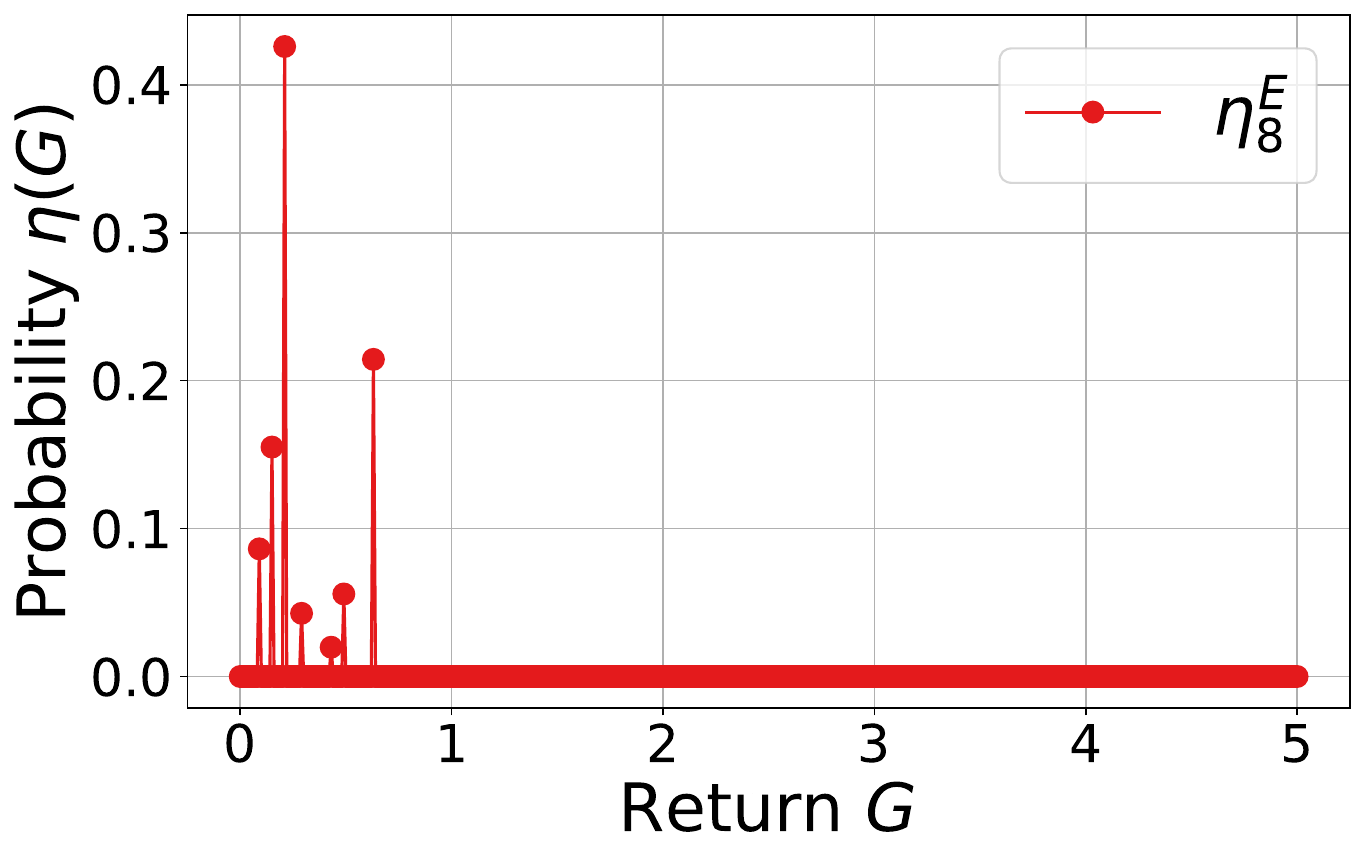}
 \end{minipage}
 \begin{minipage}[t!]{0.32\textwidth}
   \centering
   \includegraphics[width=0.95\linewidth]{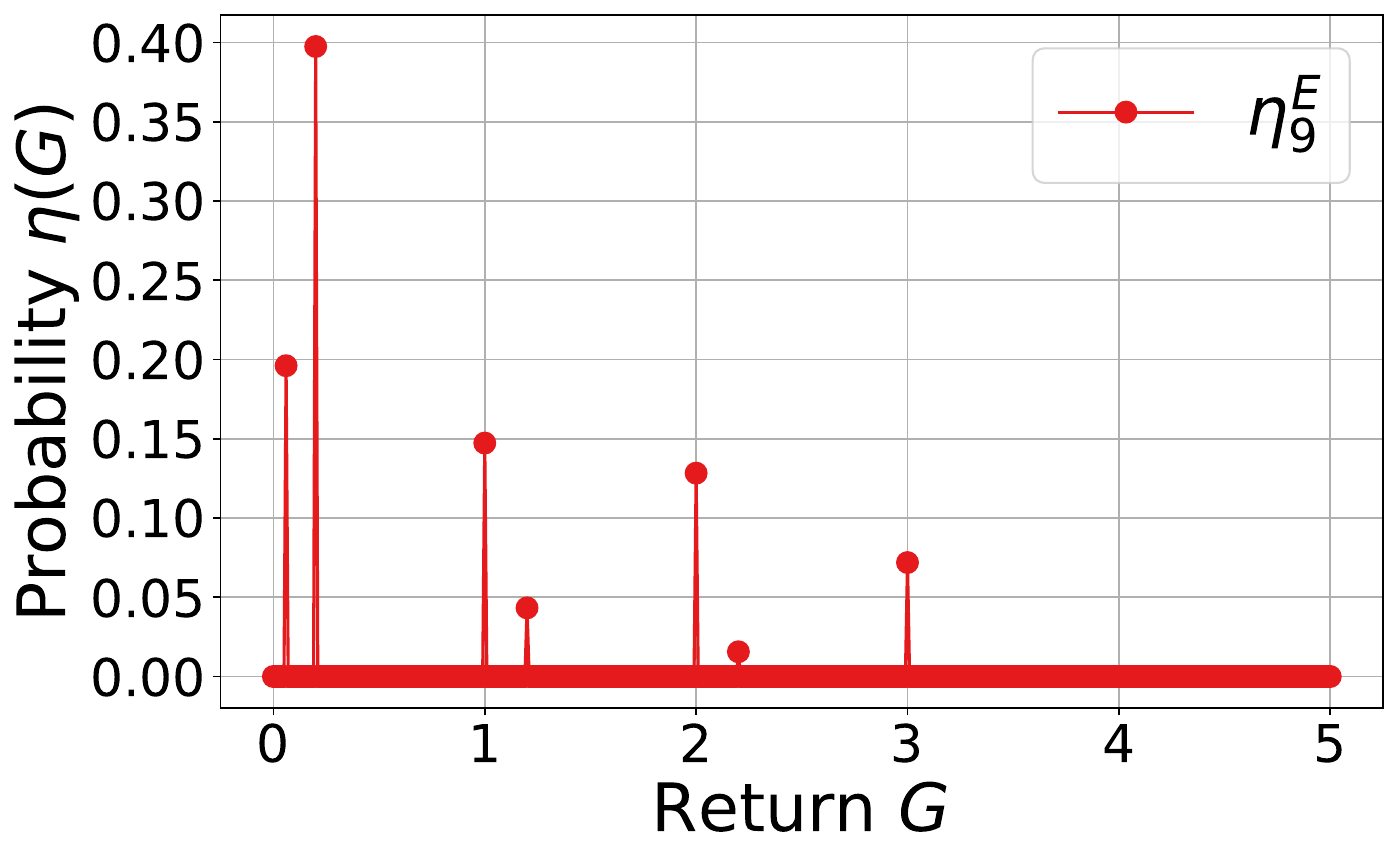}
 \end{minipage}
 \caption{\footnotesize Plot of $\eta^E_7,\eta^E_8$, and $\eta^E_9$.}
 \label{fig: 3}
  \end{figure}
 
  \begin{figure}[!h]
   \centering
   \begin{minipage}[t!]{0.32\textwidth}
     \centering
     \includegraphics[width=0.95\linewidth]{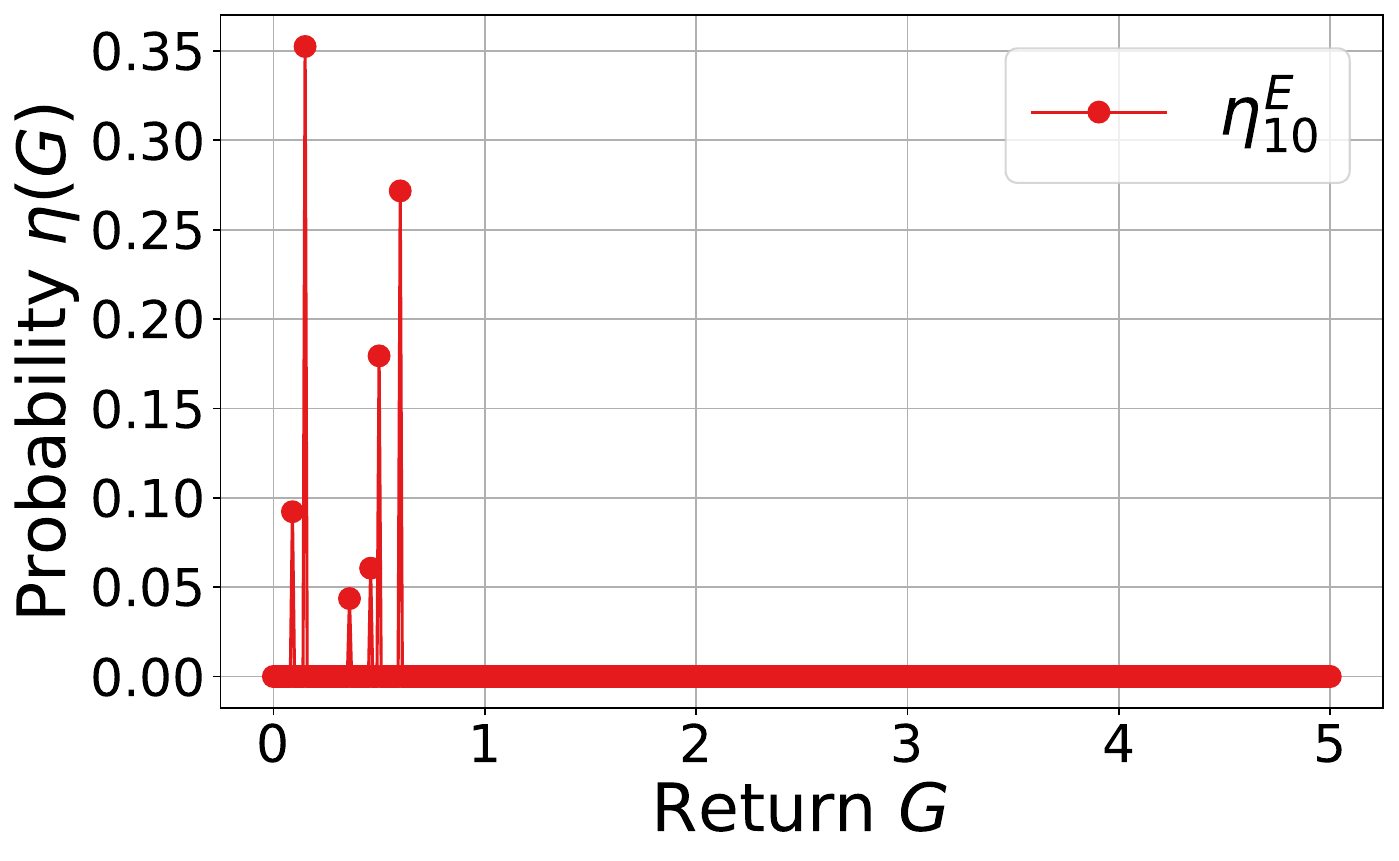}
 \end{minipage}
 \begin{minipage}[t!]{0.32\textwidth}
     \centering
     \includegraphics[width=0.95\linewidth]{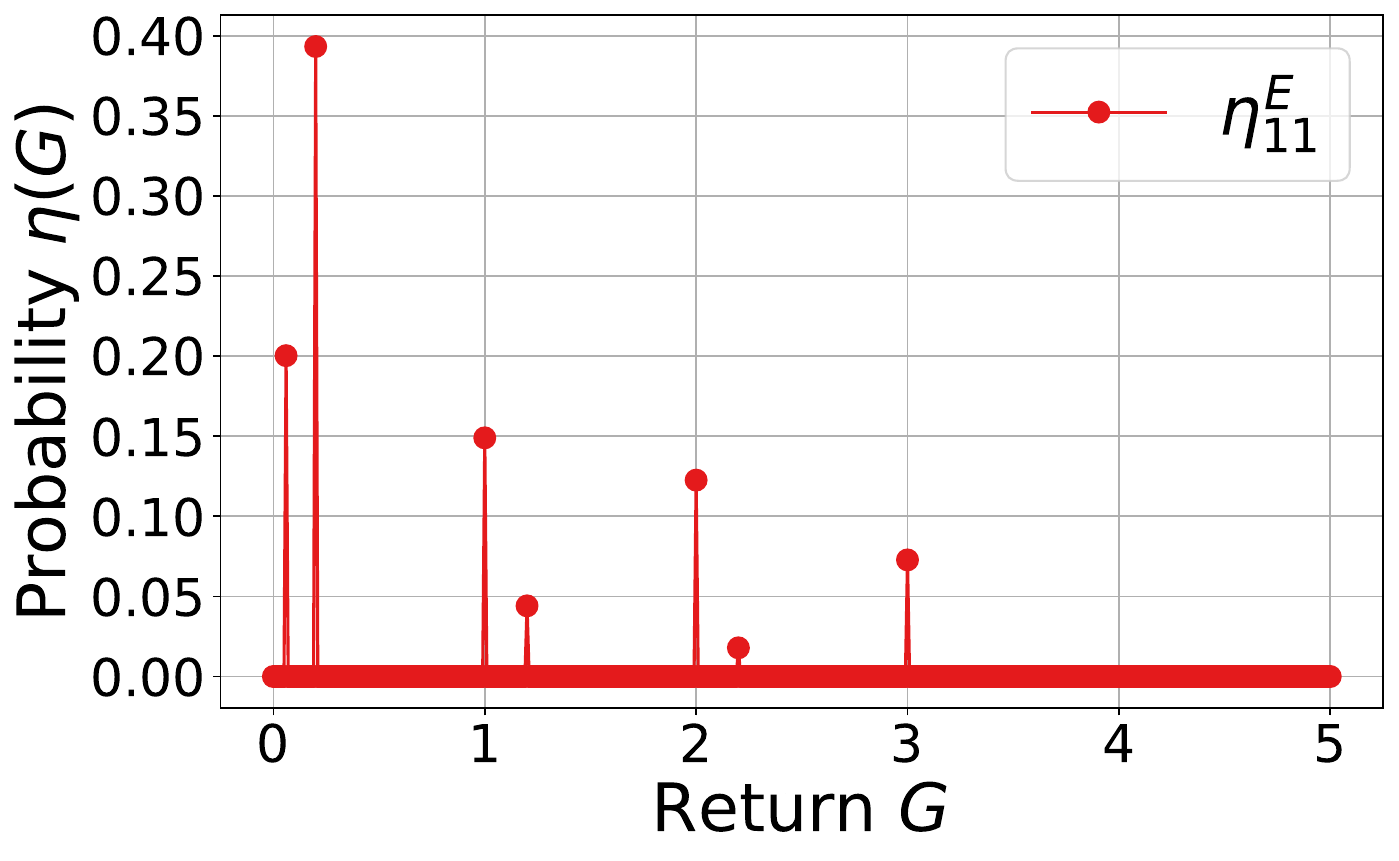}
 \end{minipage}
 \begin{minipage}[t!]{0.32\textwidth}
   \centering
   \includegraphics[width=0.95\linewidth]{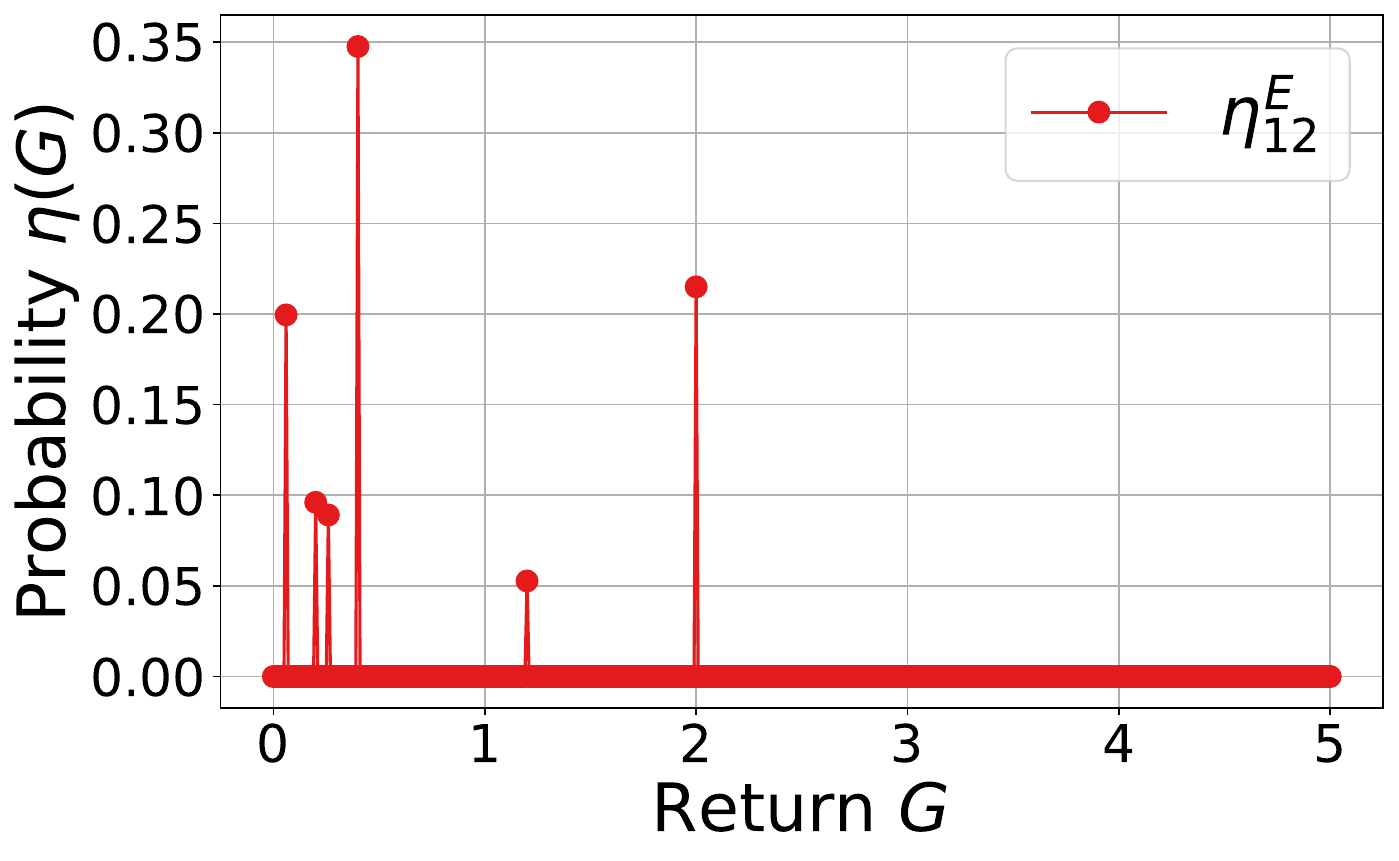}
 \end{minipage}
 \caption{\footnotesize Plot of $\eta^E_{10},\eta^E_{11}$, and $\eta^E_{12}$.}
 \label{fig: 4}
  \end{figure}
 
  \begin{figure}[!h]
   \centering
   \begin{minipage}[t!]{0.32\textwidth}
     \centering
     \includegraphics[width=0.95\linewidth]{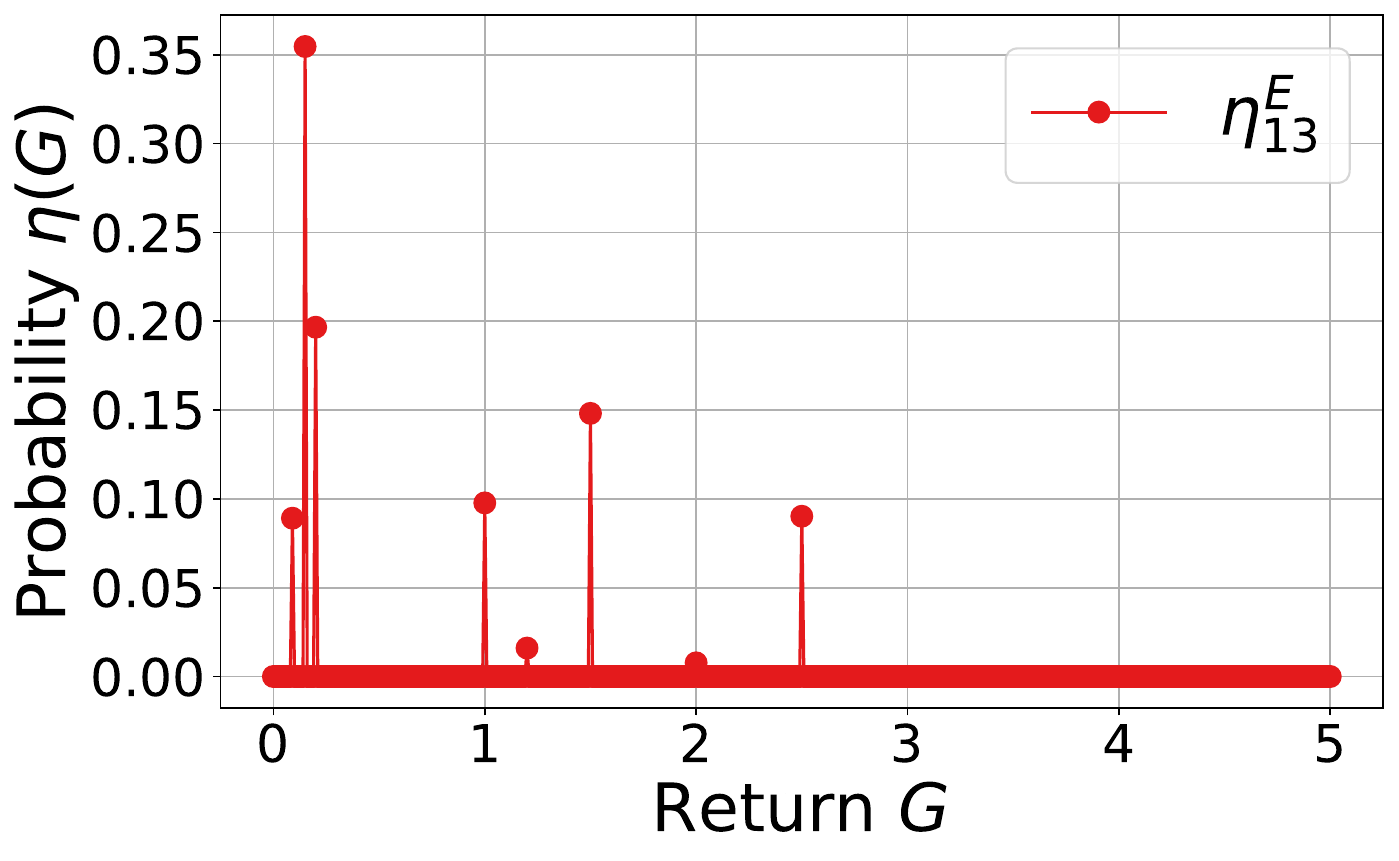}
 \end{minipage}
 \begin{minipage}[t!]{0.32\textwidth}
     \centering
     \includegraphics[width=0.95\linewidth]{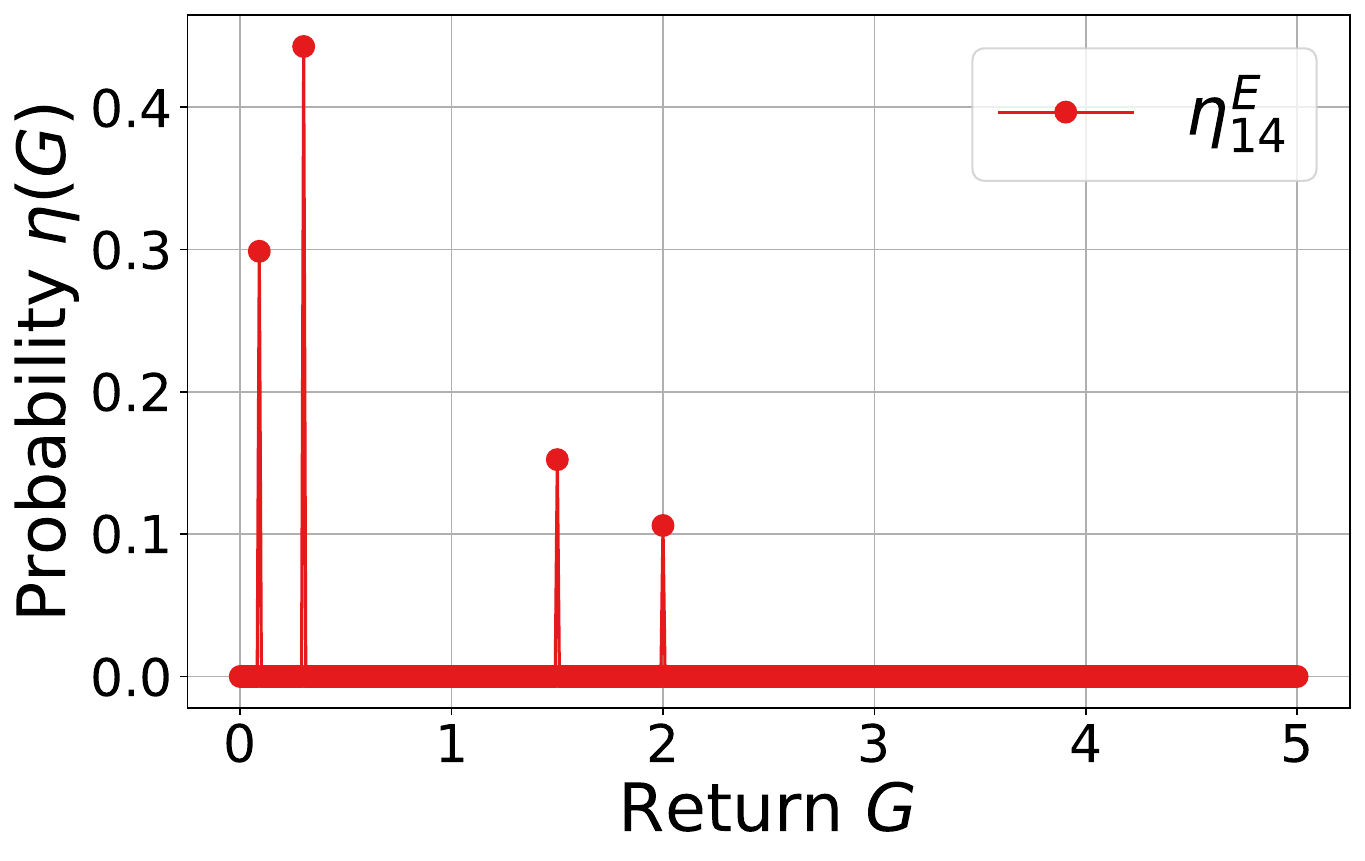}
 \end{minipage}
 \begin{minipage}[t!]{0.32\textwidth}
   \centering
   \includegraphics[width=0.95\linewidth]{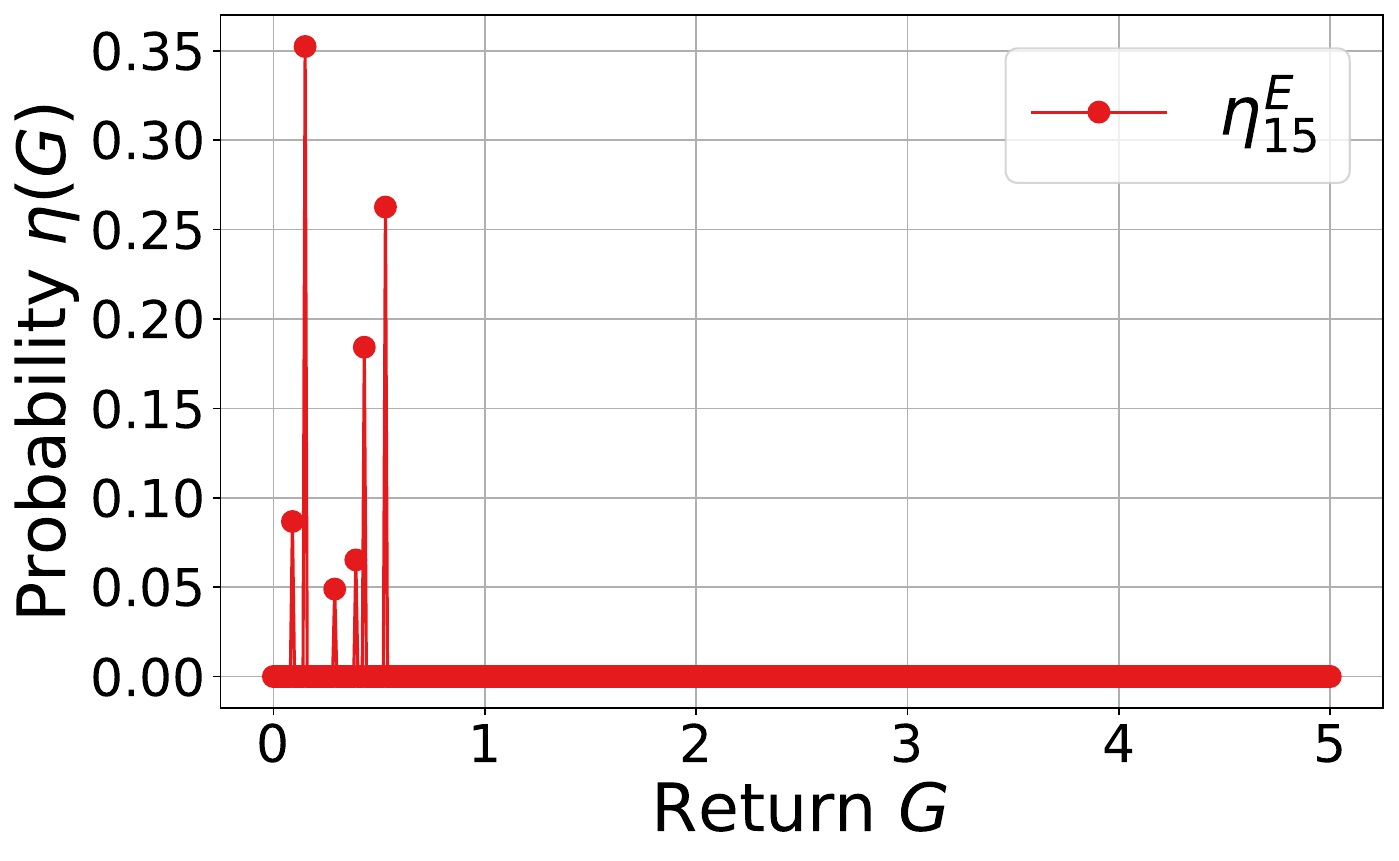}
 \end{minipage}
 \caption{\footnotesize Plot of $\eta^E_{13},\eta^E_{14}$, and $\eta^E_{15}$.}
 \label{fig: 5}
  \end{figure}
   
\subsection{Details Experiment 2}
\label{apx: details exp2}
 
Experiment 2 is made of two parts, the first in which we execute \tractor on the
 MDP (and data) adopted also in Experiment 1, and the other where we use
 simulated data. We describe here the former, while we present the latter more
 in detail in Appendix \ref{apx: analysis on simulated data}.
 
 We consider the policy of the 10th participant (chosen arbitrarily) to the
 survey, and we execute \tractor multiple times with varying values of the input
 parameters, specifically: we always use $K=$10000 trajectories for estimating
 the return distribution of the 10th participant's policy, and the return
 distribution of the optimal policies computed along the way; we make 5 runs
 with each combination of parameters with different seeds. We execute for $T=70$
 iterations using Lipschitz constant $L=10$, which means that we consider only
 utilities $U\in\overline{\underline{\fU}}_L$ satisfying $|U(G)-U(G')|\le
 10|G-G'|$ for all $G,G'\in[0,5]$ (the horizon is 5). As initial utility
 $\overline{U}_0$, we try $U_{\text{sqrt}}$, $U_{\text{square}}$, and
 $U_{\text{linear}}$ (see Appendix \ref{apx: additional experiment}), and as
 learning rates we try $0.01,0.5,5,100,1000,10000$.

 The experiment has been conducted in some hours on a personal computer with
processor AMD Ryzen 5 5500U with Radeon Graphics (2.10 GHz), with 8,00 GB of
RAM.
 
 We note that the choice of $\overline{U}_0$ is rather irrelevant for the shape
 of the extracted $\widehat{U}$, but it matters for its ``location'', as shown in
 Fig. \ref{fig: plots exp2 main paper}.
 
 \begin{figure}[h!]
   \centering
   \includegraphics[scale=0.35]{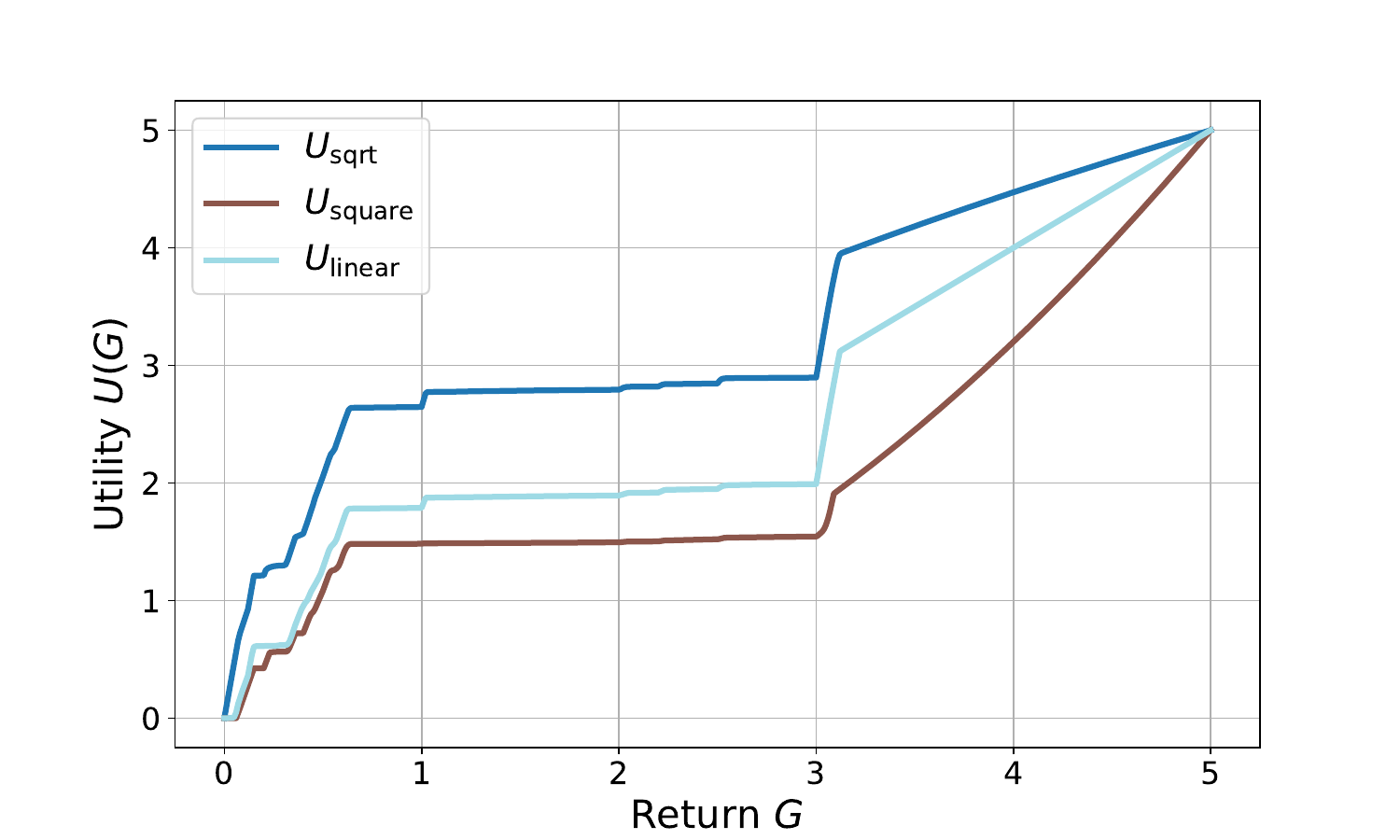}
 \caption{\footnotesize Utilities computed by \tractor starting with the
 $\overline{U}_0$ in the legend ($\alpha=100$).}
 \label{fig: plots exp2 main paper}
  \end{figure}
 
  To view the sequence of utilities extracted by \tractor during the run, see
  Appendix \ref{apx: sequence utilities tractor}, while in Appendix \ref{apx:
  explanation large learning rate} we explain better why the best learning rate
  is large.
 
 \subsubsection{The sequence of utilities extracted by \tractor on the collected data}
 \label{apx: sequence utilities tractor}
 
 We now present some plots representing the sequence of utilities
 extracted by \tractor during its execution. Specifically, we consider initial
 utility $\overline{U}_0=U_{\text{square}}$, and we use learning rates
 $\alpha\in[0.01,0.5,5,100,1000,10000]$. We plot the sequence of utilities
 considered by \tractor during its execution in Figures \ref{fig: sequence 001
 05}, \ref{fig: sequence 5 100}, and \ref{fig: sequence 1000 10000}, where we
 adopt notation that $U_t$ denotes the utility extracted at iteration $t$, and
 the number in the legend represents the (non)compatibility of that utility.
 We consider again participant 10.
 
 We observe that, for smaller learning rates (e.g., $\alpha\in[0.01,0.5,5]$),
 the utilities as well as the (non)compatibilities) do not change much (Figure
 \ref{fig: sequence 001 05} and Figure \ref{fig: sequence 5 100} left), while
 for larger learning rates, we obtain more consistent changes (Figure \ref{fig:
 sequence 5 100} left and Figure \ref{fig: sequence 1000 10000}).
 
 Clearly, larger learning rates require less iterations to achieve small values
 of (non)compatibilities. Nevertheless, too large values (e.g., $\alpha=10000$)
 are outperformed by intermediate values (e.g., $\alpha=100$).
 
 \begin{figure}[!h]
   \centering
   \begin{minipage}[t!]{0.49\textwidth}
     \centering
     \includegraphics[width=0.95\linewidth]{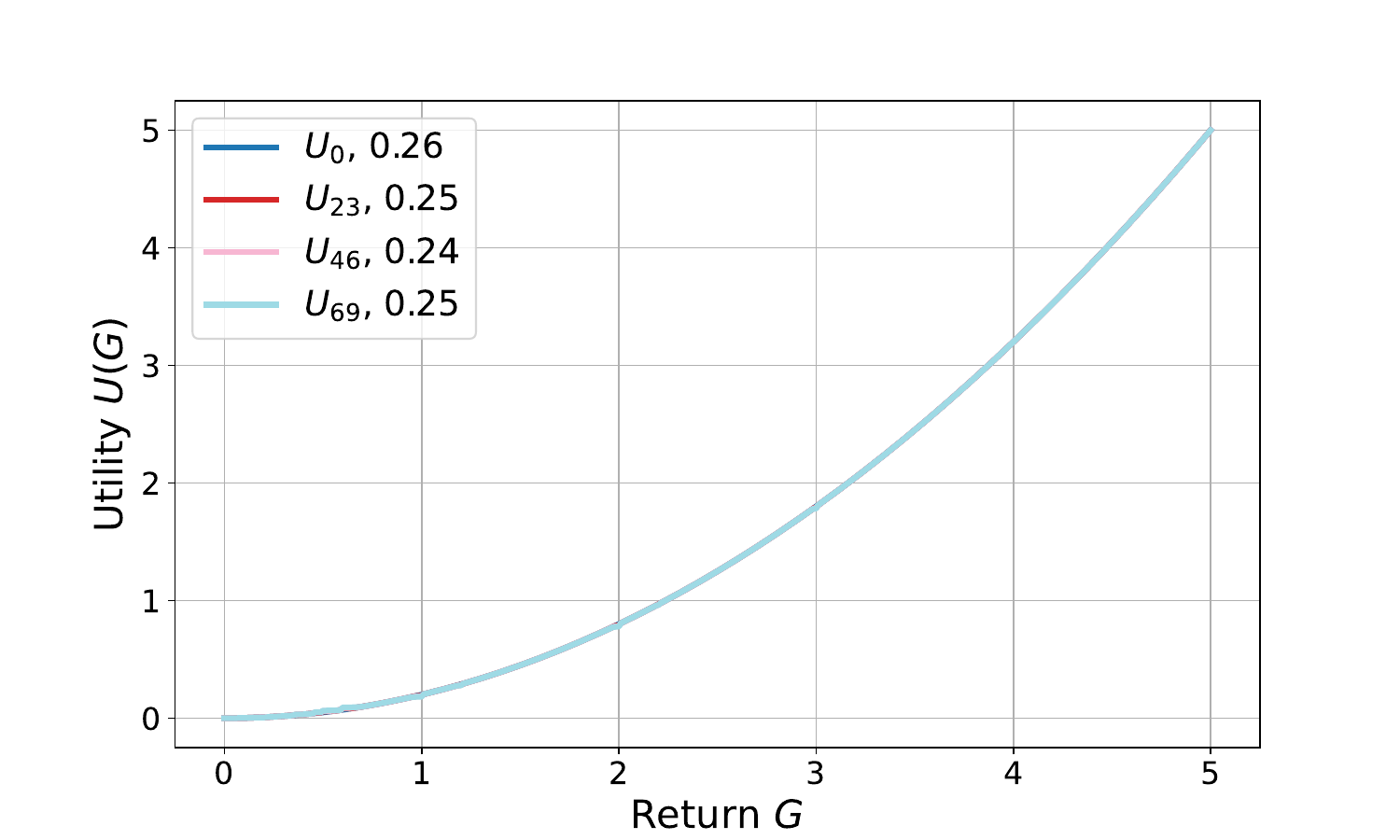}
 \end{minipage}
 \begin{minipage}[t!]{0.49\textwidth}
     \centering
     \includegraphics[width=0.95\linewidth]{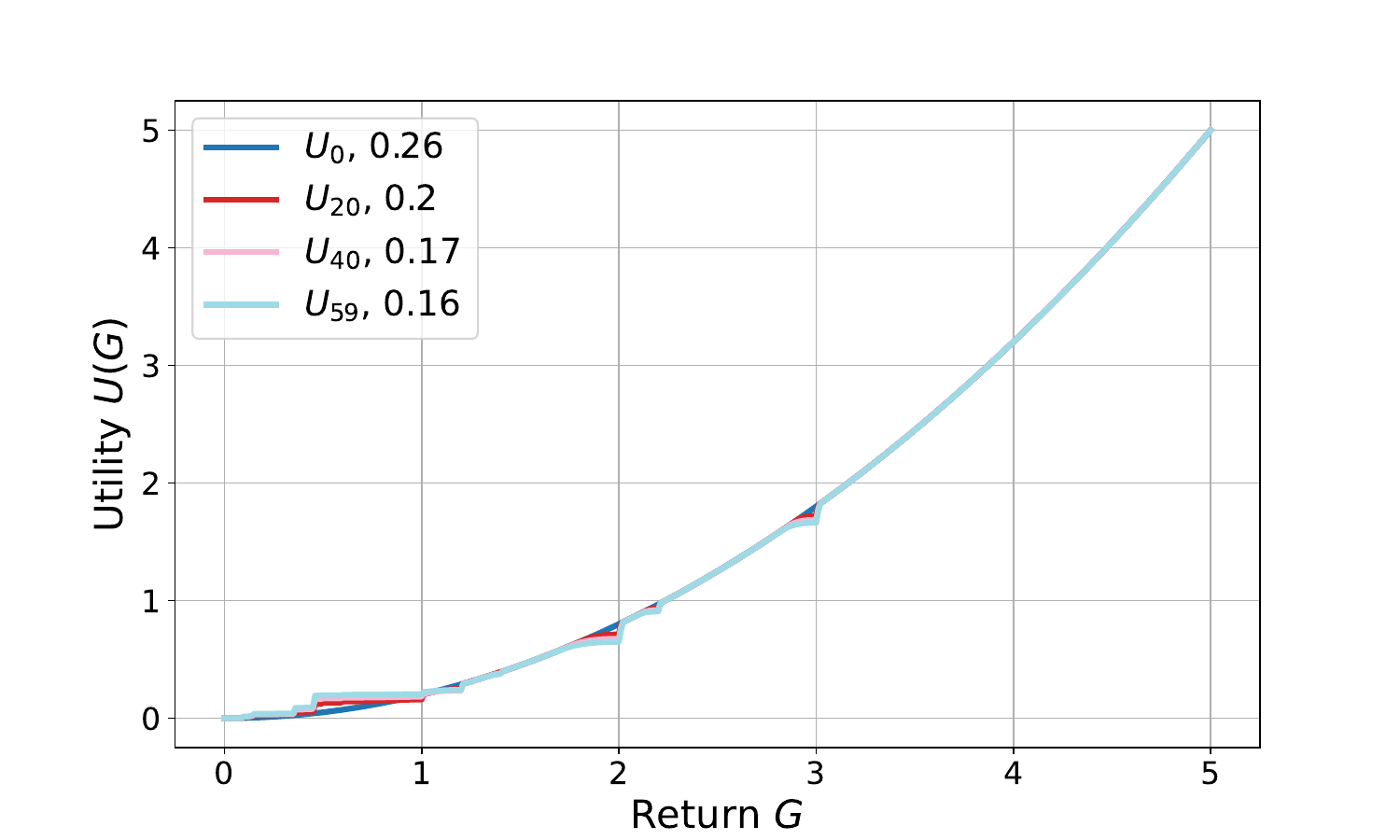}
 \end{minipage}
 \caption{\footnotesize (Left) $\alpha=0.01$.
 (Right) $\alpha=0.5$.}
 \label{fig: sequence 001 05}
  \end{figure}
 
  \begin{figure}[!h]
   \centering
   \begin{minipage}[t!]{0.49\textwidth}
     \centering
     \includegraphics[width=0.95\linewidth]{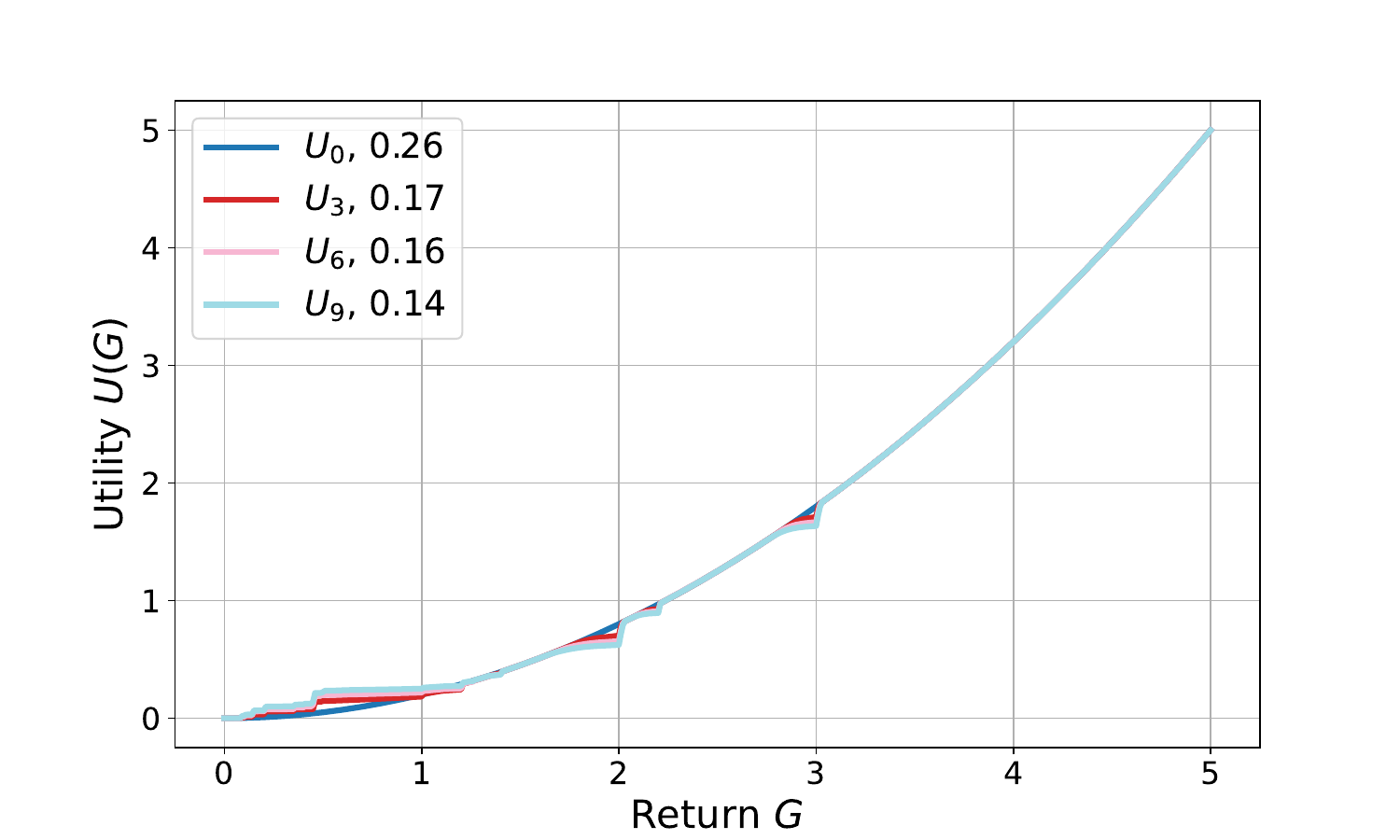}
 \end{minipage}
 \begin{minipage}[t!]{0.49\textwidth}
     \centering
     \includegraphics[width=0.95\linewidth]{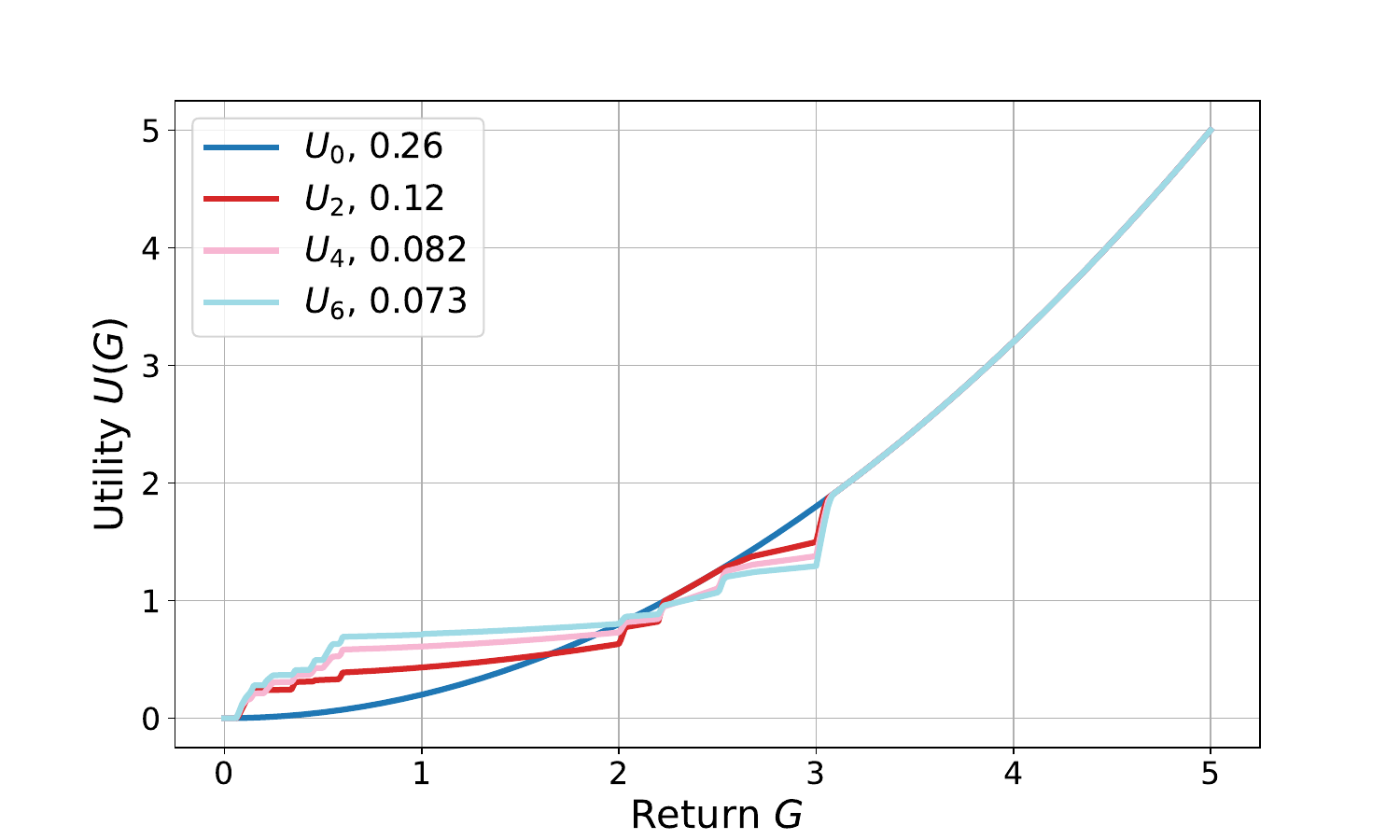}
 \end{minipage}
 \caption{\footnotesize (Left) $\alpha=5$.
 (Right) $\alpha=100$.}
 \label{fig: sequence 5 100}
  \end{figure}
 
  \begin{figure}[!h]
   \centering
   \begin{minipage}[t!]{0.49\textwidth}
     \centering
     \includegraphics[width=0.95\linewidth]{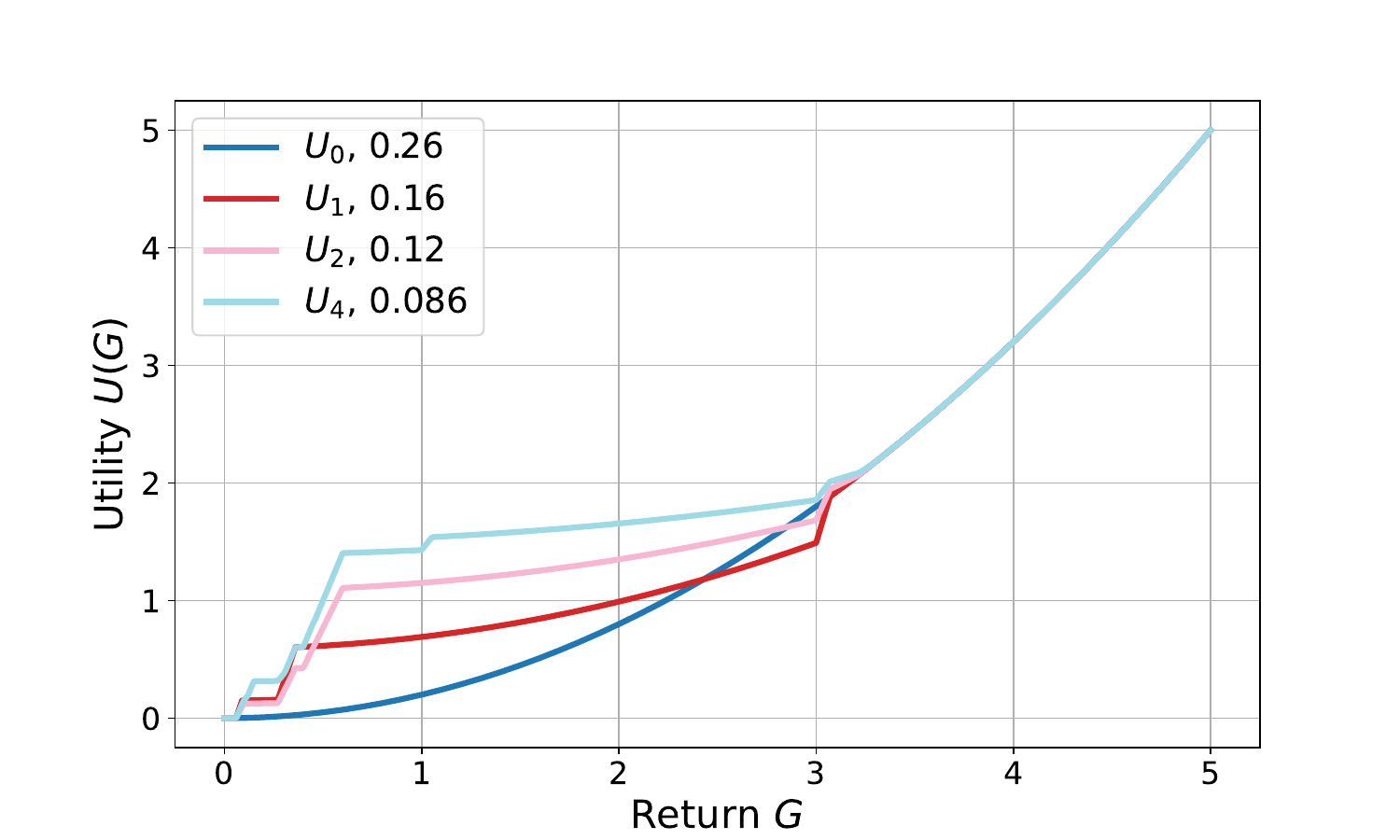}
 \end{minipage}
 \begin{minipage}[t!]{0.49\textwidth}
     \centering
     \includegraphics[width=0.95\linewidth]{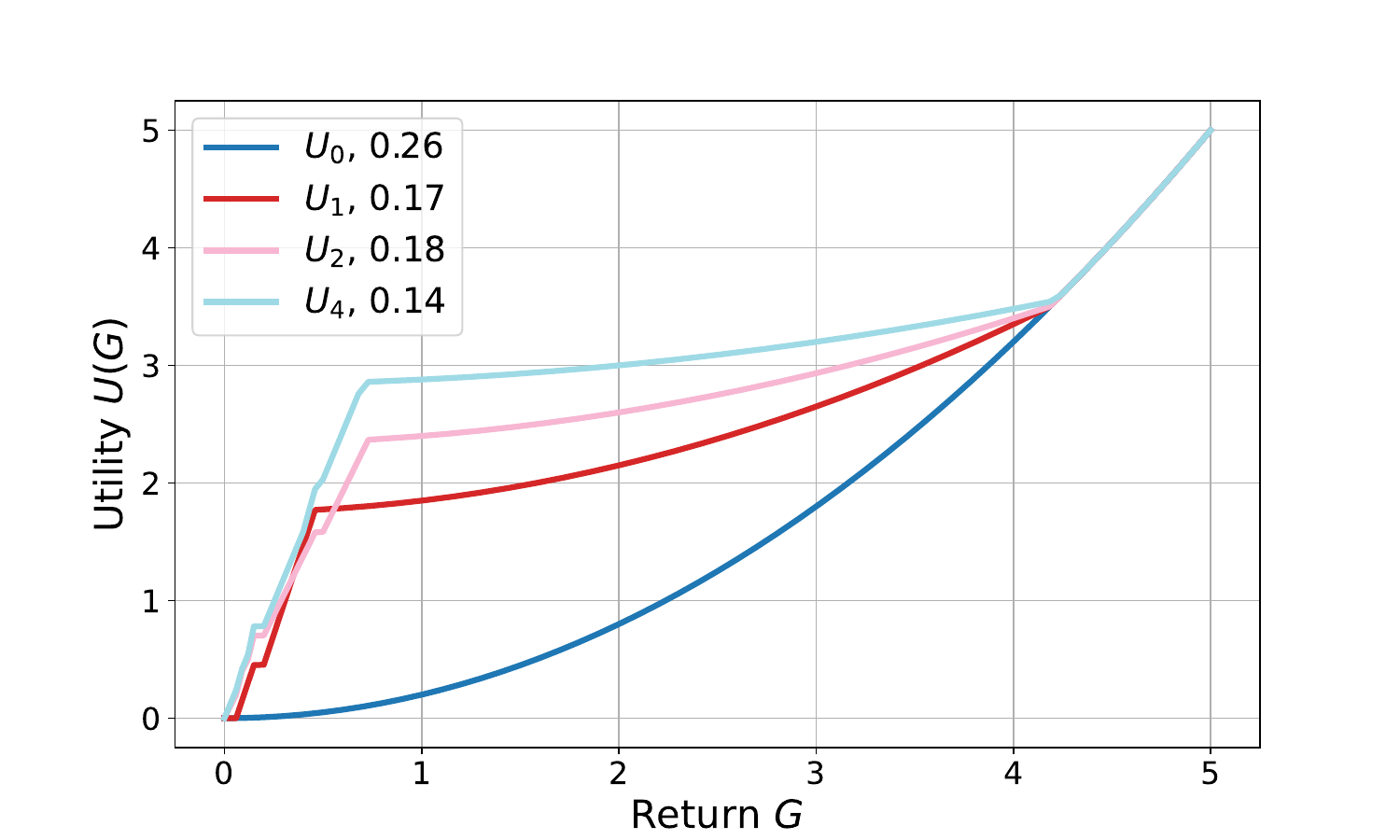}
 \end{minipage}
 \caption{\footnotesize (Left) $\alpha=1000$.
 (Right) $\alpha=10000$.}
 \label{fig: sequence 1000 10000}
  \end{figure}
 
 \subsubsection{An explanation for a large learning rate}
 \label{apx: explanation large learning rate}

 As mentioned in the main paper, there two reasons why a large learning rate is
 required: $(i)$ the feasible set is large and contains utilities that lie on
 the boundaries of set
 $\overline{\underline{\fU}}_L$,\footnote{$\overline{\underline{\fU}}_L$ forces
 utilities to be \emph{increasing}, i.e., with constraints $U(G_1)\le
 U(G_2)\,\forall G_1\le G_2$. The plateau in Fig. \ref{fig: plots exp2 main
 paper} (right) indicates that $U(G_1)= U(G_2)\,\forall G_1\le G_2,
 G_1,G_2\in[1,3]$, thus, it represents a boundary.} causing larger step sizes to
 converge sooner;$(ii)$ the projection onto $\overline{\underline{\fU}}_L$
 results in minimal changes of utility even with very large steps.
 
 Now, we show visually that the projection update represented by operator
 $\Pi_{\overline{\underline{\fU}}_L}$ crucially neglects small variations in the
 (non-projected) utilities, requiring us to increase the step size.
 
 Thus, the intuition is that we need a large learning rate because the projection
 step neglects small variations. To show this, we take as initial utility
 $\overline{U}_0=U_{\text{sqrt}}$, two return distributions $\eta^*_0,\eta^E$,
 where $\eta^*$ coincides with the distribution of an optimal policy for
 $U_{\text{sqrt}}$, and $\eta^E$ is the return distribution of the policy played
 by participant 10. These distributions are plotted in Figure \ref{fig:
 difference return distributions} left, and their difference is plotted in Figure
 \ref{fig: difference return distributions} right. In particular, we note that
 the two distributions are rather different, with the expert's distribution
 $\eta^E$ that is more risk-averse, in that it provides higher probability to
 returns around $G=0.5$, while the optimal distribution $\eta^*_0$ is more
 risk-lover, in that it assigns some probability to higher returns $G\ge 1$, but
 suffering from also high probability to small returns $G\le 0.3$.
 
 \begin{figure}[!h]
   \centering
   \begin{minipage}[t!]{0.49\textwidth}
     \centering
     \includegraphics[width=0.95\linewidth]{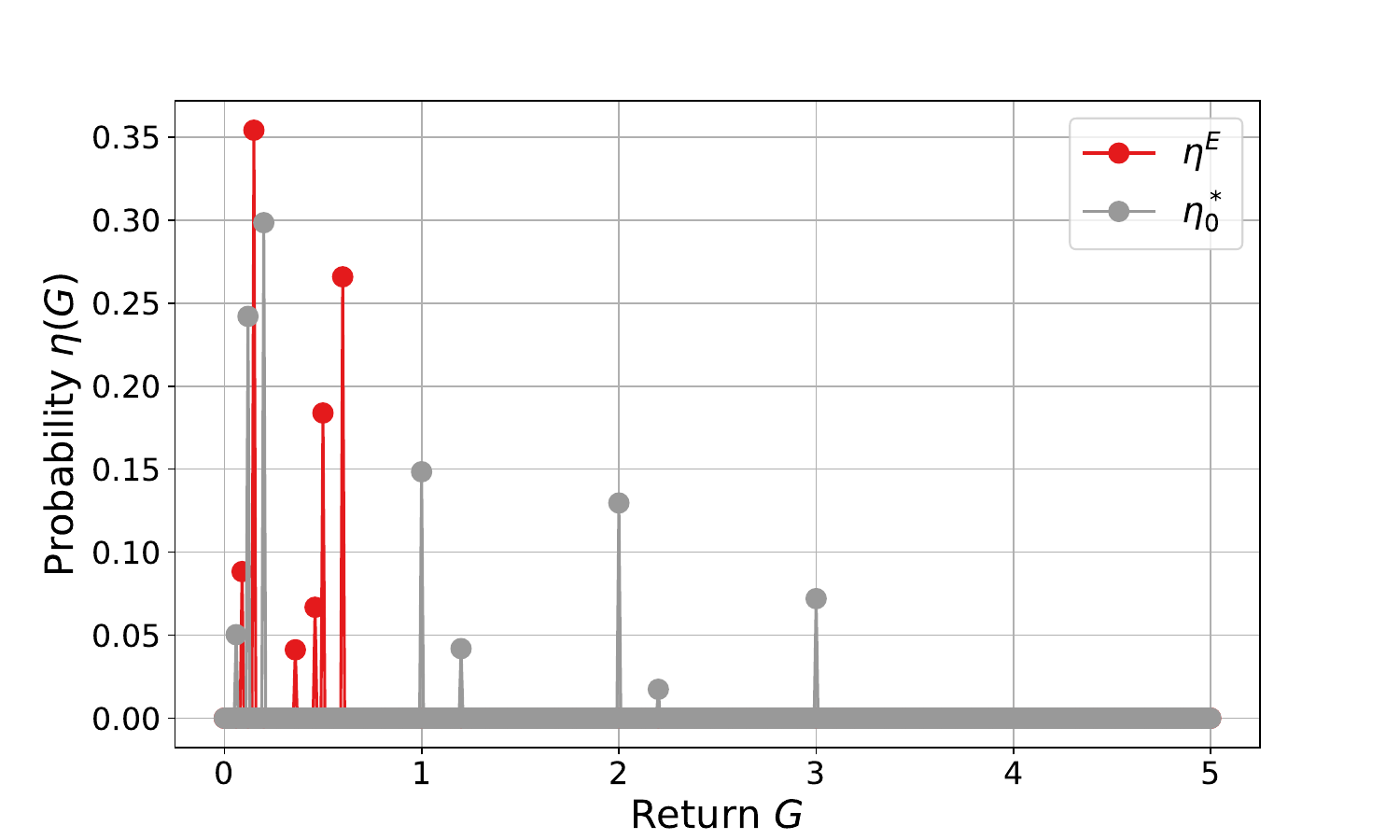}
 \end{minipage}
 \begin{minipage}[t!]{0.49\textwidth}
     \centering
     \includegraphics[width=0.95\linewidth]{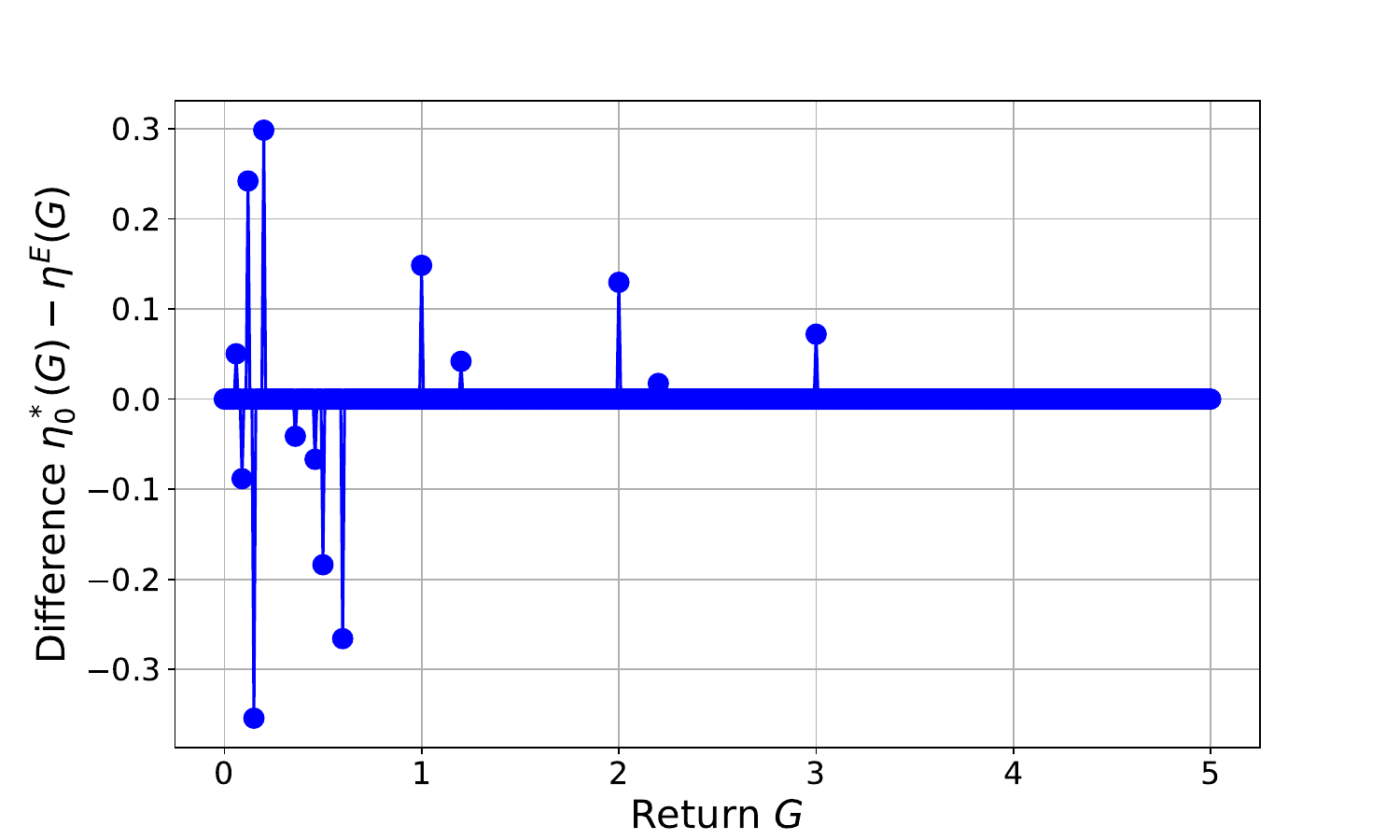}
 \end{minipage}
 \caption{\footnotesize (Left) Plot of $\eta^*_0$ and $\eta^E$.
 (Right) Plot of $\eta^*_0-\eta^E$.}
 \label{fig: difference return distributions}
  \end{figure}
 
 We aim to perform the \tractor update rule:
 \begin{align*}
   \overline{U}_1'\gets \overline{U}_0 -\alpha (\eta^*-\eta^E),
 \end{align*}
 with some learning rate $\alpha$, and then to perform the projection:
 \begin{align*}
   \overline{U}_1\gets \Pi_{\overline{\underline{\fU}}_L}[\overline{U}_1'].
 \end{align*} 
 We execute the update with the following values of steps size:
 $\alpha\in\{0.01,0.5,5,100,1000,10000\}$, and we plot the corresponding updated
 utilities $\overline{U}_1'$ and $\overline{U}_1$ in Figures \ref{fig: explain
 utilities 001 05}, \ref{fig: explain utilities 5 100}, and \ref{fig: explain
 utilities 1000 10000}.
 
 \begin{figure}[!h]
   \centering
   \begin{minipage}[t!]{0.49\textwidth}
     \centering
     \includegraphics[width=0.95\linewidth]{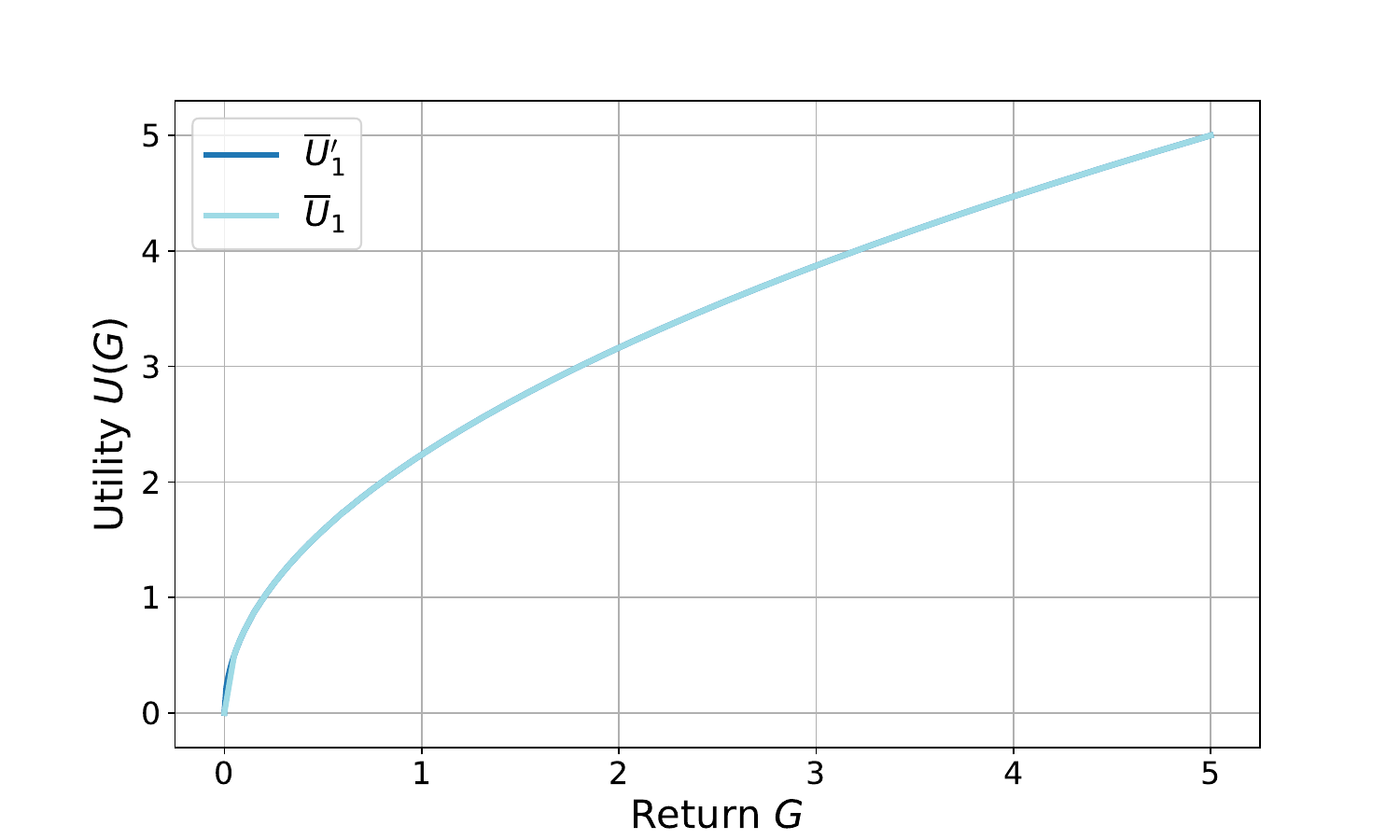}
 \end{minipage}
 \begin{minipage}[t!]{0.49\textwidth}
     \centering
     \includegraphics[width=0.95\linewidth]{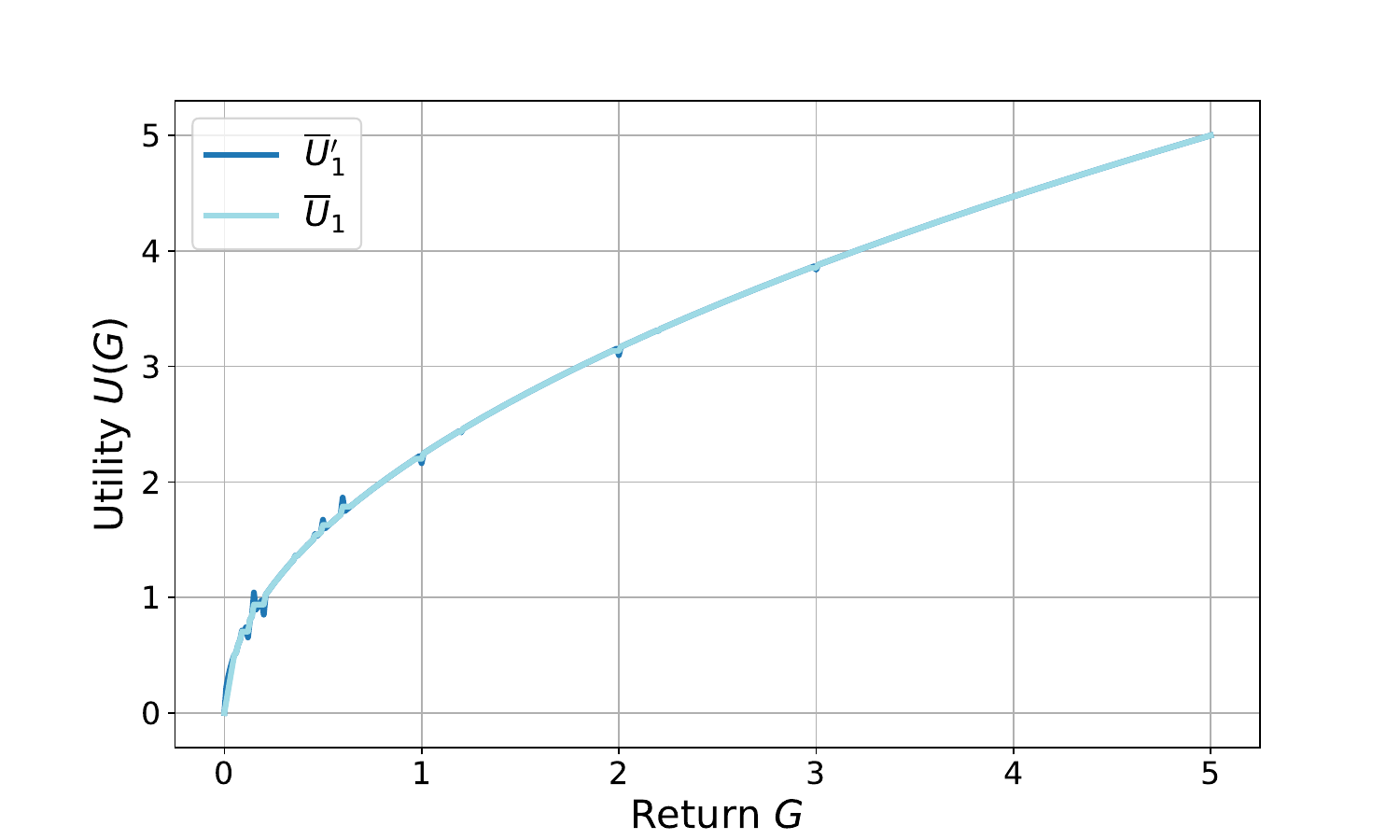}
 \end{minipage}
 \caption{\footnotesize (Left) $\alpha=0.01$.
 (Right) $\alpha=0.5$.}
 \label{fig: explain utilities 001 05}
  \end{figure}
 
  \begin{figure}[!h]
   \centering
   \begin{minipage}[t!]{0.49\textwidth}
     \centering
     \includegraphics[width=0.95\linewidth]{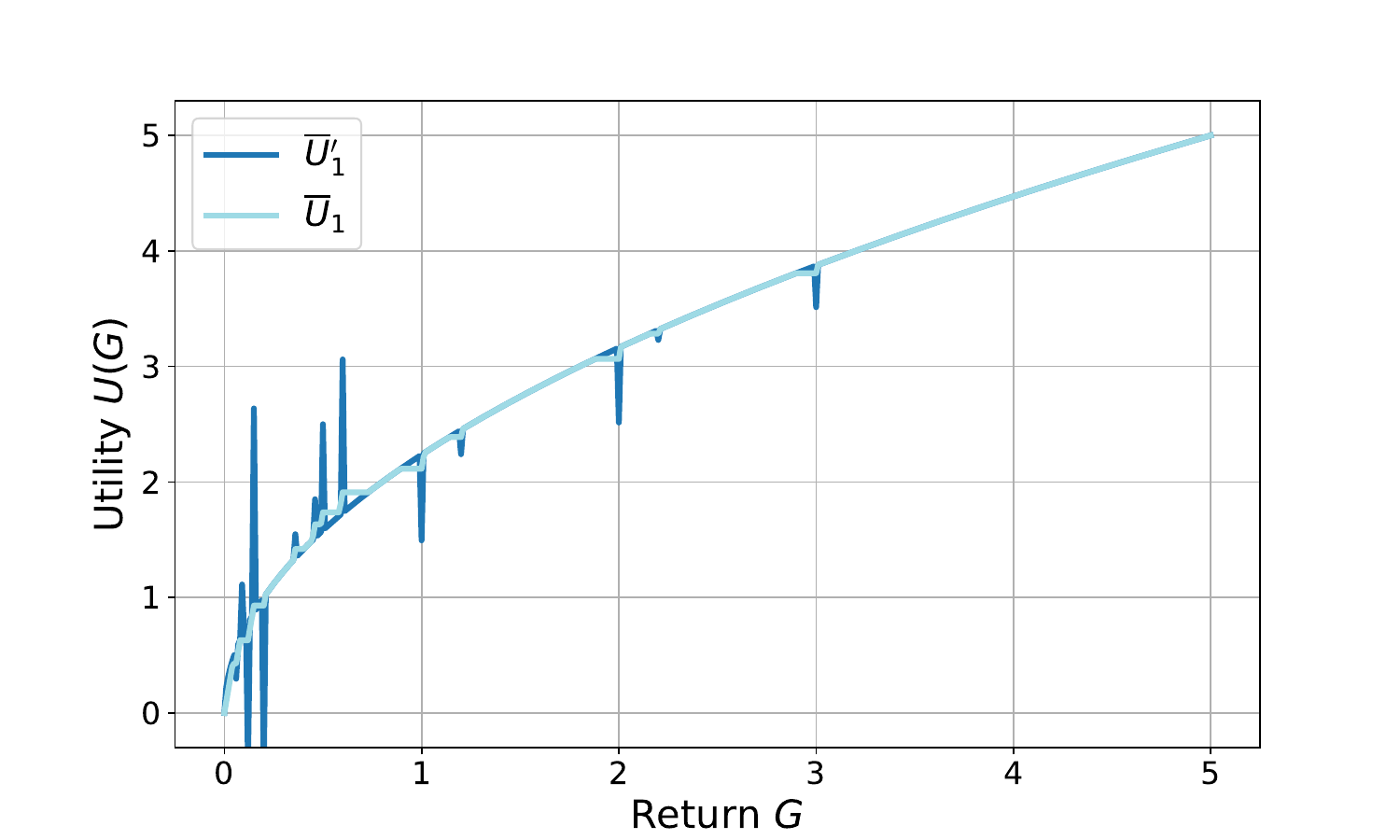}
 \end{minipage}
 \begin{minipage}[t!]{0.49\textwidth}
     \centering
     \includegraphics[width=0.95\linewidth]{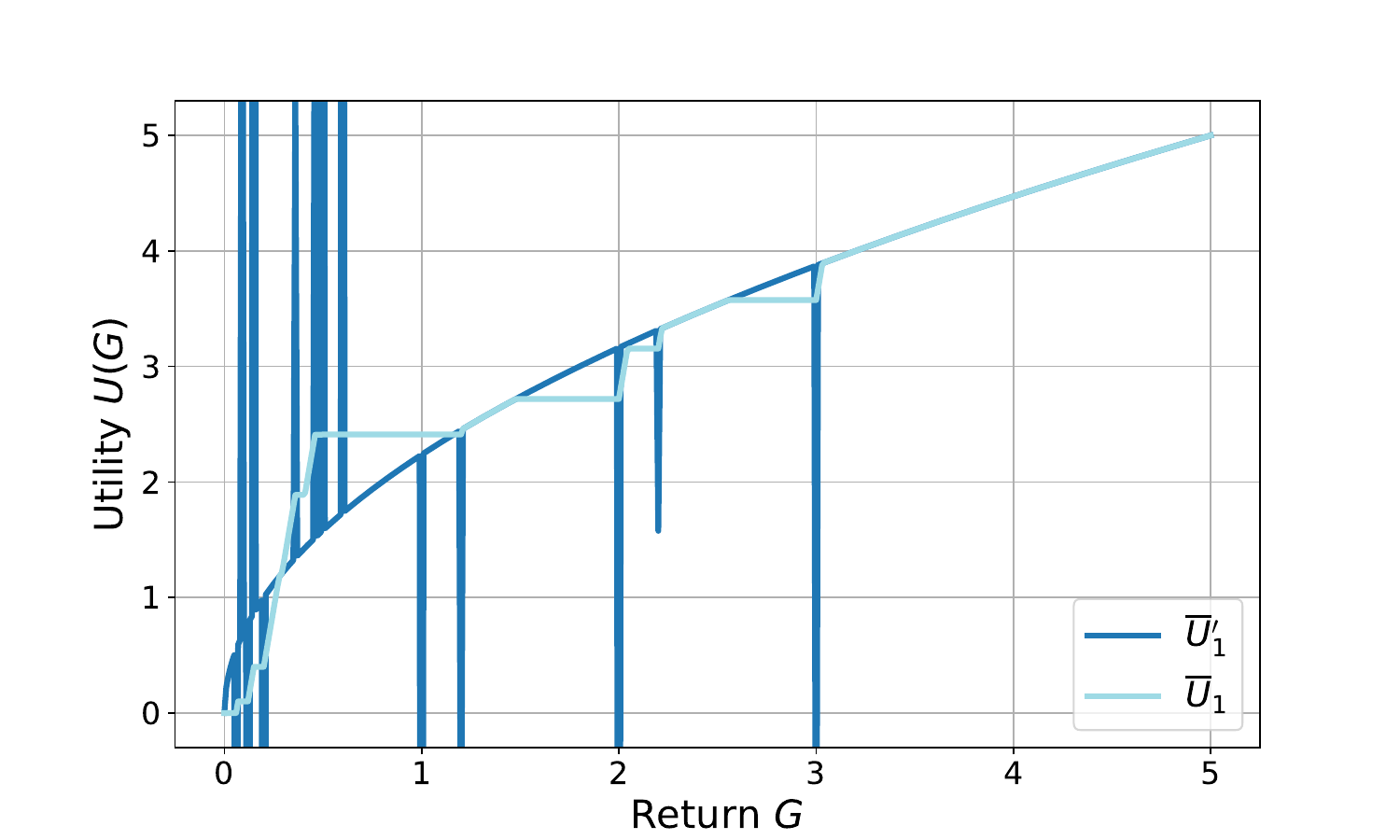}
 \end{minipage}
 \caption{\footnotesize (Left) $\alpha=5$.
 (Right) $\alpha=100$.}
 \label{fig: explain utilities 5 100}
  \end{figure}
 
  \begin{figure}[!h]
   \centering
   \begin{minipage}[t!]{0.49\textwidth}
     \centering
     \includegraphics[width=0.95\linewidth]{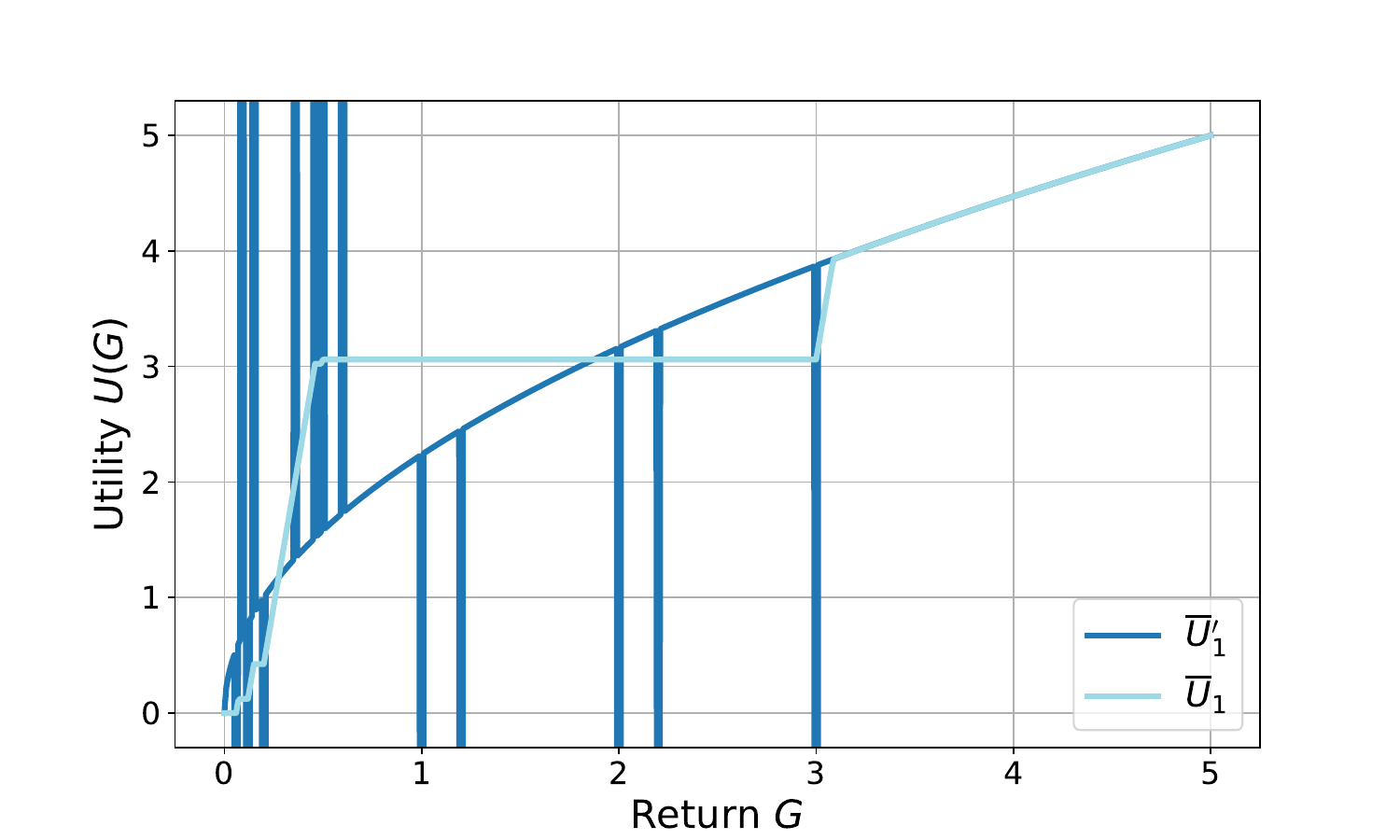}
 \end{minipage}
 \begin{minipage}[t!]{0.49\textwidth}
     \centering
     \includegraphics[width=0.95\linewidth]{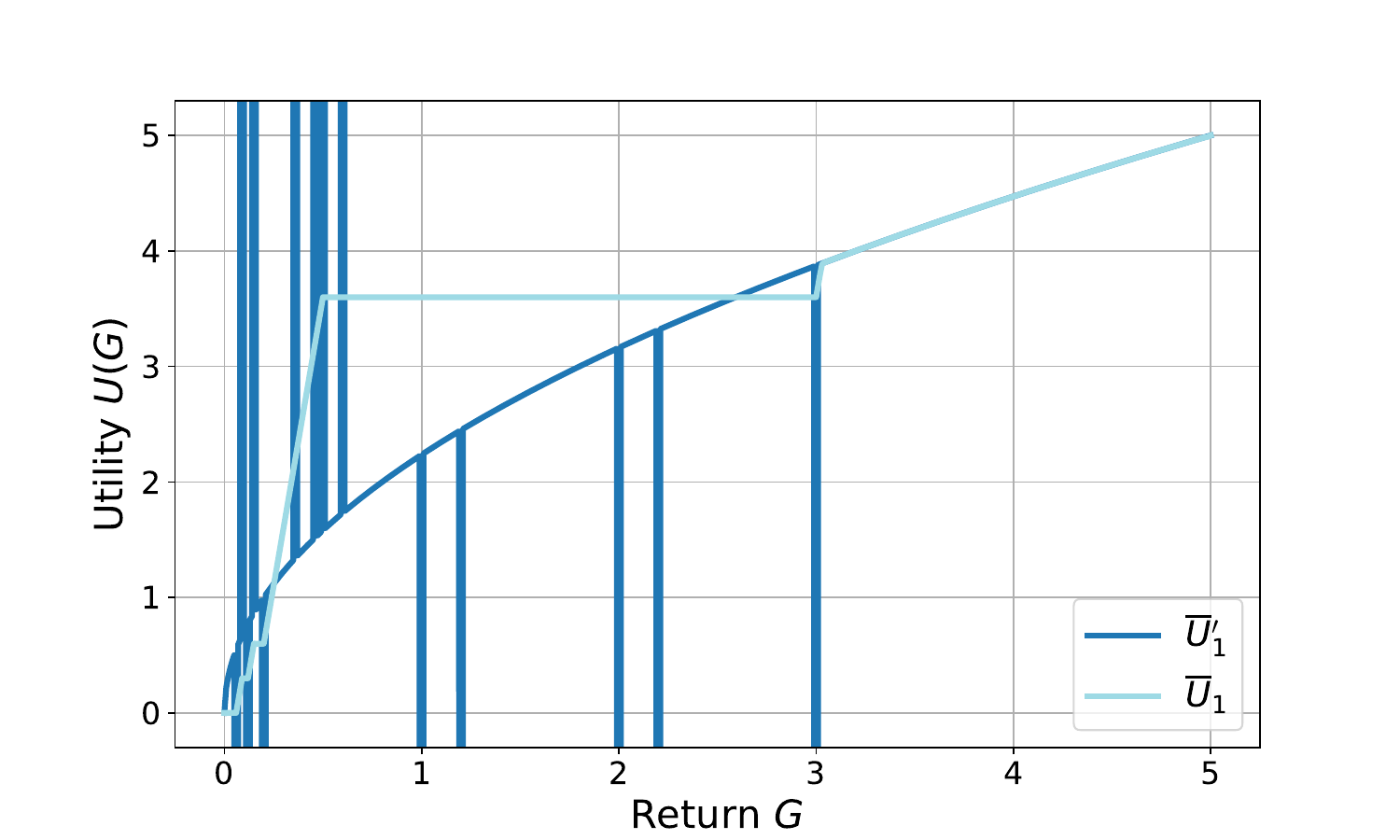}
 \end{minipage}
 \caption{\footnotesize (Left) $\alpha=1000$.
 (Right) $\alpha=10000$.}
 \label{fig: explain utilities 1000 10000}
  \end{figure}
 
  As we can see from Figures \ref{fig: explain utilities 001 05}, \ref{fig:
  explain utilities 5 100}, and \ref{fig: explain utilities 1000 10000}, the
  update $\overline{U}_0\to\overline{U}_1$ obtained with step sizes $< 5$ are
  rather neglectable, so that the return distribution of the new optimal policy
  $\eta^*_1$ for $\overline{U}_1$ still coincides with the previous one
  $\eta^*_0$, and the gradient at the next step is the same. For $\alpha=5$, we
  begin to notice some changes. See Figure \ref{fig: new gradient 5}.
 
  \begin{figure}[!h]
   \centering
   \begin{minipage}[t!]{0.49\textwidth}
     \centering
     \includegraphics[width=0.95\linewidth]{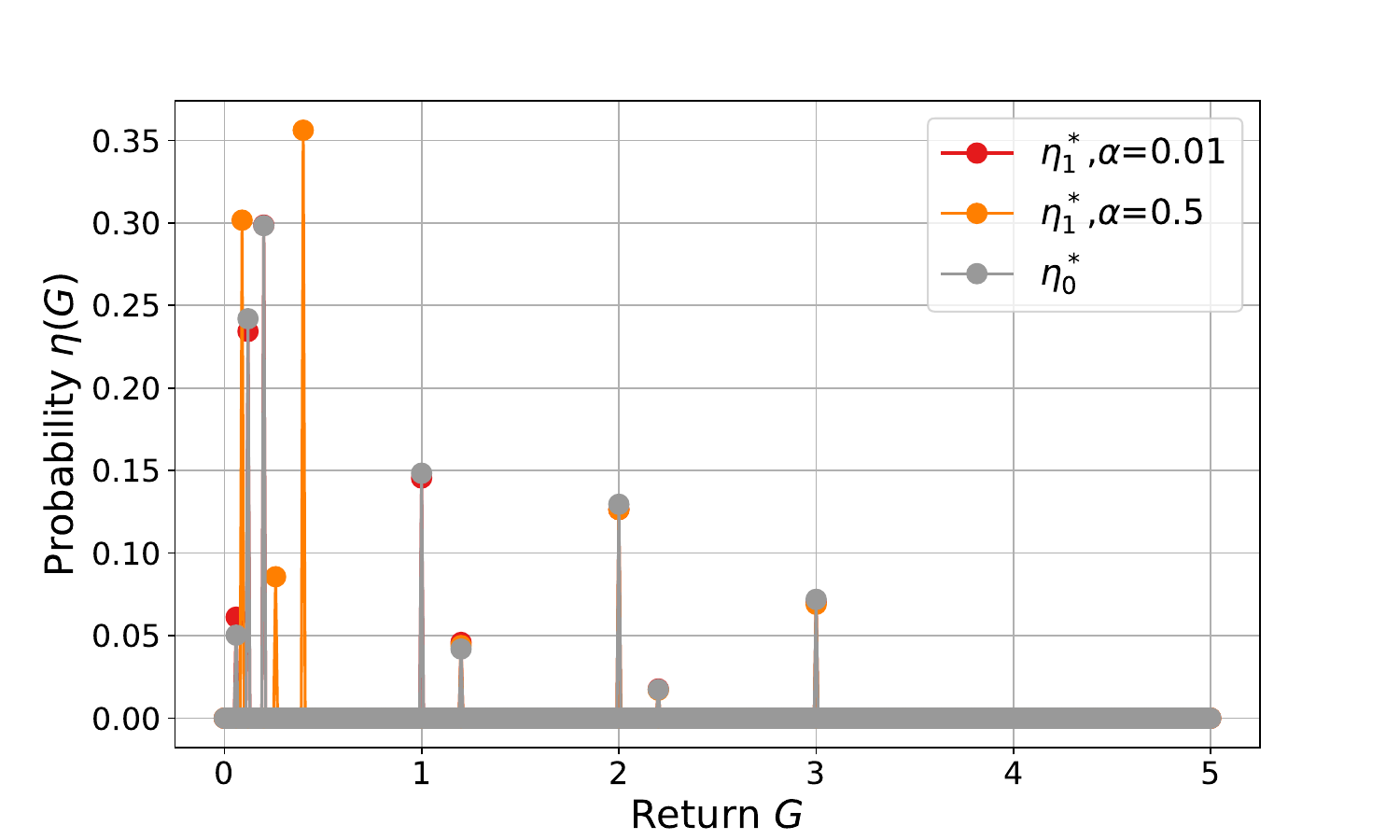}
 \end{minipage}
 \begin{minipage}[t!]{0.49\textwidth}
     \centering
     \includegraphics[width=0.95\linewidth]{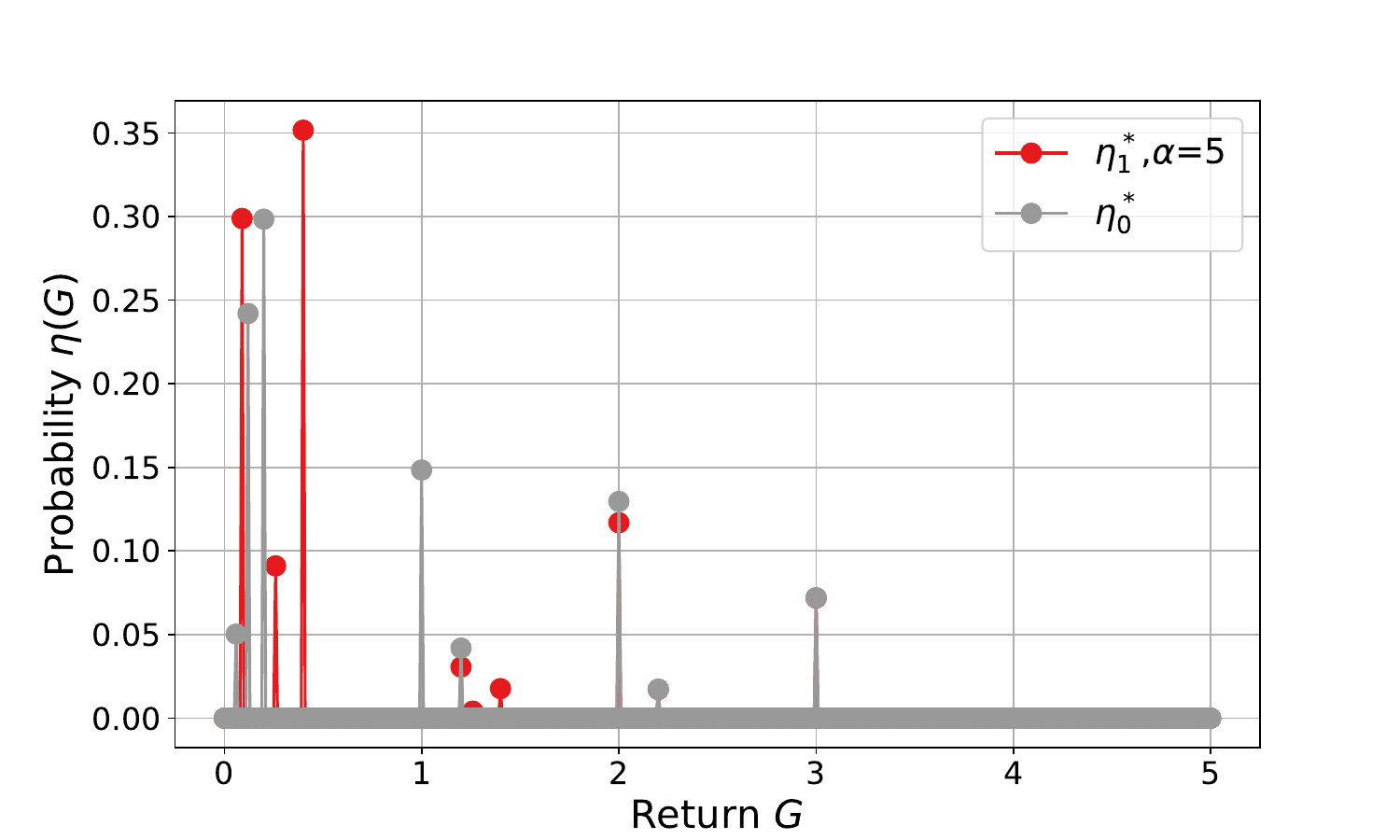}
 \end{minipage}
 \caption{\footnotesize (Left) Comparison of the return distributions $\eta^*_1$
 obtained with $\alpha=0.01$ and $\alpha=0.5$, with $\eta^*_0$. (Right)
 Comparison of the return distribution $\eta^*_1$ obtained with $\alpha=5$, with
 $\eta^*_0$.}
 \label{fig: new gradient 5}
  \end{figure}
  
  Instead, with larger gradients, we observe a non-neglectable change in utility,
  which provides a consistent change in the return distribution for $\alpha=100$,
  and a huge change for $\alpha\in[1000,10000]$ (see Figure \ref{fig: new
  gradient 1000}).
  
  Since neglectable changes in both the utility and the optimal return
  distribution (obtained with small learning rates) mean that we have to update
  the utility many times along the same direction, then the update is equivalent
  to performing a single update in that direction with a huge step size. This
  justifies the use of large learning rates.
 
  \begin{figure}[!h]
   \centering
   \begin{minipage}[t!]{0.49\textwidth}
     \centering
     \includegraphics[width=0.95\linewidth]{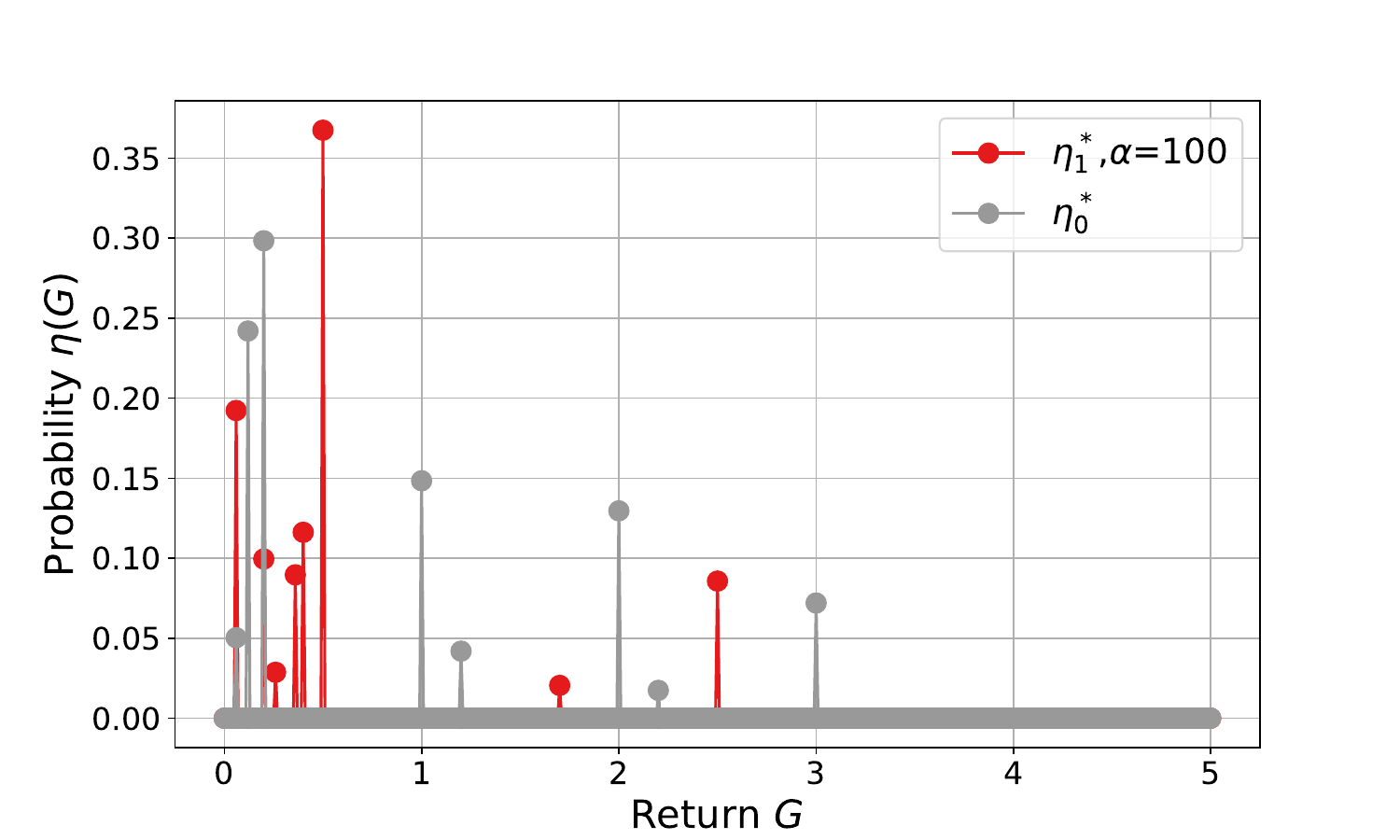}
 \end{minipage}
 \begin{minipage}[t!]{0.49\textwidth}
     \centering
     \includegraphics[width=0.95\linewidth]{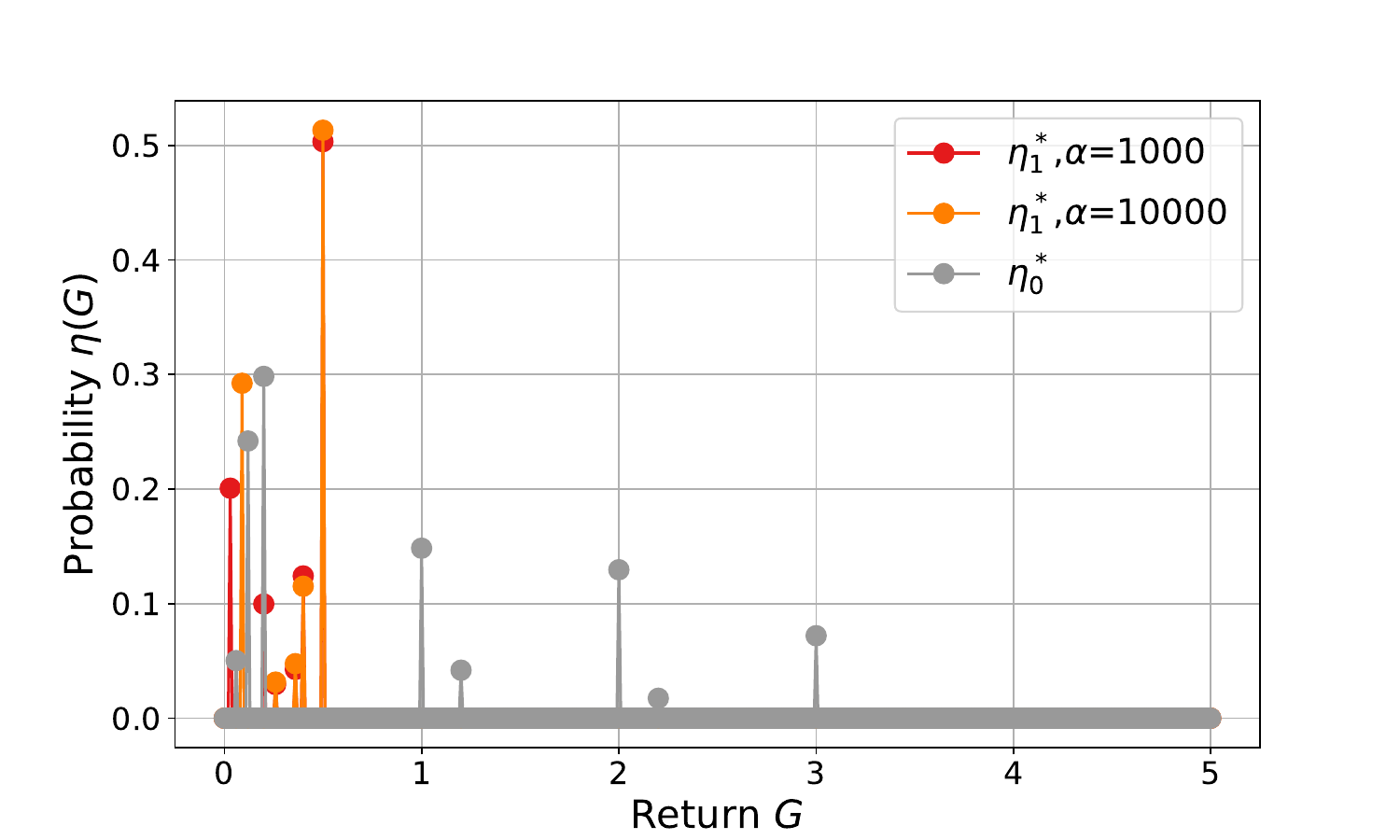}
 \end{minipage}
 \caption{\footnotesize (Left) Comparison of the return distribution $\eta^*_1$
 obtained with $\alpha=100$, with $\eta^*_0$. (Right) Comparison of the return
 distributions $\eta^*_1$ obtained with $\alpha=1000$ and $\alpha=10000$, with
 $\eta^*_0$.}
 \label{fig: new gradient 1000}
  \end{figure}
 
 \subsubsection{Analysis on Simulated Data}\label{apx: analysis on simulated
 data}
 
We have executed \tractor on MDPs generated at random. Below (Figures \ref{fig:
1 sim data}-\ref{fig: 3 sim data}), we report the truncated (non)compatibility
values of the utilities extracted by the algorithm as a function of the number
of iterations, in the five different experiments conducted. For the experiments,
we executed for $T=70$ gradient iterations, with parameters $K=10000$ and
$L=10$, as in the first part of the experiment. We found that the best learning
rate is $\alpha=1$.

 To comply with the assumption that there exists a utility
  function for which the expert's policy is (almost) optimal, we compute, in
  each environment, the optimal policy for an S-shaped utility function that is
  convex for small returns, and concave for large returns, and then we inject
  some noise. 
 
 \begin{figure}[!h]
   \centering
   \begin{minipage}[t!]{0.49\textwidth}
     \centering
     \includegraphics[width=0.95\linewidth]{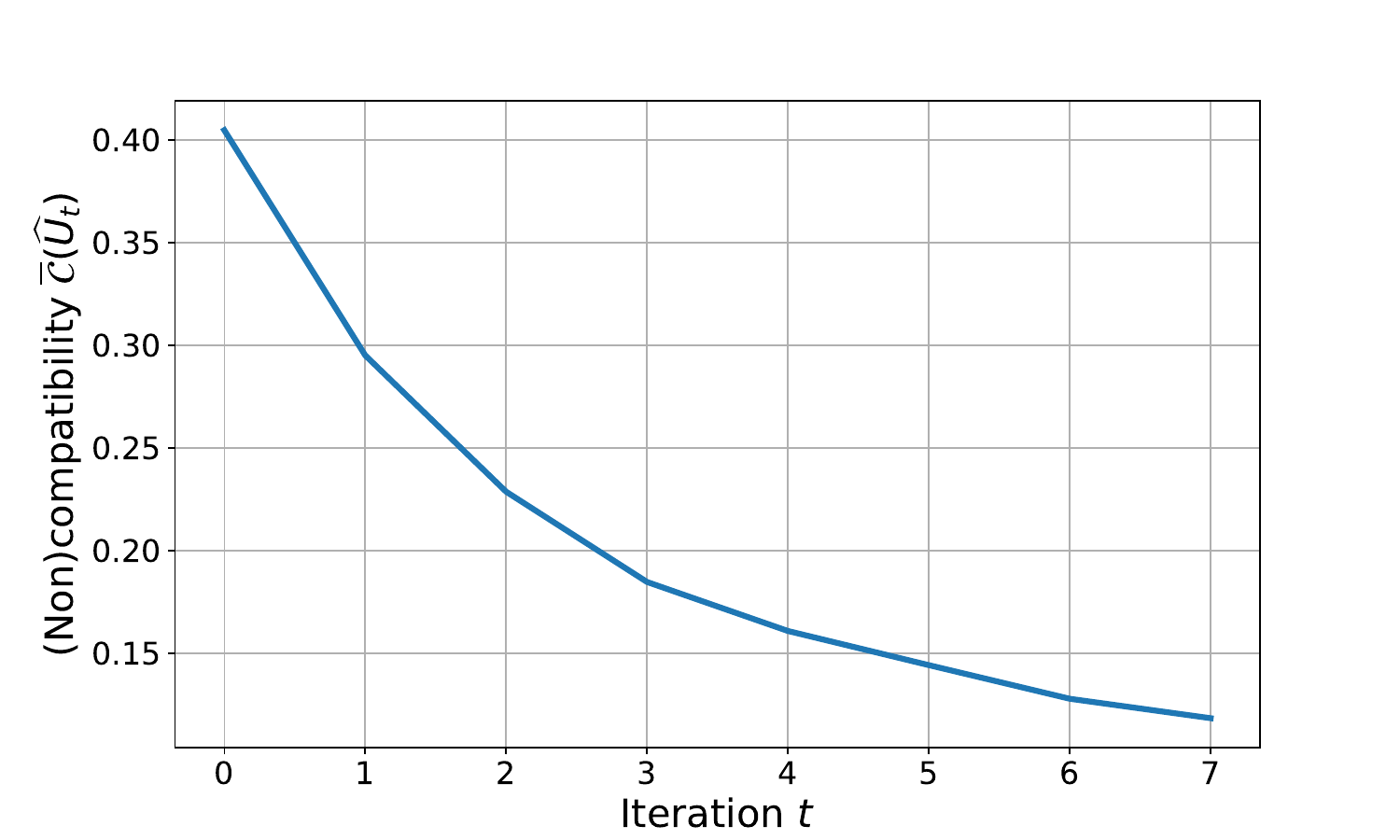}
 \end{minipage}
 \begin{minipage}[t!]{0.49\textwidth}
     \centering
     \includegraphics[width=0.95\linewidth]{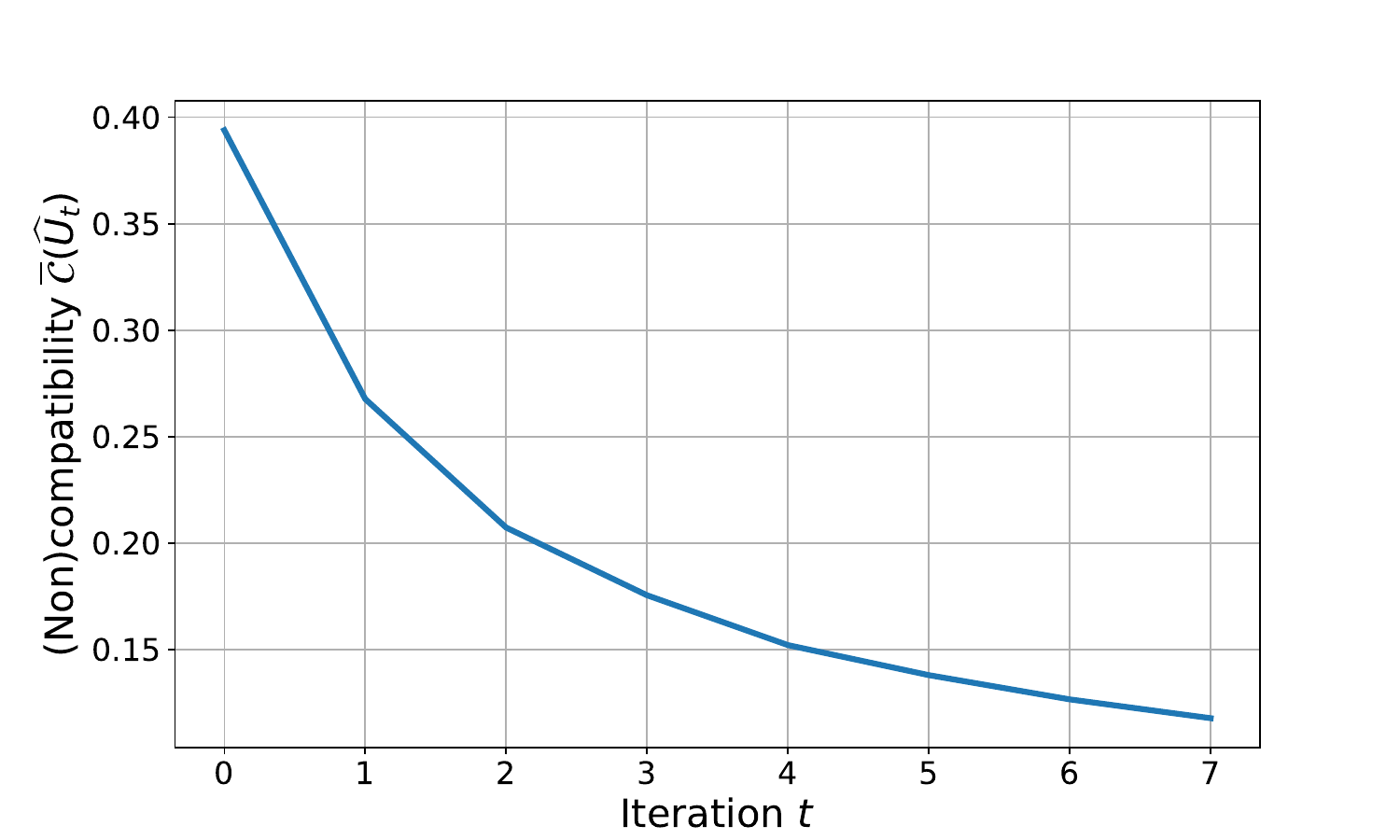}
 \end{minipage}
 \caption{\footnotesize (Left) Simulation with $S=20$ and $A=5$. (Right) Simulation with $S=100$ and $A=10$.}
 \label{fig: 1 sim data}
  \end{figure}
 
  \begin{figure}[!h]
   \centering
   \begin{minipage}[t!]{0.49\textwidth}
     \centering
     \includegraphics[width=0.95\linewidth]{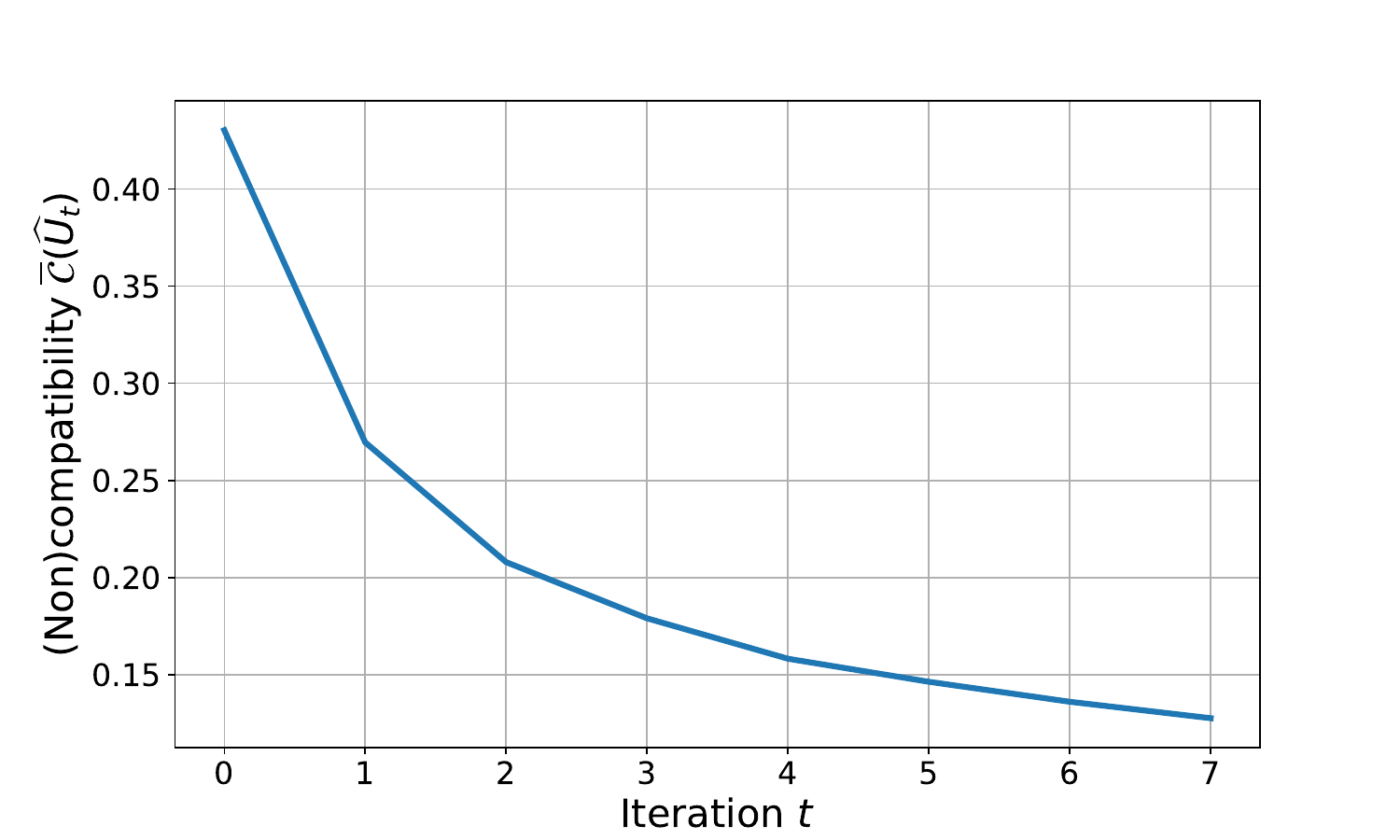}
 \end{minipage}
 \begin{minipage}[t!]{0.49\textwidth}
     \centering
     \includegraphics[width=0.95\linewidth]{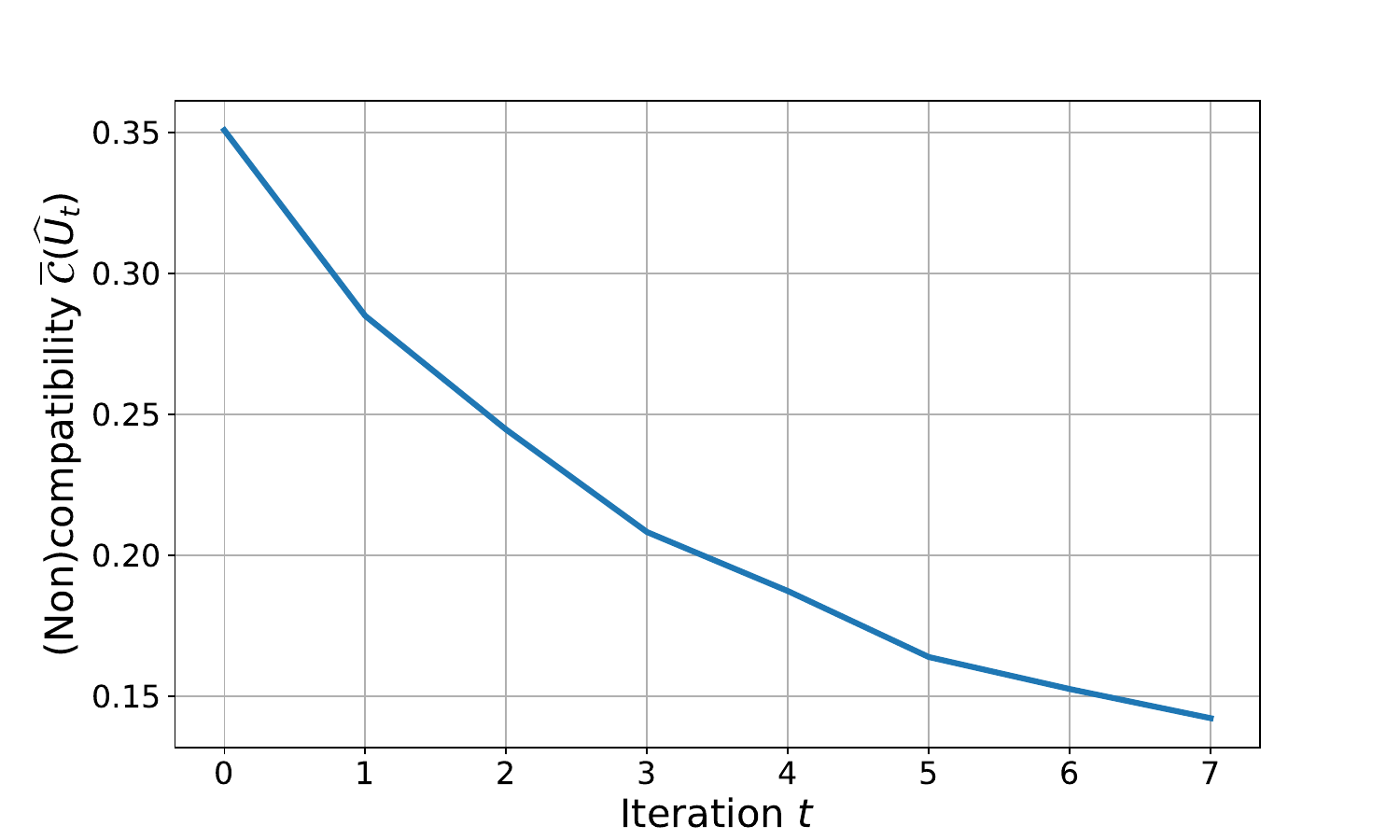}
 \end{minipage}
 \caption{\footnotesize (Left) Simulation with $S=1000$ and $A=20$. (Right) Simulation with $N=5$.}
 \label{fig: 2 sim data}
  \end{figure}
 
  \begin{figure}[!h]
   \centering
     \includegraphics[width=0.49\linewidth]{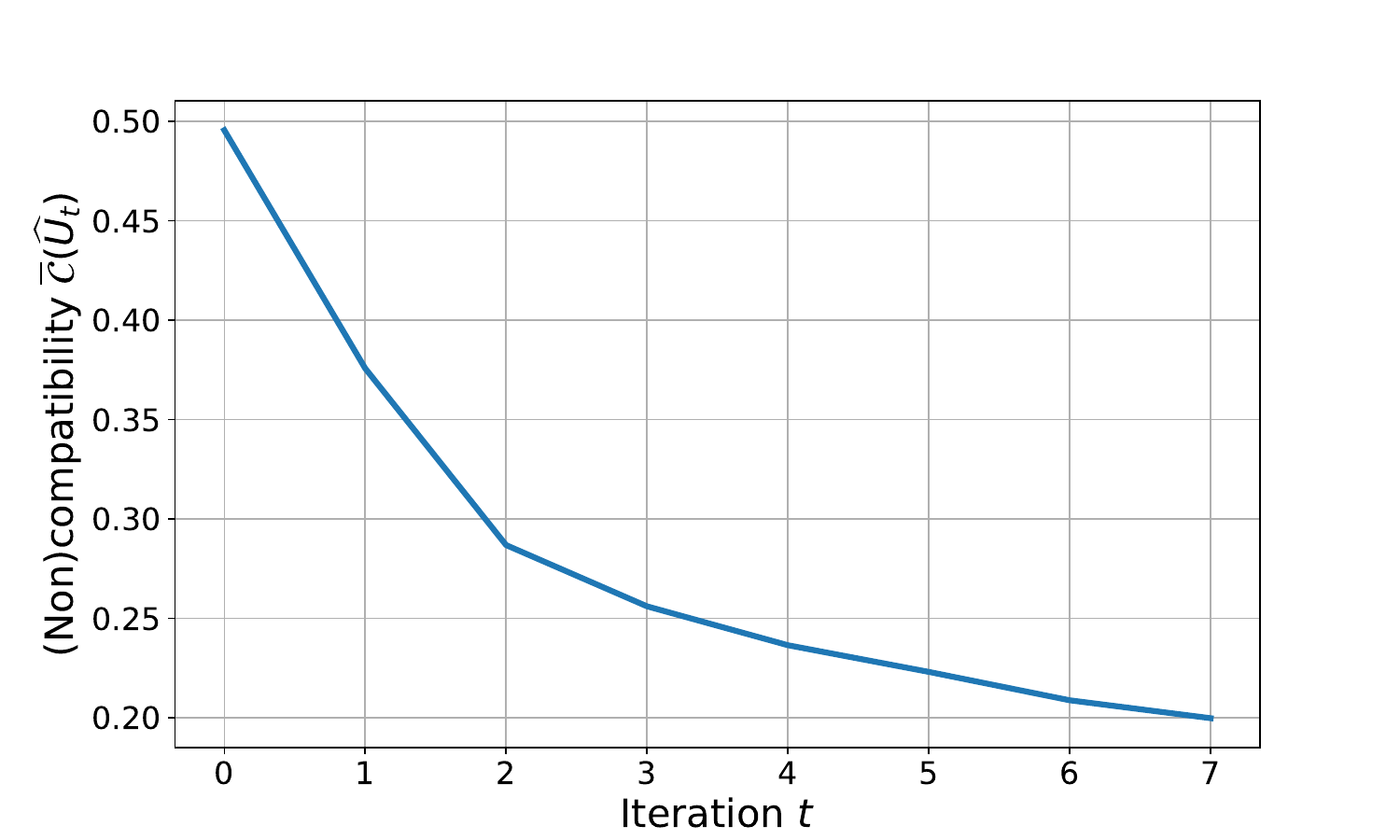}
 \caption{\footnotesize Simulation with $N=20$.}
 \label{fig: 3 sim data}
  \end{figure}

\subsection{Additional Experiment}\label{apx: additional experiment}

We conducted an additional experiment using the collected data to understand
which utility is more representative of all the participants' behaviors under
the model of Eq. \eqref{eq: model expert rs mdp}. 

The utilities considered for comparison are: $U_{\text{sqrt}}$,
$U_{\text{square}}$, $U_{\text{linear}}$, and $U_{\text{SG}}$. The first three
can be formally defined as: $U_{\text{sqrt}}(G)\coloneqq \sqrt{5G}$,
$U_{\text{square}}(G)\coloneqq G^2/5$, $U_{\text{linear}}(G)\coloneqq G$, and
they are depicted in Figure \ref{fig: utilities}. Instead, utility
$U_{\text{SG}}$ differs from each participant and is defined in the next
section.

\begin{figure}[t]
  \centering
  \includegraphics[scale=0.35]{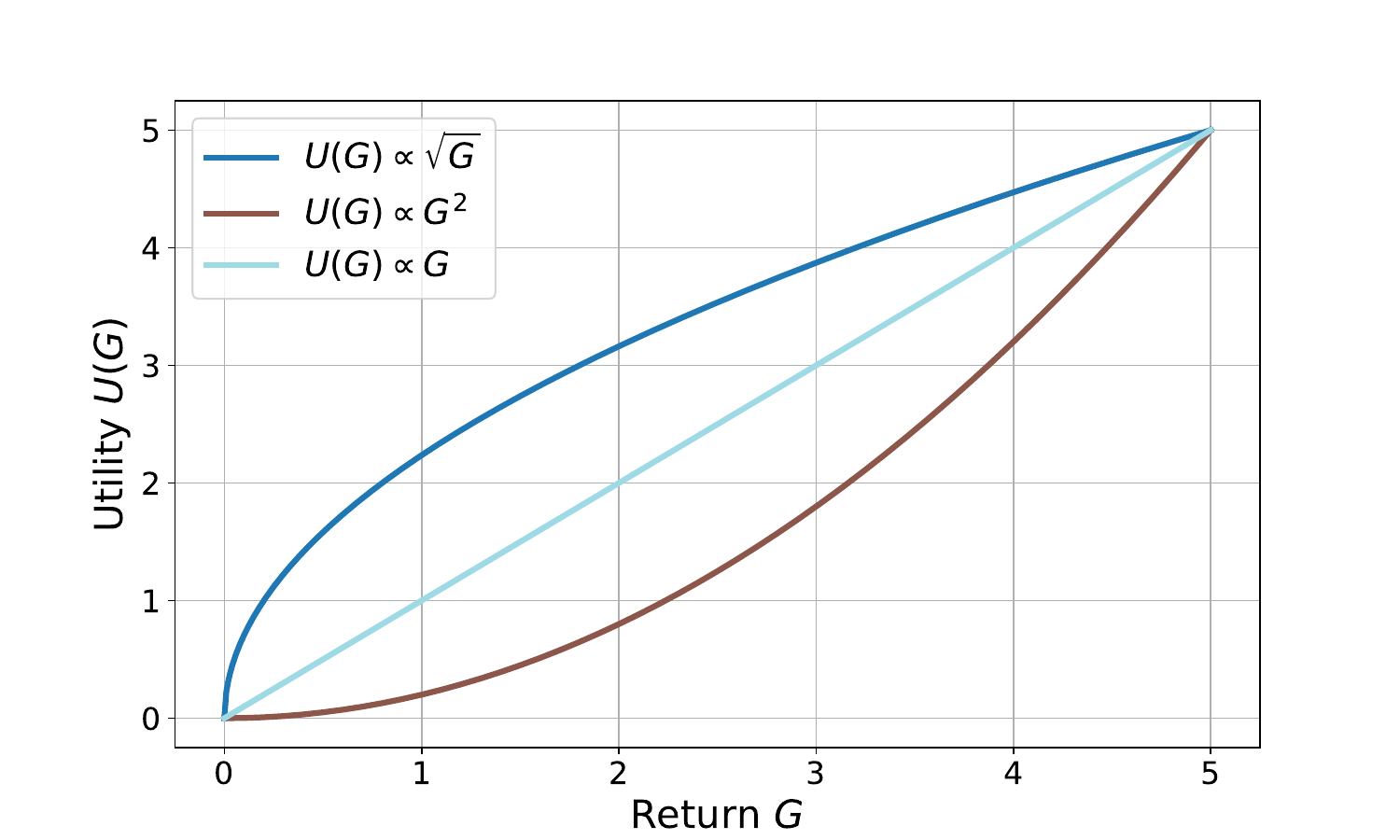}
  \caption{\footnotesize A plot of utilities
  $U_{\text{sqrt}},U_{\text{square}},U_{\text{linear}}$.}
  \label{fig: utilities}
\end{figure}

\subsubsection{Standard Gamble Data}
   \label{apx: standard gamble data}
  
   Utility $U_{\text{SG}}$ corresponds to the utility of each participant as
   fitted using the \emph{standard gamble method} \cite{Wakker2010prospect}.
 
   \begin{figure}[t]
     \centering
     \includegraphics[scale=0.35]{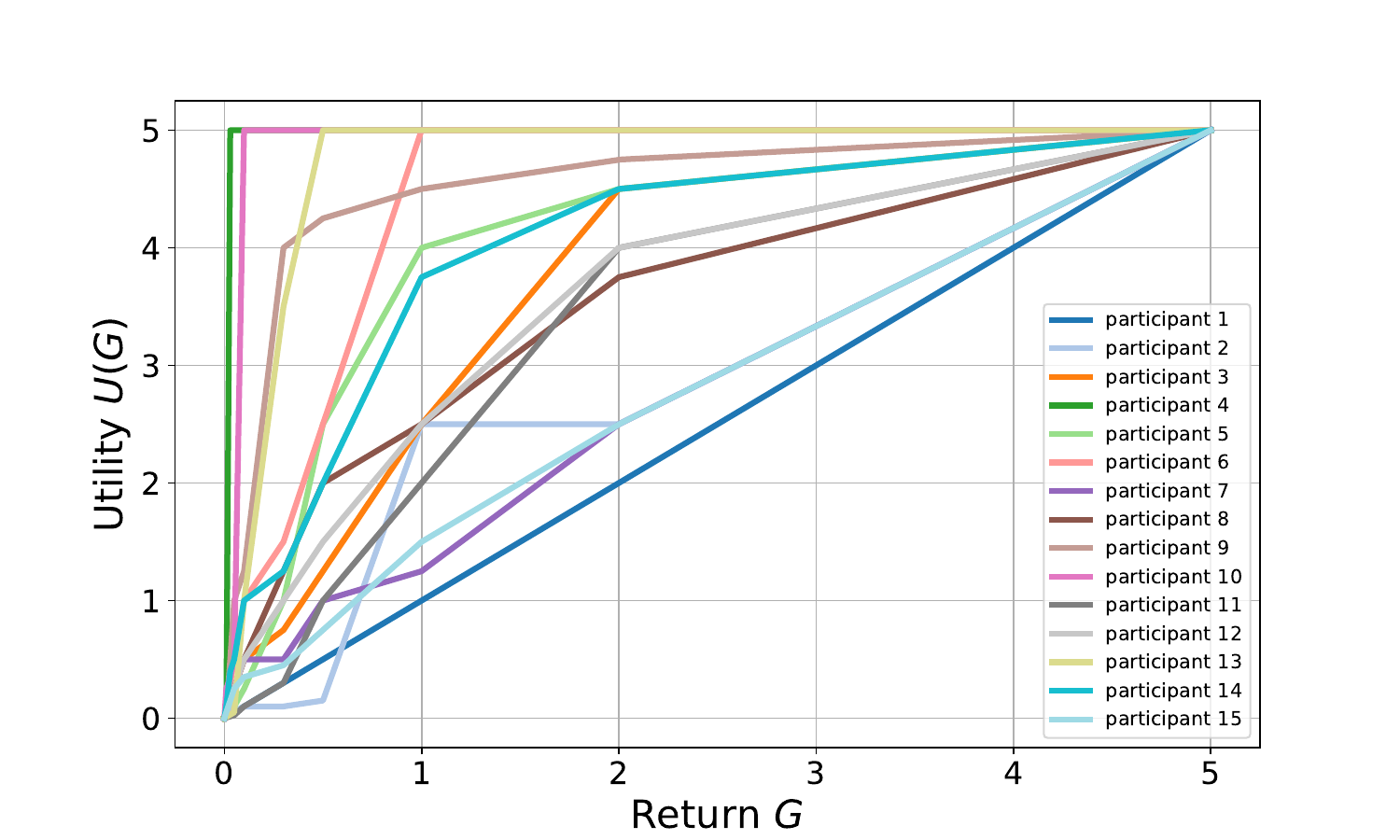}
     \caption{\footnotesize The SG utilities of the participants.}
     \label{fig: SG utilities}
   \end{figure}
 
     \paragraph{Standard Gamble (SG).}
       The Standard Gamble (SG) method (e.g., see Section 2.5 of
       \cite{Wakker2010prospect}) is a common method for inferring the von
       Neumann-Morgenstern (vNM) utility function of an agent. Observe Figure
       \ref{fig: SG for data}. In a SG, the agent has to decide between two
       options: a sure option (e.g., $x=30$\texteuro), in which the prize is
       obtained with probability 1, and a lottery between two prizes (e.g.,
       $5000$\texteuro and $0$\texteuro), in which the best prize
       ($5000$\texteuro) is received with probability $p$. For any value of $x$,
       the agent has to answer what is the probability $p$ that, from his
       perspective, makes the two options (i.e., $x$ for sure, or the lottery)
       \emph{indifferent}.
 
     \begin{figure}[h!]
       \centering
       \begin{tikzpicture}
         \node[rectangle, draw=none, minimum size=1pt] at (-2, 0) {\large $x$};
         \node[rectangle, draw=none, minimum size=1pt] at (-1, 0) {\LARGE $\sim$};
         \draw (0,0) circle (0.15);
         \node[rectangle, draw=none, minimum size=1pt] at (2, 0.5) {$5000$\texteuro};
         \node[rectangle, draw=none, minimum size=1pt] at (2, -0.5)
         {$0$\texteuro};
         \draw (0.05,0.14) -- (0.5,0.5) -- (1.5,0.5);
         \draw (0.05,-0.14) -- (0.5,-0.5) -- (1.5,-0.5);
         \node[rectangle, draw=none, minimum size=1pt] at (1,0.7) {\small $p$};
         \node[rectangle, draw=none, minimum size=1pt] at (1,-0.7) {\small $1-p$};
     \end{tikzpicture}
     \caption{\small The SG used for data collection.}
     \label{fig: SG for data}  
         \end{figure}
 
     Given the probability $p$, we have that the utility $U$ of the agent for $x$
     is:
     \begin{align*}
       U(x)=p\cdot U(5000)+(1-p)\cdot U(0)=p,
     \end{align*}
     since, by normalization conditions, we have $U(0)=0$ and $U(5000)=1$.
     
     \paragraph{Our SG.}
 
     We have asked the 15 participants to the study to answer some SG questions,
     which allows us to fit a vNM utility function $U_{\text{SG}}$ for each of
     them. Specifically, we have asked to answer 8 different SG questions, in
     which the $x$ value in Figure \ref{fig: SG for data} has been replaced by:
     \begin{align*}
       10\text{\texteuro}, 30\text{\texteuro}, 50\text{\texteuro},
       100\text{\texteuro}, 300\text{\texteuro}, 500\text{\texteuro},
       1000\text{\texteuro}, 2000\text{\texteuro}.
     \end{align*}
     Next, we linearly interpolate the computed utilities, obtaining the
     functions in Figure \ref{fig: SG utilities}.
 
     It should be remarked that this model considers single decisions (i.e.,
     $H=1$), while in MDPs there is a sequence of decisions to be taken over
     time, specifically over a certain time horizon $H$.

\subsubsection{Results}

To measure the fitness of a utility $U$ to the data (policy $\pi$)
\emph{fairly}, we  consider a \emph{relative} notion of (non)compatibility (we
omit $p,r$ for simplicity): $\overline{\cC}_{\pi}^{\text{r}}(U)\coloneqq
(J^*(U)-J^{\pi}(U))/J^*(U)$.
Intuitively, $\overline{\cC}_{\pi}^{\text{r}}(U)$ measures the \emph{quality}
of $\pi$ as perceived by the demonstrating agent, \emph{if $U$ was its true utility
function}.

We execute \caty (without exploration) for the 15 participants comparing the
IRL risk-neutral utility $U_{\text{linear}}$ with 3 ``baselines'': A
risk-averse $U_{\text{sqrt}}$ (concave) and a risk-lover $U_{\text{square}}$
(convex) utilities, and the utility $U_{\text{SG}}$ fitted through the SG
method (see Appendix \ref{apx: experimental details} for details).
We report the (non)compatibilities in \emph{percentage} in Table \ref{table: ex
 exp1}, where we have used colors to highlight the \textcolor{brightGreen}{best}
 and \textcolor{vibrantRed}{worst} values for each participant (the last column
 contains the average over the participants).

 \begin{table*}[t]
  \centering
 \scalebox{0.9}{
  \begin{tabular}{||c | c c c c c c c c c c c c c c c| c ||} 
   \hline
   & 1 & 2 & 3 & 4 & 5 & 6 & 7 & 8 & 9 & 10 & 11 & 12 & 13 & 14 &
   15 & mean\\
   \hline
   $U_{\text{linear}}$ &\small 39 &\small 58 &\small 18 &\small
   \textcolor{vibrantRed}{1} &\small 9 &\small 33 &\small 25 &\small 62
   &\small \textcolor{brightGreen}{1} &\small 56 &\small
   \textcolor{brightGreen}{1} &\small 16 &\small 16 &\small 25 &\small 60 &
   \small \textbf{28$\pm$22}\\
   \hline
    $U_{\text{sqrt}}$ &\small \textcolor{brightGreen}{16} &\small
    \textcolor{brightGreen}{28} &\small \textcolor{brightGreen}{8} &\small
    \textcolor{vibrantRed}{1} &\small \textcolor{brightGreen}{3} &\small
    \textcolor{brightGreen}{16} &\small \textcolor{brightGreen}{11} &\small
    \textcolor{brightGreen}{30} &\small \textcolor{brightGreen}{1} &\small 25
    &\small \textcolor{brightGreen}{1} &\small \textcolor{brightGreen}{6}
    &\small \textcolor{brightGreen}{8} &\small \textcolor{brightGreen}{11}
    &\small \textcolor{brightGreen}{28} & \small \textcolor{brightGreen}{\textbf{13$\pm$10}}\\
    $U_{\text{square}}$ &\small \textcolor{vibrantRed}{70} &\small
    \textcolor{vibrantRed}{86} &\small \textcolor{vibrantRed}{32} &\small
    \textcolor{vibrantRed}{1} &\small \textcolor{vibrantRed}{19} &\small
    \textcolor{vibrantRed}{41} &\small \textcolor{vibrantRed}{44} &\small
    \textcolor{vibrantRed}{91} &\small \textcolor{brightGreen}{1} &\small
    \textcolor{vibrantRed}{88} &\small \textcolor{brightGreen}{1} &\small
    \textcolor{vibrantRed}{35} &\small \textcolor{vibrantRed}{28} &\small
    \textcolor{vibrantRed}{44} &\small \textcolor{vibrantRed}{91} & \small
    \textcolor{vibrantRed}{\textbf{45$\pm$32}}\\
    $U_{\text{SG}}$ &\small 39 &\small 76 &\small 11 &\small
    \textcolor{brightGreen}{0} &\small 5 &\small 28 &\small 20 &\small 34
    &\small \textcolor{vibrantRed}{10} &\small \textcolor{brightGreen}{2}
    &\small \textcolor{brightGreen}{1} &\small 8 &\small 21 &\small 17 &\small
    51 & \small \textbf{22$\pm$21}\\
   \hline
  \end{tabular}%
  }
  \caption{ Values of $\overline{\cC}_{\pi}^{\text{r}}$ of various
  utilities with the demonstrations of the participants in percentage.}
  \label{table: ex exp1}
  \end{table*}

  Some observations are in order.
First, \emph{this} data shows that $U_{\text{linear}}$ (i.e., IRL) is overcome
by $U_{\text{sqrt}}$, which reduces
\scalebox{0.9}{$\overline{\cC}_{\pi}^{\text{r}}$}$(\cdot)$ from $28\%$ to $13\%$
on the average of the participants.
Next, note that $U_{\text{sqrt}}$ outperforms the $U_{\text{SG}}$ of \emph{each}
participant. This is due to both the bounded rationality of humans, who can
\emph{not} apply the $H=1$ utility $U_{\text{SG}}$ to $H>1$ problems, and the
fact that $U_{\text{sqrt}}$ probably ``overfits'' the simple MDP considered, but
it might generalize worse than $U_{\text{SG}}$ to new environments.
\footnote{Further analysis should be carried out on this, that we leave to
future works.}
Finally, observe that all the utilities are compatible with policies $4$ and
$11$, providing empirical evidence on the \emph{partial identifiability} of the
expert's utility from single demonstrations.

The experiment has been conducted on the same personal computer as experiment 2,
in less than one hour.

The experiment has been conducted collecting 10000 trajectories to estimate the
return distribution of each participant's policy, and 10000 trajectories for
estimating the return distribution of the optimal policy, which has been
computed exactly through value iteration. We have executed 5 simulations with
different seeds, and the relative (non)compatibility values written in Table
\ref{table: ex exp1} are the average over the 5 simulations.

For the experiment, we used the true transition model, and we remark that the
reward function considered, when discretized, coincides with itself, i.e., we
did not incur in estimation error of the transition model nor in approximation
error for the discretization.

\end{document}